\newcommand{\proquestmode}{}
\newenvironment{manualtheorem}[1]{%
  \manualtheoreminner
}{\endmanualtheoreminner}
\newtheorem{prop}{Proposition}
\DeclarePairedDelimiter\abs{\lvert}{\rvert}
\DeclarePairedDelimiter\norm{\lVert}{\rVert}%
\let\oldabs\abs
\def\abs{\@ifstar{\oldabs}{\oldabs*}}
\let\oldnorm\norm
\def\norm{\@ifstar{\oldnorm}{\oldnorm*}}
\newcommand{\ctrain}{\gC_{\text{train}}}
\newcommand{\po}{\text{po}}
\DeclareMathOperator\supp{supp}
\newcommand\Tstrut{\rule{0pt}{2.6ex}}         
\title{Learning Algorithms for Intelligent Agents and Mechanisms}
\author{Jad Rahme}
\abstract{
The ability to learn from past experiences and adapt one's behavior accordingly within an environment or context to achieve a certain goal is a characteristic of a truly intelligent entity.  Developing efficient, robust, and reliable learning algorithms towards that end is an active area of research and a major step towards achieving artificial general intelligence. In this thesis, we research learning algorithms for optimal decision making in two different contexts, Reinforcement Learning in Part~\ref{part:rl} and Auction Design in Part~\ref{part:auctions}.  

Reinforcement learning (RL) is an area of machine learning that is concerned with how an agent should act in an environment in order to maximize its cumulative reward over time. In Chapter~\ref{chap:Zlearning}, inspired by statistical physics, we develop a novel approach to RL that not only learns optimal policies with enhanced desirable properties but also sheds new light on maximum entropy RL. In Chapter~\ref{chap:GeneralizationRL}, we tackle the generalization problem in RL using a Bayesian perspective. We show that imperfect knowledge of the environment's dynamics effectively turn a fully-observed Markov Decision Process (MDP) into a Partially Observed MDP (POMDP) that we call the Epistemic POMDP. Informed by this observation, we develop a new policy learning algorithm LEEP which has improved generalization properties.

An auction is the process of organizing the buying and selling of products and services that is of great practical importance. Designing an incentive compatible, individually rational auction that maximizes revenue is a challenging and intractable problem. Recently, a deep learning based approach was proposed to learn optimal auctions from data. While successful, this approach suffers from a few limitations, including sample inefficiency, lack of generalization to new auctions, and training difficulties.  In Chapter~\ref{chap:EquivariantNet}, we construct a symmetry preserving neural network architecture, EquivariantNet, suitable for anonymous auctions. EquivariantNet is not only more sample efficient but is also able to learn auction rules that generalize well to other settings.  In Chapter~\ref{chap:ALGnet}, we propose a novel formulation of the auction learning problem as a two player game. The resulting learning algorithm, ALGNet,  is easier to train, more reliable and better suited for non stationary settings. 
}
\begin{document}
\nobibliography*

\makefrontmatter
\chapter{Introduction}

\section{Overview}
Reinforcement learning (RL) is an area of machine learning that is concerned with how an agent should act in an environment in order to maximize its cumulative reward over time. Whenever the agent takes an action, the state of the environment changes and the agent is rewarded accordingly.
Based on observations of the dynamics, 
 the agent gains a better understanding of its environment, which enables it to refine its strategy, also called {\it policy}, by choosing actions that are increasingly closer to optimality.
The difficulty in reinforcement learning is that the dynamics of the environment and their associated rewards (the rules of the game) are unknown to the agent in advance; they can only be inferred through trial and error.
This translates in practice to a tension between two types of behavior: exploration and exploitation.
Exploration consists of trying new strategies with the hope of finding better ones or gaining a better knowledge of the rules of the game.
Exploitation on the other hand is a more conservative attitude that consists of accumulating rewards via strategies that are already known to be good.
Optimally balancing these two behaviors is at the core of RL and remains a major open question to this day. 

Recent advancements on this challenging problem resulted in successes on tasks that were thought to be out of reach for our current technology.
One of the most notable examples is AlphaGo Zero \citep{alphaGoZero}, a computer program that achieved super-human performance in the traditional board game of Go through self-play.
The scope of reinforcement learning is, however, not limited to games; RL offers a very general framework to reason about a broad range of problems such as robotic manipulation and dexterity \citep{openAIdexterity}, data center cooling \citep{dataCenterCooling}, and optimizing chemical reactions \citep{zhou2017optimizing}.
Recently  RL has been successfully applied on various physics problems including optimal jet grooming \citep{jetgrooming}, quantum state preparation \citep{bukov2018reinforcement,bukov2018reinforcement2,albarran2018measurement}, quantum gate design \citep{niu2019universal} and quantum error correction \citep{fosel2018reinforcement}, often outperforming previous optimization methods.

Many approaches could be taken to tackle a reinforcement learning problem. Most of the successful ones however end up learning a quantity called a value function in one way or another. A value function is a function that takes a state of the environment and returns the expected cumulative rewards an agent can achieve starting from that state. A more technical definition of the value function can be found in Section~\ref{sec:RLintro}. In a game of chess or Go, a value function could for example look at the state of the board and compute the probability that Black wins.

Rewards and value functions have a very similar flavor to energies - they are extensive quantities and the agent is trying to find a path that maximizes them. Many natural phenomena can be understood via an extremization principle. For example, in classical mechanics or electrodynamics, the \emph{principle of least action} dictates that a mass or light will follow the path that minimizes a physical quantity called the \emph{action}.
Similarly, in thermodynamics, a system with many degrees of freedom---such as a gas---will explore its configuration space in search of a configuration that minimizes its free energy. In RL, value functions are often treated as the central object of study.
This stands in contrast to statistical physics formulations of such problems in which a quantity called the \emph{partition function} is the primary abstraction, from which all the relevant thermodynamic quantities---average energy, entropy, heat capacity---can be derived.
A natural question to ask is whether 
there exists a theoretical framework for reinforcement learning that is centered on a partition function, in which value functions can be interpreted via average energies?

This question is explored in Chapter~\ref{chap:Zlearning}. Inspired by the construction of partition functions in Statistical Physics, we construct a partition function for every state of the environment from the ensemble of possible trajectories spanning from that state. Although value functions can be derived from these partition functions and interpreted via average energies, we show that our purely partition function based approach can form the basis of alternative dynamic programming approaches. 

Compared to classical reinforcement learning methods, our approach has three main benefits. First, in deterministic environments, partition functions obey linear Bellman equations allowing direct solutions that were unavailable for the nonlinear equations associated with the use of traditional value functions. 
Second, our approach is able to treat all rewards equally over time, which contrasts with traditional approaches that need to discount future rewards in order to get well defined Bellman equations. 
Third, our approach learns policies that are qualitatively different from the ones found by classical RL algorithms. These policies not only optimize for energy (the sum of future rewards) but also take {\it entropy} into account, favoring states from which many good outcomes are possible. To illustrate that point, let's consider a simple setting in which an agent is trying to go from point A to point B. At point A, two actions are possible: going up or going down. If the agent chooses up then there is only one path that leads to B, but if he chooses down then there are 99 valid paths that lead to B. Let's further assume that all these paths are equally good. Traditional RL approaches will not have a preference between going up or going down as both of them lead to B. Our approach however will prefer going down and even prescribes choosing the down action 99 times more frequently than the up action. Such policies are desirable for their exploratory and robustness properties.

 From this example, we can see that our statistical physics based approach to reinforcement learning naturally leads our agent to select action in a stochastic way - this is referred to as a stochastic policy. In contrast, a policy is deterministic if the agent always selects the same action given a certain state. If an environment is known and Markovian, commonly referred to as a Markov Decision Process (MDP), one could prove that there always exists a deterministic policy that acts optimally in that environment. As a result, one could think that learning a deterministic policy is optimal \citep{sutton2018reinforcement}. However, when the environment is not fully known, this is not the case and deterministic policies can then fail in a miserable way. 

To illustrate that, consider a simple task in which an agent is first presented with an image of an animal and then has identify it with as few guesses as possible. In this example, a policy is just a mapping from images to guesses or labels. During the training phase, the agent is presented with images of dogs, cats and other animals from the training set and learns how to identify them. During the testing phase the agent is presented with new images of the same animals. In the hypothetical case where the training set is exhaustive, containing every picture of every animal from every angle, background and lighting condition, a deterministic policy learned on the training set will perform equally as good on the testing set. In real life however, data is limited and it is very unlikely that the agent will be able to learn a perfect classifier that correctly generalizes to new images with perfect accuracy. As a result, during the testing phase, the agent will at some point encounter an image that he is unable to classify correctly, not only in his first guess but in all the subsequent infinite number of available guesses as well because the policy is deterministic.  This is problematic, especially given that even a completely random guessing policy will eventually guess the correct label.

This simple toy experiment seems to indicate that there is a benefit in learning a stochastic policy to improve generalization. 
In fact, many successful RL algorithms encourage the agent to learn a stochastic policy by explicitly regularizing the entropy of the policy in the optimization objective.
Adding uniform randomness everywhere is, however, sub optimal.
The randomness in the agent's policy should reflect his uncertainty and confidence about his decision making. 
In Chapter~\ref{chap:GeneralizationRL} we study the generalization problem in Reinforcement Learning from a Bayesian perspective by modeling the uncertainty the agent has about the environment. We show how incomplete and imperfect knowledge of the environment implicitly turns a fully observed and Markovian environment (MDP), into a Partially Observed MDP (POMDP) \citep{sondik1971optimal} which we call the Epistemic POMDP. This novel point of view allows us to derive a new RL algorithm, LEEP, with improved generalization properties.

Having robust, data efficient, and reliable learning algorithms is a necessary ingredient to solve real world problems with RL. 
An equally important and crucial ingredient is having a good reward function. After all, this is the quantity that RL algorithms are trying to optimize. 
Many real world problems do not come with natural reward functions - these are usually crafted by humans. 
Even in situations where there is a natural reward function to optimize, there is still some benefit in engineering a new reward function that makes the optimization problem easier to solve. 
For instance in the game of chess, a natural reward function is to reward an agent with a~+1 for win,~0 for a draw and~-1 for a loss. 
This is an example of a sparse reward function because the agent only gets rewarded at the end of the game without intermediate feedback. 
Sparse reward functions are hard to optimize and it is sometimes helpful to introduce intermediate rewards that are positive when the agent captures an opponents' piece and negative when they lose one. 
While it's helpful to introduce intermediate rewards and more generally handcraft a reward function, reward shaping introduces human bias into the problem and in many cases, the policy that the agent learns will exploit the reward function in ways that the designer did not foresee. 
In some cases, the policy that maximizes the human engineered reward function does not solve the original task. One example of that is a game called Coast Runners where boats compete to finish a race as quickly as possible. To help them, intermediate targets were designed along the racetrack that not only help them get speed boosts but also reward them with extra bonus points when they are hit. 
It turned out that these intermediate targets disincentivized the agent from learning to win the race. The highest possible score is achieved by ignoring the race and focusing on hitting these intermediate targets \citep{ clark2016faulty}.

Unintended negative consequences of an incentive are not restricted to games or RL, they can be found in many real life societal policies as well. The Cobra effect \citep{siebert2001kobra} is a historical anecdote used to illustrate perverse incentives that presumably occurred in India during the British rule. The British government, concerned about the increasing number of venomous cobras in Delhi, offered a bounty incentive for every dead cobra in hopes of reducing their numbers. This policy was very successful at reducing the number of cobras until the people realized that they could significantly increase their income by breeding cobras. The policy was subsequently canceled and the final situation was worse than the starting point. 

Designing reward functions that are robust to these perverse incentives is not a very well studied subject within reinforcement learning. It is however a central theme in Mechanism Design, a sub field of Game Theory. Game Theory studies the emergent macroscopic behavior resulting from known microscopic interactions between agents. Mechanism Design goes the other way - it starts from a desirable macroscopic behavior and then tries to design mechanisms and incentives, or the rules of the game, that would result in this global behavior, assuming individuals act rationally. That's why it's sometimes referred to as reverse game theory. Mechanism design has been applied to many fields from economics and politics  to many problems such as market design, auction theory, social choice theory, voting systems, networked-systems, and many others. In this thesis, we will focus on Auction Theory, and while some of the ideas and methods we introduce are specific to that domain, others are more generally applicable to other domains in mechanism design.

An auction is a process of organizing the buying and selling of products and services that is of great practical importance in many private and public sectors. Examples include the sales of treasury bills by the US government, radio wave frequencies by the FCC, art by Christie’s, or ads by Google. A simple auction model goes as follows: at the start of the auction, each one of the bidders place a bid on each one of the items. All of these bids are then collected by the Auctioneer who decides the item allocation as well as the amount each bidder has to pay for their participation in the auction. While any mapping from bids to allocations and from bids to payments could constitute a valid auction mechanism, in practice, we prefer auctions that verify some desirable properties. 

The first desirable property is called {\it incentive compatibility}. An incentive compatible auction mechanism is one where the utility of each bidder is maximized by bidding truthfully on each one of the items. This means that the optimal bid on an item is exactly the amount of money that the bidder is willing to pay for the item. If an auction is not incentive compatible, bidders could strategically choose their bids and potentially bid untruthfully to maximize the value they get from participating in the auction. Enforcing incentive compatibility disincentivises any strategic behavior and as a result levels the playing field for all the bidders, irrespective of their experience, motivation, and means. Incentive compatible auctions are sometimes referred to as strategy-proof auctions. 

The second desirable property is called {\it individual rationality}. An individually rational auction is one where a bidder is never worse off after participating in the auction as long as their bid is truthful. For instance, this means that a bidder will not be charged for participating in the auction if she didn't end up getting any of the items. More generally, the value of the items that a truthful bidder gets is always greater than or equal to the amount she has to pay to the auctioneer. Individual rationality encourages participation in the auction. 

How to design an incentive compatible, individually rational auction mechanism that maximizes revenue for the auctioneer? Despite its apparent simplicity, this problem turns out to be surprisingly hard. In the case where there is a single item for sale, the solution is known from Myerson's seminal piece of work \citep{myerson1981optimal}. Beyond the single item setting, the problem is not completely resolved even for auctions as simple as two bidders and two items, despite forty years of mathematical research. Another line of work to confront this theoretical hurdle consists in building automated methods to find the optimal auction, typically by framing the problem as a linear program. However, this approach suffers from severe scalability issues as the number of constraints and variables grows exponentially with the number of bidders and items \citep{guo2010computationally}. 

A recent line of work initiated by \citet{dutting2017optimal} leverages the expressivity and scalability of neural networks to go beyond the limitations of linear programs. Their idea is to parameterize the allocation and payment functions with deep neural networks and use gradient descent to learn an incentive compatible, individually rational auction mechanism that maximizes revenue. Their algorithm, RegretNet is capable of finding near-optimal results in several known settings and obtaining new mechanisms in unknown cases. While very successful, RegretNet suffers from two weakness. In practice, the algorithm is sample inefficient and hard to train.

RegretNet can require a large number of samples to learn an optimal auction. Furthermore, it is incapable of generalizing to new auctions with a different number of bidders and items. An optimal mechanism learned by RegretNet for an auction consisting of $n$ bidders and $m$ items can only be used on auctions with $n$ bidders and $m$ items because the neural network expects inputs of a specific dimension. If an additional bidder were to join or leave the auction or a new item was added or removed, then we would have to build and train a new auction mechanism from scratch. In Chapter~\ref{chap:EquivariantNet}, we tackle these two issues in the case of symmetric auctions.  These
are auctions which are invariant to the relabeling of the items or bidders. More specifically, such auctions are anonymous (in that they can be executed without any information about the bidders, or labeling them) and item-symmetric (in that it only matters what bids are made for an item, and not its a priori label). We prove that these auctions always admit optimal allocation and payment functions that are equivariant and then proceed to build equivariant neural network architectures that respect this symmetry. Our algorithm, EquivariantNet, is more sample efficient than RegretNet and can also generalize to different auctions. 

The loss function in RegretNet is non stationary. It depends on several hyperparameters whose values change over time according to a predefined schedule. This makes RegretNet hard to train in practice. We also observe experimentally that the algorithm is very sensitive to the choice of these hyperparameters, converging to a suboptimal mechanism when these are not picked appropriately. Furthermore, these hyperparameters are setting dependent, their values depend on the number of bidders and items in the auction, and are currently found through an expensive hyperpameter search. In Chapter~\ref{chap:ALGnet}, we construct a novel, stationary, and hyperparameter-free loss function inspired by recent theoretical results from Auction Theory and propose a novel formulation of the auction learning problem as a two player game. The first player, the Auctioneer, proposes new auction rules. The second player, the Misreporter, is trying to exploit these rules and find optimal ways to bid untruthfully. These two players interact and over time, the Misreporter becomes better at finding optimal bids and the Auctioneer becomes better at proposing auction mechanisms that increasingly get closer to being incentive compatible. We call this algorithm ALGNet and we show that it's as good or better than RegretNet, while being nearly hyper-parameter free.

\section{Summary of Contributions}

The contributions of this dissertation are summarized below:

\begin{itemize}
    \item In Chapter~\ref{chap:Zlearning}, we propose a novel approach to the Reinforcement Learning problem. Inspired by Statistical Physics, we construct a partition function for each state of the environment and derive the corresponding Bellman equation.
    Our approach has three main benefits. First, it results in simpler equations, especially if the environment is deterministic. Second, it is able to treat all rewards equally over time (no need for a discount factor). 
    Third, it learns policies that not only optimize for rewards  but also take entropy into account, favoring states from which many good outcomes are possible.
    
    Chapter~\ref{chap:Zlearning} is based on the following work:
    
    {\it \bibentry{rahme2019theoretical}}.
    
    \item In Chapter~\ref{chap:GeneralizationRL}, we study the
generalization problem in Reinforcement Learning from a Bayesian perspective by
modeling the uncertainty the agent has about the environment. We show how
incomplete and imperfect knowledge of the environment implicitly turns a fully
observed and Markovian environment (MDP) into a Partially Observed MDP
(POMDP), which we call the Epistemic POMDP. This novel point of view allows us to
derive a new RL algorithm, LEEP, with improved generalization properties.

Chapter~\ref{chap:GeneralizationRL} is based on the following work (* indicates equal contribution):

{\it \bibentry{rahme2021generalization}}.

\item In Chapter~\ref{chap:EquivariantNet}, we prove that symmetric auctions always admit optimal allocation and payment functions that are equivariant and then
proceed to build an equivariant neural network architecture that respects this symmetry.
We show that our algorithm, EquivariantNet, is not only more sample efficient than previous methods but can also generalize well to different auctions. 

Chapter~\ref{chap:EquivariantNet} is based on the following work:

{\it \bibentry{Rahme_Jelassi_Bruna_Weinberg_2021}}.

\item In Chapter~\ref{chap:ALGnet}, we construct a novel, stationary, and hyperparameter-free loss
function for the auction learning problem inspired by recent theoretical results from auction theory, and propose a novel formulation of the auction learning problem as a two player game (similar to GANs, \citep{goodfellow2014generative}). We show that the resulting algorithm
ALGNet is as good or better than RegretNet, while being nearly
hyper-parameter free. \\\\

Chapter~\ref{chap:ALGnet} is based on the following work:

{\it \bibentry{algnet}}.
\end{itemize}

\noindent The following two sections cover some background material on Reinforcement Learning and Auction Theory.
\clearpage
\section{Background in Reinforcement Learning}
\label{sec:RLintro}

In this section, we review the setup of the Reinforcement Learning problem as well as some of its basic concepts and approaches. A good reference on Reinforcement Learning can be found in \citet{sutton2018reinforcement}.
\subsection{The RL Problem}
\subsubsection{RL as a Markov Decision Problem}

\begin{figure}[H]
    \centering
    \includegraphics[width=0.8\linewidth]{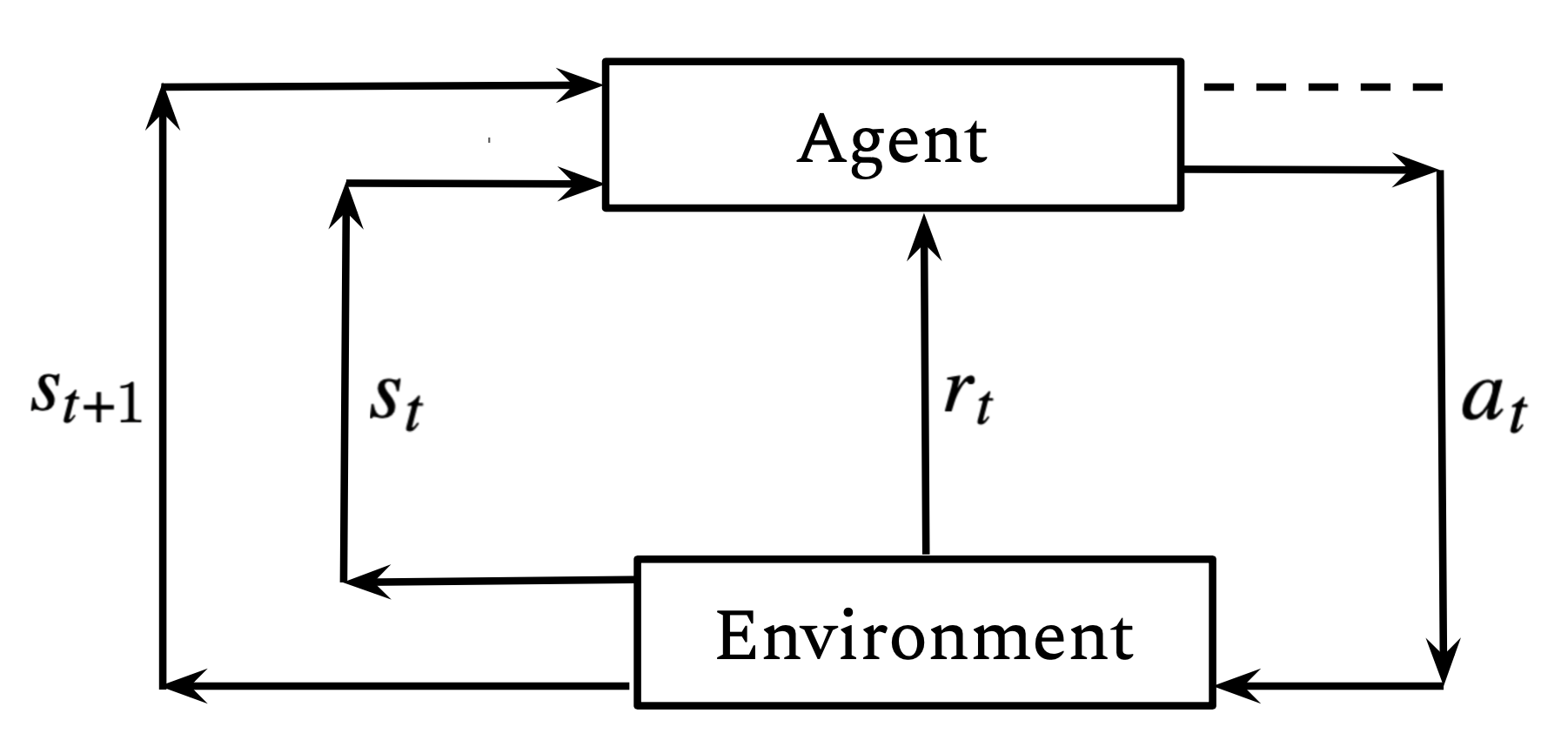}
    \caption{Representation of the agent–environment interaction in a Markov decision process.}

\end{figure}

Reinforcement Learning (RL) is an area of Machine Learning (ML) that studies how agents should behave in an environment in order to maximize their cumulative reward. The agent's sequential decision-making process is usually modeled as a Markov Decision Process (MDP). 

At every time step~$t$, the agent observes the state of the environment,~$s_t$, and then decides what action to take,~$a_t$. This action has two effects - first, it changes the state of the environment from~$s_t$ to~$s_{t+1}$ and second, it rewards the agent with a reward~$r_t$. The state~$s_{t+1}$ and the reward~$r_t$ are random variables - the same causes don't necessarily results in the same effects. In an MDP, the environment's dynamics are allowed to be stochastic but they have to be Markovian. This means that the distribution of states~$s_{t+1}$ that the agents lands in after taking action~{$a_t$} from state~$s_{t}$ only depends on~$s_t$ and~$a_t$, and not on past states~$\{s_{t'}\}_{{t'}<t}$ or past actions~$\{a_{t'}\}_{{t'}<t}$. This also holds for the reward $r_t$: its distribution is only a function of the initial state~$s_t$, the action taken~$a_t$, and the landing state~$s_{t+1}$.\\

\noindent More formally, an MDP is defined by the objects~$(\mcS,\mcA,\mcR,\mcP)$ where:
\begin{itemize}
\item $\mcS$ is the set of states the environment can be in,
\item $\mcA$ is the set of actions an agent can take, 
\item ${\mcP(s,a,s') = \mathbb{P}(s'\mid s,a)}$ is the probability of landing in state~$s'$ after taking action~$a$ while still in state~$s$,
\item $\mcR(s,a,s')$ is the reward resulting from the transition $s \xrightarrow[]{\text{a}} s'$. $\mcR(s,a,s')$ is a random variable. We usually assume that all rewards are bounded from above by~$\mcR_{\text{max}}$.
\end{itemize}

{\bf Deterministic MDPs:} An MDP is deterministic if~$\mcP(s,a,s')$ is~$0$ for all states~$s'$ except one, which will be conveniently denoted by~${s'=s+a}$. In this case we have~${\mcP(s,a,s') = \delta_{s+a}(s')}$ and we will concisely denote~$\mcR(s,a,s+a)$ by~$\mcR(s,a)$. $\mcR(s,a)$ is also assumed to be deterministic.

\subsubsection{Policies}

In RL, the policy $\pi$ describes how the agent acts in the environment.
$\pi(a \mid s)$ denotes the probability that the agent picks action~$a$ while in state~$s$.
If we denote the reward resulting from the $t$-th transition by $r_{t}$, so that ${r_t \coloneqq \mcR(s_t,a_t,s_{t+1})}$, we can express the cumulative reward $\mcR_{\text{total}}(\pi) $ of such a policy as: 
\begin{equation}
\label{eqn:rl_objective_into}    \mcR_{\text{total}}(\pi) = \mathbb{E}_{\substack{a_t \sim \pi(. \mid s_t) \\s_{t+1} \sim \mathbb{P}(. \mid s_t, a_t) }} \left[ \sum_{t=0}^{+\infty} \gamma^t r_t \right]\,,
\end{equation}
where the expectations are taken with respect to the realized sequences of states and actions, according to the policy and environment dynamics.
Here~${\gamma \in [0,1)}$ is called the \emph{discount factor} that can be interpreted as a preference for immediate rewards over future ones. The discount factor~$\gamma$ is also necessary for mathematical reasons: without this discount factor, many quantities in RL are not well defined.
For example, the infinite series determining $\mcR_{\text{total}}(\pi)$ could diverge.

\subsubsection{The RL Objective}

The goal of RL is to find an optimal policy $\pi^{*}$ that maximizes $\mcR_{\text{total}}(\pi)$:
\begin{align*}
\pi^{*} &= \argmax_{\pi} \,\, \mcR_{\text{total}}(\pi)\,.
\end{align*}

There are two main approaches to solving a RL problem: value function type approaches and policy gradient type approaches. In the following sections we will give a quick exposition of both of these approaches. These sections are not meant to be exhaustive or extensive by any measure, their goal is to give the reader some background that could help them contrast traditional approaches to RL with the novel approaches and methods that we propose in Chapters~\ref{chap:Zlearning}~and~\ref{chap:GeneralizationRL}.

\subsection{Value Function Approaches}
\subsubsection{Value functions associated with a policy}
The value function~$V$ associated with a policy~$\pi$ is a function of the state~$s$ that measures the expected cumulative reward an agent will get by following the policy~$\pi$ starting from state~$s$:
\begin{align*}
    V^{\pi}(s) = \mathbb{E}_{\substack{a_t \sim \pi(. \mid s_t) \\s_{t+1} \sim \mathbb{P}(. \mid s_t, a_t) }} \left[ \sum_{t=0}^{+\infty} \gamma^t r_t \mid s_0 =s \right]\,.
\end{align*}
The value functions at different states are connected by a recursion called the {\it Bellman equation}: 
\begin{align*}V^{\pi}(s) =  \mathbb{E}_{\substack{a \sim \pi(. \mid s) \\s' \sim \mathbb{P}(. \mid s, a) }} \left[ \mcR(s,a,s') + \gamma\, V^{\pi}(s')  \right]\,.
\end{align*}
In the Bellman equation above, the expectation is taken with respect to a single action and a single state transition.
Similar to $V^{\pi}$, we can define another type of value function~$Q^{\pi}$ (the {\it ``Q-function''}) which is a function of a state-action pair $(s,a)$, now measuring expected cumulative reward from following the policy $\pi$ after taking action~$a$ from state $s$:
\begin{align*}
Q^{\pi}(s,a) &= \mathbb{E}_{\substack{s_{t+1} \sim \mathbb{P}(. \mid s_t, a_t) \\ a_{t+1} \sim \pi(. \mid s_{t+1}) }} \left[ \sum_{t=0}^{+\infty} \gamma^t r_t \mid s_0 =s, a_0=a \right]\,.
\end{align*}
$Q^{\pi}$ also follows a Bellman equation given by:
$$Q^{\pi}(s,a) =  \mathbb{E}_{\substack{s' \sim \mathbb{P}(. \mid s, a) \\ a' \sim \pi(. \mid s') }} \left[ \mcR(s,a,s') + \gamma \,Q^{\pi}(s',a')  \right]. $$

\subsubsection{Optimal value functions}
When the policy $\pi$ is optimal, $\pi = \pi^{*}$, the Bellman equations for $V$ and $Q$ are referred to as the {\it optimal Bellman equations} and are given by:
\begin{equation}
\label{eqn:optimal_bellman_QV}
    \begin{aligned}
    V^{*}(s) &=  \max_{a\in \mcA} \,\, \mathbb{E}_{\substack{s' \sim \mathbb{P}(. \mid s, a) }} \left[ \mcR(s,a,s') + \gamma V^{*}(s')  \right]\,,\\
    Q^{*}(s,a) &= \mathbb{E}_{\substack{s' \sim \mathbb{P}(. \mid s, a)}} \left[ \mcR(s,a,s') + \gamma  \max_{a'\in \mcA} \,\, Q^{*}(s',a')  \right]\,.
    \end{aligned}
\end{equation}
These optimal Bellman equations are fixed point equations.
The mapping underlying these fixed point equations is called the {\it Bellman operator}.
One can show that when~${0\leq \gamma < 1}$, these Bellman operators are contractions of norm~$\gamma$.
As a result, when the Bellman operator is known, one can converge to $V^{*}$ and $Q^{*}$ by successive iterations of their Bellman operators starting from any initialization (via the Banach fixed-point theorem). The optimal policy $\pi^{*}$ can then be recovered from the optimal value function, as:
\begin{equation*}
    \pi^{*}(a\mid s) =  \begin{cases} 1 &\mbox{if } a = \argmax_{a'\in \mcA} \,\, Q^{*}(s,a') \,, \\ 0  &\mbox{otherwise }.\end{cases}
\end{equation*}

Finding the optimal value function through iteration of the Bellman operator is not always possible. In most settings, the dynamics of the MDP, $\mathcal{P}$, and the reward function, $\mathcal{R}$, are not fully known and as a result, it's not possible to solve the RL problem through an exact fixed point iteration scheme. Furthermore, even when the dynamics of the environment are known, it is not always possible to proceed through fixed point iteration and this is especially true for MDPs with large state and actions spaces. 


\subsubsection{Exploration}
When the dynamics of the environment ($\mcR$ and $\mcP$) are unknown, the expectations in the Bellman equations cannot be computed exactly; they can only be estimated using samples collected through interactions with the environment.

The policy $\pi_{\text{exploration}}$ used by the agent to collect these samples and learn about the environment is called the {\it exploration policy}.
Depending on the problem, some exploration policies can be better than others.
Two of the most popular exploration policies are:
\begin{itemize}
    \item {\it $\epsilon$-greedy}: Pick $\argmax_{a'\in \mcA} \,\, Q(s,a')$ with probability $1-\epsilon$, and any action uniformly at random with probability $\epsilon$.
    \item {\it Boltzmann exploration of parameter $\beta$}: At state $s$, pick action $a$ proportionally to $\exp\left[\beta \,\, Q(s,a) \right]$.
\end{itemize}
Both of these exploration policies use current estimates of the $Q$-function.
In the beginning, when the agent only had a few interactions with the environment, the estimate for the optimal $Q$-function is very uncertain, and typical values for $\epsilon$ and $\beta$ are chosen to be $1$ and $0$ respectively. This corresponds to taking actions uniformly at random.
As the agent interacts more with the environment and collects more data, $\epsilon$ is decreased to~$0$ and $\beta$ is increased to a large number.
This reflects our confidence that the $Q$-function is becoming more accurate over time.

Some RL algorithms (e.g. $Q$-learning) only use the latest transition seen by the exploration policy to update the $Q$~values and as a result don't need to store past past transitions. Other algorithms (e.g. DQN)  don't limit themselves to the latest transition but also use previously seen transitions to update their current $Q$~values. When that is the case, all the interactions with the environment are recorded and stored in a dataset called a {\it Replay Buffer}. A typical entry in the replay buffer takes the form a tuple $(s_t, a_t, s_{t+1}, r_t, f_t)$, where $f_t$ is a Boolean entry that indicates whether the episode ended or is still ongoing. 


\subsubsection{$Q$-learning}
The $Q$-learning algorithm is a popular value function based, RL learning algorithm first introduced in \citet{watkins1989learning}. $Q$-learning does not assume that the dynamics of the environment are known - it's a {\it model-free} algorithm. 

The algorithm goes as follows: Initially, all $Q$~values are initialized to arbitrary values (or randomly). Then, at each time step $t$, the agent looks at the state~$s_t$ of the environment, takes action~$a_t$, and observes the next state, $s_{t+1}$,  and the reward associated with the transition,~$r_t$.  The algorithm then updates the $Q$~value of the state-action pair $(s_t,a_t)$ according the following learning rule:

\begin{equation}
\label{eqn:qlearning_update}
    \underbrace{Q(s_t,a_t)}_{\text{updated value}} \longleftarrow \underbrace{Q(s_t,a_t)}_{\text{old value}} + \underbrace{\alpha_t}_{\text{learning rate}} \left(\underbrace{ r_t +\underbrace{\gamma}_{\text{discount factor}} \times \underbrace{\max_{a\in \mathcal{A}} Q(s_{t+1},a)}_{\text{optimal Q value at } s_{t+1}}}_{\text{target}}- \underbrace{Q(s_t,a_t)}_{\text{old value}} \right).
\end{equation}

The parameter $\alpha_t \in (0,1)$ in this equation is called the {\it learning rate} and can be time-dependent. Intuitively, this update rule is trying to close the gap between the left hand side and the right hand side of the optimal Bellman equations for the $Q$-function (Equation~\ref{eqn:optimal_bellman_QV}).  The $Q$-learning algorithm is  summarized in Algorithm~\ref{alg:qlearning}:

\begin{algorithm}[H] 
  \caption{$Q$~learning}\label{alg:qlearning}
  \begin{algorithmic}[1]
\State Initialize $Q(s,a)$ for all $s \in \mathcal{S}$ and $a\in \mathcal{A}$ arbitrarily.  
\State If $s$ is a terminal state, set $Q(s,a) =0$ for all actions $a$.
    \For{ each episode}
        \While{ episode has not ended}
      \State Observe $s_t$ and choose action~$a_t$ according to an exploration policy.
      \State Take action $a_t$ and observe $s_{t+1}$ and $r_{t}$.
      \State Update $Q(s_t,a_t)$ according to the update rule~\ref{eqn:qlearning_update}.
        \EndWhile
    \EndFor
\State Return $Q$.
  \end{algorithmic}
\end{algorithm}

The $Q$-learning algorithm provably learns the optimal ~$Q$~function under some reasonable assumptions:
\begin{itemize}
    \item The learning rate $\alpha_t$ has to decrease to~$0$ but not too fast ($\sum \alpha_t$ should diverge and $\sum \alpha_t^2$ should converge).
    \item Each state action pair $(s,a)$ must be visited an infinite number of times by the exploration policy.
\end{itemize}
A more technical statement of these assumptions and a proof of the convergence result can be found in \citet{watkins1992q}.

$Q$-learning works well for MDPs with small state and action spaces but becomes intractable in larger ones. In the following section, we will see how to scale $Q$-learning to larger MDPs with the help of function approximations and deep learning. 

\subsubsection{Deep $Q$ Networks (DQN)}
In the previous section, the learning algorithm had to learn a table of $|\mathcal{S}| \times |\mathcal{A}|$ $Q$~values, one for each state-action pair~$(s,a)$. This is not possible for MDPs with a large state and action space, such as MDPs with continuous state spaces. This is where function approximations become useful. 

The idea is to parameterize the $Q$~function with a family of functions $\{Q_{\theta}\}_{\theta \in \mathbb{R}^d}$, typically a neural network, and then learn the optimal value of the parameter $\theta$ such that we have $Q_{\theta}(s,a) \approx Q^*(s,a)$.  Note that this problem is tractable because the dimensionality of the learning problem, $d$, is independent of the size of the MDP. The optimal value of the parameter $\theta$ is learned by minimizing the Bellman error with (a variant of) gradient descent:
\begin{equation}
\label{dqn_loss}
    \mathcal{L}(\theta) = \mathbb{E}_{s_t,a_t,s_{t+1}} [\,\, ( Q_{\theta}(s_t,a_t) -\underbrace{ [r_{t} +\gamma \times \max_{a' \in \mathcal{A}} Q_{\theta}(s_{t+1},a')}_{\text{target}}] \,\,\, )^2 \,\, ] \,.
\end{equation}

The expectation in $\mathcal{L}(\theta)$ is empirically estimated by sampling a batch of random transitions from the replay buffer. Optimizing $\mathcal{L}(\theta)$ is not as straightforward as it seems and many tricks are required to stabilize the learning algorithm. For instance, when computing the gradient of $\mathcal{L}(\theta)$, the target is treated as a constant and does not contribute to the overall gradient. In fact, the target is computed using a ``delayed" version of the parameter $\theta$. 

The details of the training procedure and the empirical tricks needed to stabilize the learning algorithm can be found in the original DQN paper \citep{mnih2013playing}. Many additional improvements to the DQN algorithm were discovered since its initial publication and some of the major ones are reported in \citet{hessel2018rainbow}.

\subsection{Policy Gradient Approaches}

Instead of learning an optimal value function from which an optimal policy can be inferred, policy gradient approaches tackle the RL problem by learning the optimal policy directly. In the following we will parameterize the policy space by a family of functions $\{\pi_{\theta}\}_{\theta \in \mathbb{R}^d}$.

\subsubsection{REINFORCE}
The RL objective (Equation~\ref{eqn:rl_objective_into}) can be re-written more explicitly as an expectation over trajectories $\tau = \{(s_t,a_t s_{t+1})\}_{0\leq t \leq T}$, as: 
$$J(\theta)= \mathbb{E}_{\tau \sim \rho_{\theta}} \left[ R(\tau) \right]\,,$$
where $R(\tau) = \sum_t \gamma^t r_t$ is the total reward encountered by the trajectory $\tau$, and:
$$\rho_{\theta}(\tau):=\underbrace{p_{0}\left(s_{0}\right)}_{\text{distribution of the initial state}} \prod_{t=0}^{T} \pi_{\theta}\left( a_{t} \mid s_{t} \right) \mathbb{P}\left[s_{t+1} \mid s_{t}, a_{t}\right]$$
is the probability of observing trajectory $\tau$ by following the policy $\pi_{\theta}$. Using the {\it log trick}, $\nabla_{\theta} \rho_{\theta}(\tau)=\rho_{\theta}(\tau) \nabla_{\theta} \log \rho_{\theta}(\tau)$,  we can write $\nabla_{\theta} J(\theta)$ as: 
$$\nabla_{\theta} J(\theta)=\mathbb{E}_{\tau \sim \rho_{\theta}}\left[\nabla_{\theta} \log \rho_{\theta}(\tau) R(\tau)\right]\,.$$
Since $\mathbb{P}\left[s_{t+1} \mid s_{t}, a_{t}\right]$ does not depend on $\theta$, we have:
$$\nabla_{\theta} \log \rho_{\theta}(\tau)=\sum_{t=0}^{T} \nabla_{\theta} \log \pi_{\theta}\left(s_{t}, a_{t}\right)\,,$$ 
and finally we find a very simple estimate  of $\nabla_{\theta} J(\theta)$:
\begin{equation}
\label{eqn:reinforce_0}
    \nabla_{\theta} J(\theta)=\mathbb{E}_{\tau \sim \rho_{\theta}}\left[\sum_{t=0}^{T} \nabla_{\theta} \log \pi_{\theta}\left(s_{t}, a_{t}\right) R(\tau)\right]\,.
\end{equation}
This result was first derived by \citet{williams1992simple}.
Optimizing $J(\theta)$ by following an empirical estimate of the gradient above~(Equation~\ref{eqn:reinforce_0}) is known at the REINFORCE algorithm.   
\begin{algorithm}[H] 
  \caption{Vanilla REINFORCE}\label{alg:vanilla_reinforce}
  \begin{algorithmic}[1]
\State Set: $\alpha$: learning rate, $N$: number of iterations,  $B$: sample size.
\State Initialize $\pi_{\theta}$.
    \For{ i from 1 to N}
        \State Set $\nabla_{\theta} J(\theta) = 0$.
        \For{ b from 1 to B }
      \State Sample one trajectory $\tau$ and compute its total rewards $R(\tau)$. 
      \State Gradient accumulation: $\nabla_{\theta} J(\theta) \longleftarrow \nabla_{\theta} J(\theta) + \frac{1}{B} \sum_{t=0}^{T} \nabla_{\theta} \log \pi_{\theta}\left(s_{t}, a_{t}\right) R(\tau)$.
        \EndFor
        \State Update $\theta$:  $\theta \longleftarrow \theta + \alpha \nabla_{\theta} J(\theta)$.
    \EndFor
\State Return $\pi_{\theta}$.
  \end{algorithmic}
\end{algorithm}

\subsubsection{Beyond REINFORCE}

The gradient estimates in REINFORCE are usually very noisy as they suffer from high variance which can destabilize the learning process.  The algorithm can be made more reliable through the usage of variance reduction schemes. These usually involve introducing value function estimates inside the REINFORCE formula.
For instance, further analysis of Equation~\ref{eqn:reinforce_0} shows that it can be re-written as: 
$$\nabla_{\theta} J(\theta)=\mathbb{E}_{s \sim \rho_{\theta}, a \sim \pi_{\theta}}\left[\nabla_{\theta} \log \pi_{\theta}(s, a) Q^{\pi_{\theta}}(s, a)\right],$$
where $Q^{\pi_{\theta}}$ is the $Q$~function of policy $\pi_{\theta}$ and~$\rho_{\theta}$ is the state marginal distribution resulting from following the policy~$\pi_{\theta}$. This expression could be further re-written as:
\begin{equation}
\label{eqn:reinforce_advantage}
\nabla_{\theta} J(\theta)=\mathbb{E}_{s \sim \rho_{\theta}, a \sim \pi_{\theta}}\left[\nabla_{\theta} \log \pi_{\theta}(s, a) A^{\pi_{\theta}}(s, a)\right]\,,
\end{equation}
where $A^{\pi_{\theta}}(s, a):= Q^{\pi_{\theta}}(s,a) - V^{\pi_{\theta}}(s)$ is called the advantage function.  The advantage function measures how good an action is at a given state. A positive (negative) advantage indicates that the action is better (worse) than average. Using  Equation~\ref{eqn:reinforce_advantage} to estimate gradients leads to less noisy estimates and results in a more stable and improved policy learning algorithm. Further details on how to estimate~$A^{\pi_{\theta}}$ as well as a technical analysis of variance reduction schemes in policy gradient methods can be found in \citet{schulman2015high} and \citet{greensmith2004variance}.

REINFORCE minimizes the RL objective by taking a step in the direction of the gradient $\nabla_{\theta} J(\theta)$. While this can provably improve the policy in the limit of small step sizes, it is not necessarily the best direction to follow. Indeed, the gradient points towards the direction of steepest ascent when distances are measured using the Euclidean distance in the parameter space, $d(\theta_1, \theta_2) = ||\theta_1-\theta_2||_2$. Without any additional assumptions on the mapping $\theta \to \pi_\theta$, measuring distances in the parameter space might not be appropriate. Small differences in the parameter $\theta$ could lead to large differences in the policy space and as a result, in the agent's performance. Conversely, large changes in $\theta$ could correspond to infinitesimal policy changes and imperceptible changes of the agent's behavior. 

Consequently, it seems more appropriate to measure distances at the policy level instead of the underlying parameter's level. This results in a different gradient called the {\it natural gradient} and a different policy gradient algorithm called the {\it Natural Policy Gradient} (NPG) \citep{kakade2001natural}. Many policy gradient methods were then developed following up and improving on that line of work, including {\it Trust Region Policy Optimization} (TRPO) by \citet{trpo}, {\it Proximal Policy Optimization} (PPO) by \citet{Schulman2017ProximalPO},  and {\it Actor-Critic using Kronecker-factored Trust Region} (ACKTR) by \citet{wu2017scalable}.

\clearpage
\section{Background in Auction Theory}
\label{sec:IntroAuction}
In this section, we give a brief high level introduction to auction theory that gives more context for Chapters~\ref{chap:EquivariantNet}~and~\ref{chap:ALGnet}. A good reference on auction theory and mechanism design more generally can be found
in \citet{NisaRougTardVazi07} or \citet{roughgarden2016twenty}.

\subsection{A Simple Model For Auctions}

\begin{figure}[H]
    \centering
    \includegraphics[width=0.9\linewidth]{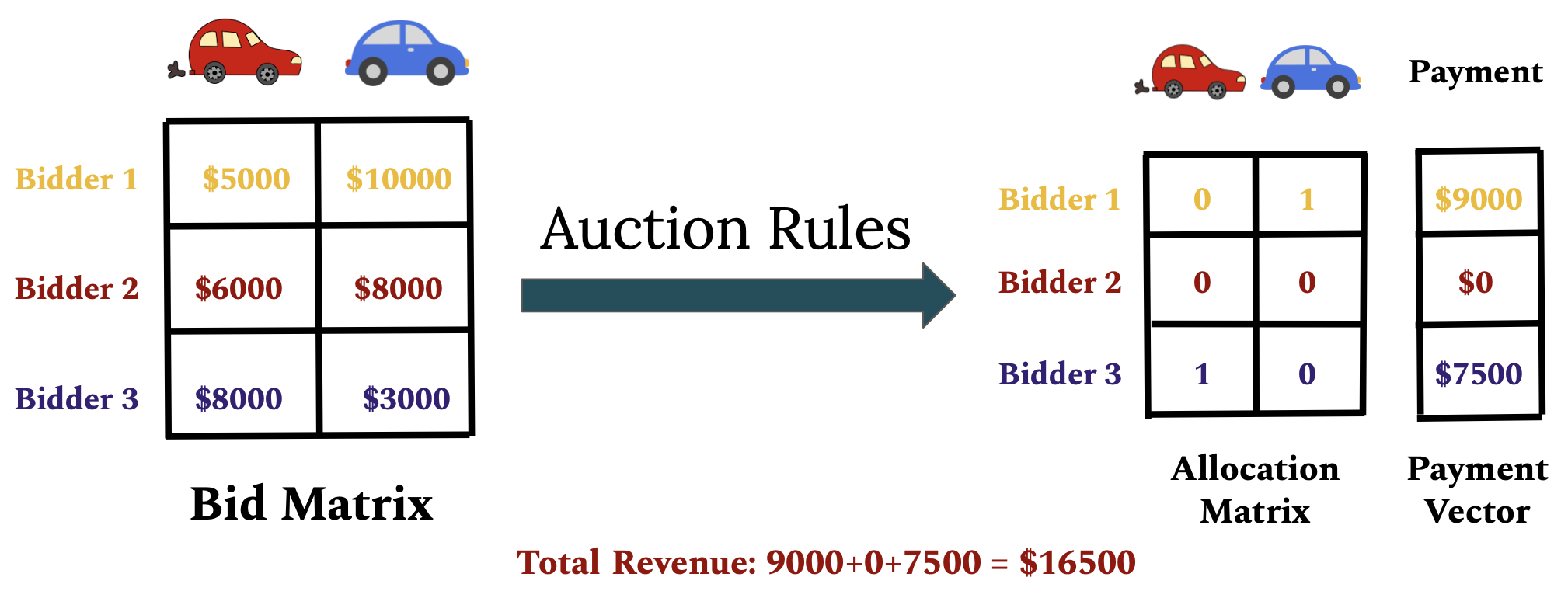}
    \caption{Illustration of the auction mechanics.}
\end{figure}

\subsubsection{Setting}

An auction consists of $n$ bidders and $m$ items. Let $N:=\{1, \cdots,n \}$ and $M:=\{1, \cdots,m \}$ denote the set of bidders and items respectively. 
At the start of the auction, each one of the bidders bids a sum of money on each one of the items. 
We will denote the bid of bidder $i$ on item $j$ by $b_{ij}$. 
These bids can be grouped into a matrix $B = \{b_{ij} \}_{i \in N, j \in M }$ called the bid matrix. 

\subsubsection{The allocation and payment functions}

An auction mechanism is characterized by two functions, the allocation function $g: \mathbb{R}^{n \times m} \to [0,1]^{n \times m}$ and the payment function $p: \mathbb{R}^{n \times m} \to \mathbb{R}^{n}$, both of which have the bid matrix $B$ as an input.  

The allocation function computes the allocation matrix $g(B)$ where  $g(B)_{ij}$ is the probability that bidder $i$ gets item $j$. Since it's possible for an item to not be allocated to any of the bidders, we have  $\forall j \in M, \,\, \sum_i g(B)_{ij} \leq 1$. It is sometimes convenient to denote the allocation function restricted to bidder~$i$, by by $g_i$, i.e.  $g_i(B) = [g(B)_{ij}]_{j\in M}$.

The payment function computes the payment vector $p(B)$ where $p(B)_{i}$ is the amount of money bidder $i$ has to pay to the Auctioneer. The revenue of the Auctioneer $P$ is the sum of the payments made by all the bidders,  $P=  \sum_{i=1}^n  p(B)_{i}$.

\subsubsection{Additive auctions}

A subset of items $S \subseteq M$ does not have the same value for each one of the bidders.  Each bidder~$i$ has his own valuation function, $v_i: 2^M \to \mathbb{R}$,  where $v_i(S)$ denotes how much bidder~$i$ values the subset of items $S$. In principle $v_i$ can be arbitrary, assigning arbitrary values to each one of the possible subsets. 

Depending on the context, it makes sense to consider simpler valuation functions that have more structure to them. For example, we could consider valuation functions in which the value of a basket of items is equal to the highest individual item value in that basket: $v_{i}(S)=\max _{j \in S} v_{i}(\{j\})$. This is called a {\it unit-demand valuation function}.  Other examples are value functions in which the value of a basket of items is equal to the sum of the values of its components: $v_{i}(S)= \sum_{j \in S} v_{i}(\{j\})$. Such valuations are called {\it additive}. Additive valuations are among the most studied valuation functions in auction theory and will be the main focus of this thesis.

An additive valuation function is fully specified by the individual values for each one of the items. In the following, we denote the value that bidder~$i$ gives for item~$j$ by $v_{ij}$. These values can be grouped into a matrix $V = \{v_{ij} \}_{i \in N, j \in M }$ called the {\it valuation matrix}. 
Note that in general,  the bid matrix can be different than the value matrix. In the following we will denote the $i$-th row of the value matrix $V$ by $\vec{v}_i$. 

\subsubsection{The utility of a bidder}

The utility of a bidder is the net amount of value obtained as a result of his participation in the auction. We can compute it as the difference between the total value of the items that the bidder got and the amount he had to pay to the auctioneer. The utility of bidder $i$, $u_i$, is given by : 
$$ u_i(v_i, B) = \underbrace{v_i(g(B))}_{\text{Value received by bidder i}} \,\,\, - \,\,\, \underbrace{p(B)_i}_{\text{Payment of bidder i}}\,.$$

Notice that the total value received by bidder~$i$ in this expression is computed using the valuation function of bidder~$i$. For additive auction, this expression can be re-written as:

\begin{equation}
    u_{i}(\vec{v}_i, B)=\sum_{j=1}^{m} g(B)_{i j} [\vec{v}_i]_j -p(B)_i = \sum_{j=1}^{m} g(B)_{i j} V_{i j}-p(B)_i\,.
\end{equation}

\subsection{Problem Statement}

While there exist infinite choices of allocation functions~$g$ and payment functions~$p$ that constitute a valid auction mechanism, in practice, we care about auctions that satisfy certain desirable properties. In the following we will focus on two desirable properties, {\it incentive compatibility} and {\it individual rationality}.\\

\noindent {\bf Notation}: Given a matrix $B \in \mathbb{R}^{n\times m}$ and $i \in \{1,\cdots,n\}$ we will denote the $i$-th row of the matrix $B$ by $\vec{b}_i$ and the $(n-1) \times m$ matrix that one gets from $B$ by removing $i$-th row by $B_{-i}$. Given a vector $\vec{b_i}'  \in \mathbb{R}^{ m}$ , we will denote the matrix that we get by replacing row $\vec(b)_i$ with  $\vec{b_i}'$ in $B$ by $(\vec{b}_i', B_{-i})$. The rows of $(\vec{b}_i', B_{-i})$ are $[\vec{b}_1,\cdots, \vec{b}_{i-1}, \vec{b_i}', \vec{b}_{i+1}, \cdots, \vec{b}_{n}]$.\\

\noindent If $B$ is a bid matrix, then $\vec{b}_i$ is the vector of bids of bidder~$i$, $B_{-i}$ is the bid matrix of all the bidders except bidder~$i$, and $(\vec{b_i}',B_{-i})$ is a bid matrix that we get if bidder~$i$ modifies his bid from $\vec{b_i}$ to $\vec{b_i}'$.

\subsubsection{Incentive Compatibility}

Strategic bidders seek to maximize their utility and may report bids that are different from their true valuations ($\vec{b}_i \neq \vec{v}_i$).  In hindsight, once all the bids are known, the optimal bid of bidder~$i$, $\vec{b}^*_i$, is given by: 

$$\vec{b}^*_i = \argmax_{\vec{b_i}' \in \mathbb{R}^{ m}} \,\,\, u_{i}\left(\vec{v}_{i},\left(\vec{b}_i', B_{-i}\right)\right) \,.$$

In general, we should expect that $\vec{b}^*_i$ is different from $\vec{v}_i$. In some auctions however, the utility of a bidder is always maximized when his bid is {\it truthful} regardless of the other bids and we have $\vec{b}^*_i = \vec{v}_i$. These auctions are called dominant strategy incentive compatible auctions (DSIC). The following provides a formal definition.

\begin{definition}
An auction $(g,p)$ is \textit{dominant strategy incentive compatible} (DSIC) if each bidder's utility is maximized by reporting truthfully no matter what the other bidders report. For every bidder $i,$ valuation $\vec{v}_i$, bid $\vec{b}_i\hspace{.02cm}'$ and bids $B_{-i}$, we have: $u_i(\vec{v}_i,(\vec{v}_i,B_{-i}))\geq u_i(\vec{v}_i,(\vec{b}_i\hspace{.02cm}',B_{-i})). $ 
\end{definition}

DSIC is a desirable property because it levels the playing field to all the bidders. It not only makes it easier for bidders to bid optimally (by bidding truthfully), but also makes it easier for the auctioneer to predict the outcome of an auction since the optimal bids are easily characterized. DSIC auctions are sometimes called {\it strategy-proof auctions}.

\subsubsection{Individual Rationality }
The payment function~$p$ in an auction can, in principle, charge a positive amount of money to a bidder who has not been allocated any of the items. It can also charge a bidder more than the total value of the items he got from the auction. In both of these cases, the utility of the bidder is negative, which means that the bidder is worse off after participating in the auction.  
An individually rational auction guarantees that such cases cannot happen to a bidder as long as he's bidding truthfully. A truthful bidder always has a non negative utility function regardless of what the other bidders decide to do. The following provides a formal definition.

\begin{definition}
An auction is \textit{individually rational} (IR) if for all $i$, $\vec{v}_i$ and $B_{-i}$ we have:
\begin{equation}\label{eq:IR_eq}
   u_i(\vec{v}_i,(\vec{v}_i,B_{-i}))\geq 0.  
\end{equation}
\end{definition}

Individually rational auctions are desirable because they encourage bidder participation in the auction. 

\subsubsection{The auction learning problem }
Each bidder knows how much each item is worth to them.  This information is not known to the other bidders nor to the auctioneer. This setting is referred to as a {\it private value auction}. However, a common assumption is that each bidder draws their value vector, $\vec{v}_i$, from some prior probability distribution, $D_i$, which is common knowledge.

The goal of the auctioneer is to design an incentive compatible, individually rational auction that maximizes his expected revenue given a prior on the bidder's value vectors $(\vec{v}_1, \cdots, \vec{v}_n) \sim (D_1, \cdots, D_n)$.

Formally, we can rewrite the problem as:
\begin{equation*}
\begin{aligned}
\hspace{-.51cm} \vspace{-1.81cm} \underset{(g,p)\in \mathcal{M} }{\text{min}}
  - \mathbb{E}_{V\sim D}\left[\sum_{i=1}^n p_i(V)\right] \quad \text{s.t.} \quad 
&  (g,p) \text{ is a DSIC auction} \,, \\
& (g,p) \text{ is a IR auction} \,. 
\end{aligned}
\end{equation*}

\subsection{Optimal Single Item Auction}
This section is intended to give the curious reader a peek into one of the most celebrated results in auction theory. While reading this section is not strictly necessary to understand the work presented in this thesis,  it could be interesting to contrast the analytical approach to finding the optimal auction presented in this section with the machine learning based approaches of Chapters~\ref{chap:EquivariantNet}~and~\ref{chap:ALGnet}.

Finding the optimal single item auction was fully resolved by Myerson in his seminal 1981 paper \citep{myerson1981optimal}. In this section, we include a high level derivation of Myerson's result that makes some simplifying assumptions. A complete and technical derivation can be found in Myerson's original paper \citep{myerson1981optimal}.

\subsubsection{The allocation function is monotonic.}
Our goal in this section is to prove that the optimal allocation function is monotonic non-decreasing with respect to the bids of each one of the bidders: 
$$\text{The function: }b_i \to [g((b_i, B_{-i}))]_i \text{   is monotonic non-decreasing}\,.$$
We remind the reader that since we're in a single item setting, $\vec{b}_i$ is a real number that we conveniently represent by $b_i$ . Intuitively this result makes sense. A bidder can expect to increase the probability of getting the item by increasing his bid, assuming that all the other bids remain constant. We will see that this result follows naturally from the DSIC property. \\

\noindent{\bf Notation: } In the following, we will fix a bidder~$i$ and the bids of all the other bidders~$B_{-i}$. This allows us to adopt the following convenient notations: ${g(b):= [g(b, B_{-i})]_i}$, ${p(b):= [p(b, B_{-i})]_i}$ and $u(v,b) := u_i(v,(b,B-i)) =  g(b)\times v -p(b)$. With these simplifying notations, our goal is to prove that the function $b \to g(b)$ is monotonic non decreasing.\\

 For a DSIC auction, we have $u(v,v) \geq u(b,v)$ for all $v$ and $b$. Through simple manipulations we can re-write this inequality as: 
\begin{align*}
    u(v,v) \geq u(b,v) & \Longleftrightarrow g(v)\times v -p(v) \geq g(b)\times v -p(b) \,, \\
    & \Longleftrightarrow (g(v)-g(b))\times v  \geq p(v) -p(b)\,.
\end{align*}

By permuting the roles of $b$ and $v$, we also get that $(g(v)-g(b))\times b  \leq p(v) -p(b)$. We conclude that: 

\begin{equation}
\label{eqn:double_inequality_first}
    (g(v)-g(b))\times v  \geq p(v) -p(b) \geq (g(v)-g(b))\times b \,.
\end{equation}

This inequality implies that $(g(v)-g(b))\times (v-b) \geq 0$ which proves that the function $g$ is monotonic non decreasing as claimed. 

\subsubsection{A relation between the allocation and payment functions}
In this section, we derive a relation between the allocation function $g$ and payment function $p$ for a truthful mechanism. For the sake of simplicity, we will make the assumption that the function $g$ is differentiable. While this assumption simplifies the proof, it is not a necessary one. Only the monotonicity of $g$ is required in the more general proof \citep{myerson1981optimal}. 

Taking $b<v$, we can rewrite equation~\ref{eqn:double_inequality_first} as: 
\begin{equation}
\label{eqn:double_inequality}
    \frac{(g(v)-g(b))}{v-b}\times v  \geq \frac{(p(v)-p(b))}{v-b} \geq \frac{(g(v)-g(b))}{v-b}\times b\,.
\end{equation}

In the limit of $b \to v$, we find that: 
\begin{equation}
\frac{d}{dz} p(z) = z \times \frac{d}{dz} g(z)\,.
\end{equation} 

For an individually rational auction, we have $p(0)=0$ and we get: 
\begin{equation}
p(b) = \int_{0}^b dz \,\,\, z \times \frac{d}{dz} g(z)\,.
\end{equation} 

Going back to our original (non simplified) notation, we can rewrite this equation as: 
\begin{equation}
\label{eqn:payment_function}
[p(b_i, B_{-i})]_i = \int_{0}^{b_i} db \,\,\, b \times \frac{d}{db} g(b,B_{-i})\,.
\end{equation} 

Equation~\ref{eqn:payment_function} shows that given an allocation function~$g$ there is at most one candidate payment function~$p$ such that the mechanism defined by $(g,p)$ is incentive compatible and individually rational. Conversely, one can prove that given a monotonic non decreasing allocation function~$g$, the mechanism defined by $(g,p)$ where $p$ is given by equation~\ref{eqn:payment_function} is DSIC and IR.

\subsubsection{Deriving the optimal allocation function}

Now that we have the characterization of the space of incentive compatible individually rational auctions, we move on to finding the revenue maximizing auction. 

Let's denote by $(D_1, \cdots, D_n)$ the probability distributions from which the bidders' value is sampled: $(v_1, \cdots, v_n) \sim (D_1, \cdots, D_n)$. Since we're in a single-item setting, the distribution $D_i$ is a probability distribution over the real numbers. To simplify the derivation, we will assume that all values are bounded by $v_{\text{max}}$ and that $D_i$ has a continuous probability density function which we'll denote as $f_i$. We will use $F_i$ to denote the corresponding cumulative distribution function. 

Conditioned on $B_{-i}$, the expected payment of bidder~$i$, $p_i$, is given by:
\begin{equation}
\begin{aligned}
p_i &= \mathbb{E}_{v_{i} \sim D_{i}}\left[p_i(v_i, B_{-i})\right] \,, \\
&=\int_{0}^{v_{\text{max}}} p_i(v_i, B_{-i}) f_{i}\left(v_{i}\right) d v_{i} \,, \\
&=\int_{0}^{v_{\text{max}}}\left[\int_{0}^{v_{i}} z \times g_{i}^{\prime}\left(z, B_{-i}\right) d z\right] f_{i}\left(v_{i}\right) d v_{i} \,,
\end{aligned}
\end{equation}
where in the last equality we used the characterization of the payment function in terms of the allocation function as found in  equation~\ref{eqn:payment_function}. This expression can be further simplified by first permuting the order of integration and then by integrating by parts: 

\begin{equation*}
\begin{aligned}
\int_{0}^{v_{\text{max}}}\left[\int_{0}^{v_{i}} z \times g_{i}^{\prime}\left(z, B_{-i}\right) d z\right] f_{i}\left(v_{i}\right) d v_{i}&=\int_{0}^{v_{\max }}\left[\int_{z}^{v_{\max }} f_{i}\left(v_{i}\right) d v_{i}\right] z \times g_{i}^{\prime}\left(z, B_{-i}\right) d z \,,\\
&=\int_{0}^{v_{\max }}\left(1-F_{i}(z)\right) \times z \times g_{i}^{\prime}\left(z, B_{-i}\right) d z \,, \\
&= - \int_{0}^{v_{\max }} g_{i}\left(z, B_{-i}\right) \times \left(1-F_{i}(z)-z f_{i}(z)\right) d z \,, \\
&= \int_{0}^{v_{\max }} \underbrace{\left(z -\frac{1-F_{i}(z)}{f_i(z)}\right)}_{:= \phi_i(z)} \times g_{i}\left(z, B_{-i}\right) \times f_{i}(z) d z \,.
\end{aligned}
\end{equation*}

The quantity $\phi_i(z)$ that appears under the integral is called the {\it virtual valuation} of bidder~$i$. Note that this quantity only depends on bidder~$i$, it does not depend on any of the other bidders, and it can be negative.

The total expected revenue for the auctioneer is then given by: 

\begin{equation}
\label{eqn:virtual_welfare}
    P = \mathbb{E}_{V \sim D} \left[ \sum_{i=1}^n p_i(V) \right] =  \mathbb{E}_{V \sim D} \left[ \sum_{i=1}^n \phi_i(v_i) \times g(V)_i\right]\,.
\end{equation}

From equation~\ref{eqn:virtual_welfare} we can see that the expected revenue is a linear combination of the virtual values. 

If all these virtual values are negative, then the optimal allocation is $g(V)_i=0$ for all bidders. Otherwise, optimality is reached by allocating the item to the bidder with the highest virtual value.  The optimal payment function can then be inferred using equation~\ref{eqn:payment_function}.

\subsubsection{An example}

To illustrate how that works in practice, let's consider the case where  $D_1= \cdots = D_n = \text{Uniform}([0,1])$. The virtual valuation function is then given by $\phi (b) = 2b -1 $.

If all the bids are smaller than $r = \frac{1}{2}$, then all the virtual bids are negative and the item is not allocated, so none of the bidders have to pay any amount of money to the auctioneer. $r$ is called the {\it reserve price}, which is the value under which the auctioneer is not willing to sell his item. 

If this is not the case, then there is at least one bidder that bid more than $r$. Without loss of generality we can assume that bidder~$1$ has the highest bid and bidder~$2$ has the second highest bid. In this case, bidder~$1$ gets the item. The allocation function of bidder~$1$ is given by: 
\begin{align*}
g_1(b_1) = \begin{cases} 0 & \mbox{if } b_1< \max(r, v_2) \,,   \\1 & \mbox{if } b_1 \geq \max(r, v_2) \,. \end{cases}
\end{align*}

Its derivative is given by $g_1'(b_1) = \delta_{\min(r, v_2)}(b_1)$ where $\delta$ is the Dirac function. By plugging this expression into equation~\ref{eqn:payment_function}, we find that bidder~$1$ has to pay $p_1 = \min(r, v_2)$ to the auctioneer. The optimal auction can be summed up by the following two cases:
\begin{align*}
\begin{cases} \mbox{If all the bids are below } r: \mbox{no one gets the item and no one pays. }   \\
\mbox{Else: the highest bidder gets the item and pays }  \max(r, \text{ second highest bid).} \end{cases}
\end{align*}
This is called a {\it second price auction with a reserve price}.

Generalizing these results to larger auctions is not straightforward. In fact, despite decades of research, there is no known analytical derivation that would enable us to systematically derive optimal mechanism for a general auction. In Chapter~\ref{chap:EquivariantNet}~and~\ref{chap:ALGnet} we will see a machine learning based approach to approximate optimal auctions.

\part{Reinforcement Learning}
\label{part:rl}
\chapter{ A Theoretical Connection Between {Statistical Physics and Reinforcement Learning}}
\label{chap:Zlearning}
\section{Abstract}
Sequential decision making in the presence of uncertainty and stochastic dynamics gives rise to distributions over state/action trajectories in reinforcement learning (RL) and optimal control problems.
This observation has led to a variety of connections between RL and inference in probabilistic graphical models (PGMs).
Here we explore a different dimension to this relationship, examining reinforcement learning using the tools and abstractions of statistical physics.
The central object in the statistical physics abstraction is the idea of a partition function~$\mcZ$, and here we construct a partition function from the ensemble of possible trajectories that an agent might take in a Markov decision process.
Although value functions and~$Q$-functions can be derived from this partition function and interpreted via average energies, the~$\mcZ$-function provides an object with its own Bellman equation that can form the basis of alternative dynamic programming approaches.
Moreover, when the MDP dynamics are deterministic, the Bellman equation for~$\mcZ$ is linear, allowing direct solutions that are unavailable for the nonlinear equations associated with traditional value functions. 
The policies learned via these~$\mcZ$-based Bellman updates are tightly linked to Boltzmann-like policy parameterizations. In addition to sampling actions proportionally to the exponential of the expected cumulative reward as Boltzmann policies would, these policies take {\it entropy} into account favoring states from which many outcomes are possible.
\section{Introduction}
One of the central challenges in the pursuit of machine intelligence is robust sequential decision making.
In a stochastic and uncertain environment, an agent must capture information about the distribution over ways they may act and move through the state space.
Indeed, the algorithmic process of planning and learning itself can lead to a well-defined distribution over state/action trajectories.
This observation has led to a variety of connections between reinforcement learning (RL) and inference in probabilistic graphical models (PGMs) \citep{Levine18}.
In some ways this connection is unsurprising: belief propagation (and its relatives such as the sum-product algorithm) is understood to be an example of dynamic programming \citep{koller2009probabilistic} and dynamic programming was developed to solve control problems \citep{bellman1966dynamic, bertsekas1995dynamic}.
Nevertheless, the exploration of the connection between control and inference has yielded fruitful insights into sequential decision making algorithms \citep{kalman1960new,attias2003planning,
ziebart2010modeling, kappen2011optimal,Levine18}.

In this chapter, we present another point of view on reinforcement learning as a distribution over trajectories, one in which we draw upon useful abstractions from statistical physics.
This view is in some ways a natural continuation of the agenda of connecting control to inference, as many insights in probabilistic graphical models have deep connections to, e.g., spin glass systems \citep{hopfield1982neural,yedidia2001generalized,zdeborova2016statistical}. 
More generally, physics has often been a source of inspiration for ideas in machine learning \citep{mackay2003information, mezard2009information}.
Boltzmann machines \citep{ackley85}, Hamiltonian Monte Carlo \citep{duane1987hybrid, neal2011mcmc,betancourt2017conceptual} and, more recently, tensor networks \citep{stoudenmire2016supervised} are a few examples.
In addition to direct inspiration, physics provides a compelling framework to reason about certain problems.
The terms \emph{momentum}, \emph{energy}, \emph{entropy}, and \emph{phase transition} are ubiquitous in machine learning.
However, abstractions from physics have generally not been so far helpful for understanding reinforcement learning models and algorithms.
That is not to say there is a lack of interaction; RL is being used in some experimental physics domains, but physics has not yet as directly informed RL as it has, e.g., graphical models \citep{MLPhysicalScience}.

Nevertheless, we should expect deep connections between reinforcement learning and physics: an RL agent is trying to find a policy that maximizes expected reward and many natural phenomena can be viewed through a minimization principle.
For example, in classical mechanics or electrodynamics, a mass or light will follow a path that minimizes a physical quantity called the \emph{action}, a property known as the \emph{principle of least action}. 
Similarly, in thermodynamics, a system with many degrees a freedom---such as a gas---will explore its configuration space in the search for a configuration that minimizes its free energy.
In reinforcement learning, rewards and value functions have a very similar flavor to energies, as they are extensive quantities and the agent is trying to find a path that maximizes them.
In RL, however, value functions are often treated as the central object of study.
This stands in contrast to statistical physics formulations of such problems in which the \emph{partition function} is the primary abstraction, from which all the relevant thermodynamic quantities---average energy, entropy, heat capacity---can be derived.
It is natural to ask, then, \emph{is there a theoretical framework for reinforcement learning that is centered on a partition function, in which value functions can be interpreted via average energies?}

In this chapter, we show how to construct a partition function for a reinforcement learning problem.
In a deterministic environment (Section~\ref{sec:deterministic}), the construction is elementary and very natural.
We explicitly identify the link between the underlying average energies associated with these partition functions and value functions of Boltzmann-like stochastic policies.
As in the inference-based view on RL, moving from deterministic to stochastic environments introduces complications.
In Section \ref{sec:stochastic}, we propose a construction for stochastic environments that results in realistic policies.
Finally, in Section \ref{sec:modelFree}, we show how the partition function approach leads to an alternative model-free reinforcement learning algorithm that does not explicitly represent value functions.

We model the agent's sequential decision-making task as a Markov decision process (MDP), as is typical.
The agent selects actions in order to maximize its cumulative expected reward until a final state is reached. 
The MDP is defined by the objects~$(\mcS,\mcA,\mcR,\mcP)$.
$\mcS$ and~$\mcA$ are the sets of states and actions, respectively.
${\mcP(s,a,s') = \mathbb{P}(s'\mid s,a)}$ is the probability of landing in state~$s'$ after taking action~$a$ from state~$s$.
$\mcR(s,a,s')$ is the reward resulting from this transition. 
We also make the following additional assumptions:

\begin{enumerate}
    \item $\mcS$ is finite,
    \item all rewards~$\mcR(s,a,s')$ are bounded from above by~$\mcR_{\text{max}}$ and are deterministic,
    \item the number of available actions is uniformly bounded over all states by~$d$.
\end{enumerate}
We also allow for terminal states to have rewards  even though there are no further actions and transitions. We denote these final-state rewards by~$\mcR(s_f)$. By shifting all rewards by~$\mcR_{\text{max}}$ we can assume without loss of generality that~${\mcR_{\text{max}}=0}$ making all transition  rewards~$\mcR(s,a,s')$ non positive. The final state rewards~$\mcR(s_f)$ are still allowed to be positive however.

\section{Partition Functions for Deterministic MDPs} \label{sec:deterministic}

Our starting point is to consider deterministic Markov decision processes.
Deterministic MDPs are those in which the transition probability distributions assign all their mass to one state.
Deterministic MDPs are a widely studied special case \citep{madani2002polynomial, wen2013efficient,  dekel2013better} and they are realistic for many practical control problems, such as robotic manipulation and locomotion, drone maneuver or machine-controlled scientific experimentation.
For the deterministic setting, we will use~${s+a}$ to denote the state that follows the taking of action~$a$ in state~$s$.
Similarly, we will denote the reward more concisely as~$\mcR(s,a)$.

\subsection{Construction of State-Dependent Partition Functions}
\label{sec:deterministicConstruction}
To construct a partition function, two ingredients are needed: a statistical ensemble, and an energy function~$E$ on that ensemble.
We will construct our ensembles from trajectories through the MDP; a trajectory~$\omega$ is a sequence of tuples~${\omega = (s_0, a_0, r_0), (s_1, a_1, r_1),\ldots,(s_T,a_T,r_T)}$ such that state~$s_{T+1}$ is a terminal state.
We use the notation~$s_t(\omega)$,~$a_t(\omega)$, and~$r_t(\omega)$ to indicate the state, action, and reward, respectively, of trajectory~$\omega$ at step~$t$.
Each state-dependent ensemble~$\Omega(s)$ is then the set of all trajectories that start at~$s$, i.e., for which~${s_0(\omega) = s}$.
We will use these ensembles to construct a partition function for each state~${s \in \mcS}$.
Taking~$|\omega|$ to be the length of the trajectory, we write the energy function as
\begin{align}
E(\omega) &= -\sum_{t=0}^{|\omega|-1} r_t(\omega) - R(s_{|\omega|}) = -\sum_{t=0}^{|\omega|} r_t(\omega)\,.
\end{align}
The form on the right takes a notational shortcut of defining~$r_{|\omega|}(\omega):=R(s_{T+1})$ for the reward of the terminal state.
Since the agent is trying to maximize their cumulative reward,~$E(\omega)$ is a reasonable measure of the agent's preference for a trajectory in the sense that lower energy solutions accumulate higher rewards.
Note in particular that the ground state configurations are the most rewarding trajectories for the agent.
With the ingredients~$\Omega(s)$ and~$E(\omega)$ defined, we get the following partition function
\begin{align}
\mcZ(s,\beta) &= \sum_{\omega \in \Omega(s)}e^{-\beta \,E(\omega)} \\
&= \sum_{\omega \in \Omega(s)} e^{\beta \sum_{t=0}^{|\omega|} r_t(\omega)}\,.
\end{align}
In this expression,~$\beta \geq 0~$ is a hyper-parameter that can be interpreted as the inverse of a temperature.
(This interpretation comes from statistical physics where~${\beta = \frac{1}{K_B T }}$, where~$K_B$ is the Boltzmann constant.)
This partition function does not distinguish between two trajectories having identical cumulative rewards but different lengths.
However, among equivalently rewarding trajectories, it seems natural to prefer shorter trajectories.
One way to encode this preference is to add an explicit penalty~${\mu\leq 0}$ on the length~$|\omega|$ of a trajectory, leading to a partition function
\begin{align}\label{eq:partition}
\mcZ(s,\beta) = \sum_{\omega \in \Omega(s)} e^{-\beta \, E(\omega) + \mu |\omega| }\,.
\end{align}
In statistical physics,~$\mu$ is called a \emph{chemical potential} and it measures the tendency of a system (such as a gas) to accept new particles.
It is sometimes inconvenient to reason about systems with a  fixed number of particles, adding a chemical potential offers a way to relax that constraint, allowing a system to have a varying number of particles while keeping the average fixed.

Note that since MDPs can allow for both infinitely long trajectories and infinite sets of finite trajectories,~$\Omega(s)$ can be infinite even in relatively simple settings.
In Appendix~\ref{proof:wellDefined}, we find that a sufficient condition for~$\mcZ(s,\beta)$ to be well defined is taking~${\mu < -\log{d}}$.
As written, the partition function in Eq.~\ref{eq:partition} is ambiguous for final states.
For clarity we define~${\mcZ(s_f,\beta) := e^{\beta \, R(s_f)}}$ for a terminal state~$s_f$.
We will refer to these as the boundary conditions.

Mathematically, the parameter~$\mu$ has a similar role as the one played by~$\gamma$, the discount rate commonly used in reinforcement learning problems.
They both make infinite series convergent in an infinite horizon setting, and ensure that the Bellman operators 
are contractions in their respective frameworks~(Appendices~\ref{proof:contraction}~and~\ref{proof:rhoContraction}).
However, when using~$\gamma$, the order in which the rewards are observed can have an impact on the learned policy which does not happen when~$\mu$ is used.
This could be a desirable property for some problems as it uncouples rewards from preferences for shorter paths.

\subsection{A Bellman Equation for~$\mcZ$}
\label{sec:deterministicBellman}

As we have defined an ensemble~$\Omega(s)$ for each state~${s\in\mcS}$, there is a partition function~$\mcZ(s,\beta)$ defined for each state.
These partition functions are all related through a Bellman-like recursion:
\begin{align}\label{eq:z-bellman}
\mcZ(s, \beta) &= \sum_{a } e^{\beta\, \mcR(s,a) +\mu} \;\mcZ(s+a, \beta)\,,
\end{align}
where, as before,~${s+a}$ indicates the state deterministically following from taking action~$a$ in state~$s$.
This Bellman equation can be easily derived by decomposing each trajectory~${\omega \in \Omega(s)}$ into two parts: the first transition resulting from taking initial action~$a$ and the remainder of the trajectory~$\omega'$ which is a member of~$\Omega(s+a)$.
The total energy and length can also be decomposed in the same way, so that:

\begin{align*}
\mcZ(s, \beta) &= \sum_{\omega \in \Omega(s)} e^{-\beta \,E(\omega) + \mu |\omega|}\\
&= \sum_{\omega \in \Omega(s)} e^{\beta\sum_{t=0}^{|\omega|}  r_{t}(\omega) + \mu |\omega|}\\
&=  \sum_{a\in\mcA} e^{\beta\,\mcR(s,a) +\mu}
\sum_{\omega' \in \Omega(s+a) }  e^{\beta \sum_{t=1}^{|\omega|}  r_{t}(\omega) + \mu( |\omega|-1)}\\
&= \sum_{a\in\mcA} e^{\beta\,\mcR(s,a) +\mu}
\sum_{\omega' \in \Omega(s+a) }e^{-\beta \,E(\omega') + \mu|\omega'|}\\
&=  \sum_{a } e^{\beta \,\mcR(s,a) +\mu} \;\mcZ(s+a, \beta)\,.
\end{align*}

\noindent Note in particular that this Bellman recursion is \textbf{linear} in~$\mcZ$.

\subsection{The Underlying Value Function and Policy}\label{sec:deterministicPolicy}
The partition function can be used to compute an average energy to shed light on the behavior of the system.
This average is computed under the Boltzmann (Gibbs) distribution induced by the energy on the ensemble of trajectories :
\begin{align}
\bbP(\omega \given \beta, \mu, s_0(\omega)=s) &= \frac{\indicator_{\Omega(s)}(\omega)}{\mcZ(s,\beta)}e^{-\beta\,E(\omega) + \mu|\omega|}\,.
\end{align}
In probabilistic machine learning, this is usually how one sees the partition function: as the normalizer for an energy-based learning model or an undirected graphical model (see, e.g., \citet{murray2004bayesian}). Under this probability distribution, high-reward trajectories are the most likely but sub-optimal ones could still be sampled. This approach is closely related to the {\it soft-optimality} approach to RL \citep{Levine18}.
This distribution over trajectories allows us to compute an average energy for state~$s$ either as an explicit expectation or as the partial derivative of the log partition function with respect to the inverse temperature:
\begin{align}
\langle E \rangle &= \sum_{\omega\in\Omega(s)}\frac{1}{\mcZ(s,\beta)}e^{-\beta\,E(\omega) + \mu|\omega|}
E(\omega) \notag \\
&= -\frac{\partial}{\partial \beta}\log \mcZ(s,\beta)\,.
\end{align}
The negative of the average energy is the value function:
$$V(s,\beta) := -\langle E \rangle = \frac{\partial}{\partial \beta}\log \mcZ(s,\beta).$$
This is an intuitive result: recall that the energy~$E(\omega)$ is low when the trajectory~$\omega$ accumulates greater rewards, so lower average energy indicates that the expected cumulative reward---the value---is greater.
Since the partition functions~$\{\mcZ(s,\beta)\}_{s \in S}$ are connected by a Bellman equation, we expect that the underlying value functions~$\{V(s,\beta)\}_{s \in S}$ would be connected in a similar way, and there is indeed a non-linear Bellman recursion:
\begin{align*}
V(s,\beta) &= \frac{\partial}{\partial\beta}\log\mcZ(s,\beta) \\
&= \frac{1}{\mcZ(s,\beta)}\frac{\partial}{\partial\beta}\mcZ(s,\beta) \\
&= \frac{1}{\mcZ(s,\beta)} \frac{\partial}{\partial \beta}  \sum_{a\in\mcA} e^{\beta \,\mcR(s,a) +\mu} \;\mcZ(s+a, \beta)\\
&= \frac{1}{\mcZ(s,\beta)} \sum_{a\in\mcA} e^{\beta \,\mcR(s,a) +\mu} \frac{\partial}{\partial\beta}\mcZ(s+a, \beta)+\mcR(s,a)e^{\beta \,\mcR(s,a) +\mu} \;\mcZ(s+a, \beta) \,.
\end{align*}
The derivative rule for natural log gives us:

\begin{align*}
\frac{\partial}{\partial\beta}\mcZ(s,\beta)&=\mcZ(s,\beta)\,\,\frac{\partial}{\partial\beta}\log\mcZ(s,\beta)\\
&= \mcZ(s,\beta) \,\, V(s,\beta)
\end{align*} and as a result we have:
\begin{align}
V(s,\beta) &= \frac{1}{\mcZ(s,\beta)} \sum_{a\in\mcA} e^{\beta \,\mcR(s,a) +\mu}\mcZ(s+a,\beta) V(s+a,\beta) +\mcR(s,a)e^{\beta \,\mcR(s,a) +\mu} \;\mcZ(s+a, \beta)\notag\\
&= \frac{1}{\mcZ(s,\beta)} \sum_{a\in\mcA} e^{\beta \,\mcR(s,a) +\mu}\mcZ(s+a,\beta)\left[
V(s+a,\beta) + \mcR(s,a)
\right]\,.\label{eq:value-bellman}
\end{align}
Note that the quantities~$e^{\beta \,\mcR(s,a) +\mu}\mcZ(s+a,\beta)$ inside the summation of Equation~\ref{eq:value-bellman} are positive and sum to~$\mcZ(s,\beta)$ due to the Bellman recursion for~$\mcZ(s,\beta)$ from Equation~\ref{eq:z-bellman}.
Thus we can view this Bellman equation for~$V(s,\beta)$ as an expectation under a distribution on actions, i.e., a \emph{policy}:
\begin{align}
	&V(s,\beta) = \sum_{a\in\mcA} \pi(a \given s)\left[
V(s+a,\beta) + \mcR(s,a)
\right] \\
& \pi(a \given s) = \frac{\mcZ(s+a,\beta)}{\mcZ(s,\beta)}e^{\beta \,\mcR(s,a) +\mu}\,.
\end{align}
The policy~$\pi$ resembles a Boltzmann policy but strictly speaking it is not. A Boltzmann policy~$\pi_B$ selects actions proportionally to the exponential of their expected cumulative reward:
\begin{align*}
{\pi_{\text{B}}(a \mid s) \propto \exp\left(\beta \left[\mcR(s,a)+V(s+a) \right]\right)}.
\end{align*}
In particular,~$\pi_B$ does not take {\it entropy} into account: if two actions have the same expected optimal value, they will be picked with equal probability regardless of the possibility that one of them could achieve this optimality in a larger number of ways. In the partition function view,~$\pi$ does take entropy into account and to clarify this difference we will look at the two extreme cases~${\beta \to \{0,\infty\}}$.

 When~${\beta \to 0}$, where the temperature of the system is infinite, rewards become irrelevant and we find that:~${\pi(a \mid s) \propto \sum_{\omega \in \Omega(s+a) } e^{\mu |\omega|}}$.  This means that~$\pi$ is picking action~$a$ proportionally to the number of trajectories that begin with~${s+a}$.  Here the counting of trajectories happens in a weighted way: longer trajectories contribute less than shorter ones. This is different from a Boltzmann policy that would pick actions uniformly at random.
\medbreak
\begin{wrapfigure}{r}{0.45\textwidth}%
\vspace{-0.5cm}%
{\Large%
\begin{center}%
\begin{forest}%
sn edges%
[{\color{blue}~$S_0$}%
	[$S_1$%
		[{\color{red}~$S_4$}%
		]%
		[{\color{red}~$S_5$}%
		]%
	]%
	[$S_2$%
		[{\color{red}~$S_6$}%
		]%
	]%
	[$S_3$%
		[{\color{red}~$S_7$}%
		]%
	]%
]%
\end{forest}%
\end{center}%
}%

\captionof{figure}{Decision Tree MDP}\label{fig:tree}%

\end{wrapfigure}
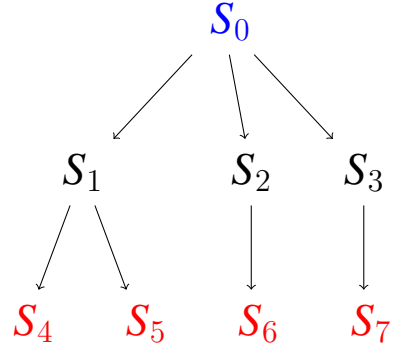%
 When~${\beta \to \infty}$, the low-temperature limit, we find in Section~\ref{proof:Boltzmann} that:
\begin{align*}
\pi(a \mid s) \propto  N_{\max}(s+a) \exp\left(\beta \left[\mcR(s,a)+V(s+a) \right]\right)
\end{align*}
where ${N_{\max}(s+a)}$ is a weighted count of the number of {\bf optimal} trajectories that begin at the state~${s+a}$. Boltzmann policies completely  ignore the~$N_{\max}$ entropic factor.\\

To illustrate this difference more clearly, we consider the  deterministic decision tree MDP shown in Figure~\ref{fig:tree} where~$S_0$ is the initial state and the leafs~$S_4$,~$S_5$,~$S_6$, and~$S_7$ are the final states. 
The arrows represent the actions available at each state.
There are no rewards and the boundary conditions are:~${\mcR(S_{4}) = \mcR(S_5) = \mcR(S_6) =1}$ and~${\mcR(S_7) = 0}$.
This gives us the boundary condition:

$${\mcZ(S_{4},\beta) = \mcZ(S_5,\beta) = \mcZ(S_6,\beta) =e^{\beta}} \text{ and }{\mcZ(S_7,\beta) = 1}.$$

Computing the $\mcZ$-functions at the intermediate states~$S_1, S_2$ and~$S_3$ we find:
$${\mcZ(S_1,\beta) = 2e^{\beta +\mu}},\,\,\,\, {\mcZ(S_2,\beta) = e^{\beta +\mu}}, \,\,\,\, {\mcZ(S_3,\beta) = e^{\mu}}.$$  
Finally we have~${\mcZ(S_0,\beta) = 3e^{\beta+2\mu}+e^{2\mu}}$.  
The underlying policy for picking the first action is given by:
\begin{align}
\pi_{\beta}(1\mid 0) =\frac{2e^{\beta +2\mu}}{3e^{\beta+2\mu}+e^{2\mu}} ~~~~~ \pi_{\beta}(2\mid 0) =\frac{e^{\beta +2\mu}}{3e^{\beta+2\mu}+e^{2\mu}} ~~~~~ \pi_{\beta}(3\mid 0) =\frac{e^{2\mu}}{3e^{\beta+2\mu}+e^{2\mu}}.
\end{align}
When~$\beta \to 0$,  we get:  
$$\pi_0(1\mid 0) = \frac{1}{2}, ~~ \pi_0(2 \mid 0) = \frac{1}{4}, ~~ \pi_0(3 \mid 0)  = \frac{1}{4}$$.
A Boltzmann policy would pick these three actions with equal probability.  The policy~$\pi$ is biased towards the heavier subtree.\\
When~$\beta \to \infty$ we get: 
$$\pi_{\infty}(1\mid 0) = \frac{2}{3}, ~~ \pi_{\infty}(2 \mid 0) = \frac{1}{3}, ~~ \pi_{\infty}(3 \mid 0)=0.$$ 
A Boltzmann policy would pick action~$1$ and~$2$ with a probability of~$\frac{1}{2}$. ~$\pi$ prefers states from which many possible optimal trajectories are possible.

\subsection{A Planning Algorithm}

When the dynamics of the environment are known, it is possible to to learn~$~\mcZ(s,\beta)$ by exploiting the Bellman equation (\ref{eq:z-bellman}). 
We denote by~${s \to s'}$ the property that there exists an action~$a$ that takes an agent from state~$s$ to state~$s'$. The reward associated with this transition will be denoted~$\mcR({s \to s'})$.  Let~${\mcZ(\beta) = [\mcZ(s,\beta)]_{s\in \mcS }}$ be the vector of all partition functions and~${C(\beta) \in \mathbb{R}^{|\mcS| \times |\mcS|}}$ be the matrix:
\begin{align}C(\beta)_{s,s'} =  \indicator_{s \to s'}e^{\beta \mcR({s \to s'})+\mu} + \indicator_{s = s' = \text{final state}}
\end{align}
$C(\beta)$ is a matrix representation of the Bellman operator in Equation~\ref{eq:z-bellman}. With these notations, the Bellman equations in (\ref{eq:z-bellman}) can be compactly written as:
\begin{equation}
    {\mcZ(\beta) =   C(\beta)\,  \mcZ(\beta)}
\end{equation}
highlighting the fact that~$\mcZ(\beta)$ is a fixed point of the map:
\begin{equation}
{\phi: X \to \,\, C(\beta) \,\,\, X}.
\end{equation}
In Appendix~\ref{proof:contraction}, we show that~$\phi$ is a contraction which makes it possible to learn~$\mcZ(\beta)$  by starting with an initial vector~$\mcZ_0$ having compatible boundary conditions and successively iterating the map~$\phi$:~~$\mcZ_{n+1} = C(\beta) \,\, \mcZ_{n}$. We could also interpret ~$\mcZ(\beta)$ as an eigenvector of~$C(\beta)$. In this context, this algorithm is simply doing a power method.

Interestingly, we can learn~$\mcZ(\beta)$ by solving the underdetermined linear system~${[I_{|\mcS|}- C(\beta)] \,~\mcZ(\beta) = 0_{|\mcS|}}$ with the right boundary conditions.
We show in Appendix~\ref{proof:Boltzmann} that the policies learned are related to Boltzmann policies which produce non linear Bellman equations at the value function level: 
 \begin{align}
 V(s, \beta) =  \sum_{a } \frac{e^{\beta \left(\mcR(s,a) + \gamma V(s+a, \beta) \right)}}{\mcW(s,\beta)}   [r_{(s,a)}  +   \gamma V(s+a,\beta )]
 \end{align}
where~$\gamma$ is the discount factor and~$\mcW(s,\beta) =\sum_{a } e^{\beta \left(\mcR(s,a) + \gamma V(s+a, \beta) \right)}$ is a normalization constant different from~$\mcZ(s,\beta)$.
By working with partition functions we transformed a non linear problem into a linear one. This remarkable result is reminiscent of linearly solvable MDPs \citep{todorov2007linearly}.\\\\
Once~$\mcZ$ is learned the agent's policy is given by:~${\,\mathbb{P}(a \mid s) \propto ~e^{\beta\mcR(s,a)}\mcZ(s+a,\beta)}$.

\section{Partition functions for Stochastic MDPs}
We now move to the more general MDP setting.  The dynamics of the environment can now be stochastic. However, as mentioned at the end of the introduction, we still assume that given an initial state~$s$, an action~$a$, and a landing state~$s'$, the reward~$\mcR(s,a,s')$ is deterministic.
\subsection{A First Attempt: Averaging the Bellman Equation}
\label{sec:unrealisticPolicy}
A first approach to incorporating the stochasticity of the environment is to average the right-hand side of the Bellman Equation~\ref{eq:z-bellman} and define~$\mcZ(s,\beta)$ as the solution of:
\begin{equation}
\label{eq:avgBellman}
\begin{aligned}
\mcZ(s,\beta) &= \sum_{a} \mathbb{E}_{s' \mid s,a} \left[e^{\beta \mcR(s,a,s') +\mu }~\mcZ(s',\beta)\right] \\
&= \sum_{a,s'} \mathbb{P}(s'\mid s,a) \, e^{\beta \mcR(s,a,s') +\mu }~\mcZ(s',\beta)\,.
\end{aligned}
\end{equation}
Interestingly, the solution of this equation can be constructed in the same spirit of Section~\ref{sec:deterministicConstruction} by summing a functional over the set of trajectories. If we define~$L(\omega)$ to be the log likelihood of a trajectory:~${L(\omega) =  \sum_{t=0}^{|\omega|-1} \log{\mathbb{P}(s_{t+1}\mid{s_t,a_t})}}$ then~$\mcZ(s,\beta)$ is defined by
\begin{align}
\label{eq:unrealisticZ}
\mcZ(s,\beta) = \sum_{\omega \in \Omega(s)} e^{-\beta E(\omega)+ \mu |\omega| + L(\omega) }\,,
\end{align}
satisfies the Bellman Equation~\ref{eq:avgBellman}. The proof can be found in Appendix~\ref{proof:avgBellman}. In Appendix~\ref{proof:unrelasticBellman} we derive the Bellman equation satisfied by the underlying value function~$V(s,\beta)$ and we find: 
\begin{align}
\label{eqn:unrealisticPolicy}
V(s,\beta)&= \sum_{a,s'} \frac{ e^{\beta  \mcR(s,a,s')+\mu} \,\,~\mcZ(s',\beta)}{\mcZ(s,\beta)}\times \mathbb{P}(s'\mid s,a) \times \left( \mcR(s,a,s') +V(s',\beta) \right) \,.
\end{align}
This Bellman equation does not correspond to a realistic policy; the policy depends on the landing state~$s'$ which is a random variable. The agent's policy and the environment's transitions cannot be decoupled. This is not surprising, from Equation~\ref{eq:unrealisticZ} we see that~$\mcZ$ puts rewards and transition probabilities on an equal footing. As a result an agent believes they can choose any available transition as long as they are willing to pay the price in log probability. This encourages risky behavior: the agent is encouraged to bet on highly unlikely but beneficial transitions. These observations were also noted in \citet{Levine18}. 

\subsection{A Variational Approach}\label{sec:stochastic}
Constructing a partition function for a stochastic MDP is not straightforward because there are two types of randomness: the first comes from the agent's policy and the second from stochasticity of the environment.  Mixing these two sources of randomness can lead to unrealistic policies as we saw in Section \ref{sec:unrealisticPolicy}. A more principled approach is needed.

We construct a new deterministic MDP~$(\tilde{\mcS},\tilde{\mcA},\tilde{\mcR},\tilde{\mcP})$ from~$(\mcS,\mcA,\mcR,\mcP)$. We take~$\tilde{\mcS}$ to be the space of probability distributions over~$\mcS$, similar to belief state representations for partially-observable MDPs \citep{astrom1965optimal,sondik1978optimal,kaelbling1998planning}.  We make the assumption that the actions~$\mcA$ are the same for all states and take~${\tilde{\mcA} = \mcA}$.
For~${\rho \in \tilde{\mcS}}$ and~${a \in \tilde{\mcA}}$ we  define~${\tilde{\mcP}(\rho,a) := {P_a}^T \rho}$ where~$P_a$ is the transition matrix corresponding to choosing action~$a$ in the original MDP.  We define~${\tilde{\mcR}(\rho,a) := \E_{s \sim \rho} \left[\E_{ s' \mid s,a} [\mcR(s,a,s')] \right]}$. 

$\mcS$ being finite, it has a finite number~$M$ of final states which we denote~$\{{f_i}\}_{i\in\{1,\cdots,M\}}$. The final states of~$\tilde{\mcS}$  are of the form~${\rho_f = \sum_{i=1}^M \alpha_i \delta_{{f_i}}}$ where~${0\leq \alpha_i \leq 1}$ verify~${\sum_{i=1}^M \alpha_i =1}$ and~$\delta_{f_i}$ is a Dirac delta function at state~$f_i$. The intrinsic value~$\rho_f$ of such a final state is then given by~${\mcR(\rho_f) = \sum_{i=1}^M \alpha_i \mcR({f_i})}$. This leads to the  boundary conditions: 
\begin{equation}
\label{eqn:rhoBoundary}
\begin{aligned}
\mcZ(\rho_f) &= \exp \left(\beta \sum_{i=1}^M \alpha_i \mcR({s_{f_i}})\right)\\
&= \prod_{i=1}^M \mcZ({f_i},\beta)^{\alpha_i} \,.
\end{aligned}
\end{equation}
This new MDP~$(\tilde{\mcS},\tilde{\mcA},\tilde{\mcR},\tilde{\mcP})$ is deterministic, and we can follow the same approach of Section~\ref{sec:deterministic} and construct a partition function ~$\mcZ(\rho,\beta)$ on~$\tilde{\mcS}$. ~$\mcZ(s,\beta)$ can be recovered by evaluating~$\mcZ(\delta_{s},\beta)$. From this construction we also get that~$\mcZ(\rho,\beta)$ satisfies the following Bellman equation:
\begin{align}
\label{eqn:bellmanRho}
\mcZ(\rho,\beta) = \sum_{a} e^{\beta \mcR(\rho,a) +\mu } ~\mcZ({P_{a}}^T \rho,\beta)\,.
\end{align}
Just as it is the case for deterministic MDPs, the Bellman operator associated with this equation is a contraction. This is proved in Appendix~\ref{proof:rhoContraction}. However~$\tilde{\mcS}$ is now infinite which makes solving Equation~\ref{eqn:bellmanRho} intractable.  We adopt a variational approach which consists in finding the best approximation of~$\mcZ(\rho,\beta)$ within a parametric family~$\{\mcZ_{\theta}\}_{\theta \in \Theta}$. We measure the fitness of a candidate through the following loss function:
$${\Delta(\theta) = \frac{1}{|\mcS|} \sum_{s \in \mcS} \left(\mcZ_{\theta}(\delta_{s},\beta) -\sum_{a} e^{\beta \mcR(\delta_{s},a) +\mu } ~\mcZ_{\theta}({P_{a}}^T \delta_{s},\beta) \right)^2 }$$.

For illustration purposes, and inspired by the form of the boundary conditions (Equation~\ref{eqn:rhoBoundary}), we  consider a simple parametric family given by  the partition functions of the form~${\mcZ_{\theta}(\rho) = \prod_{i=1}^{|\mcS|} {\theta_{i}}^{\rho_i}}$, where~${\theta \in \mathbb{R}^{|\mcS|}}$. The optimal~$\theta$ can be found using usual optimization techniques such as gradient descent. By evaluation of~$\mcZ_{\theta}$ at~${\rho = \delta_{S_i}}$ we see that we must have~${\theta_i = \mcZ(\delta_{S_i})= \mcZ(S_i)}$ and consequently we have~${\mcZ_{\theta}(\rho) = \prod_{i=1}^{|\mcS|} {\mcZ(S_i)}^{\rho_i}}$. The optimal solution satisfies the following Bellman equation:
\begin{align} 
\label{eqn:variationalBellman}
\mcZ(s,\beta) \approx \sum_{a}\prod_{s' \in \mcS}\left[ e^{\beta \mcR(s,a,s') +\mu } ~ \mcZ(s',\beta)\right]^{\mathbb{P}(s' \mid s,a)}
\end{align}
The underlying value function verifies:
$${V(s,\beta) \approx \sum_{a,s'} \pi(a \mid s)\, \mathbb{P}(s' \mid s,a) \left(\mcR(s,a,s') + V(s',\beta) \right)}~$$

\noindent where the policy~$\pi$ is given by~${\pi(a \mid s) \propto\prod_{s' \in \mcS} \left[e^{\beta  \mcR(s,a,s')+\mu} \,~\mcZ(s',\beta)\right]^{\mathbb{P}(s' \mid s,a)}}$. This approach leads to a realistic policy as its only dependency is on the current state, not a future one, unlike the policies arising from Equation~\ref{eqn:unrealisticPolicy}.

\section{The Model-Free Case}
\label{sec:modelFree}
\subsection{Construction of  State-Action-Dependent Partition Function}
In a model free setting, where the transition dynamics are unknown, state-only value functions such as~$V(s)$ are less useful than state-action value functions such as~$Q(s,a)$. Consequently, we will extend our construction to state-action partition functions~$\mcZ(s,a,\beta)$. For a deterministic environment, we extend the construction in Section \ref{sec:deterministic} and define~$\mcZ(s,a,\beta)$ by
\begin{align}
\mcZ(s,a,\beta) &=  \sum_{\omega \in \Omega(s,a)} e^{-\beta E(\omega)+\mu|\omega|} \\
&=\sum_{\omega \in \Omega(s,a)} e^{\beta \sum_{i=0}^{|\omega|}r_i+\mu|\omega|}
\end{align}
where~$\Omega(s,a)$ denotes the set of trajectories having~${(s_0,a_0)=(s,a)}$. Since ${\Omega(s) = \bigcup_{a\in \mcA} \Omega(s,a)}$, we have
${\mcZ(s,\beta) = \sum_{a} \mcZ(s,a,\beta)}$.  As a consequence of this construction,~$\mcZ(s,a,\beta)$ satisfies the following linear Bellman equation:
\begin{align}
\mcZ(s,a,\beta) = e^{\beta \mcR(s,a) +\mu}\sum_{a'} \mcZ(s+a,a',\beta)\,.
\end{align}
This Bellman equation can be easily derived by decomposing each trajectory~${\omega \in \Omega(s,a)}$ into two parts: the first transition resulting from taking initial action~$a$ and the remainder of the trajectory~$\omega'$ which is a member of~$\Omega(s+a,a')$ for some action~$a' \in \mcA$ .
The total energy and length can also be decomposed in the same way, so that:
\begin{align*}
\mcZ(s,a,\beta) &=  \sum_{\omega \in \Omega(s,a)} e^{-\beta E(\omega)+\mu|\omega|}\\
&=\sum_{\omega \in \Omega(s,a)} e^{\beta \sum_{i=0}^{|\omega|}r_i+\mu|\omega|}\\
&=  e^{\beta\,\mcR(s,a) +\mu}
\sum_{\omega \in \Omega(s,a) }  e^{\beta \sum_{t=1}^{|\omega|}  r_{t}(\omega) + \mu (|\omega|-1)}\\
&= e^{\beta\,\mcR(s,a) +\mu}
\sum_{\omega' \in \Omega(s+a) }e^{-\beta \,E(\omega') + \mu|\omega'|}\\
&= e^{\beta\,\mcR(s,a) +\mu}
\sum_{a' \in \mcA } \sum_{\omega' \in \Omega(s+a,a') } e^{-\beta \,E(\omega') + \mu|\omega'|}\\
&=   e^{\beta\,\mcR(s,a) +\mu}
\sum_{a' \in \mcA } \mcZ(s+a,a',\beta).
\end{align*}
 In the same spirit of Section \ref{sec:deterministicPolicy}, one can show that the average underlying value function ${Q(s,a,\beta)= \frac{\partial}{\partial \beta} \log{\mcZ(s,a,\beta)}}$ satisfies a Bellman equation: 
\begin{align}
&Q(s,a,\beta) = \mcR(s,a)+ \sum_{a'} \pi(a' \mid s+a)~Q(s+a,a',\beta) \\
&\pi(a \mid s ) = \frac{\mcZ(s,a,\beta)}{\sum_{a'} \mcZ(s,a',\beta)}
\end{align} 
$Q(s,a,\beta)$ can be then reinterpreted as the~$Q$-function of the policy~$\pi$. Similarly to the results of Section~\ref{sec:deterministicPolicy} and Appendix~\ref{proof:Boltzmann}, the policy~$\pi$ can be thought of a Boltzmann policy of parameter~$\beta$ that takes entropy into account. 
This construction can be extend to a stochastic environments by following the same approach used in Section~\ref{sec:stochastic}.\\

In the following we show how learning the state-action partition function~$\mcZ(s,a,\beta)$ leads to an alternative approach to model-free reinforcement learning  that does not explicitly represent value functions. 

\subsection{A Learning Algorithm}

In~$Q$-Learning, the update rule typically consists of a linear interpolation between the current value estimate and the one arising \emph{a posteriori}:\begin{align}
Q(s_t,a_t) \leftarrow (1-\alpha)Q(s_t,a_t)+\alpha\left(r_t+\gamma \max_{a_{t+1}}Q(s_{t+1},a_{t+1})\right)
\end{align}
 where~${\alpha\in [0,1]}$ is the learning rate and~$\gamma$ is the discount factor.  For~$\mcZ$-functions we will replace the linear interpolation with a geometric one. We take the update rule for~$\mcZ$-functions to be the following:
\begin{align}
\label{eqn:updateRule}
\mcZ(s_t,a_t,\beta) \leftarrow \mcZ(s_t,a_t,\beta)^{1-\alpha} \times \left(e^{\beta r_t +\mu}\sum_{a_{t+1}} \mcZ(s_{t+1},a_{t+1},\beta) \right)^{\alpha}\,.
\end{align}
To understand what this update rule is doing, it is insightful to look at how how the underlying ~$Q$-function, ~~${Q(s,a)=\frac{\partial}{\partial \beta} \log{\mcZ(s_t,a_t,\beta)}}$ is updated. We find:
\begin{align}
\label{eqn:QSARSA}
Q(s_t,a_t,\beta) \leftarrow (1-\alpha) Q(s_t,a_t,\beta) + \alpha \left(r_t+ \sum_{a_{t+1}}\frac{\mcZ(s_{t+1},a_{t+1},\beta)}{\sum_{a'} \mcZ(s_{t+1},a',\beta)}Q(s_{t+1},a_{t+1},\beta) \right)\,.
\end{align}
 We see that we recover a weighted version of the SARSA update rule. This update rule is referred to as \emph{expected} SARSA.  Expected SARSA is known to reduce the variance in the updates by exploiting knowledge about stochasticity in the behavior policy  and hence is considered an improvement over vanilla SARSA \citep{expectedSarsa}.
 
Since the underlying update rule is equivalent to the expected SARSA update rule, we can use any exploration strategy that works for expected SARSA. One exploration strategy could be~$\epsilon$-greedy which consists in taking action~$a = \mbox{argmax}_{a \in \mcA} \mcZ(s,a,\beta)$ with probability~$1-\epsilon$ and picking an action uniformly at random with probability~$\epsilon$.  Another possibility would be a Boltzmann-like exploration which consists in taking action~$a$ with probability~$\mathbb{P}(a \mid s) \propto \mcZ(s,a,\beta)$.

We would like to emphasize that even though the expected SARSA update is not novel, the learned policies through this updates rule are proper to the partition-function approach. In particular, the learned policies~$\pi(a \mid s) \propto \mcZ(s,a,\beta)$ are Boltzmann-like policies with some entropic preference properties as described in Section~\ref{sec:deterministicPolicy} and Appendix~\ref{proof:Boltzmann}.

\section{Conclusion}
In this chapter we discussed how planning and reinforcement learning problems can be approached through the tools and abstractions of statistical physics.
We started by constructing partition functions for each state of a deterministic MDP and then showed how to extend that definition to the more general stochastic MDP setting through a variational approach.
Interestingly, these partition functions have their own Bellman equation making it possible to solve planning and model-free RL problems without explicit reference to value functions. 
Nevertheless, conventional value functions can be derived from our partition function and interpreted via  average energies.
Computing the implied value functions can also shed some light on the policies arising from these algorithms. 
We found that the learned policies are closely related to Boltzmann policies with the additional interesting feature that they take {\it entropy} into consideration by favoring states from which many trajectories are possible.
Finally, we observed that working with partition functions is more natural in some settings.
In a deterministic environment for example, near-optimal Bellman equations become linear which is not the case in a value-function-centric approach.  


\clearpage
\begin{subappendices}
\section{Deterministic MDPs}
\subsection{$\mcZ(s,\beta)$ is well defined}\label{proof:wellDefined}
\begin{prop}
$\mcZ(s,\beta) = \sum_{\omega \in \Omega(s)} e^{\beta \sum_{i=0}^{|\omega|} r_i + \mu |\omega| }$ is well defined for~$\mu < -\log d$.
\end{prop}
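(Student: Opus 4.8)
The plan is to bound the nonnegative series defining $\mcZ(s,\beta)$ from above by a geometric series, and to observe that this geometric series converges exactly when $\mu < -\log d$. Since every summand $e^{\beta \sum_{i=0}^{|\omega|} r_i + \mu |\omega|}$ is positive, the series either sums to a finite number or diverges to $+\infty$; hence it suffices to produce a finite upper bound, and along the way I am free to regroup the terms according to the length $|\omega|$.

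First I would control the reward factor uniformly over all trajectories. After the normalization $\mcR_{\text{max}} = 0$ adopted in the introduction, every transition reward $r_t(\omega)$ is non-positive, while the terminal rewards take only finitely many values (there are finitely many terminal states, since $\mcS$ is finite), so they are bounded above by a constant $M := \max_{s_f} \mcR(s_f)$. Recalling the convention $r_{|\omega|}(\omega) := \mcR(s_{|\omega|})$ and using $\beta \ge 0$, this gives $e^{\beta \sum_{i=0}^{|\omega|} r_i(\omega)} \le e^{\beta M}$ for every $\omega$, a bound independent of $\omega$.

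Next I would count trajectories by length. In a deterministic MDP a trajectory $\omega \in \Omega(s)$ is completely determined by its starting state $s$ together with its sequence of actions, and a trajectory with $|\omega| = \ell$ uses exactly $\ell$ actions; since at most $d$ actions are available at any state, there are at most $d^{\ell}$ trajectories in $\Omega(s)$ of length $\ell$. Combining the two estimates,
\[
\mcZ(s,\beta) \;=\; \sum_{\ell \ge 0}\ \sum_{\substack{\omega \in \Omega(s)\\ |\omega| = \ell}} e^{\beta \sum_{i=0}^{|\omega|} r_i(\omega) + \mu \ell} \;\le\; e^{\beta M} \sum_{\ell \ge 0} d^{\ell} e^{\mu \ell} \;=\; e^{\beta M} \sum_{\ell \ge 0} \bigl(d\, e^{\mu}\bigr)^{\ell},
\]
and the last series converges if and only if $d\, e^{\mu} < 1$, i.e.\ $\mu < -\log d$, which is exactly the claim.

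There is no serious obstacle here; the points that need a little care are purely bookkeeping. One should recall that $\Omega(s)$ consists by definition only of \emph{finite} trajectories, so no infinite-length trajectory enters the sum; this is also why $\Omega(s)$ may still be infinite and a counting-plus-geometric-series argument, rather than a finite sum, is needed. One should also keep the index conventions straight so that the terminal reward is absorbed into the uniform constant $e^{\beta M}$ and not into the geometric ratio $d\, e^{\mu}$.
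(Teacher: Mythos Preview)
Your proof is correct and follows essentially the same approach as the paper's: bound the terminal reward by a constant, use non-positivity of transition rewards (with $\beta\ge 0$) to control the reward factor uniformly, count trajectories of length $\ell$ by $d^\ell$, and reduce to a geometric series with ratio $d\,e^\mu$. The only differences are notational (your $M$ is the paper's $K$, your $d\,e^\mu$ is their $e^{\mu+\log d}$).
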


\begin{proof}
The MDP being finite,~$\mcS$ has a finite number of final state we can then find a constant~$K$ such that, for all final states~$s_f$ we have~$\mcR(s_f) \leq K$.
\begin{align*}
\mcZ(s,\beta) =& \sum_{\omega \in \Omega(s)} e^{\beta \sum_{i=0}^{|\omega|} r_i + \mu |\omega| } \\ 
=& \sum_{\omega \in \Omega(s)} e^{\beta \sum_{i=0}^{|\omega|-1} r_i + \beta \mcR(s_{|\omega|}) +  \mu |\omega| } \\ 
\leq&\,\,  e^{\beta K} \sum_{\omega \in \Omega(s)} e^{\beta \sum_{i=0}^{|\omega|-1} r_i  +  \mu |\omega| } \\
\leq& \,\,  e^{\beta K}  \sum_{\omega \in \Omega(s)} e^{  \mu |\omega| } \\
\leq&  \,\,  e^{\beta K} \sum_{n\in \mathbb{N}} e^{  \mu n} \sum_{\omega \in \Omega(s), \, \abs{\omega}=n} 1\\
\leq&  \,\,  e^{\beta K} \sum_{n\in \mathbb{N}} e^{  \mu n }d^n\\
=&  \,\,  e^{\beta K} \sum_{n\in \mathbb{N}} (e^{ \mu+ \log d })^n
\end{align*}
Where used the fact that all rewards~$\{r_i \}_{i \in \{0,...,|\omega|-1\}}$ are non positive and that the number of available actions at each state is bounded by~$d$.
When~$\mu < -\log {d}$, the sum ~~$\sum_{n \in \mathbb{N}} (e^{ \mu +\log d })^n$ becomes convergent and ~$\mcZ(s,\beta)$ is well defined. 
\end{proof}
\begin{remark}
$\mu < -\log {d}$ is a sufficient condition, but not a necessary one.~$\mcZ(s,\beta)$ could be well defined for all values of~$\mu$. This happens for instance when~$\Omega(s)$ is finite for all~$s$. 
\end{remark}

\subsection{The underlying policy is Boltzmann-like}
\label{proof:Boltzmann}

For high values of~$\beta$, the sum~$\sum_{\omega \in \Omega(s)} e^{\beta \sum_{i=0}^{|\omega|} r_i + \mu |\omega| }$ will become dominated by the contribution of few of its terms. As~$\beta \to +\infty$, the sum will be dominated by the contribution of the paths with the biggest reward. We have
\begin{align*}
\log{\mcZ(s,\beta)} \underset{\beta \to \infty}{\sim} \beta ~ \mbox{max} \left\{\sum_{i=0}^{|\omega|} r_i(\omega),\, \omega \in \Omega(s) \right\} 
\end{align*}
We see that~$ V(s, \beta)= \frac{\partial}{\partial \beta} \log \mcZ(s,\beta) \underset{\beta \to \infty}{\rightarrow} \mbox{max} \left\{\sum_{i=0}^{|\omega|} r_i(\omega),\, \omega \in \Omega(s) \right\} ~$. 

Since the MDP is finite and deterministic, it has a finite number of transitions and rewards. Consequently, the set~$\left\{\sum_{i=0}^{|\omega|} r_i(\omega),\, \omega \in \Omega(s) \right\}$ takes  discrete values, in particular, there is a finite gap~$\Delta$ between the maximum value and the second biggest value of this set. Let's denote by~$\Omega_{\text{max}}(s)$ the set of trajectories that achieve this maximum and by~$N_{\max}(s) = \underset{{\omega \in \Omega_{\text{max}}(s)}}{\sum} e^{\mu |\omega|}~$.

$N_{\max}(s)$ counts the number of trajectories~$\Omega_{\text{max}}(s)$ in a weighted way: longer trajectories contribute less than shorter ones. It is a measure of the size of~$\Omega_{\text{max}}(s)$ that takes into account our preference for shorter trajectories.   Putting everything together we get:
\begin{align*}
\left(\frac{\mcZ(s,\beta)}{e^{\beta V(s,\beta)}} - N_{\max}(s)\right)   \underset{\beta \to \infty}{\leq} e^{-\beta \Delta} \sum_{\omega \in \Omega(s)} e^{  \mu |\omega| } \underset{\beta \to \infty}{\rightarrow}  0 
\end{align*}
This shows that ~$\mcZ(s,\beta)\underset{\beta \to \infty}{\sim} N_{\max}(s)~e^{\beta V(s,\beta)}$, which results in the following policy for~$\beta>>1$:
\begin{align*}
\pi(a \mid s) \underset{\beta \to \infty}{\propto }  N_{\max}(s+a) e^{\beta\left(\mcR(s,a) + V(s+a, \beta)\right)}
\end{align*}
$\pi$ differs from a traditional Boltzmann policy in the following way:  if we have two actions~$a_1$ and~$a_2$ such that ~$\mcR(s,a_1) + V(s+a_1, \beta) =\mcR(s,a_2) + V(s+a_2, \beta)$ but there are twice more optimal trajectories spanning from~$s+a_1$ than there are from~$s+a_2$ then action~$a_1$ will be chosen twice as often as~$a_2$. This is to contrast with the usual Boltzmann policy that will pick~$a_1$ and~$a_2$ with equal probability. When~$N_{\max}(s)$ is the same for all~$s$, we recover a Boltzmann policy.  When~$\beta \to +\infty$ the policy converges to a an optimal policy and~$V$ converges to the optimal value function.

\subsection{$X \to C(\beta)X$~ is a contraction}\label{proof:contraction}

\begin{prop}
Let  ~$\mcX(\beta) = \left\{Z \in \mathbb{R}^{|\mcS|}_{+} \text {  such for all final states }~ s_f  ~\text{we have } Z_{s_f}=e^{\beta \mcR(s_f)} \right\}$  and let ${C(\beta)_{s,s'} =  \indicator_{s \to s'}e^{\beta \mcR(s \to s')+\mu} + \indicator_{s = s' = \text{final state}}}$.  The map defined by  
\begin{equation*}
         \psi: \begin{cases}
              \mcX(\beta)  &\to \mcX(\beta)   \\
               X    &\to C(\beta) \, X\\
           \end{cases}
\end{equation*}
is a contraction for the sup-norm: ~$||x||_{\infty} = \underset{i \in \{1,\cdots,|\mcS|\}}{\max} \abs{x_i}$.
\end{prop}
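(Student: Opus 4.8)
The plan is to show directly that $\psi$ strictly shrinks sup-norm distances, with contraction factor $k := d\,e^{\mu} = e^{\mu+\log d}$, which is $<1$ precisely because we assumed $\mu<-\log d$. First I would record two elementary facts about $C(\beta)$. Its entries are non-negative, so if $X$ has non-negative coordinates then so does $C(\beta)X$. Moreover, since a final state $s_f$ has no outgoing transitions, the term $\indicator_{s_f\to s'}$ vanishes for every $s'$, so the only non-zero entry of row $s_f$ is $C(\beta)_{s_f,s_f}=1$; hence $(\psi X)_{s_f}=X_{s_f}=e^{\beta\mcR(s_f)}$ for every $X\in\mcX(\beta)$. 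This shows $\psi$ really does map $\mcX(\beta)$ into itself, and, more usefully for the estimate, it makes the final-state coordinates drop out of the difference $\psi X-\psi X'$.

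Next, fix $X,X'\in\mcX(\beta)$ and set $Y=\psi X$, $Y'=\psi X'$. For a final state $s_f$ the previous paragraph gives $Y_{s_f}-Y'_{s_f}=0$. For a non-final state $s$ we have $Y_s-Y'_s=\sum_{s':\,s\to s'} e^{\beta\mcR(s\to s')+\mu}\,(X_{s'}-X'_{s'})$, so the triangle inequality yields $|Y_s-Y'_s|\le\big(\sum_{s':\,s\to s'} e^{\beta\mcR(s\to s')+\mu}\big)\,\|X-X'\|_{\infty}$. I would then bound the parenthesised row-sum: after the global shift every transition reward $\mcR(s\to s')$ is non-positive and $\beta\ge 0$, so each summand is at most $e^{\mu}$; and since at most $d$ actions are available at any state (assumption 3), there are at most $d$ successor states, so the row-sum is at most $d\,e^{\mu}=k$. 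Combining, $|Y_s-Y'_s|\le k\,\|X-X'\|_{\infty}$ for every $s$ (trivially for final states), hence $\|\psi X-\psi X'\|_{\infty}\le k\,\|X-X'\|_{\infty}$ with $k<1$, which is the claim.

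There is essentially one point that needs a little care rather than being purely mechanical: passing from ``at most $d$ actions'' to ``at most $d$ non-zero entries in row $s$ of $C(\beta)$'' when two distinct actions lead to the same successor state. In that case the indicator collapses the two contributions into one, which only makes the row-sum smaller, so $\sum_{s'} C(\beta)_{s,s'}\le d\,e^{\mu}$ still holds; alternatively one can sidestep the issue by writing the row-sum as $\sum_{a} e^{\beta\mcR(s,a)+\mu}$ from the outset, which is manifestly $\le d\,e^{\mu}$. Beyond that, the proof is just the triangle inequality together with non-positivity of the (shifted) rewards and the uniform action bound, so I do not anticipate any genuine obstacle. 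If one additionally wanted the fixed-point conclusion, one would note that $\mcX(\beta)$ is a closed, hence complete, subset of $\mathbb{R}^{|\mcS|}$ and invoke the Banach fixed-point theorem, but that is not required by the statement.
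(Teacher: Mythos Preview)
Your proof is correct and follows essentially the same route as the paper: verify that $\psi$ preserves the boundary conditions (so final-state coordinates cancel in the difference), then bound each non-final row by the triangle inequality using non-positive rewards and the action bound $d$ to get contraction factor $d\,e^{\mu}<1$. Your extra remark about multiple actions hitting the same successor (which only shrinks the row-sum) is a nice clarification that the paper leaves implicit.
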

\begin{proof}
$\mcX(\beta)$ is the set of all possible partition functions with compatible boundary conditions. The matrix~$C(\beta)$ is more explicitly defined by:
\begin{align*}
C(\beta)_{s,s'} = \begin{cases}1 & \mbox{if s = s'  and state s is a final state.} \\0 & \mbox{if there is no one step transition from state s to state s'.} \\e^{\beta \mcR(s \to s')+\mu} & \mbox{if the transition from state s to state s' has reward }  \mcR(s \to s'). \end{cases}
\end{align*}
Because ~$C(\beta)_{s,s} =1$ when~$s$ is a final state, the map~$\psi$ is well defined (i.e.~$\mcX(\beta) \to \mcX(\beta)$).
Since the MDP is finite, it has a finite number of final state so there exists a constant~$K$ such that, for all final states~$s_f$ we have~$\mcR(s_f) \leq K$.\\\\
Let ~$X_1,~X_2 ~\in  \mcX(\beta)$ we have: 
\begin{align*}
\norm{ \psi(X_1)- \psi(X_2)}_{\infty} &=  \underset{i \in \{1,\cdots,|\mcS|\}}{\max} ~\abs{\left(C(\beta)X_1- C(\beta)X_2\right)_{i}}
\end{align*}
Without loss of generality we can assume that the MDP has~$m$ final states that are labeled~${|\mcS|-m+1,\cdots,|\mcS|}$.  Under this assumption we have:
\begin{align*}
\underset{i \in \{1,\cdots,|\mcS|\}}{\max} ~ \abs{(X_1- X_2)_{i}} = \underset{i \in \{1,\cdots,|\mcS|-m\}}{\max} ~\abs{(X_1- X_2)_{i}}
\end{align*}
This is because~$X_1$ and~$X_2$ have the same boundary conditions:
$${\forall \, s_f \in  \{|\mcS|-m+1,\cdots,|\mcS|\}, ~ (X_1)_{s_f} = (X_2)_{s_f}}$$. 
Since~$C(\beta)_{s_f,s_f}=1$ if~$s_f$ is the index a final state,~$C(\beta)X_1$ and~$C(\beta)X_2$ still have the same boundary conditions, we have:
$$\forall  s_f \in  \{|\mcS|-m+1,\cdots,|\mcS|\}, ~ [C(\beta) X_1]_{s_f} = [C(\beta)X_2]_{s_f}$$. 
This gives us:
\begin{align*}
\underset{s \in \{1,\cdots,|\mcS|\}}{\max} ~ \abs{\left(C(\beta)X_1- C(\beta)X_2\right)_{s}}  =\underset{s \in \{1,\cdots,|\mcS|-m\}}{\max} ~\abs{\left( C(\beta)X_1- C(\beta)X_2\right)_{s}} 
\end{align*}
For~$s \in \left\{1,\cdots,|\mcS| \right\}$, we have:
$ \abs{\left( C(\beta)X_1- C(\beta)X_2\right)_{s}}  = \abs{\sum_{s'=1}^{|\mcS|} [C(\beta)]_{s,s'} ~ (X_1-X_2)_{s'}}$.\\\\
Since there are at most~$d$ available actions at each state and the environment is deterministic, at most ~$d$ coefficients~$C(\beta)_{s,s'}$ in this sum are non zero. Because the rewards are non positive, the non zero ones can be bounded by~$e^{\mu}$.\\\\
 Putting all these pieces together we can write:
 
 $$\abs{\sum_{s'=1}^{|\mcS|} [C(\beta)]_{s,s'} ~ (X_1-X_2)_{s'}} \leq d \times e^{\mu}  \norm{X_1-X_2}_{\infty}$$.  
 
 Finally we get: 
 \begin{align*}
\norm{ C(\beta)X_1- C(\beta)X_2}_{\infty} \leq \underbrace{d \times e^{\mu}}_{< 1 ~\text{because } \mu < -\log d } \norm{ X_1-X_2}_{\infty}
\end{align*}
This proves that~$\psi$ is a contraction.
\end{proof}
\begin{remark}
We see here another mathematical similarity between the discount factor~$\gamma < 1$ usually used in RL and the chemical potential~$\mu <- \log d$. They both ensure that the Bellman operators are contractions.
\end{remark}

\clearpage
\section{Stochastic MDPs}

\subsection{Averaging the Bellman Equation and adding a likelihood cost are equivalent }\label{proof:avgBellman}

\begin{prop}
The partition function ~$\mcZ(s,\beta)$ defined by ~$\mcZ(s,\beta) := \sum_{\omega \in \Omega(s)} e^{-\beta E(\omega)+ \mu |\omega| + L(\omega) }~$ satisfies the following Bellman equation:
\begin{equation*}
\mcZ(s,\beta) = \sum_{a} \mathbb{E}_{s' \mid s,a} \left[e^{\beta \mcR(s,a,s') +\mu }~\mcZ(s',\beta)\right] 
\end{equation*}
\end{prop}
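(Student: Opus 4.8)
The plan is to reuse the trajectory-decomposition argument that produced the deterministic Bellman equation~\eqref{eq:z-bellman}, now carrying the extra log-likelihood weight $L(\omega)$ along for the ride. First I would fix a non-terminal state $s$ and split every $\omega \in \Omega(s)$ at its first step: write $\omega$ as the initial triple $(s,a,r_0)$ with landing state $s' = s_1(\omega)$ and reward $r_0 = \mcR(s,a,s')$, followed by the tail $\omega' \in \Omega(s')$ obtained by deleting that triple and re-indexing. This sets up a bijection between $\Omega(s)$ and the disjoint union over $a \in \mcA$ and over $s'$ with $\mathbb{P}(s'\mid s,a) > 0$ of $\Omega(s')$. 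Under this bijection the three exponents decompose additively: from $E(\omega) = -\sum_{t=0}^{|\omega|} r_t(\omega)$ we get $-\beta E(\omega) = \beta \mcR(s,a,s') - \beta E(\omega')$; the length satisfies $|\omega| = |\omega'| + 1$, so $\mu|\omega| = \mu + \mu|\omega'|$; and from $L(\omega) = \sum_{t=0}^{|\omega|-1}\log\mathbb{P}(s_{t+1}\mid s_t,a_t)$ we get $L(\omega) = \log\mathbb{P}(s'\mid s,a) + L(\omega')$.

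Exponentiating, the summand factorizes as
\[
e^{-\beta E(\omega) + \mu|\omega| + L(\omega)} = \mathbb{P}(s'\mid s,a)\; e^{\beta \mcR(s,a,s') + \mu}\; e^{-\beta E(\omega') + \mu|\omega'| + L(\omega')}.
\]
Summing over $\omega \in \Omega(s)$ — i.e.\ summing over $a$, then over $s'$, then over $\omega' \in \Omega(s')$ — the first two factors pull out of the innermost sum, which is precisely $\mcZ(s',\beta)$. What remains is $\sum_{a}\sum_{s'} \mathbb{P}(s'\mid s,a)\, e^{\beta \mcR(s,a,s')+\mu}\,\mcZ(s',\beta) = \sum_{a}\mathbb{E}_{s'\mid s,a}\!\left[e^{\beta\mcR(s,a,s')+\mu}\mcZ(s',\beta)\right]$, which is the asserted identity. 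I would also dispatch the terminal case in one line: if $s$ is terminal then $\Omega(s)$ consists of the single length-zero trajectory, $\mcZ(s,\beta) = e^{\beta\mcR(s)}$ by convention, and the recursion is not claimed there.

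The only genuinely delicate point — and the main obstacle — is justifying the rearrangement and regrouping of a sum over the possibly infinite set $\Omega(s)$. Since every summand is nonnegative, Tonelli's theorem licenses the reordering and the identity holds unconditionally in $[0,+\infty]$; to upgrade it to a finite statement I would invoke $L(\omega) \le 0$ (each $\log\mathbb{P}(\cdot)\le 0$), so that $e^{L(\omega)}\le 1$ and the series is dominated termwise by exactly the series treated in Appendix~\ref{proof:wellDefined}, hence finite for $\mu < -\log d$. Everything else is the bookkeeping of the index shift $\omega \mapsto \omega'$ above, which I would spell out once to make the additive decompositions of $E$, $|\cdot|$, and $L$ unambiguous.
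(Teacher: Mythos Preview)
Your proposal is correct and follows essentially the same trajectory-decomposition argument as the paper's own proof. You add welcome rigor the paper omits --- the Tonelli justification for reordering, the termwise domination via $L(\omega)\le 0$ to inherit finiteness from Appendix~\ref{proof:wellDefined}, and the terminal-state remark --- but the core idea and structure are identical.
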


\begin{proof}

The proof follows the same path as the one in Section \ref{sec:deterministicBellman}. We decompose each trajectory~$\omega \in \Omega$ into two parts: the first transition resulting from taking a first action~$a$ and the rest of the trajectory~$\omega'$. The energy, the length and the likelihood of the trajectory can be decomposed in a similar way as the sum of the contribution of the first transition and the contribution of the rest of the trajectory. We get:
\begin{align*}
\mcZ(s,\beta) &= \sum_{\omega \in \Omega(s)} e^{-\beta E(\omega)+ \mu |\omega| +L(\omega) }\\
 &= \sum_{a, s'} e^{\beta \mcR(s,a,s')+ \mu +\log(\mathbb{P}(s'\mid{s,a})}\sum_{\omega' \in \Omega(s')}  e^{-\beta E(\omega')+ \mu |\omega'| +L(\omega') }\\
 &= \sum_{a, s'} e^{\beta \mcR(s,a,s')+ \mu +\log(\mathbb{P}(s'\mid{s,a})} ~ \mcZ(s',\beta) \\
 &= \sum_{a, s'} \mathbb{P}(s'\mid{s,a})~e^{\beta \mcR(s,a,s')+ \mu } ~ \mcZ(s',\beta) \\
 &=\sum_{a} \mathbb{E}_{s' \mid s,a} \left[e^{\beta \mcR(s,a,s') +\mu }~\mcZ(s',\beta)\right]
\end{align*}
This proves the equivalence.
\end{proof}

\subsection{Deriving the Unrealistic Bellman Equation}\label{proof:unrelasticBellman}
\begin{prop}
The value function~$V(s,\beta)= \frac{\partial}{\partial \beta} \log\mcZ(s,\beta)~$ where
\begin{equation*}
{\mcZ(s,\beta) = \sum_{\omega \in \Omega(s)} e^{-\beta E(\omega)+ \mu |\omega| + L(\omega) }} 
\end{equation*}
satisfies the following Bellman equation:
\begin{equation*}
V(s,\beta)  = \sum_{a,s'} \frac{e^{\beta \mcR(s,a,s') +\mu }~\mcZ(s',\beta)}{\mcZ(s,\beta)}~ \mathbb{P}(s' \mid s,a)  \left[  \mcR(s,a,s')+V(s',\beta) \right] 
\end{equation*}
\end{prop}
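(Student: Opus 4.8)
The plan is to reduce the statement to the Bellman equation for~$\mcZ$ already proved in Appendix~\ref{proof:avgBellman} and then differentiate it with respect to~$\beta$, exactly mirroring the deterministic computation that produced Equation~\ref{eq:value-bellman}. Recall that proposition gives
\begin{equation*}
\mcZ(s,\beta) = \sum_{a,s'} \mathbb{P}(s'\mid s,a)\, e^{\beta \mcR(s,a,s')+\mu}\, \mcZ(s',\beta)\,.
\end{equation*}
Since $V(s,\beta) := \frac{\partial}{\partial\beta}\log\mcZ(s,\beta) = \frac{1}{\mcZ(s,\beta)}\frac{\partial}{\partial\beta}\mcZ(s,\beta)$, equivalently $\frac{\partial}{\partial\beta}\mcZ(s,\beta) = \mcZ(s,\beta)\,V(s,\beta)$, it suffices to differentiate the right-hand side of the display and collect terms.

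First I would justify differentiating under the sum. Because~$\Omega(s)$ can be infinite, I need that $\beta\mapsto\mcZ(s,\beta)$ is differentiable and that the series $\sum_{\omega\in\Omega(s)} e^{-\beta E(\omega)+\mu|\omega|+L(\omega)}$ may be differentiated term by term. The geometric bound from Appendix~\ref{proof:wellDefined} does the job: with $\mu<-\log d$, the number of trajectories of length~$n$ is at most~$d^n$, the transition rewards are non-positive, and the likelihoods satisfy $L(\omega)\le 0$, so the formally differentiated series $\sum_{\omega}\big(-E(\omega)\big)e^{-\beta E(\omega)+\mu|\omega|+L(\omega)}$ is dominated on any compact $\beta$-interval by a convergent series of the form $\mathrm{const}\cdot\sum_n n\,(e^{\mu}d)^n$ (the extra linear factor $-E(\omega)$ is absorbed by the geometric decay). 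Dominated convergence then licenses $\frac{\partial}{\partial\beta}\mcZ(s,\beta)=\sum_{\omega}(-E(\omega))e^{-\beta E(\omega)+\mu|\omega|+L(\omega)}$ and, equivalently, term-by-term differentiation of the $\mcZ$-recursion above.

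Next I would differentiate the recursion. Applying $\frac{\partial}{\partial\beta}$ and the product rule,
\begin{align*}
\frac{\partial}{\partial\beta}\mcZ(s,\beta) &= \sum_{a,s'} \mathbb{P}(s'\mid s,a)\Big[\mcR(s,a,s')\, e^{\beta\mcR(s,a,s')+\mu}\mcZ(s',\beta) \\
&\qquad\qquad\quad + e^{\beta\mcR(s,a,s')+\mu}\,\tfrac{\partial}{\partial\beta}\mcZ(s',\beta)\Big]\,.
\end{align*}
Now substitute the identity $\frac{\partial}{\partial\beta}\mcZ(s',\beta)=\mcZ(s',\beta)V(s',\beta)$ applied at~$s'$, divide both sides by~$\mcZ(s,\beta)$ so that the left-hand side becomes $V(s,\beta)$, and factor $e^{\beta\mcR(s,a,s')+\mu}\mcZ(s',\beta)$ out of the bracket; this yields precisely
\begin{equation*}
V(s,\beta) = \sum_{a,s'} \frac{e^{\beta\mcR(s,a,s')+\mu}\mcZ(s',\beta)}{\mcZ(s,\beta)}\,\mathbb{P}(s'\mid s,a)\,\big[\mcR(s,a,s')+V(s',\beta)\big]\,,
\end{equation*}
the claimed equation. (As a sanity remark, the weights $\frac{e^{\beta\mcR(s,a,s')+\mu}\mcZ(s',\beta)}{\mcZ(s,\beta)}\mathbb{P}(s'\mid s,a)$ are nonnegative and sum to~$1$ by the $\mcZ$-recursion, so the formula is genuinely an expectation; it just fails to factor into a policy times the transition kernel, which is the point made in Section~\ref{sec:unrealisticPolicy}.)

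The only real obstacle is the first step: making the exchange of derivative and infinite sum rigorous. Once the domination estimate from Appendix~\ref{proof:wellDefined} is invoked, the remainder is the same routine differentiation already carried out for the deterministic case, with the extra environment weights $\mathbb{P}(s'\mid s,a)$ simply carried along. An alternative route that avoids differentiating the recursion is to differentiate the explicit trajectory series directly, using the decomposition $\omega=(s,a,s')\cdot\omega'$ from Appendix~\ref{proof:avgBellman} together with $E(\omega)=\mcR(s,a,s')+E(\omega')$; this reproduces the same identity and may be cleaner to present.
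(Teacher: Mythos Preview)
Your proposal is correct and follows essentially the same approach as the paper: start from the $\mcZ$-Bellman equation of Appendix~\ref{proof:avgBellman}, differentiate in~$\beta$, use $\tfrac{\partial}{\partial\beta}\mcZ=\mcZ\,V$, and divide by~$\mcZ$. The paper's proof does not include your domination argument for exchanging derivative and sum nor the sanity remark about the weights summing to one, but the computational core is identical.
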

\begin{proof}
From Appendix~\ref{proof:avgBellman} we known that~$\mcZ(s,\beta)$ satisfies the Bellman equation: ${\mcZ(s,\beta) = \sum_{a} \mathbb{E}_{s' \mid s,a} \left[e^{\beta \mcR(s,a,s') +\mu }~\mcZ(s',\beta)\right]}$.
\begin{align*}
V(s,\beta) &= \frac{\partial}{\partial \beta} \log\mcZ(s,\beta) \\
&= \frac{\partial}{\partial \beta} \log \left(\sum_{a} \mathbb{E}_{s' \mid s,a} \left[e^{\beta \mcR(s,a,s') +\mu }~\mcZ(s',\beta)\right] \right)\\
&= \frac{1}{\mcZ(s,\beta)}~\frac{\partial}{\partial \beta} \left(\sum_{a} \mathbb{E}_{s' \mid s,a} \left[e^{\beta \mcR(s,a,s') +\mu }~\mcZ(s',\beta)\right] \right)\\
&= \frac{1}{\mcZ(s,\beta)}~ \sum_{a} \mathbb{E}_{s' \mid s,a} \left[ \frac{\partial}{\partial \beta} \left(e^{\beta \mcR(s,a,s') +\mu }~\mcZ(s',\beta)\right)\right] \\
&= \frac{1}{\mcZ(s,\beta)}~ \sum_{a} \mathbb{E}_{s' \mid s,a} \left[ \mcR(s,a,s')~ e^{\beta \mcR(s,a,s') +\mu }~\mcZ(s',\beta) +   e^{\beta \mcR(s,a,s') +\mu }~ \frac{\partial}{\partial \beta} \mcZ(s',\beta)\right] \\
&= \frac{1}{\mcZ(s,\beta)}~ \sum_{a} \mathbb{E}_{s' \mid s,a} \left[ \left( \mcR(s,a,s')+\frac{\frac{\partial}{\partial \beta} \mcZ(s',\beta)}{\mcZ(s',\beta)} \right)~ e^{\beta \mcR(s,a,s') +\mu }~\mcZ(s',\beta) \right] \\
&= \frac{1}{\mcZ(s,\beta)}~ \sum_{a} \mathbb{E}_{s' \mid s,a} \left[ \left( \mcR(s,a,s')+\frac{\partial}{\partial \beta} \log \mcZ(s',\beta)\right)~ e^{\beta \mcR(s,a,s') +\mu }~\mcZ(s',\beta) \right] \\
&= \frac{1}{\mcZ(s,\beta)}~ \sum_{a} \mathbb{E}_{s' \mid s,a} \left[ \left( \mcR(s,a,s')+V(s',\beta)\right)~ e^{\beta \mcR(s,a,s') +\mu }~\mcZ(s',\beta) \right] \\
&= \sum_{a,s'} \frac{e^{\beta \mcR(s,a,s') +\mu }~\mcZ(s',\beta)}{\mcZ(s,\beta)}~ \mathbb{P}(s' \mid s,a)  \left[  \mcR(s,a,s')+V(s',\beta) \right] 
\end{align*}
\end{proof}

\subsection{The Bellman operator of~$\mcZ(\rho,\beta)$ is a contraction } \label{proof:rhoContraction}

\begin{prop}
Let~${\mcD = \{\alpha \in \mathbb{R}^{|\mcS|} \text{ such that } ~\forall i \in {1,\cdots,|\mcS|}, ~ 0\leq \alpha_i \leq1 \text{ and } \sum_{i=1}^{,|\mcS|} \alpha_i =1 \}}$ and 
\begin{equation*}
\begin{aligned}
\mcX(\beta) = \scalebox{3}{\{}  X\in C^{0}\left(\mcD,\mathbb{R}\right) \text { s.t.}~ X(\rho_f) = &\exp\left[\beta \sum_{i=1}^M \alpha_i \mcR({{f_i}})\right] \\
&\text{ for mixtures of final states } \rho_f = \sum_{i=1}^M \alpha_i \delta_{{f_i}}  \scalebox{3}{\} }. 
\end{aligned}
\end{equation*}
The map defined by  \begin{equation*}
         \psi: \begin{cases}
              \mcX(\beta)  &\to \mcX(\beta)   \\
               X    &\to  \begin{cases}
              \mcD  &\to \mathbb{R}   \\  \rho &\to \sum_{a} e^{\beta \mcR(\rho,a) +\mu } ~X({P_{a}}^T \rho,\beta) \end{cases}\\
           \end{cases}
\end{equation*}
is a contraction for the sup-norm: ~$\norm{X}_{\infty} = \underset{\rho \in \mcD}{ \max} ~\abs{X(\rho)}$.
\end{prop}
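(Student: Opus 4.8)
The plan is to follow, essentially verbatim, the structure of the deterministic contraction argument in Appendix~\ref{proof:contraction}. The deterministic MDP $(\tilde{\mcS},\tilde{\mcA},\tilde{\mcR},\tilde{\mcP})$ plays the role of the deterministic MDP there, with the simplex $\mcD$ over $\mcS$ in place of the finite state set and the (now continuous) space $\mcX(\beta)$ in place of $\mathbb{R}^{|\mcS|}_{+}$. The three ingredients I would use are: (i) $P_a^{T}$ maps $\mcD$ into $\mcD$, since $P_a$ is a row-stochastic matrix, so that $X(P_a^{T}\rho)$ is well defined for $\rho\in\mcD$; (ii) $\mcR(\rho,a)=\mathbb{E}_{s\sim\rho}\bigl[\mathbb{E}_{s'\mid s,a}[\mcR(s,a,s')]\bigr]\le 0$, because all transition rewards are non-positive after the normalization $\mcR_{\text{max}}=0$ adopted in the introduction; and (iii) there are at most $d$ actions and $\mu<-\log d$, exactly as in the deterministic case.

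First I would check that $\psi$ is a genuine self-map of $\mcX(\beta)$. For continuity, note that $\rho\mapsto\mcR(\rho,a)=\sum_{s}\rho_s\,\mathbb{E}_{s'\mid s,a}[\mcR(s,a,s')]$ and $\rho\mapsto P_a^{T}\rho$ are both linear, hence continuous; composing with the continuous $X$ and summing over the finitely many actions keeps $\psi(X)$ in $C^{0}(\mcD,\mathbb{R})$. For the boundary condition, on mixtures of final states the construction treats final states as absorbing (the analogue of the entry $C(\beta)_{s_f,s_f}=1$ in the deterministic proof), so $\psi$ leaves the prescribed values $\exp[\beta\sum_i\alpha_i\mcR(f_i)]=\prod_i\mcZ(f_i,\beta)^{\alpha_i}$ unchanged; hence $\psi(X)\in\mcX(\beta)$. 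In particular, for any $X_1,X_2\in\mcX(\beta)$ the images $\psi(X_1)$ and $\psi(X_2)$ agree on every mixture of final states.

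Next I would run the contraction estimate. Fix $X_1,X_2\in\mcX(\beta)$ and a point $\rho\in\mcD$ that is not a mixture of final states. By the triangle inequality, $\abs{\psi(X_1)(\rho)-\psi(X_2)(\rho)}\le\sum_{a}e^{\beta\mcR(\rho,a)+\mu}\,\abs{X_1(P_a^{T}\rho)-X_2(P_a^{T}\rho)}$; bounding $e^{\beta\mcR(\rho,a)}\le 1$ by (ii), bounding each $\abs{X_1(P_a^{T}\rho)-X_2(P_a^{T}\rho)}$ by $\norm{X_1-X_2}_{\infty}$, and using $\abs{\tilde{\mcA}}\le d$ by (iii), this is at most $d\,e^{\mu}\norm{X_1-X_2}_{\infty}$. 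Since $\mcD$ is compact and $\psi(X_1)-\psi(X_2)$ is continuous, its sup-norm is attained; if the maximum is $0$ there is nothing to prove, and otherwise it is attained at a point where $\psi(X_1)-\psi(X_2)\neq0$, which by the previous paragraph must be a non-final mixture, where the estimate above applies. Hence $\norm{\psi(X_1)-\psi(X_2)}_{\infty}\le d\,e^{\mu}\,\norm{X_1-X_2}_{\infty}$, and $\mu<-\log d$ gives $d\,e^{\mu}<1$, so $\psi$ is a contraction.

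The calculus here is a near copy of Appendix~\ref{proof:contraction}; the one place that needs real care is item (ii) together with the treatment of the boundary. Unlike the deterministic case, where the self-loop $C(\beta)_{s_f,s_f}=1$ is built into the matrix, here one must make sure the construction of $\tilde{\mcR}$ and $\tilde{\mcP}$ on mixtures of final states is consistent with the stated boundary condition $\mcZ(\rho_f)=\prod_i\mcZ(f_i,\beta)^{\alpha_i}$, so that $\psi(X_1)$ and $\psi(X_2)$ genuinely coincide on the boundary — this is exactly what lets the maximum be pushed onto a non-final point and produce the factor $d\,e^{\mu}$ rather than merely a factor $\le 1$. The sign condition $\mcR(\rho,a)\le 0$ likewise relies on the global shift $\mcR_{\text{max}}=0$; without it one would only obtain the Lipschitz constant $e^{\beta\mcR_{\text{max}}}\,d\,e^{\mu}$, which need not be below $1$.
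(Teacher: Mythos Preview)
Your proposal is correct and follows essentially the same approach as the paper's proof: restrict the sup-norm of $\psi(X_1)-\psi(X_2)$ to non-final mixtures using the shared boundary condition, then bound each term via $\mcR(\rho,a)\le 0$, the action bound $|\tilde{\mcA}|\le d$, and $\mu<-\log d$ to obtain the contraction constant $d\,e^{\mu}<1$. If anything, you are slightly more careful than the paper in spelling out continuity, the self-map property, and the compactness argument for where the maximum is attained.
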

\begin{proof}

$\mcD~$ is the standard ~$(|\mcS|-1)$-simplex in~$\mathbb{R}^{|\mcS|}$ and~$\mcX(\beta)$ be the set of continuous functions on~$\mcD~$ satisfying the right boundary conditions. The original MDP is finite, consequently it has a finite number~$M$ of final state and it is possible to find a constant~$K$ such that, for all final states~$s_f$ we have~$\mcR(s_f) \leq K$. \\\\
Let ~$X_1$,~$X_2 \in   \mcX(\beta)$.~$X_1$ and~$X_2$ have the same boundary conditions by construction. Not only that, ~$\psi(X_1)$ and~$\psi(X_2)$ have also the same boundary conditions since the map~$\psi$ doesn't alter boundary conditions. Consequently we have:
\begin{align*}
\norm{\psi(X_1)-\psi(X_2)}_{\infty} = \underset{\rho \in \mcD}{\text{max}}~ \abs{\psi(X_1)(\rho)-\psi(X_2)(\rho)} = \underset{\rho \in \mcD, ~\rho \text{ non final}}{\text{max}}~~ \abs{\psi(X_1)(\rho)-\psi(X_2)(\rho)}
\end{align*}
Finally we can write:
\begin{align*}
\norm{\psi(X_1)-\psi(X_2)}_{\infty} &= \underset{\rho \in \mcD, ~\rho \text{ non final}}{\text{max}}~~ \abs{\psi(X_1)(\rho)-\psi(X_2)(\rho)}  \\
&= \underset{\rho \in \mcD, ~\rho \text{ non final}}{\text{max}}~~ \abs{\sum_{a} e^{\beta \mcR(\rho,a) +\mu } ~\left[X_1({P_{a}}^T \rho,\beta)-X_2({P_{a}}^T \rho,\beta)\right]} \\
&\leq \underset{\rho \in \mcD, ~\rho \text{ non final}}{\text{max}}~~ \abs{\sum_{a} e^{\beta \mcR(\rho,a) +\mu }} \times \norm{X_1-X_2}_{\infty} \\
&\leq \underbrace{d \times e^{\mu}}_{<1 ~\text{because} ~\mu < - \log d}  \norm{X_1-X_2}_{\infty}
\end{align*}
Where we use the fact that all rewards are non positive and that the number of available actions is bounded by~$d$. This concludes the proof that the Bellman operator of~$\mcZ(\rho,\beta)$ is a contraction.\\\\
This proof is generalization of the proof presented in Appendix~\ref{proof:contraction} for MDPs with finite state spaces.
\end{proof}

\end{subappendices}

\chapter{Why Generalization in RL is Difficult:
Epistemic POMDPs and Implicit Partial
Observability}
\label{chap:GeneralizationRL}
\section*{Abstract}
Generalization is a central challenge for the deployment of reinforcement learning (RL) systems in the real world. In this chapter, we show that the sequential structure of the RL problem necessitates new approaches to generalization beyond the well-studied techniques used in supervised learning. While supervised learning methods can generalize effectively without explicitly accounting for epistemic uncertainty, we show that, perhaps surprisingly, this is not the case in RL. We show that generalization to unseen test conditions from a limited number of training conditions induces implicit partial observability, effectively turning even fully-observed MDPs into POMDPs. Informed by this observation, we recast the problem of generalization in RL as solving the induced partially observed Markov decision process, which we call the epistemic POMDP. We demonstrate the failure modes of algorithms that do not appropriately handle this partial observability, and suggest a simple ensemble-based technique for approximately solving the partially observed problem. Empirically, we demonstrate that our simple algorithm derived from the epistemic POMDP achieves significant gains in generalization over current methods on the Procgen benchmark suite.
\section{Introduction}
Generalization is a central challenge in machine learning. However, much of the research on reinforcement learning (RL) has been concerned with the problem of optimization: how to master a specific task through online or logged interaction.  Generalization to new test-time contexts has received comparatively less attention, although several works have observed empirically  \citep{Farebrother2018GeneralizationAR, Zhang2018ASO, Justesen2018IlluminatingGI, Song2020ObservationalOI} that generalization to new situations poses a significant challenge to RL policies learned from a fixed training set of situations. In standard supervised learning, it is known that in the absence of distribution shift and with appropriate inductive biases, optimizing for performance on the training set (i.e., empirical risk minimization) translates into good generalization performance. It is tempting to suppose that the generalization challenges in RL can be solved in the same manner as empirical risk minimization in supervised learning: when provided a training set of contexts,
learn the optimal policy within these contexts and then use that policy in new contexts at test-time.

Perhaps surprisingly, we show that such ``empirical risk minimization'' approaches can be sub-optimal for generalizing to new contexts in RL, even when these new contexts are drawn from the same distribution as the training contexts. As an anecdotal example of why this sub-optimality arises, imagine a robotic zookeeper for feeding otters that must be trained on some set of zoos. When placed in a new zoo, the robot must find and enter the otter enclosure. It can use one of two strategies: either peek through all the habitat windows looking for otters, which succeeds with 95\% probability in all zoos, or to follow an image of a hand-drawn map of the zoo that unambiguously identifies the otter enclosure, which will succeed as long as the agent is able to successfully parse the image. In every training zoo, the otters can be found more reliably using the image of the map, and so an agent trained to seek the optimal policy in the training zoos would learn a classifier to predict the identity of the otter enclosure from the map, and enter the predicted enclosure. This classification strategy is optimal on the training environments because the agent can learn to perfectly classify the training zoo maps, but it is \textit{sub-optimal} for generalization, because the learned classifier will never be able to perfectly classify every new zoo map at test-time.
Note that this task is \emph{not} partially observed, because the map provides full state information even for a memoryless policy. However, if the learned map classifier succeeds on anything less than 95\% of new zoos at test-time, the strategy of peeking through the windows, although always sub-optimal in the training environments, turns out to be a more reliable strategy for finding the otter habitat in a \textit{new} zoo, and results in higher expected returns at test-time. 

\begin{figure}
    \centering
    \includegraphics[width=0.9\linewidth]{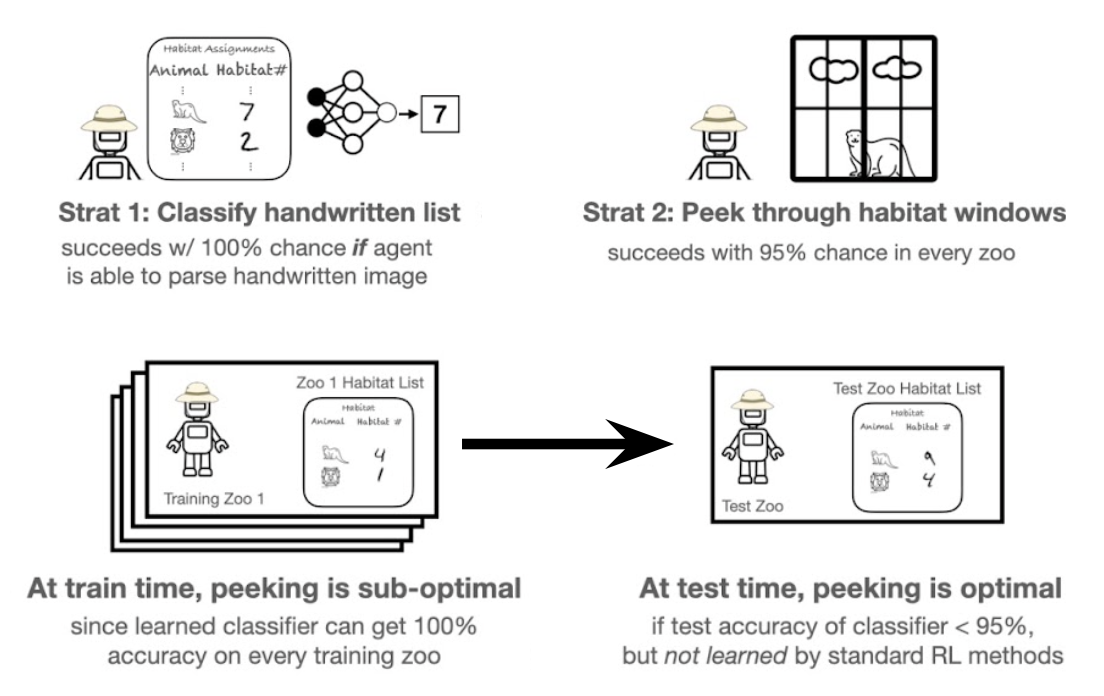}
    \caption{\footnotesize{\textbf{Visualization of the robotic zookeeper example.} Standard RL algorithms learn the classifier strategy, since it is optimal in every training zoo, but this strategy is sub-optimal for generalization because peeking generalizes better than the classifier at test-time.  This failure occurs due to the following disconnect: while the task is \textit{fully-observed} since the image uniquely specifies the location of the otter habitat, to an agent that has limited training data, the location is \textit{implicitly partially observed at test-time} because of the agent's epistemic uncertainty about the parameters of the image classifier.}}
    \label{fig:my_label}

\end{figure}

Although with enough training zoos, the zookeeper can learn a policy by solving the map classification problem, to generalize optimally when given a limited number of zoos requires a more intricate policy that is not learned by standard RL methods. 
How can we more generally describe the set of behaviors needed for a policy to generalize from a finite number of training contexts in the RL setting? We make the observation that, even in fully-observable domains, the agent's epistemic uncertainty renders the environment \textit{implicitly} partially observed at test-time. In the zookeeper example, although the hand-drawn map provides the exact location of the otter enclosure (and so the enclosure's location is technically fully observed), the agent cannot identify the true parameters of the map classifier from the small set of maps seen at training time, and so the location of the otters is implicitly obfuscated from the agent. 
We formalize this observation, and show that generalizing optimally at test-time corresponds to solving a partially-observed Markov decision process that we call an \textbf{epistemic POMDP}, induced by the agent's epistemic uncertainty about the test environment. 

That uncertainty about MDP parameters can be modeled as a POMDP is well-studied in Bayesian RL when training and testing on a single task in an online setting, primarily in the context of exploration \citep{Dearden1998BayesianQ, Duff2002OptimalLC, Strens2000ABF, Ghavamzadeh2015BayesianRL}. However, as we will discuss, this POMDP interpretation has significant consequences for the generalization problem in RL, where an agent cannot collect more data online, and must instead learn a policy from a fixed set of training contexts that generalizes to new contexts at test-time. We show that standard RL methods that do not explicitly account for this implicit partial observability can be arbitrarily sub-optimal for test-time generalization in theory and in practice. The epistemic POMDP underscores the difficulty of the generalization problem in RL, as compared to supervised learning, and provides an avenue for understanding how we should approach generalization under the sequential nature and non-uniform reward structure of the RL setting.  
Maximizing expected return in an approximation of the epistemic POMDP emerges as a principled approach to learning policies that generalize well, and we propose LEEP, an algorithm that uses an ensemble of policies to approximately learn the Bayes-optimal policy for maximizing test-time performance. 

The primary contribution of this chapter is to use Bayesian RL techniques to re-frame generalization in RL as the problem of solving a partially observed Markov decision process, which we call the \textit{epistemic POMDP}. The epistemic POMDP highlights the difficulty of generalizing well in RL, as compared to supervised learning. We demonstrate the practical failure modes of standard RL methods, which do not reason about this partial observability, and show that maximizing test-time performance may require algorithms to explicitly consider the agent's epistemic uncertainty during training. Our work highlights the importance of not only finding ways to help neural networks in RL generalize better, but also on learning policies that degrade gracefully when the underlying neural network eventually does fail to generalize. Empirically, we demonstrate that LEEP, which maximizes return in an approximation to the epistemic POMDP, achieves significant gains in test-time performance over standard RL methods on several ProcGen benchmark tasks.

\section{Related Work}

Many empirical studies have demonstrated the tendency of RL algorithms to overfit significantly to their training environments \citep{Farebrother2018GeneralizationAR, Zhang2018ASO, Justesen2018IlluminatingGI, Song2020ObservationalOI}, and the more general increased difficulty of learning policies that generalize in RL as compared to seemingly similar supervised learning problems \citep{Zhang2018NaturalEB, Zhang2018ADO, Whiteson2011ProtectingAE, Liu2020RegularizationMI}. These empirical observations have led to a newfound interest in algorithms for generalization in RL, and the development of benchmark RL environments that focus on generalization to new contexts from a limited set of
training contexts sharing a similar structure (state and action spaces) but possibly different dynamics and rewards~\citep{Nichol2018GottaLF, Cobbe2019QuantifyingGI, Kuttler2020TheNL, Cobbe2020LeveragingPG, Stone2021TheDC}.

\textbf{Generalization in RL.} Approaches for improving generalization in RL have fallen into two main categories: improving the ability of  function approximators to generalize better with inductive biases, and incentivizing behaviors that are easier to generalize to unseen contexts. To improve the representations learned in RL, prior work has considered imitating environment dynamics \citep{Jaderberg2017ReinforcementLW, Stooke2020DecouplingRL},  seeking bisimulation relations \citep{Zhang2020LearningIR, Agarwal2021ContrastiveBS}, and more generally,  addressing representational challenges in the RL optimization process \citep{Igl2019GeneralizationIR, Jiang2020PrioritizedLR}. In image-based domains,  inductive biases imposed via neural network design have also been proposed to improve robustness to certain factors of variation in the state~\citep{Lee2020NetworkRA,Kostrikov2020ImageAI, Raileanu2020AutomaticDA}. The challenges with generalization in RL that we will describe in this chapter stem from the deficiencies of MDP objectives, and cannot be fully solved by choice of representations or functional inductive biases.
In the latter category, one approach is domain randomization, varying environment parameters such as coefficients of friction or textures, to obtain behaviors that are effective across many candidate parameter settings~\citep{Sadeghi2017CAD2RLRS, Tobin2017DomainRF, Rajeswaran2017EPOptLR, Sim2Real2018, kang2019generalization}.
Domain randomization sits within a class of methods that seek robust policies by injecting noise into the agent-environment loop, whether in the state \citep{Stulp2011LearningTG}, the action (e.g., via max-entropy RL) \citep{Cobbe2019QuantifyingGI}, or intermediary layers of a neural network policy  (e.g., through information bottlenecks) \citep{Igl2019GeneralizationIR, Lu2020DynamicsGV}. In doing so, these methods effectively introduce partial observability into the problem; while not necessarily equivalent to that of the epistemic POMDP, it may indicate why these methods generalize well empirically.

\textbf{Bayesian RL:} Our work recasts generalization in RL within the Bayesian RL framework, the problem of acting optimally 
under a belief distribution over MDPs (see Ghavamzadeh et al.~\citep{Ghavamzadeh2015BayesianRL} for a survey). Bayesian uncertainty has been studied in many sub-fields of RL \citep{Ramachandran2007BayesianIR, Lazaric2010BayesianMR, Jeon2018ABA, Zintgraf2020VariBADAV}, the most prominent being for exploration and learning efficiently in the online RL setting. Bayes-optimal behavior in RL is often reduced to acting optimally in a POMDP, or equivalently, a belief-state MDP \citep{Duff2002OptimalLC}, of which our epistemic POMDP is a specific instantiation. Learning the Bayes-optimal policy exactly is intractable in all but the simplest problems \citep{Weber1992OnTG, Poupart2006AnAS}, and many works in Bayesian RL have studied relaxations that remain asymptotically optimal for learning, for example with value of perfect information \citep{Dearden1998BayesianQ, Dearden1999ModelBB} or Thompson sampling \citep{Strens2000ABF, Osband2013MoreER, Russo2014LearningTO}.
Our main contribution is to revisit these classic ideas in the context of generalization for RL. We find that the POMDP interpretation of Bayesian RL \citep{Dearden1998BayesianQ, Duff2002OptimalLC, ross2007bayes} provides new insights on inadequacies of current algorithms used in practice, and explains why generalization in RL can be more challenging than in supervised learning. Being Bayesian in the generalization setting also requires new tools and algorithms beyond those classically studied in Bayesian RL, since test-time generalization is measured using regret over a \textit{single} evaluation episode, instead of throughout an online training process. As a result, algorithms and policies that minimize short-term regret (i.e., are more exploitative) are preferred over traditional algorithms like Thompson sampling that explore thoroughly to ensure asymptotic optimality at the cost of short-term regret.

\section{Problem Setup}

We consider the problem of learning RL policies given a set of training contexts that generalize well to new unseen contexts. This problem can be formalized in a Markov decision process (MDP) where the agent does not have full access to the MDP at training time, but only particular initial states or conditions. Before we describe what this means, we must describe the MDP~$\gM$, which is given by a tuple
$(\gS, \gA, r, T, \rho, \gamma)$, with state space $\gS$, action space $\gA$, Markovian transition function~$T(s_{t+1} |s_t, a_t)$, bounded reward function $r(s_t,a_t)$, and initial state distribution $\rho(s_0)$. A policy $\pi$ induces a discounted state distribution $d^{\pi}(s) = (1-\gamma) \mathbb{E}_{\pi}[\sum_{t \geq 0} \gamma^t 1(s_t = s)]$, and achieves return ~${J_\gM(\pi) = \E_\pi[\sum_{t \geq 0} \gamma^tr(s_t, a_t)]}$ in the MDP.

Classical results establish that a deterministic Markovian (memoryless) policy~$\pi^*$ maximizes this objective amongst all  history-dependent policies.

We focus on generalization in contextual MDPs where the agent is only trained on a training set of contexts, and seeks to generalize well to new contexts. A contextual MDP is an MDP in which the state can be decomposed as~${s_t = (c, s_t')}$, a context vector~${c \in \gC}$ that remains constant throughout an episode, and a sub-state~${s' \in \gS'}$ that may vary:~${\gS \coloneqq \gC \times \gS'}$. Each context vector corresponds to a different situation that the agent might be in, each with slightly different dynamics and rewards, but some shared structure across which an agent can generalize.  During training, the agent is allowed to interact only within a sampled subset of contexts~${\ctrain \subset \gC}$. The generalization performance of the agent is measured by the return of the agent's policy in the full contextual MDP $J(\pi)$, corresponding to expected performance when placed in potentially new contexts. While our examples and experiments will be in contextual MDPs, our theoretical results also apply to other RL generalization settings where the full MDP cannot be inferred unambiguously from the data available during training, for example in offline reinforcement learning \citep{levine2020offline}.

\section{Warmup: A Sequential Classification RL Problem}
\label{sec:classification_as_rl}

\begin{figure}
    \centering
    \includegraphics[width=\linewidth]{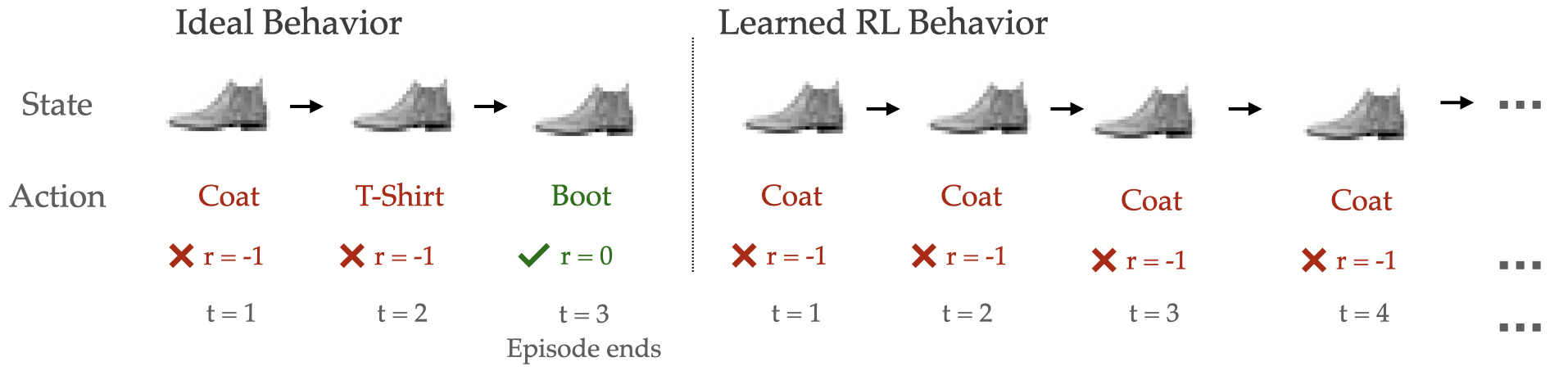}

    \caption{\footnotesize{\textbf{ Sequential Classification RL Problem.} In this task, an agent must keep guessing the label for an image until it gets it correct. To avoid low test return, policies should change actions if the label guessed was incorrect, but  standard RL methods fail to do so, instead guessing the same incorrect label repeatedly.}
    }
    \label{fig:classification}
\end{figure}

\begin{figure}
    \centering
    \includegraphics[width=0.65\linewidth]{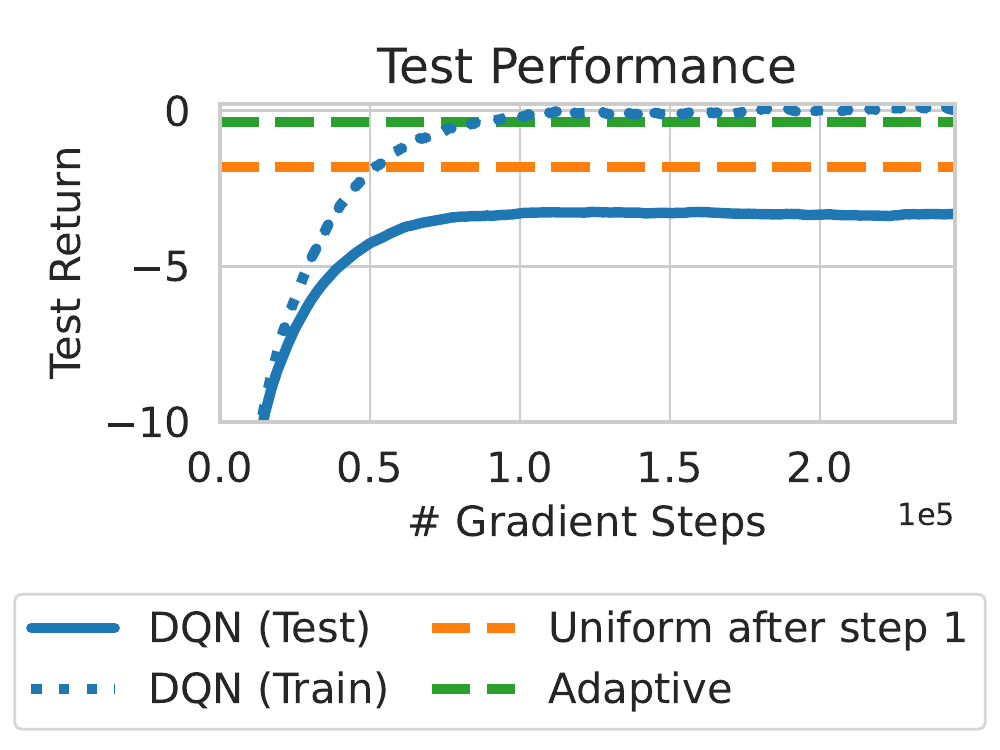}
    \includegraphics[width=0.65\linewidth]{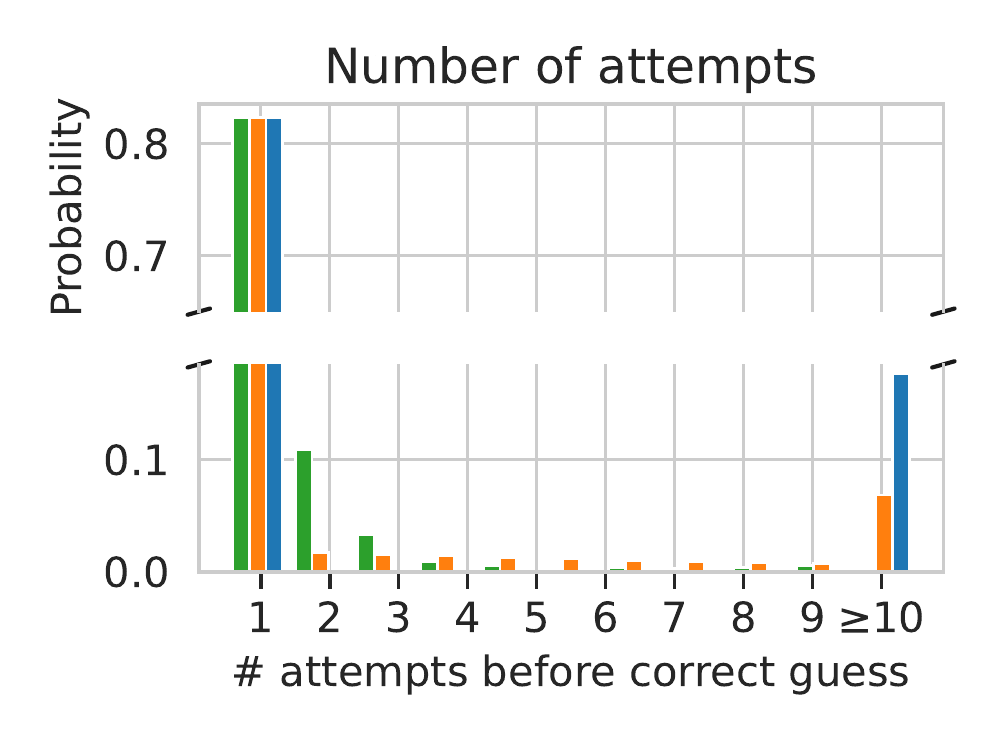}

    \caption{\footnotesize{\textbf{DQN on RL FashionMNIST.} DQN achieves lower test performance than simple variants that leverage the  structure of the RL problem.}}
\label{fig:classification_results}

\end{figure}

We begin our study of generalization in RL with an example problem that is set up to be close to a supervised learning task where generalization is relatively well understood: image classification on the FashionMNIST dataset~\citep{FMNIST}. In this environment (visualized in Figure \ref{fig:classification}), an image from the dataset is sampled (the context) at the beginning of an episode and held fixed; the agent must identify the label of the image to complete the episode. If the agent guesses correctly, it receives a reward of $0$ and the episode ends; if incorrect, it receives a reward of $-1$ and the episode continues, so it must attempt another guess for the \emph{same} image at the next time step. This RL problem is near identical to supervised classification, the core distinction being that an agent may interact with the same image over several timesteps in an episode instead of only one attempt as in supervised learning. Note that since episodes may last longer than a single timestep, this problem is not a contextual bandit. 

The optimal policy in both the one-step and sequential RL version of the problem deterministically outputs the correct label for the image, because the image fully determines the label (in other words, it is a fully observed MDP). However, this optimal strategy generally cannot be learned from a finite training set, since some generalization error is unavoidable. With a fixed training set, the strategy for generalizing in classification remains the same: deterministically choose the label the agent is most confident about. However, the RL setting introduces two new factors: the agent gets multiple tries at classifying the same image, and it knows if an attempted label is incorrect. To generalize best to new test images, an RL policy must leverage this additional structure, for example by trying many possible labels, or by changing actions if the previous guess was incorrect.

A standard RL algorithm, which estimates the optimal policy on the empirical MDP defined by the dataset of training images does not learn to leverage these factors, and instead learns behavior highly sub-optimal for generalization. We obtained a policy by running DQN~\citep{DQN} (experimental details in Appendix~\ref{appendix:FashionMnistImplementation}), whose policy deterministically chooses the same label for the image at every timestep. Determinism is not specific to DQN, and is inevitable in any RL method that models the problem as an MDP because the optimal policy in the MDP is always deterministic and Markovian. The learned deterministic policy either guesses the correct label immediately, or guesses incorrectly and proceeds to make the same incorrect guess on every subsequent time-step. We compare performance in Figure \ref{fig:classification_results} with a version of the agent that starts to guess randomly if incorrect on the first timestep, and a different agent that acts by process of elimination: first choosing the action it is most confident about, if incorrect, then the second, and so forth. Although all three versions have the same training performance, the learned RL policy generalizes more poorly than these alternative variants that exploit the sequential nature of the problem. In Section~\ref{sec:understandingOptimality}, we will see that this process-of-elimination is, in some sense, the optimal way to generalize for this task. This experiment reveals a tension: learning policies for generalization that rely on an MDP model fail, even though the underlying environment \textit{is} an MDP. This failure holds in any MDP model with limited data, whether the empirical MDP or more sophisticated MDPs that use uncertainty estimates in their construction.

\section{Modeling Generalization in RL as an Epistemic POMDP}

To better understand test-time generalization in RL, we study the problem under a Bayesian perspective. We show that training on limited training contexts leads to an implicit partial observability at test-time that we describe using a formalism called the epistemic POMDP. 

\subsection{The Epistemic POMDP}
\label{sec:epistemic_pomdp}

In the Bayesian framework, when learning given a limited amount of evidence $\gD$ from an MDP $\gM$, we can use a prior $\gP(\gM)$ to construct a posterior belief distribution $\gP(\gM | \gD)$ over the identity of the MDP. For learning in a contextual MDP, $\gD$ corresponds to the environment dynamics and reward in training contexts $\ctrain$ that the agent can interact with, and the posterior belief distribution $\gP(\gM | \gD)$ models the agent's uncertainty about the behavior of the environment in contexts that it has not seen before (e.g. uncertainty about the label for a test-set image in the example from Section \ref{sec:classification_as_rl}).  

Since the agent only has partial access to the MDP $\gM$ during training, the agent does not know which MDP from the posterior distribution is the true environment, and must act at test-time under this uncertainty. Following a reduction common in Bayesian RL \citep{Duff2002OptimalLC, Ghavamzadeh2015BayesianRL}, we model this test-time uncertainty using a partially observed MDP that we will call the \textbf{epistemic POMDP}. The epistemic POMDP is structured as follows: each new episode in the POMDP begins by sampling a single MDP~${\gM \sim \gP(\gM | \gD)}$ from the posterior, and then the agent interacts with~$\gM$ until the episode ends in this MDP. The agent does not observe \textit{which} MDP was sampled, and since the MDP remains fixed for the duration of the episode, this induces implicit partial observability. 

Effectively, each episode in the epistemic POMDP corresponds to acting in one of the possible environments that is consistent with the evidence that the agent is allowed access to at training time.
\newline \newline The epistemic POMDP is formally defined as the tuple  ${\gM^{\po} = (\gS^\po, \gO^\po, \gA, T^\po, r^\po, \rho^\po, \gamma)}$.  A state in this POMDP~${s_t^\po = (\gM, s_t)}$ contains the identity of the current MDP being acted in~$\gM$, and the current state in this MDP~$s_t$; we write the state space as~${\gS^\po = \mathbf{M} \times \gS}$, where~$\mathbf{M}$ is the space of MDPs with support under the prior.

The agent only observes~${o_t^\po = s_t}$,  the state in the MDP (${\gO^\po = \gS}$), but \textbf{not} the identity of the MDP, $\gM$. The initial state distribution is defined by the posterior distribution: $\rho^\po((\gM, s_0)) = \gP(\gM | \gD)\rho_\gM(s_0)$,
and the transition and reward functions in the POMDP reflect the dynamics in the current MDP:
\begin{equation}
    T^\po((\gM', s') \mid (\gM, s), a) = \delta(\gM' = \gM)T_{\gM}(s' | s,a)~~~~~~r^\po((\gM, s), a) = r_{\gM}(s,a)\,.
\end{equation}
\textbf{Example }(Sequential Image Classification)\textbf{.} We begin by explicitly describing the induced epistemic POMDP for the task from Section \ref{sec:classification_as_rl}. The agent's uncertainty concerns how images are mapped to labels, and each MDP $\mathcal{M}$ in the posterior distribution corresponds to a different potential labelling function $Y_{\mathcal{M}}: x \mapsto y$ that is consistent with the training dataset. Each episode in the epistemic POMDP, a different MDP $\mathcal{M}$ and corresponding labeller $Y_{\mathcal{M}}$ is sampled from the posterior distribution, alongside an image $x \sim p(x)$. The agent must guess the label assigned by this labelling function $y \coloneqq Y_{\mathcal{M}}(x)$, but is only provided the image $x$ and \textbf{not} the identity of the labeller $Y_{\mathcal{M}}$. We emphasize that the context remains \textit{fully observed} in the epistemic POMDP (the image $x$ is provided to the agent); what is partially observed is how the environment dynamics will behave for the context (what label the image corresponds to). 

What makes the epistemic POMDP a useful tool for understanding generalization in RL is that performance in the epistemic POMDP $\gM^\po$ corresponds exactly to the expected return of the agent at test-time when the prior is well-specified.

\begin{restatable}{proposition}{epistemicpomdpdefn}
\label{prop:epistemic_pomdp_defn}
If the true MDP $\gM$ is sampled from $\gP(\gM)$, and evidence $\gD$ from $\gM$ is provided to an algorithm during training, then the expected test-time return of $\pi$ is equal to its performance in the epistemic POMDP $\gM^\po$.
\begin{equation}
\label{eqn:pomdp_objective}
    J_{\gM^\po}(\pi) 
    = \E_{\gM \sim \gP(\gM)}[J_{\gM}(\pi) \mid \gD].
\end{equation}
In particular, the optimal policy in $\gM^\po$ is Bayes-optimal for generalization to the unknown MDP $\gM$: it receives the highest expected test-time return amongst all possible policies.
\end{restatable}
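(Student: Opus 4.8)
The plan is to establish the identity by unwinding the definition of the epistemic POMDP $\gM^\po$ and applying the tower property of conditional expectation, and then to read off the Bayes-optimality claim as an immediate corollary.

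First I would write $J_{\gM^\po}(\pi) = \E\big[\sum_{t\ge 0}\gamma^t r^\po(s^\po_t,a_t)\big]$ as an expectation over trajectories generated in $\gM^\po$ by $\pi$. By construction, such a trajectory is produced by first drawing a latent MDP $\gM \sim \gP(\gM\mid\gD)$ together with $s_0\sim\rho_\gM$, and then rolling out: the kernel $T^\po((\gM',s')\mid(\gM,s),a)=\delta(\gM'=\gM)T_\gM(s'\mid s,a)$ holds the sampled $\gM$ fixed for the whole episode, and $r^\po((\gM,s),a)=r_\gM(s,a)$ depends only on the within-MDP state. Crucially, $\pi$ observes only $o^\po_t=s_t$ and never $\gM$, so the information available to $\pi$ at any step is a history $(s_0,a_0,s_1,\dots)$ that is the \emph{same} object whether we view the rollout as occurring in $\gM^\po$ (conditioned on the sampled MDP being $\gM$) or directly in $\gM$. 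Hence, conditioned on $\{\text{sampled MDP}=\gM\}$, the joint law of the trajectory under the POMDP dynamics coincides exactly with the law of a $\pi$-rollout in $\gM$, so $\E\big[\sum_t\gamma^t r^\po \,\big|\, \gM\big] = J_\gM(\pi)$. This holds uniformly for memoryless and history-dependent $\pi$ alike.

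Applying the tower property over $\gM\sim\gP(\gM\mid\gD)$ then yields $J_{\gM^\po}(\pi) = \E_{\gM\sim\gP(\gM\mid\gD)}[J_\gM(\pi)]$. Since $\gP(\gM\mid\gD)$ is by definition the law of the true MDP given that $\gM$ was drawn from the prior $\gP(\gM)$ and evidence $\gD$ was subsequently observed, the right-hand side is exactly $\E_{\gM\sim\gP(\gM)}[J_\gM(\pi)\mid\gD]$ — the expected test-time return of $\pi$ — which proves \eqref{eqn:pomdp_objective}. The ``in particular'' claim follows at once: maximizing the left-hand side over all history-dependent policies is the same as maximizing expected test-time return over that class, and an optimal policy for $\gM^\po$ (which exists within the history-dependent class by standard POMDP theory) therefore attains the highest possible expected test-time return among all policies.

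The main obstacle is bookkeeping rather than conceptual depth: one must set up the trajectory measure carefully enough that conditioning on the (possibly infinite-dimensional) latent variable $\gM$ is well-defined, and verify that because the observation is $o^\po_t=s_t$ with no dependence on $\gM$, no information about the identity of $\gM$ leaks into $\pi$'s inputs — this is precisely what makes the conditional rollout law agree with the plain MDP rollout law, and it is the one step where the argument would break if the observation model were any richer. I would also flag that the statement presumes a well-specified prior (the test MDP is genuinely $\gM\sim\gP(\gM)$); without that assumption one only obtains a surrogate relationship, which is what motivates the approximate, ensemble-based treatment used for LEEP later in the chapter.
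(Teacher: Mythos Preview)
Your proposal is correct and follows essentially the same approach as the paper's proof: both unwind the POMDP definition to write $J_{\gM^\po}(\pi)=\E_{\gM\sim\gP(\gM\mid\gD)}[J_\gM(\pi)]$ and then identify this with the conditional expectation $\E_{\gM\sim\gP(\gM)}[J_\gM(\pi)\mid\gD]$. Your version is simply more explicit about why the conditional trajectory law matches (the observation $o^\po_t=s_t$ leaks no information about $\gM$) and about the tower-property step, whereas the paper treats both as immediate from the construction.
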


The epistemic POMDP is based on well-understood concepts in Bayesian reinforcement learning, and Bayesian modeling more generally. However, in contrast to prior works on Bayesian RL, we are specifically concerned with settings where there is a training-test split, and performance is measured by a single test episode. While using Bayesian RL to accelerate exploration or minimize regret has been well-explored~\citep{Ghavamzadeh2015BayesianRL}, we rather use the Bayesian lens specifically to understand generalization -- a perspective that is distinct from prior work on Bayesian RL. Towards this goal, the equivalence between test-time return and expected return in the epistemic POMDP allows us to use performance in the POMDP as a proxy for understanding how well current RL methods can generalize.

\subsection{Understanding Optimality in the Epistemic POMDP}
\label{sec:understandingOptimality}

We now study the structure of the epistemic POMDP, and use it to characterize properties of Bayes-optimal test-time behavior and the sub-optimality of alternative policy learning approaches. The majority of our results follow from well-known results about POMDPs, so we present them here informally, with formal statements and proofs in Appendix \ref{appendix:Sec5}. 

\textbf{Example ctd.} ~~Acting optimally in the epistemic POMDP for the sequential image classification task requires maximizing return over the distribution of labels that is induced by the posterior distribution $p(y|x, \gD) = \E_{\gM \sim \gP(\gM|\gD)}[1(Y_\gM(x) = y)]$. A deterministic policy (as is learned by standard RL algorithms) is a high-risk strategy in the POMDP; it receives exceedingly low return if the labeller outputs a different label than the one predicted. The Bayes-optimal generalization strategy corresponds to a process of elimination: first choose the most likely label $a = \argmax p(y|x, \gD)$; if this is incorrect, eliminate it and choose the next-most likely, repeating until the correct label is finally chosen. Amongst memoryless policies, the optimal behavior is stochastic, sampling actions according to the distribution $\pi^*(a|x) \propto \sqrt{p(y|x, \gD)}$ (derivation in Appendix~\ref{appendix:FashionMnistTheory}).

The characteristics of the optimal policy in the epistemic POMDP for the image classification RL problem match well-established results that optimal POMDP policies are generally memory-based \citep{monahan1982state}, and amongst memoryless policies, the optimal policy may be stochastic \citep{singhPOMDP, Montfar2015GeometryAD}. Because of the equivalence between the epistemic POMDP and test-time behavior, these maxims are also true for Bayes-optimal behavior when maximizing test-time performance.

\begin{principle}
\label{rmk:optimal_policy}
The Bayes-optimal policy for maximizing test-time performance is in general non-Markovian. When restricted to Markovian policies, the Bayes-optimal policy is in general stochastic.
\end{principle}
The reason that Bayes-optimal generalization often requires memory is that the experience collected thus far in the episode contains information about the identity of the MDP being acted in (which is hidden from the agent observation), and to maximize expected return, the agent must adapt its subsequent behavior to incorporate this new information. The fact that acting optimally at test-time formally requires adaptivity (or stochasticity for memoryless policies) highlights the difficulty of generalizing well in RL, and provides a new perspective for understanding the success various empirical studies have found in  improving generalization performance using recurrent networks \citep{openAIcube, simToRealPeng} and stochastic regularization penalties \citep{Stulp2011LearningTG, Cobbe2019QuantifyingGI, Igl2019GeneralizationIR, Lu2020DynamicsGV}.

It is useful to understand to what degree the partial observability plays a role in determining Bayes-optimal behavior. When the partial observability is insignificant, the epistemic POMDP objective can coincide with a surrogate MDP approximation, and Bayes-optimal solutions can be attained with standard fully-observed RL algorithms. For example, if there is a policy that is simultaneously optimal in \textit{every} MDP from the posterior, then an agent need not worry about the (hidden) identity of the MDP, and just follow this policy. Perhaps surprisingly, this kind of condition is difficult to relax: we show in Proposition \ref{prop:deterministic-dominated-random} that even if a policy is optimal in many (but not all) of the MDPs from the posterior, this seemingly ``optimal'' policy can generalize poorly at test-time.

Moreover, under partial observability, optimal policies for the MDPs in the posterior may differ substantially from Bayes-optimal behavior: in Proposition \ref{prop:suboptimal-actions-everywhere}, we show that the Bayes-optimal policy may take actions that are sub-optimal in \textit{every} environment in the posterior. These results indicate the brittleness of learning policies based on optimizing return in an MDP model when the agent has not yet fully resolved the uncertainty about the true MDP parameters.
\begin{principle}[Failure of MDP-Optimal Policies, Propositions
\ref{prop:deterministic-dominated-random}, \ref{prop:suboptimal-actions-everywhere}]
\label{rmk:failure_optimal_policy}
The expected test-time return of policies that are learned by maximizing reward in any MDP from the posterior, as standard RL methods do, may be arbitrarily low compared to that of Bayes-optimal behavior. 
\end{principle}

As Bayes-optimal memoryless policies are stochastic, one may wonder if simple strategies for inducing stochasticity, such as adding $\epsilon$-greedy noise or entropy regularization, can alleviate the sub-optimality that arose with deterministic policies in the previous paragraph. In some cases, this may be true; one particularly interesting result is that in certain goal-reaching problems, entropy-regularized RL can be interpreted as optimizing an epistemic POMDP objective with a specific form of posterior distribution over reward functions (Proposition~\ref{appendix:MaxEntRL}) \citep{eysenbachMaxEnt}. For the more general setting, we show in Proposition \ref{prop:stochastic-dominated-random} that entropy regularization and other general-purpose techniques can similarly catastrophically fail in epistemic POMDPs.

\begin{principle}[Failure of Generic Stochasticity, Proposition \ref{prop:stochastic-dominated-random}]
\label{rmk:failure_stochastic}
The expected test-time return of policies learned with stochastic regularization techniques like maximum-entropy RL that are agnostic of the posterior $\gP(\gM|\gD)$ may be arbitrarily low compared to that of Bayes-optimal behavior.
\end{principle}

This failure happens because the degree of stochasticity used by the Bayes-optimal policy reflects the agent's epistemic uncertainty about the environment; since standard regularizations are agnostic to this uncertainty, the learned behaviors often do not reflect the appropriate level of stochasticity needed. A maze-solving agent acting Bayes-optimally, for example, may choose to act deterministically in mazes like those it has seen at training, and on others where it is less confident, rely on random exploration to exit the maze, inimitable behavior by regularization techniques agnostic to this uncertainty.

Our analysis of the epistemic POMDP highlights the difficulty of generalizing well in RL, in the complexity of Bayes-optimal policies (Remark~\ref{rmk:optimal_policy}) and the deficiencies of our standard MDP-based RL algorithms (Remark~\ref{rmk:failure_optimal_policy}~and Remark~\ref{rmk:failure_stochastic}). While MDP-based algorithms can serve as a useful starting point for acquiring generalizable skills, learning policies that perform well in new test-time scenarios may require more complex algorithms that attend to the epistemic POMDP structure that is implicitly induced by the agent's epistemic uncertainty.

\section{Learning Policies that Generalize Well Using the Epistemic POMDP}

When the epistemic POMDP $\gM^\po$ can be exactly obtained, we can learn RL policies that generalize well to the true (unknown) MDP $\gM$ by learning an optimal policy in the POMDP. In this oracle setting, any POMDP-solving method will suffice, and design choices like policy function classes (e.g. recurrent vs Markovian policies) or agent representations (e.g. belief state vs PSRs) made based on the requirements of the specific domain. However, in practice, the epistemic POMDP can be challenging to approximate due to the difficulties of learning coherent MDP models and maintaining a posterior over such MDP models in high-dimensional domains. 

In light of these challenges, we now focus on practical methods for learning generalizable policies when the exact posterior distribution (and therefore true epistemic POMDP) cannot be recovered exactly. We derive an algorithm for learning the optimal policy in the epistemic POMDP induced by an approximate posterior distribution $\hat{\gP}(\gM| \gD)$ with finite support. We use this to motivate LEEP, a simple ensemble-based algorithm for learning policies in the contextual MDP setting.

\subsection{Policy Optimization in an Empirical Epistemic POMDP}

Towards a tractable algorithm, we assume that instead of the true posterior $\gP(\gM | \gD)$, we only have access to an empirical posterior distribution $\hat{\gP}(\gM | \gD)$ defined by $n$ MDP samples from the posterior distribution $\{\gM_i\}_{i\in [n]}$. This empirical posterior distribution induces an empirical epistemic POMDP $\hat{\gM}^\po$; our ambition is to learn the optimal policy in this POMDP. Rather than directly learning this optimal policy as a generic POMDP solver might, we recognize that $\hat{\gM}^\po$ corresponds to a collection of $n$ MDPs \footnote{Note that when the true environment is a contextual MDP, the sampled MDP $\gM_i$ does not correspond to a single context within a contextual MDP --- each MDP $\gM_i$ is an \emph{entire} contextual MDP with many contexts.} and decompose the optimization problem to mimic this structure. We will learn $n$ policies $\pi_1, \cdots, \pi_n$, each policy $\pi_i$ in one of the MDPs $\gM_i$ from the empirical posterior, and combine these policies together to recover a single policy $\pi$ for the POMDP. Reducing the POMDP policy learning problem into a set of MDP policy learning problems can allow us to leverage the many recent advances in deep RL for scalably solving MDPs. The following theorem links the expected return of a policy $\pi$ in the empirical epistemic POMDP $\hat{\gM^\po}$, in terms of the performance of the policies $\pi_i$ on their respective MDPs $\gM_i$.

\begin{restatable}{proposition}{theoremLowerBound}
\label{thm:lower_bound}
Let $\pi,\pi_1,\cdots \pi_n$ be memoryless, and define $r_{\max} = \max_{i, s, a} |r_{\gM_i}(s,a)|$.  The expected return of $\pi$ in $\hat{\gM}^\po$
is bounded below as:
\begin{equation}
\label{eqn:lower_bound}
         J_{\hat{\gM}^\po}(\pi) \geq \frac{1}{n} \sum_{i=1}^n J_{\gM_i}(\pi_i) - \frac{\sqrt{2}r_{\max}}{(1-\gamma)^2n} \sum_{i=1}^n \E_{s \sim d_{\gM_i}^{\pi_i}}\left[\sqrt{ D_{KL}\left(\pi_i(\cdot | s)\,\, || \,\, \pi(\cdot | s)\right)}\right],
\end{equation}
\end{restatable}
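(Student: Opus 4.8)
The plan is to prove the bound one posterior sample at a time and then average. First I would unpack the definition of $\hat{\gM}^\po$: a new episode draws an index $i$ uniformly from $\{1,\dots,n\}$, and since a memoryless $\pi$ observes only the underlying MDP state (never the index), acting in $\hat{\gM}^\po$ is the same as acting in a uniformly random $\gM_i$. Hence $J_{\hat{\gM}^\po}(\pi) = \frac{1}{n}\sum_{i=1}^n J_{\gM_i}(\pi)$, and it suffices to show, for each fixed $i$,
$$J_{\gM_i}(\pi) \;\ge\; J_{\gM_i}(\pi_i) \;-\; \frac{\sqrt{2}\,r_{\max}}{(1-\gamma)^2}\,\E_{s \sim d^{\pi_i}_{\gM_i}}\Bigl[\sqrt{D_{KL}\bigl(\pi_i(\cdot|s)\,\|\,\pi(\cdot|s)\bigr)}\Bigr],$$
and then sum over $i$.

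For the per-sample inequality I would invoke the standard performance difference lemma inside $\gM_i$, in the orientation that places the reference policy's discounted state distribution on the outside:
$$J_{\gM_i}(\pi_i) - J_{\gM_i}(\pi) \;=\; \frac{1}{1-\gamma}\,\E_{s \sim d^{\pi_i}_{\gM_i}}\,\E_{a \sim \pi_i(\cdot|s)}\bigl[A^{\pi}_{\gM_i}(s,a)\bigr].$$
Because $\E_{a\sim\pi(\cdot|s)}[A^{\pi}_{\gM_i}(s,a)] = 0$, the inner expectation equals $\sum_a(\pi_i(a|s)-\pi(a|s))\,A^{\pi}_{\gM_i}(s,a)$, which is unchanged if I subtract from $A^{\pi}_{\gM_i}(s,\cdot)$ the midpoint $c_s$ of its range over actions. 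Applying Hölder's inequality then bounds this by $\|\pi_i(\cdot|s)-\pi(\cdot|s)\|_1$ times $\tfrac12$ the span of $Q^{\pi}_{\gM_i}(s,\cdot)$; using $\|\pi_i(\cdot|s)-\pi(\cdot|s)\|_1 = 2\,D_{TV}(\pi_i(\cdot|s)\|\pi(\cdot|s))$ and $|Q^{\pi}_{\gM_i}| \le r_{\max}/(1-\gamma)$ (so the span is at most $2r_{\max}/(1-\gamma)$) gives $|J_{\gM_i}(\pi_i)-J_{\gM_i}(\pi)| \le \frac{2r_{\max}}{(1-\gamma)^2}\,\E_{s\sim d^{\pi_i}_{\gM_i}}[D_{TV}(\pi_i(\cdot|s)\|\pi(\cdot|s))]$. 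Finally I would apply Pinsker's inequality pointwise in $s$, $D_{TV}(\pi_i(\cdot|s)\|\pi(\cdot|s)) \le \sqrt{\tfrac12 D_{KL}(\pi_i(\cdot|s)\|\pi(\cdot|s))}$, push it through the expectation to turn the constant $\tfrac{2r_{\max}}{(1-\gamma)^2}$ into $\tfrac{\sqrt2\, r_{\max}}{(1-\gamma)^2}$, and average over $i$, substituting $J_{\hat{\gM}^\po}(\pi) = \tfrac1n\sum_i J_{\gM_i}(\pi)$ from the first step.

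The performance difference lemma and Pinsker are routine; the main thing to get right is the bookkeeping of constants. Specifically, I must pick the orientation of the performance difference lemma so the outer expectation is over $d^{\pi_i}_{\gM_i}$ (matching the statement) rather than $d^{\pi}_{\gM_i}$, and I must use the advantage-centering trick so the Hölder step pays only half the span of $Q^{\pi}$ instead of a crude $\|A^{\pi}\|_\infty \le 2r_{\max}/(1-\gamma)$, which would cost an extra factor of two and break the stated constant. I would also note the harmless edge cases: when some $D_{KL}(\pi_i(\cdot|s)\|\pi(\cdot|s))$ is infinite the right-hand side is vacuous, and differing supports of the two conditional distributions cause no issue since we only need the inequality as a lower bound.
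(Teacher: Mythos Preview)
Your proposal is correct and follows essentially the same route as the paper: decompose $J_{\hat{\gM}^\po}(\pi)=\tfrac{1}{n}\sum_i J_{\gM_i}(\pi)$, apply the performance difference lemma with $d_{\gM_i}^{\pi_i}$ on the outside, subtract $\E_{a\sim\pi}[A^{\pi}]=0$, bound by total variation, and finish with Pinsker. The only cosmetic difference is that the paper simply asserts $|A^{\pi}_{\gM_i}(s,a)|\le r_{\max}/(1-\gamma)$ to get the TV bound, whereas you reach the same constant via the advantage-centering/half-span argument.
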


This proposition indicates that if the policies in the collection $\{\pi_i\}_{i \in [n]}$ all achieve high return in their respective MDPs (first term) and are imitable by a single policy $\pi$ (second term), then $\pi$ is guaranteed to achieve high return in the epistemic POMDP. In contrast, if the policies cannot be closely imitated by a single policy, this collection of policies may not be useful for learning in the epistemic POMDP using the lower bound. This means that it may not sufficient to naively optimize each policy $\pi_i$ on its MDP $\gM_i$ without any consideration to the other policies or MDPs, since the learned policies are likely to be different and difficult to jointly imitate. To be useful for the lower bound, each policy $\pi_i$ should balance between maximizing performance on its MDP and minimizing its deviation from the other policies in the set. The following proposition shows that if the policies are trained jointly to ensure this balance, it in fact recovers the optimal policy in the empirical epistemic POMDP.

\begin{restatable}{proposition}{propositionLinkFunction}
\label{prop:link-version}
Let $f: \{\pi_i\}_{i\in[n]} \mapsto \pi$ be a function that maps $n$ policies to a single policy satisfying $f(\pi,\cdots,\pi) = \pi $ for every policy $\pi$, and let $\alpha$ be a hyperparameter satisfying $\alpha \geq \frac{\sqrt{2} r_{\text{max}}}{(1-\gamma)^2n}$. Then letting $\pi_1^*, \dots \pi_n^*$ be the optimal solution to the following optimization problem:

\begin{equation}
\label{eq:algo_objective}
    \{\pi^{*}_i\}_{i\in[n]} = \argmax_{\pi_1, \cdots, \pi_n} \frac{1}{n} \sum_{i=1}^n J_{\gM_i}(\pi_i)  - \alpha \sum_{i=1}^n \E_{s \sim d_{\gM_i}^{\pi_i}}\left[\sqrt{ D_{KL}\left(\pi_i(\cdot | s)\,\, || \,\, f(\{\pi_i\})(\cdot | s)\right)}\right],
\end{equation}
the policy $\pi^{*} \coloneqq f(\{\pi_i^*\}_{i\in[n]})$ is optimal for the empirical epistemic POMDP $\hat{\gM}^\po$.
\end{restatable}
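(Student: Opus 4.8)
The plan is to prove optimality of $\pi^{*} = f(\{\pi_i^*\})$ in $\hat{\gM}^\po$ by a sandwiching argument that couples Proposition~\ref{thm:lower_bound} (the lower bound) with the evaluation of the objective in \eqref{eq:algo_objective} at a specially chosen feasible point. First I would fix notation: write $G(\pi_1,\dots,\pi_n)$ for the objective maximized in \eqref{eq:algo_objective}, so that $G(\pi_1,\dots,\pi_n) = \frac{1}{n}\sum_i J_{\gM_i}(\pi_i) - \alpha \sum_i \E_{s\sim d_{\gM_i}^{\pi_i}}[\sqrt{D_{KL}(\pi_i(\cdot|s)\,\|\,f(\{\pi_i\})(\cdot|s))}]$. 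The two facts I will use are: (i) by Proposition~\ref{thm:lower_bound}, since $\alpha \geq \frac{\sqrt 2 r_{\max}}{(1-\gamma)^2 n}$, for \emph{any} policies $\pi_1,\dots,\pi_n$ and any $\pi$ we have $J_{\hat{\gM}^\po}(\pi) \geq \frac{1}{n}\sum_i J_{\gM_i}(\pi_i) - \alpha\sum_i \E_{s\sim d_{\gM_i}^{\pi_i}}[\sqrt{D_{KL}(\pi_i(\cdot|s)\,\|\,\pi(\cdot|s))}]$; applied with $\pi = f(\{\pi_i\})$ this gives $J_{\hat{\gM}^\po}(f(\{\pi_i\})) \geq G(\pi_1,\dots,\pi_n)$ for all $\pi_1,\dots,\pi_n$; (ii) the ``diagonal'' feasibility: for any single policy $\pi$, the tuple $(\pi,\dots,\pi)$ is feasible and, using $f(\pi,\dots,\pi)=\pi$, we get $G(\pi,\dots,\pi) = \frac{1}{n}\sum_i J_{\gM_i}(\pi) - \alpha\sum_i \E_{s\sim d_{\gM_i}^{\pi}}[\sqrt{D_{KL}(\pi(\cdot|s)\,\|\,\pi(\cdot|s))}] = \frac{1}{n}\sum_i J_{\gM_i}(\pi)$, since each $KL$ term vanishes.

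Next I would chain these. Let $\bar\pi$ be an optimal memoryless policy for $\hat{\gM}^\po$, i.e. $J_{\hat{\gM}^\po}(\bar\pi) = \max_\pi J_{\hat{\gM}^\po}(\pi)$ over memoryless $\pi$. By Proposition~\ref{prop:epistemic_pomdp_defn}-style decomposition (the empirical epistemic POMDP is the uniform mixture of the $n$ MDPs $\gM_i$ with hidden index), we have the identity $J_{\hat{\gM}^\po}(\pi) = \frac{1}{n}\sum_{i=1}^n J_{\gM_i}(\pi)$ for every memoryless $\pi$, because each episode samples $\gM_i$ uniformly and a memoryless policy cannot condition on $i$. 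Hence $J_{\hat{\gM}^\po}(\bar\pi) = \frac{1}{n}\sum_i J_{\gM_i}(\bar\pi) = G(\bar\pi,\dots,\bar\pi) \leq \max_{\pi_1,\dots,\pi_n} G(\pi_1,\dots,\pi_n) = G(\pi_1^*,\dots,\pi_n^*)$. On the other hand, by fact (i) applied at the optimizer, $G(\pi_1^*,\dots,\pi_n^*) \leq J_{\hat{\gM}^\po}(f(\{\pi_i^*\})) = J_{\hat{\gM}^\po}(\pi^{*})$. Combining, $J_{\hat{\gM}^\po}(\bar\pi) \leq J_{\hat{\gM}^\po}(\pi^{*})$, and since $\bar\pi$ is optimal this forces equality, so $\pi^{*}$ is optimal in $\hat{\gM}^\po$.

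The main obstacle — and the step I would be most careful about — is fact (i)'s relationship to the \emph{class} of comparator policies: Proposition~\ref{thm:lower_bound} is stated for memoryless $\pi$ and memoryless $\pi_i$, and the decomposition $J_{\hat{\gM}^\po}(\pi) = \frac1n\sum_i J_{\gM_i}(\pi)$ is valid for memoryless $\pi$ only; so the conclusion is that $\pi^{*}$ is optimal \emph{among memoryless policies} for $\hat{\gM}^\po$, which is presumably the intended reading (LEEP learns Markovian policies). I would state this scope explicitly. A secondary point to verify carefully is that the supremum in \eqref{eq:algo_objective} is attained (so that ``$\pi_1^*,\dots,\pi_n^*$'' is well-defined) — this can be assumed as part of the hypothesis, or handled by an $\epsilon$-argument: for any $\epsilon>0$ pick near-optimizers and push $\epsilon\to 0$, since the inequalities above are preserved under limits. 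Finally I would double-check the direction of the $KL$ argument in $f$ matches exactly between \eqref{eqn:lower_bound} and \eqref{eq:algo_objective} (both are $D_{KL}(\pi_i \,\|\, \pi)$), so that applying the bound with $\pi = f(\{\pi_i^*\})$ is literally legitimate rather than merely morally so.
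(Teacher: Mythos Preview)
Your proposal is correct and follows essentially the same argument as the paper: apply the lower bound of Proposition~\ref{thm:lower_bound} at the optimizer $\{\pi_i^*\}$ with $\pi = f(\{\pi_i^*\})$, then compare against the diagonal tuple $(\bar\pi,\dots,\bar\pi)$ for $\bar\pi$ optimal in $\hat{\gM}^\po$, using $f(\bar\pi,\dots,\bar\pi)=\bar\pi$ to kill the KL penalty and the identity $J_{\hat{\gM}^\po}(\bar\pi)=\frac{1}{n}\sum_i J_{\gM_i}(\bar\pi)$. Your added remarks on the memoryless scope, attainment of the argmax, and the KL-direction check are sensible clarifications, but the core chain of inequalities is identical to the paper's proof.
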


\subsection{A Practical Algorithm for Contextual MDPs: LEEP}

Proposition \ref{prop:link-version} provides a foundation for a practical algorithm for learning policies when provided training contexts $\ctrain$ from an unknown contextual MDP. In order to use the proposition in a practical algorithm, we must discuss two problems: how posterior samples $\gM_i \sim \gP(\gM | \gD)$ can be approximated, and how the function $f$ that combines policies should be chosen.

\textbf{Approximating the posterior distribution: } Rather than directly maintaining a posterior over transition dynamics and reward models, which is especially difficult with image-based observations, we can approximate samples from the posterior via a bootstrap sampling technique \citep{bootstrappedDQN}. To sample a candidate MDP $\gM_i$, we sample with replacement from
the training contexts $\ctrain$ to get a new set of contexts $\ctrain^i$, and define $\gM_i$ to be the empirical MDP on this subset of training contexts. Rolling out trials from the posterior sample $\gM_i$ then corresponds to selecting a context at random from $\ctrain^i$, and then rolling out that context. Crucially, note that $\gM_i$ still corresponds to a \emph{distribution} over contexts, not a single context, since our goal is to sample from the posterior entire contextual MDPs.
\begin{algorithm}[b] 
  \caption{Linked Ensembles for the Epistemic POMDP (LEEP)}\label{algobox}
  \begin{algorithmic}[1]
    \State Receive training contexts $\ctrain$, number of ensemble members $n$
    \State Bootstrap sample training contexts to create $\ctrain^1, \dots \ctrain^n$, where $\ctrain^i \subset \ctrain$.
\State Initialize $n$ policies: $\pi_1, \dots \pi_n$
    \For{iteration $k = 1, 2, 3, \dots $}
        \For{policy $i = 1, \dots,  n$}
      \State Collect environment samples in training contexts $\ctrain^i$ using policy $\pi_i$
      \State Take gradient steps wrt $\pi_i$ on these samples with augmented RL loss:
    
      $$\pi_i \gets \pi_i - \eta \nabla_i (\gL^{RL}(\pi_i) + \alpha \E_{s \sim \pi_i, \ctrain^i}[ D_{KL}(\pi_i(a|s) \| \max_j \pi_j(a|s))])$$

        \EndFor
    \EndFor
    \State Return $\pi = \max_i \pi$: $\pi(a|s) = \frac{\max_i \pi_i(a|s)}{\sum_{a'} \max_i \pi_i(a'|s)}$.

  \end{algorithmic}
\end{algorithm}

\textbf{Choosing a link function:}
The link function $f$ in Proposition \ref{prop:link-version} that combines the set of policies together effectively serves as an inductive bias: since we are optimizing in an approximation to the true epistemic POMDP and policy optimization is not exact in practice, different choices can yield combined policies with different characteristics. Since optimal behavior in the epistemic POMDP must consider all actions, even those that are potentially sub-optimal in all MDPs in the posterior (as discussed in Section~\ref{sec:understandingOptimality}), we use an ``optimistic'' link function that does not dismiss any action that is considered by at least one of the policies, specifically ${f(\{\pi_i\}_{i\in[n]}) = (\max_i \pi_i)(a|s) \coloneqq  \frac{\max \pi_i(a|s)}{\sum_{a'} \max \pi_i(a'|s)}}$. 

\textbf{Algorithm:} We learn a set of $n$ policies $\{\pi_{i}\}_{i \in [n]}$, using a policy gradient algorithm to implement the update step. To update the parameters for $\pi_{i}$, we take gradient steps via the surrogate loss used for the policy gradient, augmented by a disagreement penalty between the policy and the combined policy $f(\{\pi_{i}\}_{i \in [n]})$ with a penalty parameter $\alpha > 0$, as in Equation \ref{eq:ppo_surrogate}:
\begin{equation}
\label{eq:ppo_surrogate}
    \gL(\pi_i) = \gL^{RL}(\pi_i) + \alpha \E_{s \sim \pi_i, \gM_i}[ D_{KL}(\pi_i(a|s) \| \max_j \pi_j(a|s))].
\end{equation}
Combining these elements together leads to our method, LEEP, which we summarize in Algorithm \ref{algobox}. In our implementation, we use PPO for $\gL^{RL}(\pi_i)$ \citep{Schulman2017ProximalPO}. In summary, LEEP bootstrap samples the training contexts to create overlapping sets of training contexts $\ctrain^1, \dots \ctrain^n$. Every iteration, each policy $\pi_i$ generates rollouts in training contexts chosen uniformly from its corresponding $\ctrain^i$, and is then updated according to Equation \ref{eq:ppo_surrogate}, which both maximizes the expected reward and minimizes the disagreement penalty between each $\pi_i$ and the combined policy $\pi = \max_j \pi_j$.

While this algorithm is structurally similar to algorithms for multi-task learning that train a separate policy for each context or group of contexts with a disagreement penalty~\citep{Distral, ghosh2018divideandconquer}, the motivation and the interpretation of these approaches are completely distinct. In multi-task learning, the goal is to solve a given set of tasks, and these methods promote transfer via a loss that encourages the solutions to the tasks to be in agreement. In our setting, while we also receive a set of tasks (contexts), the goal is not to maximize performance on the training tasks, but rather to learn a policy that maximizes performance on unseen test tasks. The method also has a subtle but important distinction: each of our policies $\pi_i$ acts on a sample from the contextual MDP posterior (which captures epistemic uncertainty), \emph{not} a single training context~\citep{Distral} or element from a disjoint partitioning~\citep{ghosh2018divideandconquer} (which does not). This distinction is crucial, since our generalization performance requires our aim is not to make it easier to solve the training contexts, but the opposite: prevent the algorithm from overfitting to the individual training contexts. Correspondingly, our experiments confirm that such multi-task learning approaches do not provide the same generalization benefits as our approach.

\section{Experiments}

The primary ambition of our empirical study is to test the hypothesis that policies that are learned through (approximations of) the epistemic POMDP do in fact attain better test-time performance than those learned by standard RL algorithms. We do so on the Procgen benchmark~\citep{Cobbe2020LeveragingPG}, a challenging suite of diverse tasks with image-based observations testing generalization to unseen contexts. 
\begin{enumerate}[topsep=0mm,itemsep=0.1mm]
    \item Does LEEP derived from the epistemic POMDP lead to improved test-time performance over standard RL methods?
    \item Can LEEP prevent overfitting when provided a limited number of training contexts?
    \item How do different algorithmic components of LEEP affect test-time performance?
   
\end{enumerate}

The Procgen benchmark is a set of procedurally generated games, each with different generalization challenges. In each game, during training, the algorithm can interact with 200 training levels, before it is asked to generalize to the full distribution of levels. The agent receives a $64 \times 64 \times 3$ image observation, and must output one of 15 possible actions. We instantiate our method using an ensemble of $n=4$ policies, a penalty parameter of $\alpha=1$, and PPO \citep{Schulman2017ProximalPO} to train the individual policies (full implementation details in Appendix~\ref{appendix:LEEP}). 

\begin{figure}
    \centering
\includegraphics[width=0.9\linewidth]{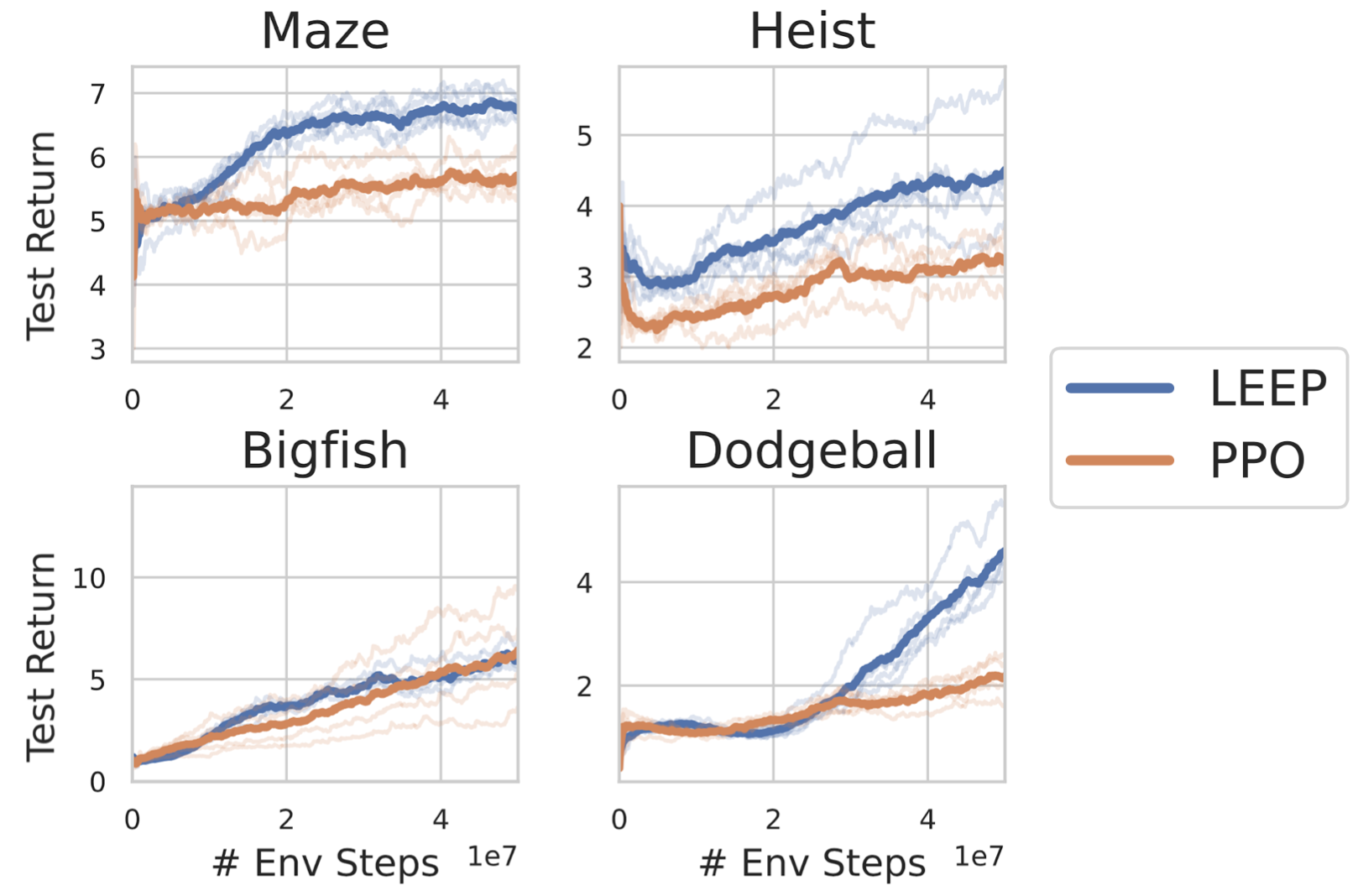}

    \caption{\footnotesize Test set return for LEEP and PPO throughout training in four Procgen environments (averaged across 5 random seeds). LEEP achieves higher test returns than PPO on three tasks (Maze, Heist and Dodgeball) and matches test return on Bigfish while having less variance across seeds.   }

    \label{fig:ppo-comparaison-2}
\end{figure}

We evaluate our method on four games in which prior work has found a large gap between training and test performance, and which we therefore conclude pose a significant generalization challenge~\citep{Cobbe2020LeveragingPG, Jiang2020PrioritizedLR, Raileanu2020AutomaticDA}: Maze, Heist, BigFish, and Dodgeball. In Figure~\ref{fig:ppo-comparaison-2}, we compare the test-time performance of the policies learned using our method to those learned by a PPO agent with entropy regularization. In three of these environments (Maze, Heist, and Dodgeball), our method outperforms PPO by a significant margin, and in all cases, we find that the generalization gap between training and test performance is lower for our method than PPO (Appendix~\ref{appendix:morePlots}). 
\begin{figure}

    \centering
    \includegraphics[width=0.85\linewidth]{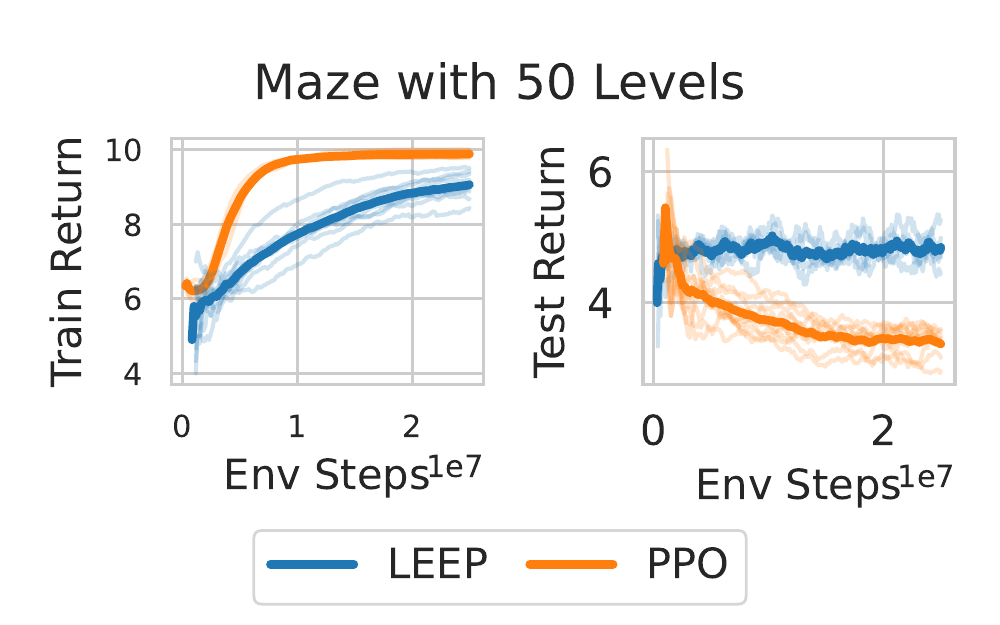}\\
    \vspace{5em}
    \includegraphics[width=0.65\linewidth]{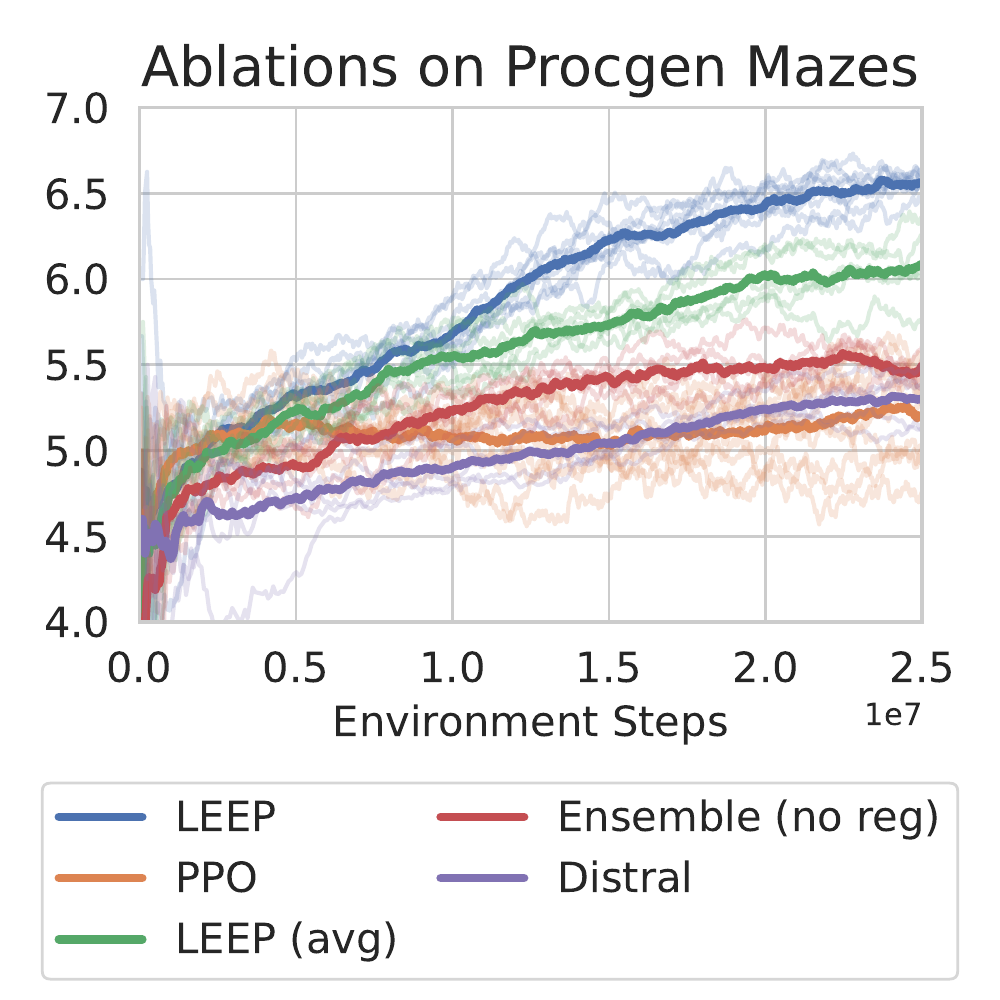}

    \caption{\footnotesize{\textbf{(top)} Performance of LEEP and PPO with only 50 training levels on Maze. \textbf{(bottom)}  Ablations of LEEP in Maze.}}
    \label{fig:ablations}

\end{figure}To understand how LEEP behaves with fewer training contexts, we ran on the Maze task with only 50 levels (Figure~\ref{fig:ablations} (top)); the test return of the PPO policy decreases through training, leading to final performance worse than the starting random policy, but our method avoids this degradation. 

We perform an ablation study on the Maze and Heist environments (Maze in Figure 4, Heist in Appendix~\ref{appendix:morePlots}) to rule out potential confounding causes for the improved generalization that our method displays on the Procgen benchmark tasks. 

First, to see if the performance benefit derives solely from the use of ensembles, we compare LEEP to a Bayesian model averaging strategy that trains an ensemble of policies without regularization (``Ensemble (no reg)''), and uses a mixture of these policies. This strategy does improve performance over the PPO policy, but does not match LEEP, indicating the usefulness of the regularization. Second, we compared to a version of LEEP that combines the ensemble policies together using the average $\frac{1}{n}\sum_{i=1}^n \pi_i(a|s)$ (``LEEP (avg)''). This link function achieves worse test-time performance than the optimistic version, which indicates that the inductive bias conferred by the $\max_i \pi_i$ link function is a useful component of the algorithm. We also compare to Distral, a multi-task learning method with different motivations but similar structure to LEEP: this method helps accelerate learning on the provided training contexts (figures in Appendix~\ref{appendix:morePlots}), but does not improve generalization performance as LEEP does. We additionally ablated the two key hyperparameters in LEEP, the number of ensemble members $n$ and the penalty coefficient $\alpha$  (Table in Appendix \ref{appendix:hyperparam_sweep}).

\section{Discussion}

It has often been observed experimentally that generalization in RL poses a significant challenge, but it has so far remained an open question as to whether the RL setting itself presents additional generalization challenges beyond those seen in supervised learning. In this chapter, we answer this question in the affirmative, and show that, in contrast to supervised learning, generalization in RL results in a new type of problem that cannot be solved with standard MDP solution methods, due to partial observability induced by epistemic uncertainty. We call the resulting partially observed setting the epistemic POMDP, where uncertainty about the true underlying MDP results in a challenging partially observed problem. We present a practical approximate method that optimizes a bound for performance in an approximation of the epistemic POMDP, and show empirically that this approach, which we call LEEP, attains significant improvements in generalization over other RL methods that do not properly incorporate the agent's epistemic uncertainty into policy optimization. A limitation of this approach is that it optimizes a crude approximation to the epistemic POMDP with a small number of posterior samples, and may be challenging to scale to better approximations to the true objective. Developing algorithms that better model the epistemic POMDP and optimize policies within is an exciting avenue for future work, and we hope that this direction will lead to further improvements in generalization in RL.

\begin{subappendices}
\section{FashionMNIST Classification}
\label{appendix:FashionMnist}
\subsection{Implementation details}
\label{appendix:FashionMnistImplementation}

\textbf{Environment:} The RL image classification environment consists of a dataset of labelled images. At the beginning of each episode, a new image and its corresponding label are chosen from the dataset, and held fixed for the entire episode. Each time-step, the agent must pick an action corresponding to one of the labels. If the picked label is correct, the agent gets a reward of $r=0$, and the episode ends, and if the picked label is incorrect, then the agent gets a reward of $r=-1$, and the episode continues to the next time-step (where it must guess another label for the \textit{same} image). The total return for a trajectory corresponds to the number of incorrect guesses the agent makes for the image. We enforce a time-limit of $20$ timesteps in the environment to prevent infinite-length trajectories of incorrect guessing.

We train the agent on a dataset of $10000$ FashionMNIST images subsampled from the training set, and test on the FashionMNIST test dataset. Note that this task is very similar, but not exactly equivalent to maximizing predictive accuracy for supervised classification: if the episode ended regardless of whether or not the agent was correct, then it would correspond exactly to classification. 

\textbf{Algorithm:} We train a DQN agent on the training environment using the min-Q update rule from TD3 \citep{Fujimoto2018AddressingFA}. The Q-function architecture is a convolutional neural network (CNN) with the architecture from Kostrikov et al \citep{pytorchrl}. To ensure that the agent does not suffer from poor exploration during training, the replay buffer is pre-populated with one copy of every possible transition in the training environment (that is, where every action is taken for every image in the training dataset). The variant labelled ``Uniform after step 1'' in Figure \ref{fig:classification_results} follows the DQN policy for the first time-step, and if this was incorrect, then at all subsequent time-steps, takes a random action uniformly amongst the 10 labels. For the variant labelled ``Adaptive'', we train a classifier $p_\theta(y|x)$ on the training dataset of images with the same architecture as the DQN agent. The adaptive agent follows a process-of-elimination strategy; formally, the action taken by the adaptive agent at time-step $t$ is given by $\argmax_{a \notin \{a_1, \dots, a_{t-1}\}} p_\theta(y=a|x)$.

\subsection{Derivation of Bayes-optimal policies}
\label{appendix:FashionMnistTheory}

In the epistemic POMDP for the RL image classification problem, each episode, an image $x \in \gX$ is sampled randomly from the dataset, and a label $y \in \gY$ sampled randomly for this image from the distribution $p(y|x, \gD)$. This label is \textit{held fixed} for the entire episode. For notation, let $Y = \{1, \dots, d\}$, so that a label distribution $p(y|x)$ can be written as a vector in the probability simplex on $\R^d$. We emphasize two settings: $\gamma=0$ (the supervised learning setting), and $\gamma=1$ (an RL setting), where the expected return of an agent is the average number of incorrect guesses made. 

\subsubsection{Memory-based policy}

Since the optimal memory-based policy in a POMDP is deterministic \citep{monahan1982state}, we restrict ourselves to analyzing the performance of deterministic memory-based policies. In the following we will narrow the search space even further.

Since the episode ends after the agent correctly classifies an image and the reward structure incentives the agent to solve the task as quickly as possible, an agent acting optimally will never repeat the same action twice. Indeed, the agent will not have the opportunity to repeat the right action twice because the episode would have ended after the first time it tried it. Furthermore, trying a wrong action twice is also not optimal as in incurs addition negative reward. Therefore, we can limit our search space to policies that try each action once. These policies differ by the ordering in which they try each one of these $d$ labels.

At the beginning of every episode, a image $x$ is sampled uniformly at random among all training images and its true label $y$ (during that episode) is sampled from $p(y|x, \gD)$.  Let $\pi$ be policy that tries each of the $d$ actions exactly once in its first $d$ trials. Let $T^{\pi}_{y}$ denotes the time when policy $\pi$ tries action $y$. Note that $(T^{\pi}_{y})_{y\in \gY}$ is a permutation. When the label chosen is $y$, the cumulative reward of $\pi$ for that episode is given by $r = \frac{\gamma^{ T_y^\pi}-1}{1-\gamma}$ and the expected cumulative reward (across episodes)  is given by: 

\begin{equation}
    \begin{aligned}
    J(\pi) :=& \sum_{y \in \gY} p(y|x, \gD) \frac{\gamma^{ T_y^\pi}-1}{1-\gamma}\\
    =& \frac{1}{1-\gamma}\left(\sum_{y \in \gY} p(y|x, \gD) \gamma^{ T_y^\pi} - 1\right)
    \end{aligned}
\end{equation}

From that expression, we see that in order to maximize its expected cumulative reward, a policy $\pi$ has to maximize $\sum_{y \in \gY} p(y|x, \gD) \gamma^{ T_y^\pi}$ which can be interpreted as the dot product of the vector $[p(y|x, \gD)]_{y \in \gY}$ and $[\gamma^{ T_y^\pi}]_{y \in \gY}$. By the rearrangement inequality, we know that this dot product is maximized when the components of the vectors are arranged in the same ordering. 

If we denote by $y_{(1)}, \dots y_{(d)}$ be the labels sorted in order of probability under the belief distribution: $p(y_{(1)} | x, \gD) \geq p(y_{(2)} | x, \gD) \geq \dots \geq p(y_{(d)} | x, \gD)$. Since $0<\gamma<1$ the rearrangement inequality implies that 

 the expected return is maximized when $T_{y_{(t)}}^\pi = t$. This corresponds to a policy that tries the labels sequentially from the most likely to the least likely.

\subsubsection{Memoryless policy policy} 
\label{app:memorylessPolicy}
Consider a memoryless policy that takes actions according to the distribution $\pi(\cdot|x)$ for the image $x$. When the true label is $y$ for the image $x$, the number of incorrect guesses is distributed as $\text{Geom}(p=\pi(y|x))$. 

When the agent guesses correctly the label $y$ at the $t-$th guess then the cumulative reward is given by $r= - \frac{1-\gamma^t}{1-\gamma}$.  This happens with probability $(1-\pi(y\mid x))^t \times \pi(y \mid x)$. The expected return for policy $\pi$ evaluated on image $x$ is then given by: 

\begin{equation}
\begin{aligned}   
J(\pi|x) &= - \sum_{y\in\gY} \sum_{t=0}^\infty (1-\pi(y|x))^t \pi(y|x) \frac{1 - \gamma^t}{1-\gamma} p(y|x, \gD) \\
&=\sum_{y\in\gY} p(y|x, \gD) \frac{\pi(y \mid x)-1}{1 - \gamma(1-\pi(y|x))}
\end{aligned}
\end{equation}

When $\gamma=0$ (supervised learning problem), $ J(\pi) = \sum_{y\in\gY} p(y|x, \gD) \left(\pi(y \mid x)-1\right)$ is a linear function of $\pi$ and as expected, the optimal policy is to deterministically choose the label with the highest probability: $\pi^{*}(y \mid x) = 1\left[y = \argmax_{y\in\gY} p(y|x, \gD)\right]$.

When $\gamma>0$, the optimal policy is the solution to a constrained optimization problem that can be solved with Lagrange multipliers. When $\gamma=1$, the optimal policy can be written explicitly as: 
\begin{equation}
    \pi^{*} (y\mid x) = \frac{1}{\lambda} \sqrt{p(y|x, \gD)}
\end{equation}
where $\lambda$ is a normalization constant.
\clearpage

\section{Theoretical Results}
\epistemicpomdpdefn*
\begin{proof}
This proposition follows directly from the definition of the epistemic POMDP. If the MDP $\gM$ is sampled from $\gP(\gM)$ and $\gD$ is witnessed, then the posterior distribution over MDPs is given by $\gP(\gM | \gD)$, and the expected test-time return of $\pi$ given the evidence is 
\[\E_{\gM \sim \gP(\gM)}[J_\gM(\pi) | \gD] \coloneqq \E_{\gM \sim \gP(\gM | \gD)}[J_{\gM}(\pi)].\]

In the epistemic POMDP, where an episode corresponds to randomly sampling an MDP from $\gP(\gM | \gD)$, and a single episode being evaluated in this MDP, the expected return can be expressed identically:

\begin{align}
J_{\gM^\po}(\pi) &\coloneqq \E_{\gM \sim \gP(\gM | \gD)}[\E_{\pi, \gM}[\sum_{i=0}^\infty \gamma^t r(s_t, a_t)]] \\
&= \E_{\gM \sim \gP(\gM | \gD)}[J_{\gM}(\pi)].
\end{align}

\end{proof}
\label{appendix:Sec5}
\subsection{Optimal MDP Policies can be Arbitrarily Suboptimal}

\begin{restatable}{proposition}{dominatedRandom}
\label{prop:deterministic-dominated-random}
Let $\eps > 0$. There exists posterior distributions $\gP(\gM | \gD)$ where a deterministic Markov policy $\pi$ is optimal with probability at least $1-\epsilon$,

\begin{equation}
P_{\gM \sim \gP(\gM|\gD)}\left(\pi \in \argmax_{\pi'} J_\gM(\pi')\right) \geq 1-\eps,\end{equation}

but is outperformed by a uniformly random policy in the epistemic POMDP: $J_{\gM^\po}(\pi) < J_{\gM^\po}(\pi_{\text{unif}})$.
\end{restatable}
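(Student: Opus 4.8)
The plan is to exhibit an explicit posterior for which the conclusion holds by hand; no delicate estimates are needed, only a careful balancing of the proposition's two requirements. I would take a single-state, horizon-one decision problem with two actions $a_1,a_2$ and let the posterior be supported on two MDPs, with $\gP(\gM_0 \mid \gD) = 1-\epsilon$ and $\gP(\gM_1 \mid \gD) = \epsilon$. In $\gM_0$ I would set $r(a_1)=0$ and $r(a_2)=-1$, so that the deterministic Markov policy $\pi$ that always plays $a_1$ is the unique optimal policy of $\gM_0$; in $\gM_1$ I would set $r(a_1)=-H$ and $r(a_2)=0$ for a constant $H$ to be fixed at the end. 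Then $\pi$ is optimal with probability $\gP(\gM_0 \mid \gD) = 1-\epsilon$, so the first clause of the proposition holds for every choice of $H$.

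It then remains to choose $H$ so that $\pi$ is beaten by $\pi_{\text{unif}}$ in the epistemic POMDP. By the definition of $\gM^\po$ (equivalently by Proposition~\ref{prop:epistemic_pomdp_defn}), the return of any memoryless policy $\mu$ is $J_{\gM^\po}(\mu) = (1-\epsilon)\,J_{\gM_0}(\mu) + \epsilon\,J_{\gM_1}(\mu)$. For $\pi$ this gives $J_{\gM^\po}(\pi) = (1-\epsilon)\cdot 0 + \epsilon\cdot(-H) = -\epsilon H$, since $\pi$ always pays the catastrophic reward in $\gM_1$. For the uniform policy, which hedges, $J_{\gM_0}(\pi_{\text{unif}}) = -\tfrac{1}{2}$ and $J_{\gM_1}(\pi_{\text{unif}}) = -\tfrac{H}{2}$, hence $J_{\gM^\po}(\pi_{\text{unif}}) = -\tfrac{1-\epsilon}{2} - \tfrac{\epsilon H}{2}$. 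The target inequality $J_{\gM^\po}(\pi) < J_{\gM^\po}(\pi_{\text{unif}})$ is therefore equivalent, after cancelling $-\tfrac{\epsilon H}{2}$ from both sides, to $\epsilon H > 1-\epsilon$; taking $H = 2/\epsilon$ (so that $\epsilon H = 2 > 1 \geq 1-\epsilon$) completes the construction.

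The one thing to watch — and the only real content — is the interaction between the two requirements: $\pi$ must stay optimal on a $(1-\epsilon)$-fraction of the posterior, which caps how much posterior mass can ``punish'' it, so the punishing reward is forced to scale like $\Theta(1/\epsilon)$ for $\pi$'s averaged return to drop below that of the structurally hedged uniform policy, exactly as the computation above shows. I would also remark that discounting and a terminal absorbing state do not change the one-step returns, and that the gadget pads trivially into a longer-horizon or contextual MDP (prepend forced transitions, or replicate the two-action gadget across contexts). Finally, replacing the two actions by $N$ actions (with $r(a_j)=-1$ in $\gM_0$ and $r(a_j)=0$ in $\gM_1$ for $j\neq 1$) and letting $H$ grow makes the gap $J_{\gM^\po}(\pi_{\text{unif}}) - J_{\gM^\po}(\pi)$ unbounded, which is what underlies the ``arbitrarily suboptimal'' phrasing in the section title.
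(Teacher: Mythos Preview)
Your proposal is correct and takes essentially the same approach as the paper: a two-point posterior with mass $1-\epsilon$ on an MDP where the deterministic policy is optimal and mass $\epsilon$ on an MDP where that same action incurs a penalty scaling like $1/\epsilon$, so that the uniform policy wins in expectation. The paper carries this out in an infinite-horizon discounted two-state MDP (actions ``stay'' with reward $0$ and ``switch'' with reward $1$ in $\gM_A$, $-c$ in $\gM_B$), obtaining the clean identity $J(\pi_{\text{unif}}) = \tfrac{1}{2}J(\pi_{\text{switch}})$ and then picking $c > 1/\epsilon - 1$ to make both negative; your horizon-one bandit version is a valid simplification of the same gadget with the same $\Theta(1/\epsilon)$ scaling.
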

\begin{proof}
Consider two deterministic MDPs, $\gM_A$, and $\gM_B$ that both have two states and two actions: ``stay'' and ''switch''.
In both MDPs, the reward for the ``stay'' action is always zero.
In~$\gM_A$ the reward for ``switch'' is always 1, while in~$\gM_B$ the reward for ``switch'' is $-c$ for~$c>0$.
The probability of being in~$\gM_B$ is~$\epsilon$ while the probability of being in~$\gM_A$ is~$1-\epsilon$.
Clearly, the policy ``always switch'' is optimal in~$\gM_A$ and so is~$\epsilon$-optimal under the distribution on MDPs.
The expected discounted reward of the ``always switch'' policy is:
\begin{align}
    J(\pi_{\text{always switch}}) &= (1-\epsilon)\frac{1}{1-\gamma} - \epsilon\frac{c}{1-\gamma}\\
    &= \frac{1}{1-\gamma}(1-\epsilon - c\epsilon)\\
    &= \frac{1}{1-\gamma}(1-(c+1)\epsilon)\,.
\end{align}
On the other hand, we can consider a policy which selects actions uniformly at random.
In this case, the expected cumulative reward is 

\begin{align}
    J(\pi_{\text{random}}) &= (1-\epsilon)\frac{1}{2}\frac{1}{1-\gamma} - \epsilon \frac{c}{2}\frac{1}{1-\gamma}\\
    &= \frac{1}{2}\frac{1}{1-\gamma}(1 - \epsilon - c\epsilon)\\
    &= \frac{1}{2}\frac{1}{1-\gamma}(1 - (c+1)\epsilon)\\
    &= \frac{1}{2}J(\pi_{\text{always switch}})\,.
\end{align}
Thus for any~$\epsilon$ we can find a~${c > \frac{1}{\epsilon}-1}$ such that both policies have negative expected rewards and we prefer the random policy for being half as negative.
\end{proof}

\label{appendix:prop51}
\subsection{Bayes-optimal Policies May Take Suboptimal Actions Everywhere}

We formalize the remark that optimal policies for the MDPs in the posterior distribution may be poor guides for determining what the Bayes-optimal behavior is in the epistemic POMDP. The following proposition shows that there are epistemic POMDPs where the support of actions taken by the MDP-optimal policies is disjoint from the actions taken by the Bayes-optimal policy, so no method can ``combine'' the optimal policies from each MDP in the posterior to create Bayes-optimal behavior. 
\begin{proposition}
\label{prop:suboptimal-actions-everywhere}
There exist posterior distributions $\gP(\gM | \gD)$ where the support of the Bayes-optimal memoryless policy $\pi^{*\po}(a|s)$ is disjoint with that of the optimal policies in each MDP in the posterior. Formally, writing $\supp(\pi(a|s)) = \{a \in \gA: \pi(a|s) > 0\}$, then $\forall \gM$ with $\gP(\gM | \gD) > 0$ and $\forall s$:

\[\supp(\pi^{*\po}(a|s)) \cap \supp(\pi_\gM^*(a|s))  = \emptyset\]
\end{proposition}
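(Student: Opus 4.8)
The plan is to construct an explicit posterior $\gP(\gM | \gD)$ supported on finitely many MDPs in which the Bayes-optimal memoryless policy is forced to ``hedge'' with an action that no single MDP in the posterior would ever recommend. The cleanest construction I would use has a single non-terminal state $s$ with three actions: $a_1$, $a_2$, and $a_{\text{safe}}$. The posterior places equal mass $\tfrac12$ on two MDPs $\gM_1, \gM_2$. In $\gM_1$, action $a_1$ leads immediately to a high-reward terminal state (reward $+R$) while action $a_2$ leads to a catastrophic terminal state (reward $-C$ with $C \gg R$); in $\gM_2$ the roles of $a_1$ and $a_2$ are swapped. In \emph{both} MDPs, the ``safe'' action $a_{\text{safe}}$ yields a modest terminal reward $+\rho$ with $0 < \rho < R$. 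Thus $\pi^*_{\gM_1}$ deterministically plays $a_1$, $\pi^*_{\gM_2}$ deterministically plays $a_2$, and $\supp(\pi^*_{\gM_1}(\cdot|s)) \cup \supp(\pi^*_{\gM_2}(\cdot|s)) = \{a_1, a_2\}$ — the safe action appears in neither.

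Next I would compute the Bayes-optimal memoryless policy in the epistemic POMDP. By Proposition~\ref{prop:epistemic_pomdp_defn}, the objective is $J_{\gM^\po}(\pi) = \tfrac12 J_{\gM_1}(\pi) + \tfrac12 J_{\gM_2}(\pi)$. A memoryless policy here is just a distribution $\pi(\cdot|s) = (p_1, p_2, p_{\text{safe}})$ over the three actions; since each action terminates the episode immediately, the expected return is linear-then-geometric in the per-step action probabilities, but the key qualitative point survives even in the one-shot ($\gamma \to 0$) reading: playing $a_1$ or $a_2$ nets expected reward $\tfrac12(R) + \tfrac12(-C) = \tfrac{R - C}{2} < 0$, whereas playing $a_{\text{safe}}$ nets $\rho > 0$. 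Because $C$ can be taken arbitrarily large, the unique optimizer puts \emph{all} its mass on $a_{\text{safe}}$, so $\supp(\pi^{*\po}(\cdot|s)) = \{a_{\text{safe}}\}$, which is disjoint from $\{a_1\} = \supp(\pi^*_{\gM_1}(\cdot|s))$ and from $\{a_2\} = \supp(\pi^*_{\gM_2}(\cdot|s))$. For the full $\gamma > 0$ statement I would either keep the episodes one-step (so $\gamma$ is irrelevant) or add a self-loop on $s$ under each action with the terminal rewards discounted; the comparison $\tfrac{R-C}{2} < \rho$ still determines the support once $C$ is large.

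The main obstacle — really the only subtle point — is making sure the disjointness holds for the Bayes-optimal policy over \emph{all} memoryless policies (not just deterministic ones) and at \emph{every} state, not merely generically. Handling ``every state'' is easy because there is only one decision state; the terminal states carry no action choice. Handling ``all memoryless policies'' requires checking that mixing in any positive probability of $a_1$ or $a_2$ strictly decreases $J_{\gM^\po}$, which follows from strict concavity/linearity of the objective in $(p_1,p_2,p_{\text{safe}})$ together with the strict inequality $\tfrac{R-C}{2} < \rho$; I would spell this out by writing $J_{\gM^\po}(\pi) = p_1 \cdot \tfrac{R-C}{2} + p_2 \cdot \tfrac{R-C}{2} + p_{\text{safe}} \cdot \rho$ in the one-step case and observing the maximizer is the vertex $(0,0,1)$. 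One should also double-check the claim that $\pi^*_{\gM_i}$ is genuinely unique and deterministic in each $\gM_i$: in $\gM_1$, $a_1$ gives $R$, $a_{\text{safe}}$ gives $\rho < R$, and $a_2$ gives $-C < \rho$, so $a_1$ strictly dominates — good. This completes the construction; the proposition follows.
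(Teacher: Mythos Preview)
Your construction is correct and is essentially the same as the paper's: the paper uses two equally-weighted MDPs with three actions (``stay'', ``switch 1'', ``switch 2'') where ``stay'' yields reward $0$ in both, ``switch 1'' yields $+1$ in $\gM_A$ and $-2$ in $\gM_B$, and ``switch 2'' has the reversed payoffs, so each $\pi^*_{\gM_i}$ picks a switch action while the Bayes-optimal memoryless policy is ``always stay''. Your version simply replaces the specific numbers $(1,-2,0)$ with parameters $(R,-C,\rho)$ and makes the episodes one-step, but the idea and the disjoint-support computation are identical.
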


\begin{proof}
The proof is a simple modification of the construction in Proposition~\ref{prop:deterministic-dominated-random}. Consider two deterministic MDPs, $\gM_A$, and $\gM_B$ with equal support under the posterior, where both have two states and three actions: ``stay'', ''switch 1'', and ``switch 2''. In both MDPs, the reward for the ``stay'' action is always zero. In~$\gM_A$ the reward for ``switch'' is always 1, while in~$\gM_B$ the reward for ``switch'' is $-2$. The reward structure for ``switch 2'' is flipped: in $\gM_A$, the reward for ``switch 2'' is $-2$, and in $\gM_B$, the reward is $1$. Then, the policy ``always switch'' is optimal in $\gM_A$, and the policy ``always switch 2'' is optimal in $\gM_B$. However, any memoryless policy that takes either of these actions receives negative reward in the epistemic POMDP, and is dominated by the Bayes-optimal memoryless policy ``always stay'', which achieves 0 reward.
\end{proof}

\label{appendix:suboptimalActions}
\subsection{MaxEnt RL is Optimal for a Choice of Prior}

We describe a special case of the construction of Eysenbach and Levine \citep{eysenbachMaxEnt}, which shows that maximum-entropy RL in a bandit problem recovers the Bayes-optimal POMDP policy in an epistemic POMDP similar to that described in the RL image classification task.

Consider the family of MDPs $\{\gM_k\}_{k \in [n]}$ each with one state and $n$ actions, where taking action $k$ in MDP $\gM_k$ yields zero reward and the episode ends, and taking any other action yields reward $-1$ and the episode continues. Effectively, $\gM_k$ corresponds to a first-exit problem with ``goal action'' $k$. Note that this MDP structure is exactly what we have for the RL image classification task for a single image. Also consider the surrogate bandit MDP $\hat{\gM}$, also with one state and $n$ actions, but in which taking action $k$ yields reward $r_k$ with immediate episode termination. The following proposition shows that running max-ent RL in $\hat{\gM}$ recovers the optimal memoryless policy in a particular epistemic POMDP supported on $\{\gM_k\}_{k \in [n]}$.

\begin{proposition}
Let $\pi^* = \arg\max_{\pi \in \Pi} J_{\hat{\gM}}(\pi) + \gH(\pi)$ be the max-ent solution in the surrogate bandit MDP $\hat{\gM}$. Define the distribution $\gP(\gM|\gD)$ on $\{\gM_k\}_{k \in [n]}$ as $\gP(\gM_k|\gD) = \frac{\exp(2r_k)}{\sum_{j} \exp(2r_j)}$. Then, $\pi$ is the optimal memoryless policy in the epistemic POMDP $\gM^\po$ defined by $\gP(\gM | \gD)$. 
\end{proposition}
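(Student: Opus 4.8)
The plan is to compute both policies in closed form and observe that they coincide; the proposition then follows immediately.

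First I would characterize the max-entropy solution in the surrogate bandit $\hat{\gM}$. Since $\hat{\gM}$ has a single state and every action $k$ terminates the episode immediately with reward $r_k$, the maximum-entropy objective is simply $J_{\hat{\gM}}(\pi) + \gH(\pi) = \sum_k \pi(k) r_k - \sum_k \pi(k) \log \pi(k)$ over the probability simplex on $[n]$. This is the classical Gibbs variational problem: introducing a Lagrange multiplier for the normalization constraint (equivalently, noting that the objective equals $\log\big(\sum_j e^{r_j}\big) - D_{KL}\big(\pi \,\|\, \mathrm{softmax}(r)\big)$ and that KL is nonnegative) gives the unique maximizer $\pi^*(k) = \frac{\exp(r_k)}{\sum_j \exp(r_j)}$.

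Next I would characterize the Bayes-optimal memoryless policy in $\gM^\po$. The key observation is that the epistemic POMDP built from $\{\gM_k\}_{k\in[n]}$ with prior $\gP(\gM_k | \gD)$ is structurally identical to the single-image RL classification POMDP analyzed in Appendix~\ref{app:memorylessPolicy}: each episode draws a hidden ``goal action'' $k$ from the distribution $p(k) \coloneqq \gP(\gM_k | \gD)$, every non-goal action gives reward $-1$ and the episode continues, the goal action gives reward $0$ and ends the episode, and the discount is $\gamma = 1$. For a memoryless policy $\pi$, conditioned on the goal being $k$ the number of incorrect guesses is geometric with success probability $\pi(k)$, so $J_{\gM^\po}(\pi) = -\sum_k p(k) \frac{1 - \pi(k)}{\pi(k)} = 1 - \sum_k \frac{p(k)}{\pi(k)}$. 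Maximizing this amounts to minimizing $\sum_k p(k)/\pi(k)$ on the simplex; by Cauchy--Schwarz this is at least $\big(\sum_k \sqrt{p(k)}\big)^2$ with equality iff $\pi(k) \propto \sqrt{p(k)}$, recovering exactly the memoryless optimum $\pi^*(k) \propto \sqrt{p(k)} = \sqrt{\gP(\gM_k | \gD)}$ derived in Appendix~\ref{app:memorylessPolicy}.

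Finally I would substitute the prescribed prior $\gP(\gM_k | \gD) = \frac{\exp(2 r_k)}{\sum_j \exp(2 r_j)}$ into this expression: $\sqrt{\gP(\gM_k | \gD)} \propto \sqrt{\exp(2 r_k)} = \exp(r_k)$, so the Bayes-optimal memoryless policy in $\gM^\po$ is $\pi^*(k) \propto \exp(r_k)$, which is precisely the max-entropy solution of $\hat{\gM}$ computed in the first step. The only step requiring genuine care is the second one: making precise that this epistemic POMDP reduces to the first-exit ``guessing'' problem of Appendix~\ref{app:memorylessPolicy} and tracking the termination bookkeeping ($\gamma = 1$, reward $0$ on the goal action) so that the geometric-distribution computation and the $\sqrt{p(k)}$ optimum hold exactly; the max-entropy computation and the final algebraic substitution are routine. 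I would also remark that, since the statement concerns only memoryless policies, no argument about memory-based POMDP optimality is needed.
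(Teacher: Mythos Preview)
Your proposal is correct and follows essentially the same route as the paper: compute the max-ent bandit optimum $\pi^*(k)\propto e^{r_k}$, identify the epistemic POMDP with the first-exit guessing problem of Appendix~\ref{app:memorylessPolicy} so that the memoryless optimum is $\pi(k)\propto\sqrt{p(k)}$, and then substitute the prescribed prior. The only differences are cosmetic: the paper outsources the first step to a citation (Eysenbach and Levine) and the second to its own appendix (which uses Lagrange multipliers rather than your Cauchy--Schwarz argument), whereas you spell both out.
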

\begin{proof}
See Eysenbach and Levine \citep[Lemma 4.1]{eysenbachMaxEnt}. The optimal policy $\pi^*$ is given by $\pi^*(a=k) = \frac{\exp(r_k)}{\sum_{j} \exp(r_j)}$. We know from Appendix~\ref{app:memorylessPolicy} that this policy is optimal for epistemic POMDP $\gM^{\po}$ when $\gamma=1$.
\end{proof}
If allowing time-varying reward functions, this construction can be extended beyond ``goal-action taking'' epistemic POMDPs to the more general ``goal-state reaching'' setting in an MDP, where the agent seeks to reach a specific goal state, but the identity of the goal state hidden from the agent \citep[Lemma 4.2]{eysenbachMaxEnt}. 

\label{appendix:MaxEntRL}
\subsection{Failure of MaxEnt RL and Uncertainty-Agnostic Regularizations}
We formalize the remark made in the main text that while the Bayes-optimal memoryless policy is stochastic, methods that promote stochasticity in an uncertainty-agnostic manner can fail catastrophically. We begin by explaining the significance of this result: it is well-known that stochastic policies can be arbitrarily sub-optimal in a single MDP, and can be outperformed by deterministic policies. The result we describe is more subtle than this: there are epistemic POMDPs where any attempt at being stochastic in an uncertainty-agnostic manner is sub-optimal, and \textit{ also} any attempt at acting completely deterministically is also sub-optimal. Rather, the characteristic of Bayes-optimal behavior is to be stochastic in \textit{some} states (where it has high uncertainty), and not stochastic in others, and a useful stochastic regularization method must modulate the level of stochasticity to calibrate with regions where it has high epistemic uncertainty.

\begin{proposition}
\label{prop:stochastic-dominated-random}
Let $\alpha > 0, c > 0$. There exist posterior distributions $\gP(\gM | \gD)$, where the Bayes-optimal memoryless policy $\pi^{*\po}$ is stochastic. However, every memoryless policy $\pi_s$ that is ``everywhere-stochastic'', in that $\forall s \in \gS: \gH(\pi_s(a| s)) > \alpha$,
can have performance arbitrarily close to the uniformly random policy:
\[ \frac{J(\pi_s) - J(\pi_\text{unif})}{J(\pi^{*\po}) - J(\pi_\text{unif})} < c \]
\end{proposition}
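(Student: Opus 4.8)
The plan is to construct an explicit family of posterior distributions, building on the two-state gadget from Proposition~\ref{prop:deterministic-dominated-random}, but augmented so that one state genuinely requires stochasticity while another state rewards determinism. Concretely, I would take an MDP with a ``safe'' region of the state space and an ``ambiguous'' region. In the ambiguous region, the posterior is supported on several MDPs whose optimal actions are mutually exclusive --- mimicking the image-classification construction of Section~\ref{appendix:FashionMnistTheory}, where the Bayes-optimal memoryless behavior is to randomize with probabilities $\propto \sqrt{p(y|x,\gD)}$. In the safe region, however, there is a single action that is optimal in \emph{every} MDP in the posterior, and any deviation from it (in particular, forced stochasticity with entropy exceeding $\alpha$) leaks a large amount of return, controllable by a penalty parameter $c$. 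The point is that an ``everywhere-stochastic'' policy $\pi_s$ is forced to hemorrhage reward in the safe region, and by scaling the safe-region penalty we can make this loss dwarf the gain that stochasticity provides in the ambiguous region.

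The key steps, in order, would be: (1) define the MDP and the posterior $\gP(\gM|\gD)$ explicitly, with a tunable gap parameter in the safe region; (2) compute $J_{\gM^\po}(\pi^{*\po})$, showing the Bayes-optimal memoryless policy is deterministic in the safe region and stochastic in the ambiguous region, hence strictly better than $\pi_{\text{unif}}$ --- so the denominator $J(\pi^{*\po}) - J(\pi_{\text{unif}})$ is a fixed positive number independent of the safe-region gap; (3) upper-bound $J_{\gM^\po}(\pi_s)$ for any policy with $\gH(\pi_s(a|s)) > \alpha$ everywhere: since entropy at least $\alpha$ forces $\pi_s$ to put probability at least some $\delta(\alpha) > 0$ on a sub-optimal action in the safe region, $\pi_s$ incurs a per-visit loss proportional to the gap, which I can send to a value making $J(\pi_s)$ as close to $J(\pi_{\text{unif}})$ as desired (or even below it); (4) combine (2) and (3) to get $\frac{J(\pi_s) - J(\pi_{\text{unif}})}{J(\pi^{*\po}) - J(\pi_{\text{unif}})} < c$ for the gap chosen large enough relative to $c$, $\alpha$. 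For step (3) I would use the discounted occupancy: a state visited with frequency bounded below, combined with a per-step reward deficit, gives a lower bound on total regret via the standard performance-difference decomposition, or more simply by direct computation in the small chain.

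The main obstacle I anticipate is step (3): making the bound ``$\gH(\pi_s) > \alpha$ at every state forces a quantifiable loss'' genuinely uniform over \emph{all} such policies, including non-obvious ones that might try to recoup losses elsewhere. The clean way around this is to design the safe region so that it is an unavoidable bottleneck --- e.g., the initial state is in the safe region and every trajectory must pass through it, or better, the safe region is an absorbing-until-correct-action structure as in the FashionMNIST example so the agent is ``stuck'' there paying the stochasticity tax repeatedly. Then entropy $> \alpha$ at that state gives, via the geometric-distribution calculation already carried out in Appendix~\ref{app:memorylessPolicy}, an explicit lower bound on the number of wasted steps, and the total return loss scales linearly with the safe-region penalty magnitude, which is the free parameter. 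Once the bottleneck structure is in place, the ratio bound follows by choosing that penalty large enough, and the remaining work is routine arithmetic with the closed-form returns.
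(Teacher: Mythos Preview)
Your approach has a genuine gap. Scaling a safe-region penalty does not drive the ratio to zero, because $\pi_{\text{unif}}$ suffers from that same penalty, and suffers \emph{more} than any $\pi_s$ with merely $\gH > \alpha$. Concretely, take your single absorbing-until-correct safe state with reward $-K$ per wrong step: a policy placing probability $p$ on the correct action has safe-region value on the order of $-K(1-p)/(1-\gamma(1-p))$ for large $K$. Since $\pi_s$ may still put probability $1-\delta(\alpha)$ on the correct action (entropy $>\alpha$ only forces a small deviation $\delta$) while $\pi_{\text{unif}}$ puts only $1/|\gA|$, both $J(\pi_s) - J(\pi_{\text{unif}})$ and $J(\pi^{*\po}) - J(\pi_{\text{unif}})$ grow linearly in $K$, and their ratio tends to the strictly positive constant $1 - \tfrac{\delta(2-\gamma)}{1-\gamma\delta}$ rather than to zero. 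Your step~(2) claim that the denominator is ``a fixed positive number independent of the safe-region gap'' is simply false --- $J(\pi_{\text{unif}})$ moves with $K$ as well --- and this is precisely where the argument breaks.

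The paper's construction uses a different lever: it scales the \emph{depth} of a sequential decision, not the magnitude of a one-shot penalty. Two depth-$n$ binary-tree MDPs sit in the posterior with equal weight, one with the goal at the leftmost leaf and one at the rightmost; failing to reach the goal resets to the root. The Bayes-optimal memoryless policy randomizes only at the root and is deterministic at all $n-1$ lower levels, so it reaches a goal with probability $\tfrac12$ per traversal, independent of $n$. An everywhere-stochastic policy, by contrast, can place at most $1-\beta(\alpha)$ mass on the correct branch at \emph{each} of the $n-1$ lower levels, so its per-traversal success probability is at most $\tfrac12(1-\beta)^{n-1}$; a short computation then gives the ratio $\sim (1-\beta)^{n-1}\to 0$. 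The idea you are missing is this compounding over depth: a small forced per-state deviation from determinism, multiplied across many mandatory sequential decisions, drags $J(\pi_s)$ down to the order of $J(\pi_{\text{unif}})$, while $\pi^{*\po}$ pays the stochasticity cost at exactly one state and so stays bounded away.
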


\begin{figure}
    \centering
    \includegraphics[width=0.75\linewidth]{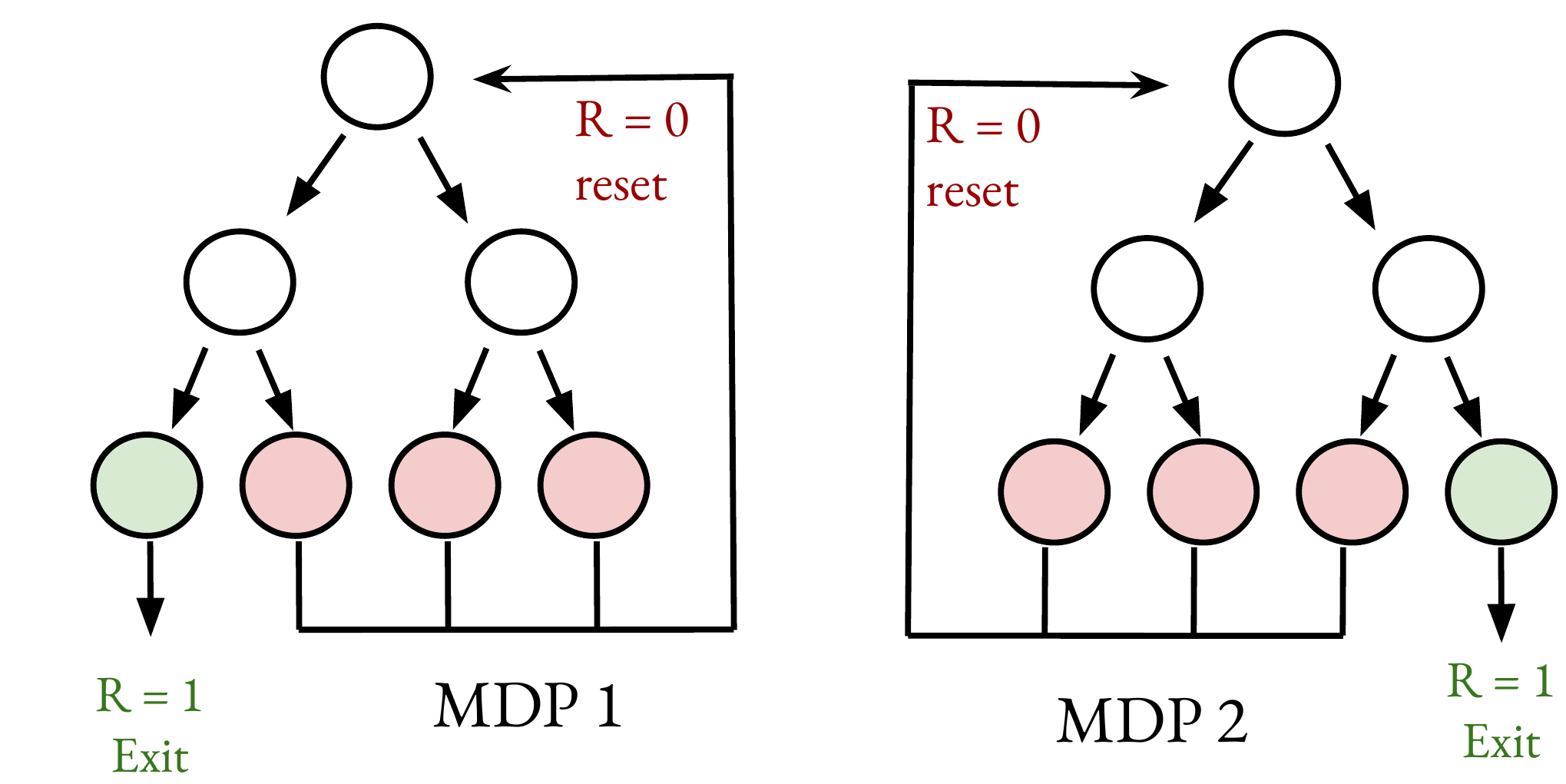}
    \caption{Visual description of Binary Tree MDPs described in proof of Proposition \ref{prop:stochastic-dominated-random} with depth $n=3$.}
    \label{fig:stochastic_illustration}
\end{figure}
\begin{proof}

Consider two binary tree MDP with $n$ levels, $\gM_1$ and $\gM_2$. A binary tree MDP, visualized in Figure \ref{fig:stochastic_illustration}, has $n$ levels, where level $k$ has $2^k$ states. On any level $k < n$, the agent can take a ``left'' action or a ``right'' action, which transitions to the corresponding state in the next level. On the final level, if the state corresponds to the terminal state (in green), then the agent receives a reward of $1$, and the episode exits, and otherwise a reward of $0$, and the agent returns to the top of the binary tree. The two binary tree MDPs $\gM_1$ and $\gM_2$ are identical except for the final terminal state: in $\gM_1$, the terminal state is the left-most state in the final level, and in $\gM_2$, the terminal state is the right-most state. Reaching the goal in $\gM_1$ corresponds to taking the ``left'' action repeatedly, and reaching the goal in $\gM_2$ corresponds to taking the ``right'' action repeatedly. We consider the posterior distribution that places equal mass on $\gM_1$ and $\gM_2$, $\gP(\gM_1|\gD)=\gP(\gM_2|\gD)=\frac{1}{2}$. A policy that reaches the correct terminal state with probability $p$ (otherwise reset) will visit the initial state a $\text{Geom}(p)$ number of times, and writing $\bar{\gamma} \coloneqq \gamma^n$, will achieve return $\frac{\bar{\gamma}p}{1 - \bar{\gamma} + p\bar{\gamma}} = \frac{1}{1 + \frac{1}{p} \frac{1-\bar{\gamma}}{\bar{\gamma}}}$.

\textit{Uniform policy:} A uniform policy randomly chooses between ``left'' and ``right'' at all states, and will reach all states in the final level equally often, so the probability it reaches the correct goal state is  $\frac{1}{2^n}$. Therefore, the expected return is $J(\pi_{\text{unif}}) = \frac{1}{1 + 2^n\frac{1-\bar{\gamma}}{\bar{\gamma}}}$.

\textit{Bayes-optimal memoryless policy:} The Bayes-optimal memoryless policy $\pi^{*\po}$ chooses randomly between ``left'' and ``right'' at the top level; on every subsequent level, if the agent is in the left half of the tree, the agent deterministically picks ``left'' and on the right half of the tree, the agent deterministically picks ``right''. Effectively, this policy either visits the left-most state or the right-most state in the final level. The Bayes-optimal memoryless policy returns to the top of the tree a $\text{Geom}(p=\frac{1}{2})$ number of times, and the expected return is given by $J(\pi^{*\po}) = \frac{1}{1 + 2\frac{1-\bar{\gamma}}{\bar{\gamma}}}$. 

\textit{Everywhere-stochastic policy:} Unlike the Bayes-optimal policy, which is deterministic in all levels underneath the first, an everywhere-stochastic policy will sometimes take random actions at these lower levels, and therefore can reach states at the final level that are neither the left-most or right-most states (and therefore always bad). We note that if $\gH(\pi(a|s)) > \alpha$, then there is some $\beta > 0$ such that $\max_a \pi(a|s) < 1 - \beta$. For an $\alpha$-everywhere stochastic policy, the probability of taking at least one incorrect action increases as the depth of the binary tree grows, getting to the correct goal at most probability $\frac{1}{2}(1-\beta)^{n-1}$. The maximal expected return is therefore $J(\pi_s) \leq \frac{1}{1 + 2(\frac{1}{1-\beta})^{n-1} \frac{1-\bar{\gamma}}{\bar{\gamma}}}$

\[J(\pi^{*\po}) = \frac{1}{1 + 2\frac{1-\bar{\gamma}}{\bar{\gamma}}} ~~~~~~ J(\pi_s) = \frac{1}{1 + 2(\frac{1}{1-\beta})^{n-1} \frac{1-\bar{\gamma}}{\bar{\gamma}}} ~~~~~~J(\pi_{\text{unif}}) = \frac{1}{1 + 2^n\frac{1-\bar{\gamma}}{\bar{\gamma}}}\]

As $n\to \infty$, $J(\pi^{*\po}), J(\pi_s)$ and $J(\pi_{\text{unif}})$ will converge to zero. Using asymptotic analysis we can determine their speed of convergence and find that: 

\[J(\pi^{*\po}) \sim \frac{\bar{\gamma}}{2} ~~~~~~ J(\pi_s) \sim \frac{\bar{\gamma}}{ 2(\frac{1}{1-\beta})^{n-1}} ~~~~~~J(\pi_{\text{unif}}) \sim \frac{\bar{\gamma}}{ 2^n}\]

Using these asymptotics, we find that 

\[ \frac{J(\pi_s) - J(\pi_\text{unif})}{J(\pi^{*\po}) - J(\pi_\text{unif})} \sim \frac{1}{ (\frac{1}{1-\beta})^{n-1}} = (1-\beta)^{n-1}, \]

which shows that this ratio can be made arbitrarily small as we increase $n$. \qedsymbol \\

\textit{An aside: deterministic policies. } While this proposition only discusses the failure mode of stochastic policies, \textit{all} deterministic memoryless policies in this environment also fail. A deterministic policy $\pi_d$ in this environment continually loops through one path in the binary tree repeatedly, and therefore will only ever reach one goal state, unlike the Bayes-optimal policy which visits both possible goal states. The best deterministic policy then either constantly takes the ``left'' action (which is optimal for $\gM_1$), or constantly takes the ``right'' action (which is optimal for $\gM_2$). Any other deterministic policy reaches a final state that is neither the left-most nor the right-most state, and will always get $0$ reward. The expected return of the optimal deterministic policy is $J(\pi_d) = \frac{\bar{\gamma}}{2}$, receiving $\bar{\gamma}$ reward in one of the MDPs, and $0$ reward in the other. When the discount factor $\gamma$ is close to $1$, the maximal expected return of a deterministic policy is approximately $\frac{1}{2}$, while the expected return of the Bayes-optimal policy is approximately $1$, indicating a sub-optimality gap.
\end{proof} 
\clearpage

\subsection{Proof of Proposition~\ref{thm:lower_bound}}
\theoremLowerBound*
\begin{proof}
Before we begin, we recall some basic tools from analysis of MDPs. For a memoryless policy $\pi$, the state-action value function  $Q^\pi(s,a)$ is given by:
\begin{equation}
{Q^\pi(s, a) = \E_\pi[\sum_{t \geq 0} \gamma^t r(s_t, a_t) | s_0 = s, a_0=a]}.
\end{equation}
The advantage function $A^\pi(s, a)$ is defined as:
\begin{equation}
{A^\pi(s, a) = Q^\pi(s, a) - \E_{a \sim \pi(\cdot|s)}[Q^\pi(s, a)]}.
\end{equation}
The performance difference lemma \citep{kakadeLangford2002} relates the expected return of two policies $\pi$ and $\pi'$ in an MDP $\gM$ via their advantage functions as 

\begin{equation}
J_\gM(\pi') = J_{\gM}(\pi) + \frac{1}{1-\gamma}\E_{s \sim d_\gM^{\pi'}}[\E_{a \sim \pi'}[A_\gM^\pi(s ,a)]].
\end{equation}

We now begin the derivation of our lower bound:
\begin{equation}
\begin{aligned}
   J_{\hat{\gM}^{\po}}(\pi) &=\frac{1}{n} \sum_{i=1}^n J_{\gM_i}(\pi)\\
   &=\frac{1}{n} \sum_{i=1}^n  J_{\gM_i}(\pi_i) + \frac{1}{n} \sum_{i=1}^n  \left[J_{\gM_i}(\pi) -J_{\gM_i}(\pi_i) \right] \\
    &= \frac{1}{n} \sum_{i=1}^n  J_{\gM_i}(\pi_i) - \frac{1}{n(1-\gamma)} \sum_{i=1}^n  \mathbb{E}_{s\sim d_{\gM_i}^{\pi_i}}\left[ \mathbb{E}_{a\sim \pi_i}\left[A_{\gM_i}^{\pi}(s,a) \right]\right] \\
    &= \frac{1}{n} \sum_{i=1}^n  J_{\gM_i}(\pi_i) - \frac{1}{n(1-\gamma)} \sum_{i=1}^n  \mathbb{E}_{s\sim d_{\gM_i}^{\pi_i}}\left[ \mathbb{E}_{a\sim \pi_i}\left[A_{\gM_i}^{\pi}(s,a) \right]-\mathbb{E}_{a\sim \pi}\left[A_{\gM_i}^{\pi}(s,a) \right]\right] \\
\end{aligned}
\end{equation}
In the last equality we used the fact that $\mathbb{E}_{a\sim \pi}\left[A^{\pi}(s,a) \right] =0$. From there we proceed to derive a lower bound:
\begin{equation}
\begin{aligned}
    J_{\hat{\gM^\po}}(\pi) &= \frac{1}{n} \sum_{i=1}^n  J_{\gM_i}(\pi_i) - \frac{1}{n(1-\gamma)} \sum_{i=1}^n  \mathbb{E}_{s\sim d_{\gM_i}^{\pi_i}}\left[ \mathbb{E}_{a\sim \pi_i}\left[A_{\gM_i}^{\pi}(s,a) \right]-\mathbb{E}_{a\sim \pi}\left[A_{\gM_i}^{\pi}(s,a) \right]\right] \\
    &\geq \frac{1}{n} \sum_{i=1}^n  J_{\gM_i}(\pi_i) - \frac{2 r_{max}}{n(1-\gamma)^2} \sum_{i=1}^n \mathbb{E}_{s\sim d_{\gM_i}^{\pi_i}}\left[ D_{TV}\left(\pi_i(\cdot \mid s);\pi(\cdot \mid s)\right) \right] \\
    &\geq \frac{1}{n} \sum_{i=1}^n  J_{\gM_i}(\pi_i) - \frac{\sqrt{2} r_{max}}{(1-\gamma)^2n} \sum_{i=1}^n \mathbb{E}_{s\sim d_{\gM_i}^{\pi_i}}\left[ \sqrt{D_{KL}\left(\pi_i(\cdot \mid s)\,\, || \,\, \pi(\cdot \mid s)\right)} \right] \\
\end{aligned}
\end{equation}
where the first inequality is since $|A_{\gM_i}^\pi(s,a)| \leq \frac{r_{\max}}{1-\gamma}$ and the second from Pinsker's inequality. Our intention in this derivation is not to obtain the tightest lower bound possible, but rather to illustrate how bounding the advantage can lead to a simple lower bound on the expected return in the POMDP. The inequality can be made tighter using other bounds on $|A_{\gM_i}^\pi(s,a)|$, for example using $A_{\max} = \max_{i, s, a} |A_{\gM_i}^\pi(s,a)|$, or potentially a bound on the advantage that varies across state.
\end{proof}
\subsection{Proof of Proposition~\ref{prop:link-version}}
\propositionLinkFunction*

\begin{proof}
By Proposition~\ref{thm:lower_bound} we have that $\forall \alpha \geq \frac{\sqrt{2}r_{\text{max}}}{(1-\gamma)^2n}$:
\begin{equation}
J_{\hat{\gM^\po}}(f(\{\pi_i^*\})) \geq \frac{1}{n} \sum_{i=1}^n J_{\gM_i}(\pi_i^*) - \alpha \sum_{i=1}^n \E_{s \sim d_{\gM_i}^{\pi_i^*}}\left[\sqrt{ D_{KL}\left(\pi_i^*(\cdot | s)\,\, || \,\, f(\{\pi_i^*\})(\cdot | s)\right)}\right].
\end{equation}
Now, write $\pi'^* \in \argmax_\pi J_{\hat{\gM^\po}}(\pi)$ to be an optimal policy in the empirical epistemic POMDP, and consider the collection of policies $\{\pi'^*, \pi'^*, \dots, \pi'^*\}$.  Since $\{\pi_i^*\}$ is the optimal solution to Equation~\ref{eq:algo_objective}, we have
\begin{equation}
\begin{aligned}
J_{\hat{\gM^\po}}(f(\{\pi_i^*\})) &\geq \frac{1}{n} \sum_{i=1}^n J_{\gM_i}(\pi'^*) - \alpha \sum_{i=1}^n \E_{s \sim d_{\gM_i}^{\pi'^*}}\left[\sqrt{ D_{KL}\left(\pi'^*(\cdot | s)\,\, || \,\, f(\{\pi'^*\})(\cdot | s)\right)}\right]\\
&= \frac{1}{n} \sum_{i=1}^n J_{\gM_i}(\pi'^*)\\
&= J_{\hat{\gM^\po}}(\pi'^*),
\end{aligned}
\end{equation}
where the second line here uses the fact that $f(\pi'^*, \dots, \pi'^*) = \pi'^*$. Therefore $\pi^* \coloneqq f(\{\pi_i^*\})$ is optimal for the empirical epistemic POMDP.
\end{proof}

\clearpage
\section{Procgen Implementation and Experimental Setup}
\label{appendix:LEEP}

We follow the training and testing scheme defined by Cobbe et al. \citep{Cobbe2020LeveragingPG} for the Procgen benchmarks: the agent trains on a fixed set of levels, and is tested on the full distribution of levels. Due to our limited computational budget, we train on the so-called ``easy'' difficulty mode using the recommended $200$ training levels. Nonetheless, many prior work has found a significant generalization gap between test and train performance even in this easy setting, indicating it a useful benchmark for generalization \citep{Cobbe2020LeveragingPG, Raileanu2020AutomaticDA, Jiang2020PrioritizedLR}. We implemented LEEP on top of an existing open-source codebase released by Jiang et al. \citep{Jiang2020PrioritizedLR}. Full code is provided in the supplementary for reference.

LEEP maintains $n=4$ policies $\{\pi_i\}_{i \in [n]}$, each parameterized by the ResNet architecture prescribed by \citet{Cobbe2020LeveragingPG}. In LEEP, each policy is optimized to maximize the entropy-regularized PPO surrogate objective alongside a one-step KL divergence penalty between itself and the linked policy $\max_i \pi_i$; gradients are not taken through the linked policy. 
\begin{equation*}
\begin{aligned}
\E_{\pi_i}[\min(r_t(\pi)A^\pi(s,a), \text{clip}(r_t(\pi), 1-\eps, 1+\eps) \,\, & A^\pi(s,a) + \beta \gH(\pi_i(a|s)) \\
& - \textcolor{purple}{\alpha D_{KL}(\pi_i(a|s) \| \max_j \pi_j(a|s))}]
\end{aligned}
\end{equation*}
Note that this update in Equation 6 is not exactly solving the optimization problem dictated by Equation 5, since it leverages a one-step estimator for the gradient of the KL penalty in the PG objective, a heuristic known to lead to better optimization in PPO and other deep policy gradient methods. If the proper estimator for the KL penalty is substituted in, then the Bayes-optimal policy in the empirical epistemic POMDP is an optimal solution for Equation 6.

The penalty hyperparameter $\alpha$ was obtained by performing a hyperparameter search on the Maze task for all the comparison methods (including LEEP) amongst $\alpha \in [0.01, 0.1, 1.0, 10.0]$. Since LEEP trains $4$ policies using the same environment budget as a single PPO policy, we change the number of environment steps per PPO iteration from $16384$ to $4096$, so that the PPO baseline and each policy in our method takes the same number of PPO updates. All other PPO hyperparameters are taken directly from \citet{Jiang2020PrioritizedLR}. 

In our implementation, we parallelize training of the policies across GPUs, using one GPU for each policy. We found it infeasible to run more ensemble members due to GPU memory constraints without significant slowdown in wall-clock time. Running LEEP on one Procgen environment for 50 million steps requires approximately 5 hrs in our setup on a machine with four Tesla T4 GPUs. 

\section{Procgen Results}
\subsection{Main Experimental Results}
\label{appendix:morePlots}
\begin{figure}[H]
    \centering
    \includegraphics[width=\linewidth]{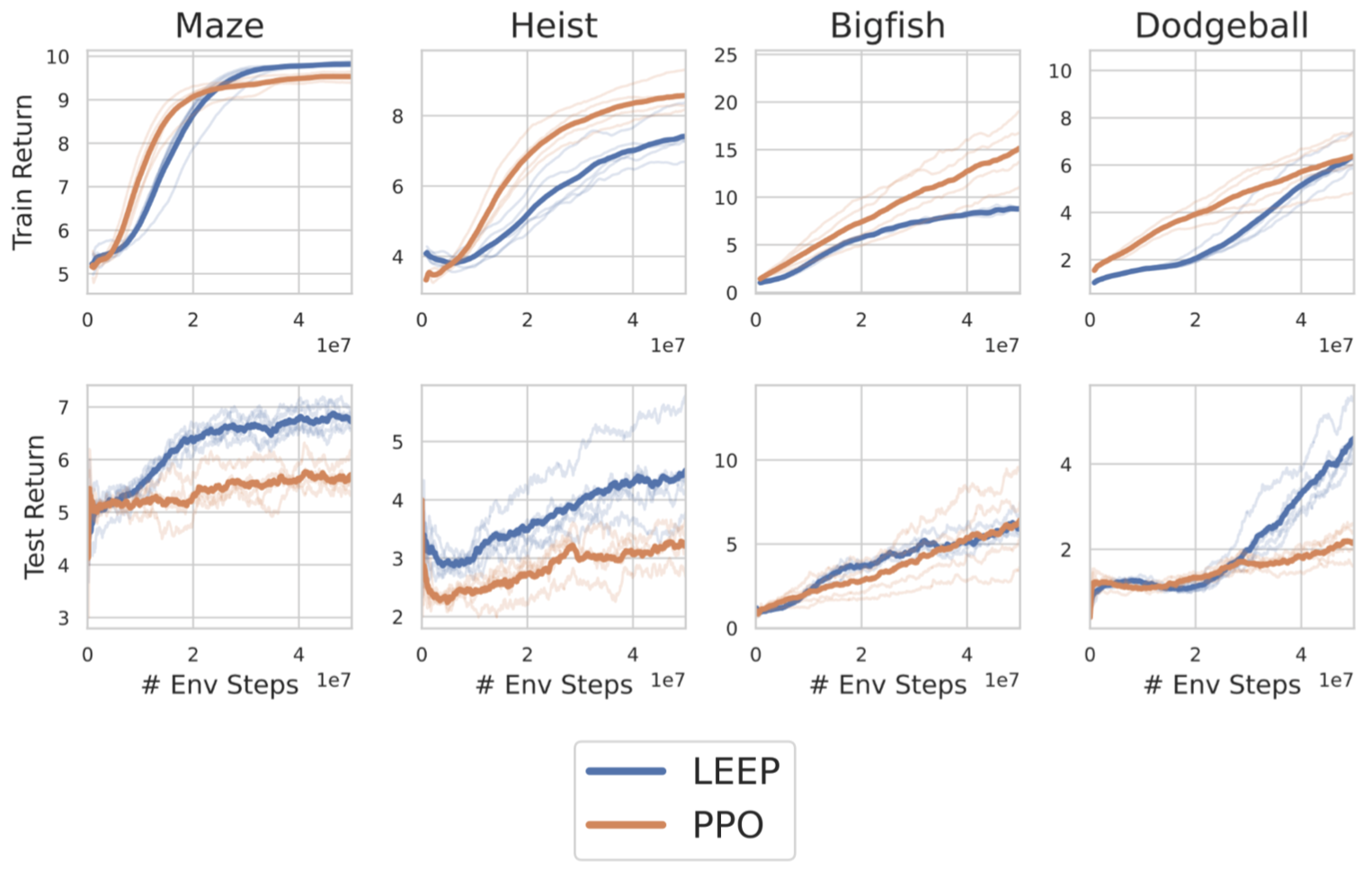}
    \caption{Training (top) and test (bottom) returns for LEEP and PPO on four Procgen environments. Results averaged across 5 random seeds. LEEP achieves equal or higher training return compared to PPO, while having a lower generalization gap between test and training returns.}
    \label{fig:appendix_all_procgen}
\end{figure}

\begin{figure}[H]
    \centering
    \includegraphics[width=\linewidth]{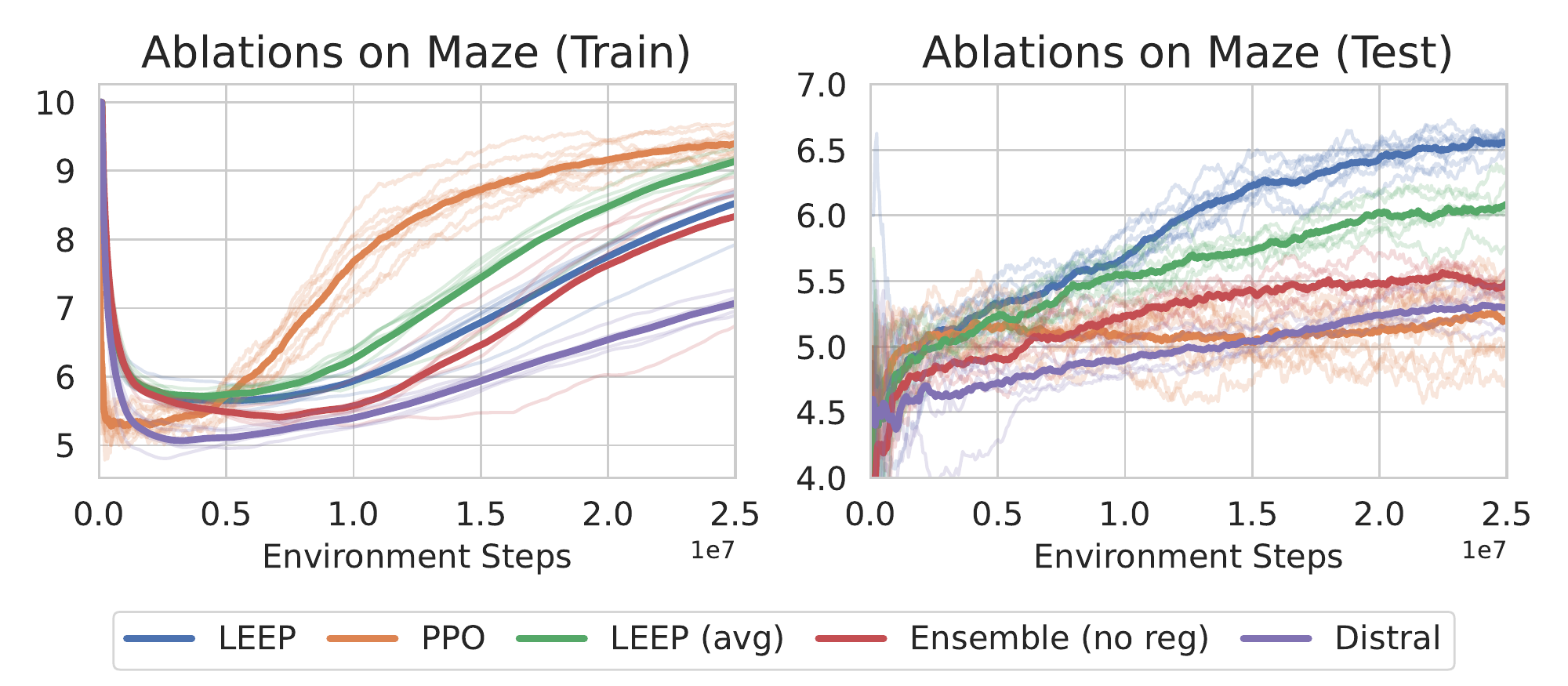}
\includegraphics[width=\linewidth]{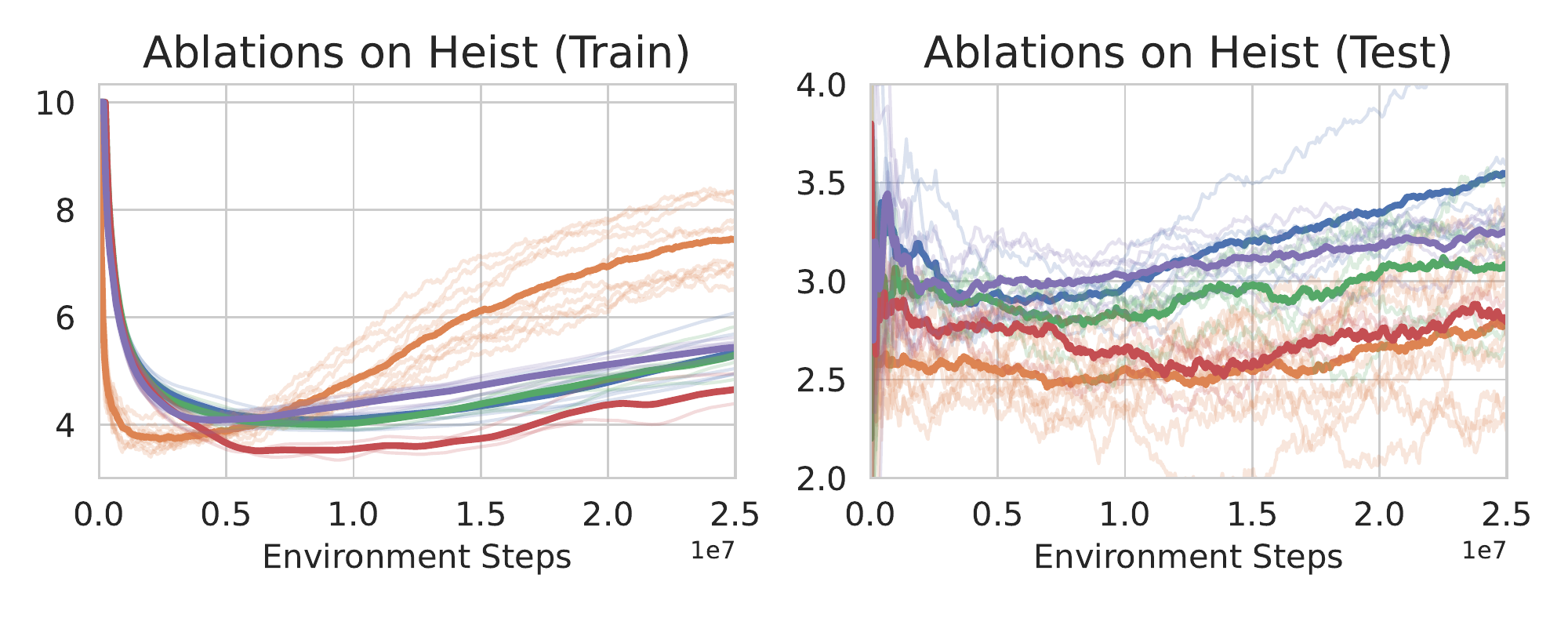}
    \caption{Training and test returns for various ablations and comparisons of LEEP.}
    \label{fig:appendix_procgen_ablations}
\end{figure}

\begin{figure}[H]
    \centering
    \includegraphics[width=\linewidth]{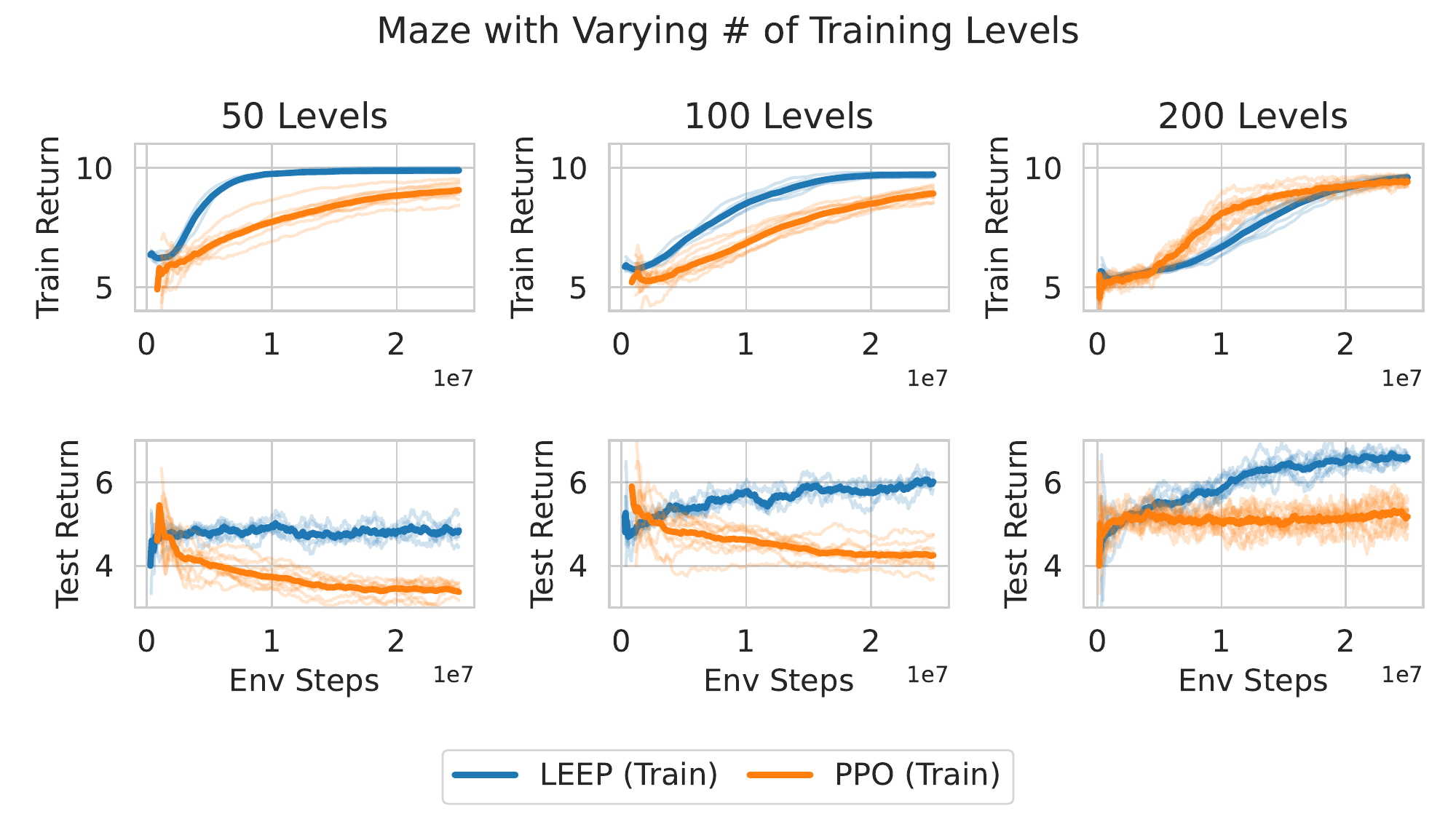}
    \caption{Performance of LEEP and PPO as the number of training levels provided varies. While the learned performance of the PPO policy is worse than a \textit{random policy} with less training levels, LEEP avoids this overfitting and in general, demonstrates a smaller train-test performance gap than PPO. }
    \label{fig:appendix_maze_varying_levels}
\end{figure}

\subsection{Ablations of LEEP Hyperparameters}
\label{appendix:hyperparam_sweep}
\textbf{Number of ensemble members (n): } We ran an ablation study on the Procgen Maze task to understand how the number of ensemble members affects the performance of LEEP. We found that for an equal number of gradient steps per ensemble member, LEEP does equally well with $n=4$ and $8$ ensemble members, but poorly with only 1 or 2 ensemble members (see Figure attached). These results indicate that at least on the Maze task, using n=4 ensemble members is an appropriate balance between approximating the true epistemic POMDP with higher fidelity and minimizing the sample complexity incurred by needing to train more ensemble members with on-policy RL methods.

\begin{table}[H]
\begin{tabular}{l|llll}
\textbf{\# Ensemble members (n)} & \textbf{1}   & \textbf{2}  & \textbf{4}   & \textbf{8}  \\ \hline \Tstrut
Maze         & 5.11 $\pm$ 0.24 & 5.85 $\pm$ 0.4 & 6.53 $\pm$ 0.12 & 6.91 $\pm$ 0.1
\end{tabular}
\end{table}

\textbf{Penalty coefficient ($\alpha$):} We performed a coarse hyperparameter sweep on the four Procgen domains, testing values $\alpha \in 10^{\{-2, -1, 0, 1, 2\}}$. The results in the table below indicated that performance is roughly consistent for $\alpha = \{0.1, 1, 10\}$, so while performance does depend on this hyperparameter, it is not overly sensitive, and values around $1$ are likely to be a good default initialization.

\begin{table}[H]
\begin{tabular}{l|llllll}
\textbf{Penalty parameter($\alpha$)} & \textbf{0} & \textbf{0.01} & \textbf{0.1} & \textbf{1}   & \textbf{10}  & \textbf{100} \\ \hline \Tstrut
Maze                                  & 5.78       & 5.725         & 5.94 $\pm$ 0.22 & 6.53 $\pm$ 0.12 & 6.54 $\pm$ 0.15 & 5.7          \\
Heist                                 & 3.3        & 3.4           & 3.2 $\pm$ 0.6   & 3.73 $\pm$ 0.45 & 3.65 $\pm$ 0.5  & 3.15         \\
Bigfish                               & 1.57       & 2.35          & 2.85 $\pm$ 0.64 & 4.16 $\pm$ 0.42 & 3.30 $\pm$ 0.38 & 1.21         \\
Dodgeball                             & 0.65       & 0.94          & 0.78 $\pm$ 0.2  & 1.69 $\pm$ 0.18 & 1.42 $\pm$ 0.4  & 1.64        
\end{tabular}
\end{table}

\subsection{LEEP and \textit{implicit} partial observability}
\label{appendix:recurrent_context}

One common confusion that may arise is that LEEP seeks to overcome partial observability of the contexts, as is done for dynamics generalization in POMDPs (e.g. \citep[][]{Lee2020ContextawareDM}). This is not the case. Works on dynamics generalization in POMDPs assume that contexts in the true underlying environment are partially observable (e.g. friction coefficients unobserved by a robot without the proper sensors), and the aim to infer this context using memory. In the epistemic POMDP, \textit{the context is not partially observable}; rather, what is partially observable is how the system dynamics will behave for any provided context, capturing the agent’s epistemic uncertainty that stems from the limited training contexts.  

We conducted a didactic experiment on Procgen to empirically support the claim that the partial observability modelled by dynamics generalization methods \citet{Lee2020ContextawareDM} does not replace explicit handling of epistemic uncertainty provided by our method (since this is a different problem). We train a recurrent context encoder that takes in the trajectory seen so far and predicts the identity of the training level. The last hidden layer of this encoder is taken as a “context vector” and fed in as input into a policy alongside the original state, creating an adaptive recurrent policy since this context vector can change through a trajectory. We tested this model on our four Procgen tasks, and made two observations. First, the learned policy, despite being recurrent, does not achieve higher test-time performance than PPO. This is not surprising, because the task is fully observed at training-time. Second, the learned context encoder is able to predict the identity of the training level with~$> 99\%$ accuracy; that is, the contexts are fully observed and so mechanisms that try to predict the context are unlikely to provide benefit. 

The issue is that recurrence and adaptation by themselves are not sufficient to ensure high generalization performance; rather they must be combined with the appropriate model of partial observability that captures the agent’s epistemic uncertainty (for LEEP, by statistical bootstrapping on the set of training contexts) to achieve good generalization.

\begin{table}[H]
\begin{tabular}{l|l|l|l|l}
\textbf{Test Return after 25M steps}        &\textbf{ Maze}         & \textbf{Heist}        & \textbf{Bigfish}      & \textbf{Dodgeball}    \\ \hline \Tstrut
PPO                                & 5.11 $\pm$ 0.24 & 2.84 $\pm$ 0.46 & 3.89 $\pm$ 1.64 & 1.68 $\pm$ 0.33 \\ 
PPO + Recurrent Context Encoder & 5.25 $\pm$ 0.5  & 2.83 $\pm$ 1.04 & 2.74 $\pm$ 1.1  & 1.57 $\pm$ 0.3  \\ 
LEEP                               & 6.53 $\pm$ 0.12 & 3.73 $\pm$ 0.45 & 4.16 $\pm$ 0.42 & 1.69 $\pm$ 0.18 \\ 
\end{tabular}
\end{table}

\end{subappendices}
\clearpage

\part{Auction Design}
\label{part:auctions}
\chapter{A Permutation-Equivariant Neural Network Architecture For
Auction Design}
\label{chap:EquivariantNet}
Designing an incentive compatible auction that maximizes expected revenue is a central problem in Auction Design. Theoretical approaches to the problem have hit some limits in the past decades and analytical solutions are known for only a few simple settings. Computational approaches to the problem through the use of LPs  have their own set of limitations. Building on the success of deep learning, a new approach was recently proposed by \cite{dutting2017optimal} in which the auction is modeled by a feed-forward neural network and the design problem is framed as a learning problem. The neural architectures used in that work are general purpose and do not take advantage of any of the symmetries the problem could present, such as permutation equivariance. In this chapter, we consider auction design problems that have permutation-equivariant symmetry and construct a neural architecture that is capable of perfectly recovering the permutation-equivariant optimal mechanism, which we show is not possible with the previous architecture. We demonstrate that permutation-equivariant architectures are not only capable of recovering previous results, they also have better generalization properties.
  
\section{Introduction}


Designing truthful auctions is one of the core problems that arise in economics. Concrete examples of auctions include sales of treasury bills, art sales by Christie’s or Google Ads. Following seminal work of Vickrey~\citep{vickrey1961counterspeculation} and Myerson~\citep{myerson1981optimal}, auctions are typically studied in the \emph{independent private valuations} model: each bidder has a valuation function over items, and their payoff depends only on the items they receive. Moreover, the auctioneer knows aggregate information about the population that each bidder comes from, modeled as a distribution over valuation functions, but does not know precisely each bidder's valuation. Auction design is challenging since the valuations are private and bidders need to be encouraged to report their valuations truthfully. The auctioneer aims at designing an incentive compatible auction that maximizes revenue.   

While auction design has existed as a sub-field of economic theory for several decades, complete characterizations of the optimal auction only exist for a few settings. Myerson resolved the optimal auction design problem when there is a single item for sale \citep{myerson1981optimal}.  However, the problem is not completely understood even in the extremely simple setting with just a single bidder and two items. While there have been some partial characterizations~\citep{manelli2006bundling, manelli2010bayesian,Pavlov11, WangT14, daskalakis2017strong}, and algorithmic solutions with provable guarantees~\citep{Alaei11, AlaeiFHHM12, AlaeiFHH13, CaiDW12a, CaiDW12b}, neither the analytic nor algorithmic approach currently appears tractable for seemingly small instances.


 Another line of work to confront this theoretical hurdle consists in building automated methods to find the optimal auction. Early works \citep{conitzer2002complexity,conitzer2004self} framed the problem as a linear program. However, this approach suffers from severe scalability issues as the number of constraints and variables is exponential in the number of bidders and items \citep{guo2010computationally}. Later, \citet{sandholm2015automated} designed algorithms to find the optimal auction. While scalable, they are however limited to specific classes of auctions known to be incentive compatible. 
 
 A more recent research direction consists in building deep learning architectures that design auctions from samples of bidder valuations. \citet{dutting2017optimal} proposed RegretNet, a feed-forward architecture to find near-optimal results in several known multi-item settings and obtain new mechanisms in unknown cases. This architecture however is not data efficient and can require a large number of valuation samples to learn an optimal auction in some cases. This inefficiency is not specific to RegretNet but is characteristic of neural network architectures that do not incorporate any inductive bias.
 
 %
 In this chapter, we build a deep learning architecture for \textit{multi-bidder symmetric auctions}. These are auctions which are invariant to relabeling the items or bidders. More specifically, such auctions are \emph{anonymous} (in that they can be executed without any information about the bidders, or labeling them) and \emph{item-symmetric} (in that it only matters what bids are made for an item, and not its a priori label). 
 
 It is now well-known that when bidders come from the same population that the optimal auction itself is anonymous. Similarly, if items are \emph{a priori indistinguishable} (e.g. different colors of the same car --- individuals certainly value a red vs.~blue car differently, but there is nothing objectively more/less valuable about a red vs.~blue car), the optimal auction is itself item-symmetric. In such settings, our approach will approach the true optimum 
in a way which retains this structure (see Contributions below). Even without these conditions, the optimal auction is often symmetric anyway: for example, ``bundling together'' (the auction which allows bidders to pay a fixed price for all items, or receive nothing) is item-symmetric, and is often optimal even when the items are a priori distinguishable. 
 
Beyond their frequent optimality, such auctions are desirable objects of study \emph{even when they are suboptimal}. For example, seminal work of Hartline and Roughgarden which pioneered the study of ``simple vs.~optimal auctions'' analyzes the approximation guarantees achievable by anonymous auctions~\cite{HartlineR09}, and exciting recent work continues to improve these guarantees~\cite{AlaeiHNPY15,JinLTX19,JinLQTX19}. Similarly, \citet{daskalakis2012symmetries} develop algorithms for item-symmetric instances, and exciting recent work show how to leverage item-symmetric to achieve near-optimal auctions in completely general settings~\citep{kothari2019approximation}. To summarize: symmetric auctions are known to be optimal in many settings of interest (even those which are not themselves symmetric). Even in settings where they are not optimal, they are known to yield near-optimal auctions. And even when they are only approximately optimal, seminal work has identified them as important objects of study owing to their simplicity. In modern discussion of auctions, they are also desirable due to fairness considerations.

While applying existing feed-forward architectures as RegretNet to symmetric auctions is possible, we show in~Section~\ref{sec:NN_arch_equiv} that RegretNet struggles to find symmetric auctions, \emph{even when the optimum is symmetric}. To be clear, the architecture's performance is indeed quite close to optimal, but the resulting auction is not ``close to symmetric''. This chapter proposes an architecture that outputs a symmetric auction symmetry by design.

 \subsection*{Contributions}
 \label{sec:contributions}

This chapter identifies three drawbacks from using the RegretNet architecture when learning with symmetric auctions. First, RegretNet is incapable of finding symmetric auctions when the optimal mechanism is known to be symmetric. Second, RegretNet is sample inefficient, which is not surprising since the architecture does not incorporate any inductive bias. Third, RegretNet is incapable of generalizing to settings with a different number of bidders of objects. In fact, by construction, the solution found by RegretNet can only be evaluated on settings with exactly the same number of bidders and objects of the setting it was trained on.

We address these limitations by proposing a new architecture EquivariantNet, that outputs symmetric auctions. EquivariantNet is an adaption of the deep sets architecture \citep{hartford2018deep} to symmetric auctions.  This architecture is parameter-efficient and is able to  recover some of the optimal results in the symmetric auctions literature.  Our approach outlines three important benefits: 
\begin{itemize}
    \item[--] \textit{Symmetry}: our architecture outputs a symmetric auction by design. It is immune to permutation-sensitivity as defined in Section~\ref{sec:ffpe} which is related to fairness.
    \item[--] \textit{Sample generalization}: Because we use domain knowledge, 
    our architecture converges to the optimum with fewer valuation samples.
    \item[--] \textit{Out-of-setting generalization}: Our architecture does not require hard-coding the number of bidders or items during training --- training our architecture on instances with $n$ bidders and $m$ items produces a well-defined auction even for instances with $n'$ bidders and $m'$ items. Somewhat surprisingly, we show in~\ref{sec:num_exp} some examples where our architecture trained on $1$ bidder with $5$ items generalizes well even to $1$ bidder and $m$ items, for any $m \in \{2,10\}$.
\end{itemize}
 We highlight that the novelty of this work is not to show that a new architecture is a viable alternative to RegretNet.  Instead we are solving three fundamental limitations we identified for the RegretNet architecture. These three problems are not easy to solve in principle, it is surprising that a change of architecture solves all of them in the context of symmetric auctions. We would also like to emphasize that both RegretNet and EquivariantNet are capable of learning auction with near optimal revenue and negligible regret. It is not possible to significantly outperform RegretNet on these aspects. The way we improve over RegretNet is by having better sample efficiency, out-of-setting generalization and by ensuring that our solutions are exactly equivariant. 


The chapter decomposes as follows. Section~\ref{sec:setting_equiv} introduces the standard notions of auction design. Section~\ref{sec:NN_arch_equiv} presents our permutation-equivariant architecture to encode symmetric auctions. Finally, Section~\ref{sec:num_exp} presents numerical evidence for the effectiveness of our approach.

\subsection*{Related work} 

\paragraph{Auction design and machine learning.} Machine learning and computational learning theory have been used in several ways to design auctions from samples of bidder valuations. Some works have focused sample complexity results for designing optimal revenue-maximizing auctions. This has been established in single-parameter settings \citep{DhangwatnotaiRY15, cole2014sample,morgenstern2015pseudo, medina2014learning,huang2018making, DevanurHP16, HartlineT19,RoughgardenS16, GonczarowskiN17, GuoHZ19},  multi-item auctions \citep{dughmi2014sampling, GonczarowskiW18}, combinatorial auctions \citep{balcan2016sample,morgenstern2016learning,syrgkanis2017sample} and allocation mechanisms \citep{narasimhan2016general}. Machine learning has also been used to optimize different aspects of mechanisms \citep{lahaie2011kernel,dutting2015payment}. All these aforementioned differ from ours as we resort to deep learning for finding optimal auctions. 

\paragraph{Auction design and deep learning.} While \cite{dutting2017optimal} is the first paper to design auctions through deep learning, several other paper followed-up this work. \cite{feng2018deep} extended it to budget constrained bidders, \cite{golowich2018deep} to the facility location problem. \cite{tacchetti2019neural} built architectures based on the Vickrey-Clarke-Groves auctions. Recently, \cite{shen2019automated} and \cite{dutting2017optimal} proposed architectures that \textit{exactly} satisfy incentive compatibility but are specific to \textit{single-bidder} settings. In this work, we aim at \textit{multi-bidder} settings and build permutation-equivariant networks that return nearly incentive compatibility symmetric auctions. 

\section{Symmetries and learning problem in auction design}\label{sec:setting_equiv}

We review the framework of auction design and the problem of finding truthful mechanisms. We then present symmetric auctions and similarly to \cite{dutting2017optimal}, frame auction design as a learning problem.

\subsection{Auction design and symmetries}

\paragraph{Auction design.} We consider the setting of additive auctions with $n$ bidders with $N=\{1,\dots,n\}$ and $m$ items with $M=\{1,\dots,m\}.$ Each bidder $i$ is has value $v_{ij}$ for item $j$, and values the set $S$ of items at $\sum_{j \in S} v_{ij}$. Such valuations are called \emph{additive}, and are perhaps the most well-studied valuations in multi-item auction design~\citep{HartN12,HartN13, LiY13, BabaioffILW14, DaskalakisDT14, HartR15, CaiDW16,daskalakis2017strong, BeyhaghiW19}.

The designer does not know the full valuation profile $V = (v_{ij})_{i\in N, j\in M}$, but just a distribution from which they are drawn. Specifically, the valuation vector of bidder $i$ for each of the $m$ items $\vec{v}_i=(v_{i1}, \dots, v_{im})$ is drawn from a distribution $D_i$ over $\mathbb{R}^m$  (and then, $V$ is drawn from $D:= \times_i D_i$). The designer asks the bidders to report their valuations (potentially untruthfully), then decides on an allocation of items to the bidders and charges a payment to them. 

 
\begin{definition}
 An auction is a pair $(g,p)$ consisting of a randomized allocation rule $g=(g_1,\dots,g_n)$ where $g_i\colon \mathbb{R}^{n\times m}\rightarrow [0,1]^{m}$ such that for all $V$, and all $j$, $\sum_i (g_i(V))_j\leq 1$ and payment rules $p=(p_1,\dots,p_n)$ where $p_i\colon \mathbb{R}^{n\times m}\rightarrow \mathbb{R}_{\geq 0}$ . 
 \end{definition}

 Given reported bids $B=(b_{ij})_{i\in N, j\in M}$, the auction computes an allocation probability $g(B)$ and payments $p(B)$. 
 $[g_i(B)]_j$
 is the probability that bidder $i$ received object $j$ and $p_i(B)$ is the price bidder $i$ has to pay to the mechanism.
 In what follows, $\mathcal{M}$ denotes the class of all possible auctions.
  
 \begin{definition}
 The utility of bidder $i$ is defined by $u_i(\vec{v}_i,B)= \sum_{j=1}^m [g_i(B)]_j v_{ij} -p_i(B).$ 
\end{definition}

Bidders seek to maximize their utility and may report bids that are different from their valuations. Let $V_{-i}$ be the valuation profile without element $\vec{v}_i$, similarly for $B_{-i}$ and $D_{-i}=\times_{j\neq i}D_j$. We aim at auctions that invite bidders to bid their true valuations through the notion of incentive compatibility.

\begin{definition}\label{def:DSIC}
An auction $(g,p)$ is \textit{dominant strategy incentive compatible} (DSIC) if each bidder's utility is maximized by reporting truthfully no matter what the other bidders report. For every bidder $i,$ valuation $\vec{v}_i \in D_i$, bid $\vec{b}_i\hspace{.02cm}'\in D_i$ and bids $B_{-i}\in D_{-i}$, \; $u_i(\vec{v}_i,(\vec{v}_i,B_{-i}))\geq u_i(\vec{v}_i,(\vec{b}_i\hspace{.02cm}',B_{-i})).$ 
\end{definition}
\noindent Additionally, we aim at auctions where each bidder receives a non-negative utility.
\begin{definition}\label{def:IR_equiv}
An auction is \textit{individually rational} (IR) if for all $i\in N, \; \vec{v}_i\in D_i$ and $B_{-i}\in D_{-i},$
\begin{equation}\label{eq:IR_eq_equiv}\tag{IR}
   u_i(\vec{v}_i,(\vec{v}_i,B_{-i}))\geq 0.  
\end{equation}
\end{definition}

In a DSIC auction, the bidders have the incentive to truthfully report their valuations and therefore, the revenue on valuation profile $V$ is defined as $\sum_{i=1}^n p_i(V).$ Optimal auction design aims at finding a DSIC auction that maximizes the expected revenue $rev:=\mathbb{E}_{V\sim D}[\sum_{i=1}^n p_i(V)]$. 
  
\paragraph{Linear program.} We frame the problem of optimal auction design as an optimization problem where we seek an auction that minimizes the negated expected revenue among all IR and DSIC auctions. Since there is no known characterization of DSIC mechanisms in the multi-bidder setting, we resort to the relaxed notion of \textit{ex-post regret}. It measures the extent to which an auction violates DSIC, for each bidder.

\begin{definition}
The ex-post regret for a bidder $i$ is the maximum increase in his utility when  considering all his possible bids and fixing the bids of others. For a valuation profile $V$, the ex-post regret for a bidder $i$ is $rgt_{i}(V)=\max_{\vec{v}_i\hspace{.02cm}'\in \mathbb{R}^m} u_i(\vec{v}_i;(\vec{v}_i\hspace{.02cm}',V_{-i}))-u_i(\vec{v}_i;(\vec{v}_i,V_{-i})).$ In particular, DSIC is equivalent to 
\begin{equation}\label{eq:regretDSIC_equiv}\tag{IC} 
    rgt_{i}(V)=0, \;  \forall i \in N.
\end{equation}
\end{definition}

Therefore, by setting~\eqref{eq:regretDSIC_equiv} and \eqref{eq:IR_eq_equiv} as constraints, finding an optimal auction is equivalent to the following linear program
\begin{equation*}\tag{LP}\label{eq:exact_prob_equiv}
\begin{aligned}
\hspace{-.41cm}\underset{(g,p)\in \mathcal{M} }{\text{min}}
  - \mathbb{E}_{V\sim D}\left[\sum_{i=1}^n p_i(V)\right] \quad \text{s.t.} \quad 
&   rgt_{i}(V)=0, \hspace{1.55cm} \forall i \in N,\;  \forall V \in D, \\
& u_i(\vec{v}_i,(\vec{v}_i,B_{-i}))\geq 0, \quad  \forall i\in N,\; \vec{v}_i\in D_i, B_{-i}\in D_{-i}.
\end{aligned}
\end{equation*}

\paragraph{Symmetric auctions. } Equation~\ref{eq:exact_prob_equiv} is intractable due to the exponential number of constraints. However, in the setting of \textit{symmetric} auctions, it is possible to reduce the search space of the problem as shown in \autoref{prop:equiv_sol}. We first define  the notions of bidder- and item-symmetries. 
\begin{definition}
The valuation distribution $D$ is bidder-symmetric if for any permutation of the bidders ${\varphi_b\colon N\rightarrow N},$ the permuted distribution ${D_{\varphi_b}:=D_{\varphi_b(1)}\times \dots \times D_{\varphi_b(n)}}$ satisfies: $D_{\varphi_b}=D$.
\end{definition}
Bidder-symmetry intuitively means that the bidders are a priori indistinguishable (although individual bidders will be different). This holds for instance in auctions where the identity of the bidders is anonymous, or if $D_i = D_j$ for all $i,j$ (bidders are i.i.d.).

\begin{definition}
Bidder $i$'s valuation distribution $D_i$ is item-symmetric if for any items $x_1,\dots,x_m$ and any permutation $\varphi_o\colon M\rightarrow M,$ $D_i(x_{\varphi_o(1)},\dots,x_{\varphi_o(m)} )=D_i(x_1,\dots,x_m )$. 
\end{definition}

Intuitively, item-symmetry means that the items are also indistinguishable but not identical. It holds when the distributions over the items are i.i.d. but this is not a necessary condition. Indeed, the distribution $\{(a,b,c)\in \mathcal{U}(0,1)^{\otimes 3}: a+b+c=1\}$ is not i.i.d. but is item-symmetric.

\begin{definition}\label{ass:bid_it_sym}
An auction is symmetric if its valuation distributions are bidder- and item-symmetric.
\end{definition}

We now define the notion of permutation-equivariance that is important in symmetric auctions.

\begin{definition}
The functions $g$ and $p$ are permutation-equivariant if for any two permutation matrices $\Pi_{n}\in \{0,1\}^{n\times n}$ and $\Pi_{m}\in \{0,1\}^{m\times m}$, and any valuation matrix $V$,  we have $ g(\Pi_{n}\,V\,\Pi_{m})=\Pi_{n}\, g(V)\, \Pi_{m}$ and $p(\Pi_{n}\,V\,\Pi_{m})=\Pi_{n} \, p(V)$.
\end{definition}

\begin{theorem}\label{prop:equiv_sol}
When the auction is symmetric, there exists an optimal solution to \eqref{eq:exact_prob} that is permutation-equivariant.
\end{theorem}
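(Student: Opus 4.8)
The plan is to take an arbitrary optimal solution $(g,p)$ to \eqref{eq:exact_prob} and "symmetrize" it by averaging over the two symmetry groups $S_n$ (bidders) and $S_m$ (items), obtaining a permutation-equivariant auction $(\bar g,\bar p)$ which is still feasible (IR and IC) and achieves the same expected revenue. First I would fix notation: for permutation matrices $\Pi_n,\Pi_m$ define the transformed auction $(g^{\Pi_n,\Pi_m},p^{\Pi_n,\Pi_m})$ by $g^{\Pi_n,\Pi_m}(V) := \Pi_n^{\top} g(\Pi_n V \Pi_m)\Pi_m^{\top}$ and $p^{\Pi_n,\Pi_m}(V):=\Pi_n^{\top}p(\Pi_n V \Pi_m)$. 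Then set
\begin{equation*}
\bar g := \frac{1}{n!\,m!}\sum_{\Pi_n,\Pi_m} g^{\Pi_n,\Pi_m}, \qquad \bar p := \frac{1}{n!\,m!}\sum_{\Pi_n,\Pi_m} p^{\Pi_n,\Pi_m}.
\end{equation*}

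The bulk of the argument is four checks, carried out in this order. (i) \textbf{$(\bar g,\bar p)$ is a valid auction}: each $g^{\Pi_n,\Pi_m}$ has nonnegative entries with column sums $\le 1$ (permutations only relabel rows/columns, preserving the feasibility polytope), and $p^{\Pi_n,\Pi_m}\ge 0$; both conditions are preserved under convex combination. (ii) \textbf{Equivariance}: for any $\Sigma_n,\Sigma_m$, the map $(\Pi_n,\Pi_m)\mapsto(\Sigma_n\Pi_n,\,\Pi_m\Sigma_m)$ is a bijection of $S_n\times S_m$, so reindexing the sum defining $\bar g(\Sigma_n V\Sigma_m)$ shows $\bar g(\Sigma_n V \Sigma_m)=\Sigma_n \bar g(V)\Sigma_m$, and similarly $\bar p(\Sigma_n V\Sigma_m)=\Sigma_n\bar p(V)$. (iii) \textbf{IR and IC are preserved}: the key observation is that the utility transforms covariantly — one checks directly that $u_i\big(\vec v_i,(B)\big)$ computed under $(g^{\Pi_n,\Pi_m},p^{\Pi_n,\Pi_m})$ equals $u_{\Pi_n^{-1}(i)}$ computed under $(g,p)$ at the permuted valuation/bid profile, because $g$ and $p$ are evaluated at $\Pi_n V\Pi_m$ and the dot product $\sum_j [g_i]_j v_{ij}$ is invariant under simultaneously permuting the item coordinates of $g_i$ and $v_i$. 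Hence each $(g^{\Pi_n,\Pi_m},p^{\Pi_n,\Pi_m})$ is itself IR and DSIC (equivalently $rgt_i\equiv 0$); since regret is a maximum of affine functions of $(g,p)$ it is convex, so $rgt_i$ of the average is bounded by the average of the regrets, which is $0$ — giving IC for $(\bar g,\bar p)$ — and IR, being a pointwise linear inequality, passes through the convex combination directly. (iv) \textbf{Revenue is unchanged}: $\mathrm{rev}(g^{\Pi_n,\Pi_m},p^{\Pi_n,\Pi_m}) = \mathbb E_{V\sim D}[\sum_i p_i(\Pi_n V\Pi_m)]$, and here I invoke bidder- and item-symmetry of $D$ (\autoref{ass:bid_it_sym}): the pushforward of $D$ under $V\mapsto \Pi_n V\Pi_m$ is again $D$, so this equals $\mathrm{rev}(g,p)$; averaging, $\mathrm{rev}(\bar g,\bar p)=\mathrm{rev}(g,p)$.

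Combining: $(\bar g,\bar p)$ is feasible for \eqref{eq:exact_prob} with the same objective value as the optimal $(g,p)$, hence is itself optimal, and it is permutation-equivariant by (ii). This proves the theorem.

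The step I expect to be the main obstacle — or at least the one requiring the most care — is (iii), the bookkeeping that shows utility, and therefore IR and the regret constraint, transforms covariantly under the simultaneous action on bidders and items; one has to be careful that permuting items acts on the columns of $g$ and the entries of $\vec v_i$ consistently, and that the bid deviation $\vec v_i\hspace{.02cm}'$ in the definition of $rgt_i$ also gets permuted so that the supremum over deviations is unchanged. Once the covariance identity for $u_i$ is stated cleanly, convexity of regret and linearity of IR and revenue make the rest routine. (A remark worth adding: this argument shows equivariance can be had "for free" in the idealized LP; the contribution of the chapter is that enforcing it architecturally also yields the sample- and out-of-setting-generalization benefits.)
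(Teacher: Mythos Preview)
Your proposal is correct and follows essentially the same symmetrization argument as the paper: define the permuted auctions $(g^{\Pi_n,\Pi_m},p^{\Pi_n,\Pi_m})$, verify each is feasible with the same revenue (via symmetry of $D$), average them, and use convexity of the constraints together with a reindexing argument to conclude the average is feasible, optimal, and equivariant. The only cosmetic difference is that the paper verifies the DSIC inequality directly for each permuted auction (a linear inequality, hence preserved under averaging), whereas you phrase the same step via convexity of the regret functional; the two are equivalent.
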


\autoref{prop:equiv_sol} is originally proved in \cite{daskalakis2012symmetries} and its proof is reminded in \autoref{app:thm} for completeness. It encourages to reduce the search space in \eqref{eq:exact_prob_equiv} by only optimizing over permutation-equivariant allocations and payments. We implement this idea in Section~\ref{sec:NN_arch_equiv} where we build equivariant neural network architectures. Before, we frame auction design as a learning problem. 

\subsection{Auction design as a learning problem}\label{par:auction}
Similarly to \cite{dutting2017optimal}, we formulate auction design as a learning problem. We learn a parametric set of auctions $(g^w,p^w)$ where $w\in \mathbb{R}^d$  parameters and $d\in\mathbb{N}$. Directly solving \eqref{eq:exact_prob_equiv} is challenging in practice. Indeed, the auctioneer must have access to the bidder valuations which are unavailable to her. Since she has access to the valuation distribution, we relax~\eqref{eq:exact_prob_equiv} and replace the IC constraint for all $V\in D$ by the expected constraint $\mathbb{E}_{V\sim D}[rgt_i(V)]=0$ for all $i\in N.$. In practice, the expectation terms are computed by sampling $L$ bidder valuation profiles drawn i.i.d. from $D$. The empirical ex-post regret for bidder $i$ is
\begin{align}
      \widehat{rgt}_i(w)  = \frac{1}{L}\sum_{\ell=1}^L \max_{\vec{v}_i\hspace{.02cm}'\in \mathbb{R}^m} u_i^w(\vec{v}_i^{(\ell)};(\vec{v}_i\hspace{.02cm}',V_{-i}^{(\ell)}))-u_i(\vec{v}_i^{(\ell)};(\vec{v}_i^{(\ell)},V_{-i}^{(\ell)})), \label{eq:emp_reg}\tag{$\hat{R}$}
\end{align}
 where $u_i^w(\vec{v}_i,B):=\sum_{j=1}^m [g_i^w(B)]_j v_{ij} -p_i^w(B)$ is the utility of bidder $i$ under the parametric set of auctions  $(g^w,p^w)$. Therefore, the learning formulation of \eqref{eq:exact_prob_equiv} is
 \begin{equation}\label{eq:empirical_prob_equiv}\tag{$\widehat{\mathrm{LP}}$}
\begin{aligned}
\underset{w\in \mathbb{R}^d}{\text{min}}
-\frac{1}{L}\sum_{\ell=1}^L \sum_{i=1}^n p_i^w(V^{(\ell)}) \quad \text{s.t.}\quad \widehat{rgt}_i(w)=0,\; \forall i \in N. 
\end{aligned}
\end{equation}

\cite{dutting2017optimal} justify the validity of this reduction from \eqref{eq:exact_prob_equiv} to \eqref{eq:empirical_prob_equiv} by showing that the gap between the expected regret and the empirical regret is small as the number of samples increases. Additionally to being DSIC, the auction must satisfy IR. The learning problem \eqref{eq:empirical_prob_equiv} does not ensure this but we will show how to include this requirement in the architecture in \autoref{sec:NN_arch_equiv}.

\section{A Permutation-equivariant neural network architecture}
\label{sec:NN_arch_equiv}

We first show that feed-forward architectures as RegretNet \citep{dutting2017optimal} may struggle to find a symmetric solution in auctions where the optimal solution is known to be symmetric.
We then describe our neural network architecture, EquivariantNet that learns symmetric auctions. EquivariantNet is build using exchangeable matrix layers \citep{hartford2018deep}. 

\subsection{Feed-forward nets and permutation-equivariance}\label{sec:ffpe}

In the following experiments we use the RegretNet architecture  with the exact same training procedure and parameters as found in \cite{dutting2017optimal} .

\paragraph{Permutation-sensitivity.}Given $L$ bidders valuation samples ${\{B^{(1)},\dots,B^{(L)}\} \in \mathbb{R}^{n\times m}}$,  we generate for each bid matrix $B^{(\ell)} $ all its possible permutations 
${B_{\Pi_n,\Pi_m}^{(\ell)}:=\Pi_{n}B^{(\ell)}\Pi_{m}},$ where ${\Pi_{n}\in \{0,1\}^{n\times n}}$ and ${\Pi_{m}\in \{0,1\}^{m\times m}}$ are permutation matrices.  We then compute the revenue for each one of these bid matrices and obtain a revenue matrix ${R\in \mathbb{R}^{n!m!\times L}}$. Finally, we compute $h_R \in \mathbb{R}^{L}$ where $[h_R]_j = \max_{i\in [n!m!]} R_{ij}-\min_{i\in [n!m!]}R_{ij}$. The distribution given by the entries of $h_R$ is a measure of how close the auction is to permutation-equivariance.
A symmetric mechanism satisfies $h_R = (0,\dots,0)^{\top}$. 
Our numerical investigation considers the following auction settings:
 \begin{itemize}
     \item[--] (I) One bidder and two items, the item values are drawn from $\mathcal{U}[0,1].$ Optimal revenue: 0.55 \cite{manelli2006bundling}.
    \item[--](II) Four bidders and five items, the item values are drawn from $\mathcal{U}[0,1].$ 
 \end{itemize}

\begin{figure}[h]
\begin{minipage}{.5\textwidth}
 \centering
  \includegraphics[width=1.05\linewidth]{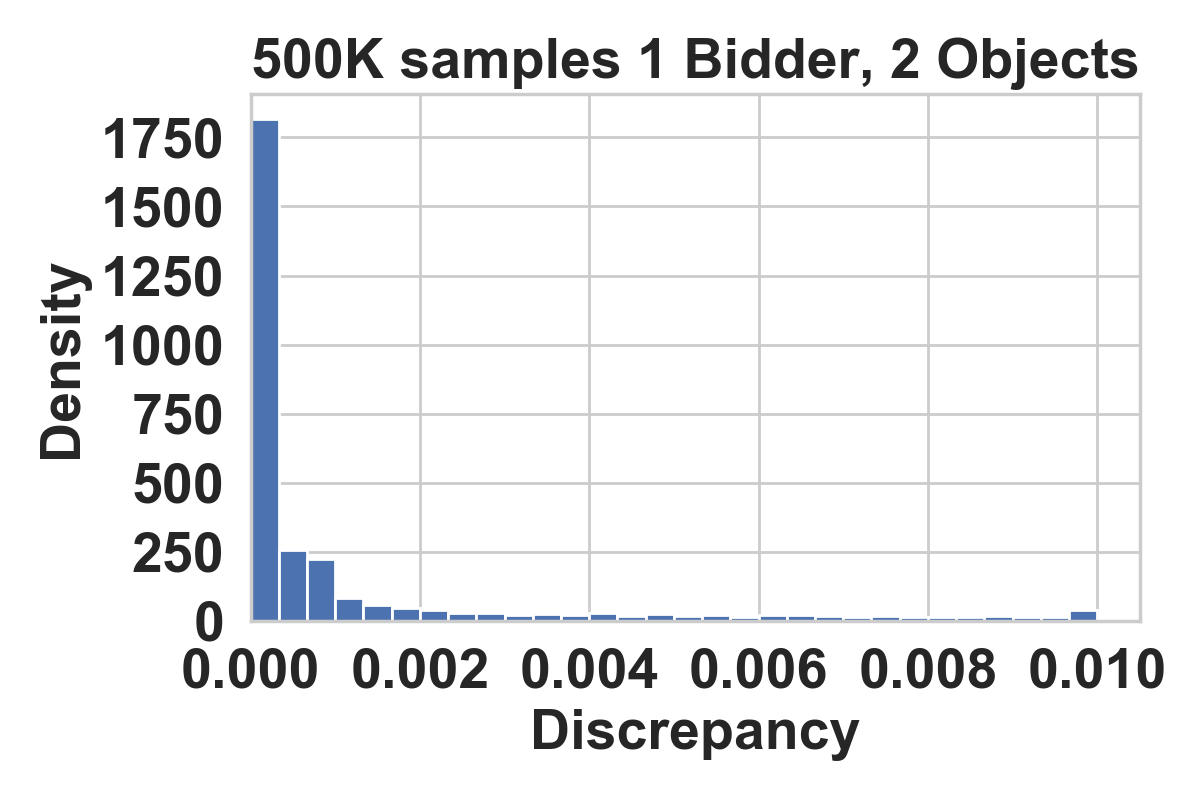}
\vspace*{-.3cm}
\captionsetup{labelformat=empty}
  \caption{(a)}
\end{minipage}%
\begin{minipage}{.5\textwidth}
 \centering
  \includegraphics[width=1.05\linewidth]{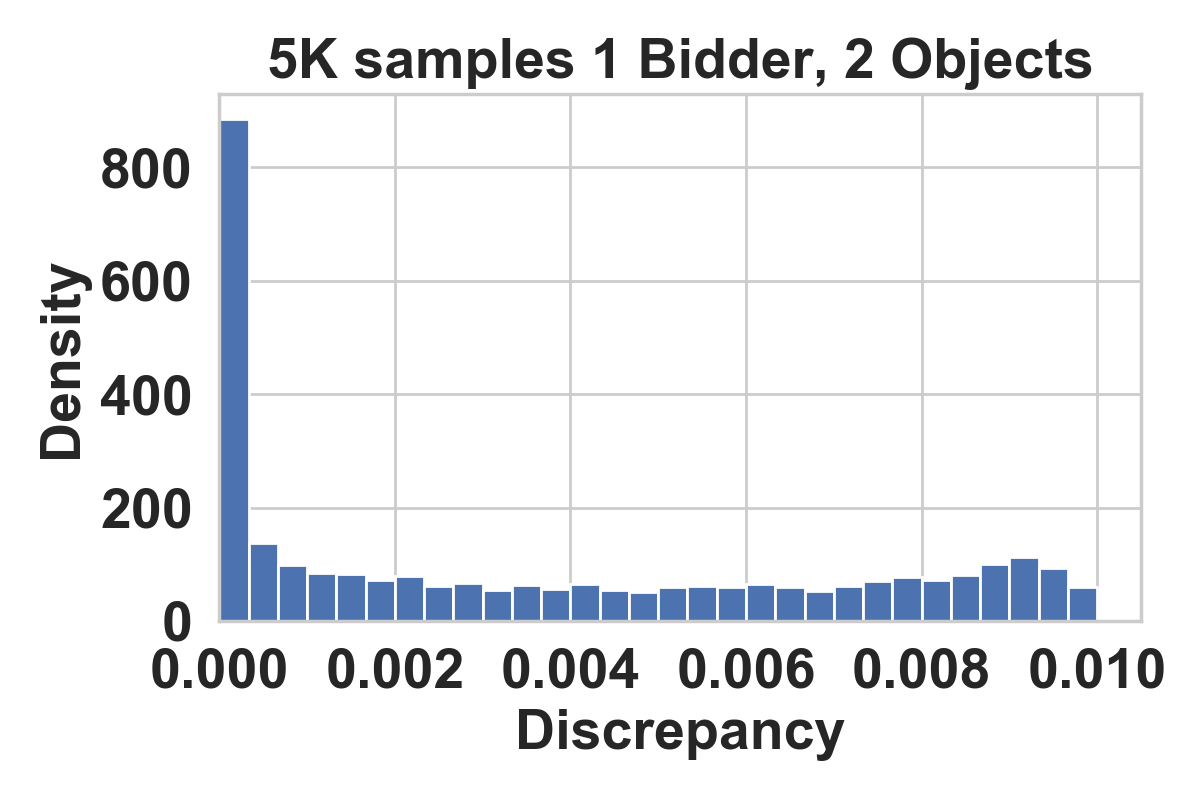}
  \vspace*{-.3cm}
  \captionsetup{labelformat=empty}
\caption{(b)}
\end{minipage}
\begin{minipage}{.5\textwidth}
 \centering
  \includegraphics[width=1.05\linewidth]{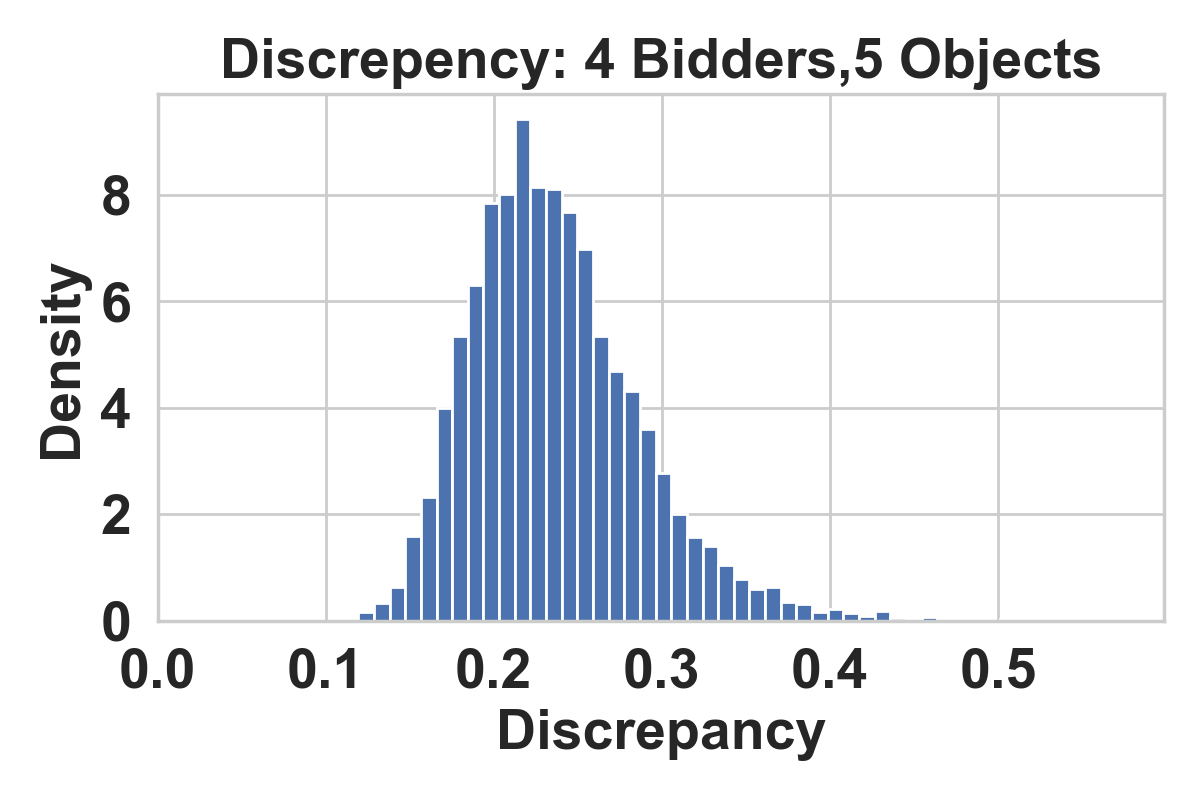}
\vspace*{-.6cm}
\captionsetup{labelformat=empty}
  \caption{(c)}
\end{minipage}%
\begin{minipage}{.5\textwidth}
 \centering
  \includegraphics[width=1.05\linewidth]{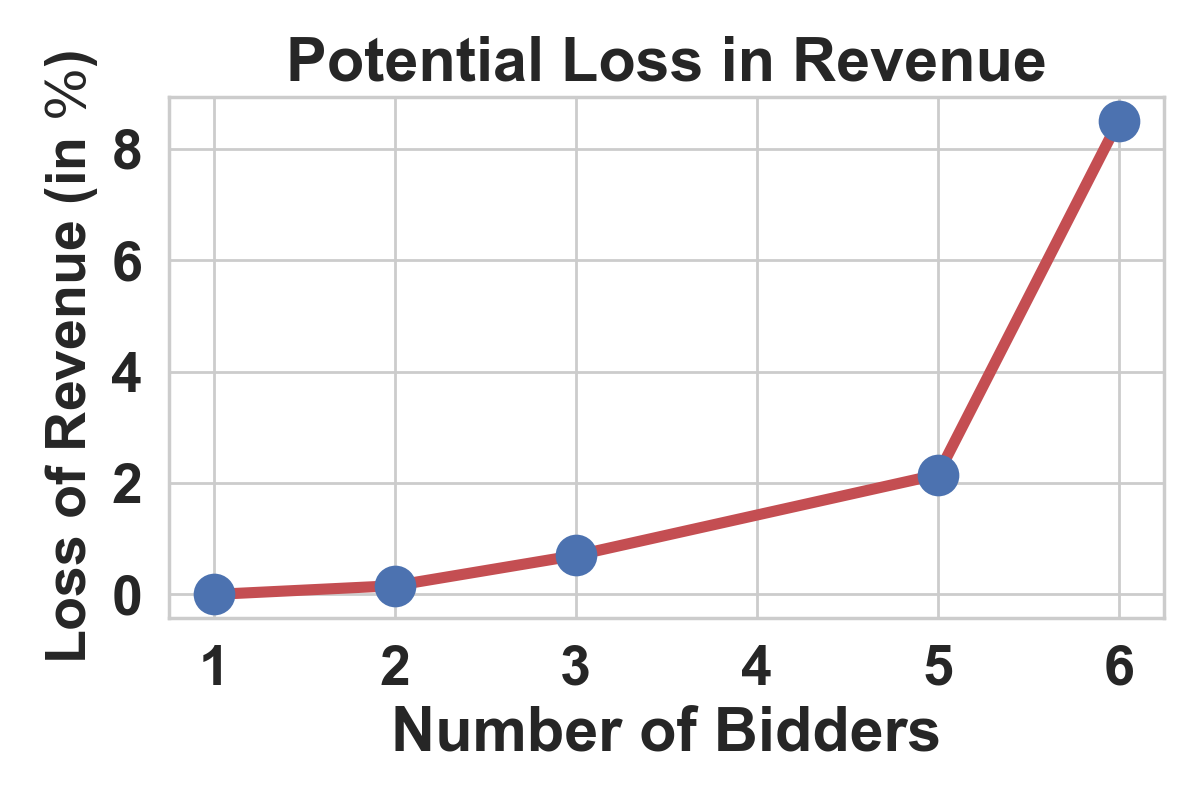}
  \vspace*{-.6cm}
  \captionsetup{labelformat=empty}
\caption{(d)}
\end{minipage}
\setcounter{figure}{0}    
\caption{(a)-(b): Distribution $h_R$ when varying the number of training samples (a) 500 000 (b) 5000 samples. (c): Histogram of the distribution $h_R$ for setting (II). (d): Maximum revenue loss when varying the number of bidders for setting (III$_n$)}.\label{fig:PE_FF}
\end{figure}

\autoref{fig:PE_FF}~(a)-(b) presents the distribution of $h_R$ of the optimal auction learned for setting (I) when varying the number of samples $L.$ When $L$ is large, the distribution is almost concentrated at zero and therefore the network is almost able to recover the permutation-equivariant solution. When $L$ is small, $h_R$ is less concentrated around zero and therefore, the solution obtained is non permutation-equivariant. \\

As the problem's dimensions increase, this lack of permutation-invariance becomes more dramatic. \autoref{fig:PE_FF}~(c) shows $h_R$ for the optimal auction mechanism learned for setting (II) when trained with $5\cdot 10^5$ samples. Contrary to (I), almost no entry of $h_R$ is located around zero, they are concentrated around between $0.1$ and $0.4$ i.e.\ between 3.8\% and 15\% of the estimated optimal revenue. 

\paragraph{Exploitability.} Finally, to highlight how important equivariant solutions are, we analyze the worst-revenue loss that the auctioneer can incur when the bidders act adversarially. Indeed, since different permutations can result in different revenues for the auction, cooperative bidders could pick among the $n!$ possible permutations of their labels the one that minimized the revenue of the mechanism and present themselves in that order. Instead of getting a revenue of $R_{opt} = \mathbb{E}_{V\sim D}\left[\sum_{i=1}^n p_i(V)\right]$, the auctioneer would get a revenue of $R_{adv} = \mathbb{E}_{V\sim D}\left[\min_{\Pi_n}\{\sum_{i=1}^n p_i(\Pi_n V)\}\right]$. The percentage of revenue loss is given by $ l = 100 \times \frac{R_{opt}-R_{adv}}{R_{opt}}$. We compute $l$ in 
in the following family of settings: 
\begin{itemize}
    \item[--] (III$_n$) $n$ additive bidders and ten item where the item values are drawn from $\mathcal{U}[0,1].$
\end{itemize}
In \autoref{fig:PE_FF}~(d) we plot $l(n)$ the loss in revenue as a function of $n$. As the number of bidders increases, the loss becomes more substantial getting over the 8\% with only 6 bidders.

While it is unlikely that all the bidders will collide and exploit the bidding mechanism in real life, these investigations of permutation sensitivity and exploitability give us a sense of how far the solutions found by RegretNet are from being bidder-symmetric. The underlying real problem with non bidder-symmetric solution has to do with fairness. RegretNet finds mechanisms that do not treat all bidders equally. Their row number in the bid matrix matters, two bidders with the same bids will not get the same treatment.  If the mechanism is equivariant however, all bidders will be treated equally by design, there are no biases or special treatments.  Aiming for symmetric auctions is important and
to this end, we design a permutation-equivariant architecture.

\begin{figure*}[t]
  \centering
  \includegraphics[width=\linewidth]{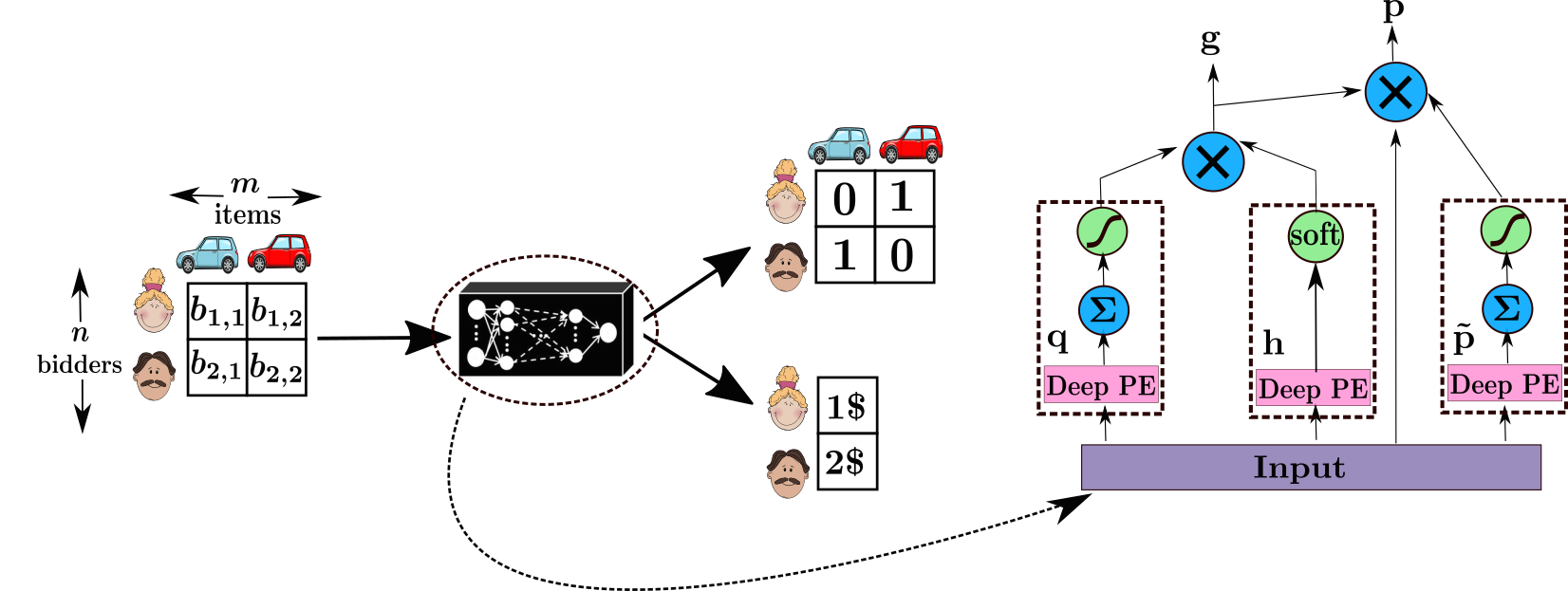}
  \caption{\textit{Left:} Auction design setting. \textit{Right:} EquivariantNet: Deep permutation-equivariant architecture for auction design. Deep PE denotes the deep permutation-equivariant architecture described in \autoref{sec:PE_net}, $\sum$ the sum over rows/columns operations, $\times$ the multiplication operations, soft stands for soft-max and the curve for sigmoid. The network outputs an allocation $g$ and a payment~$p.$
  }\label{fig:arch}
\end{figure*} 

\subsection{Architecture for symmetric auctions (EquivariantNet)}\label{sec:PE_net}
 Our input is a bid matrix $B = (b_{i,j})\in \mathbb{R}^{n\times m}$ drawn from a bidder-symmetric and item-symmetric distribution. We aim at learning a randomized allocation neural network $g^w\colon \mathbb{R}^{n\times m}\rightarrow [0,1]^{n\times m}$ and a payment network $p^w \colon \mathbb{R}^{n\times m}\rightarrow \mathbb{R}_{\geq 0}^n$. The symmetries of the distribution from which $B$ is drawn and \autoref{prop:equiv_sol} motivates us to model $g^w$ and $p^w$ as permutation-equivariant functions. To this end, we use \textit{exchangeable matrix layers} \citep{hartford2018deep} and their definition is reminded in Section~\ref{app:permeq_net}. We now describe the three modules of the allocation and payment networks \autoref{fig:arch}.\\
 
The first network outputs a vector $q^w(B) \in [0,1]^m$ such that entry $q_j^w(B)$ is the probability that item $j$ is allocated to any of the $n$ bidders. The architecture consists of three modules. The first one is a deep permutation-equivariant network with tanh activation functions. The output of that module is a matrix $Q\in \mathbb{R}^{n\times m}$. The second module transforms $Q$ into a vector $\mathbb{R}^{m}$ by taking the average over the rows of $Q$. We finally apply the sigmoid function to the result to ensure that $q^w(B) \in [0,1]^m$. This architecture ensures that $q^w(B)$ is invariant with respect to bidder permutations and equivariant with respect to items permutations.\\

The second network outputs a matrix $h(B) \in [0,1]^{n\times m}$ where $h_{ij}^w$ is the probability that item $j$ is allocated to bidder $i$ conditioned on item $j$ being allocated. The architecture consists of a deep permutation-equivariant network with tanh activation functions followed by softmax activation function so that $\sum_{i=1}^n h_{ij}^w(B) =  1$. This architecture ensures that $q^w$ equivariant with respect to object and bidder permutations.\\

By combining the outputs of $q^w$ and $h^w,$ we compute the allocation function $g^w\colon \mathbb{R}^{n\times m}\rightarrow [0,1]^{n\times m}$ where $g_{ij}(B)$ is the probability that the allocated item $j$ is given to bidder $i$. Indeed, using conditional probabilities, we have $g_{ij}^w(B)=q_j^w(B) h_{ij}^w(B).$ Note that $g^w$ is a permutation-equivariant function.\\

The third network outputs a vector $p(B) \in \mathbb{R}_{\geq 0}^n$ where $\tilde{p}_i^w $ is the fraction of bidder's $i$ utility that she has to pay to the mechanism. Given the allocation function $g^w$, bidder $i$ has to pay an amount  $p_i = \tilde{p}_i(B)\sum_{j=1}^m g_{ij}^w(B)B_{ij}$. 
Individual rationality is ensured by having $\tilde{p}_i\in [0,1]$.
The architecture of $\tilde{p}^w$ is almost similar to the one of $q^w$. Instead of averaging over the rows of the matrix output by the permutation-equivariant architecture, we average over the columns.

\subsection{Optimization and training}

The optimization and training procedure of EquivariantNet is similar to \cite{dutting2017optimal}. For this reason, we briefly mention the outline of this procedure and remind the details in Section~\ref{app:opt_train}. We apply the augmented Lagrangian method to  \eqref{eq:emp_reg}. The Lagrangian with a quadratic penalty is:

\begin{align*}
    \mathcal{L}_{\rho}(w;\lambda)&=-\frac{1}{L}\sum_{\ell=1}^L \sum_{i\in N} p_i^w(V^{(\ell)})+\sum_{i\in N}\lambda_i\widehat{rgt}_i(w)+\frac{\rho}{2}\sum_{i \in N}\left(\widehat{rgt}_i(w)\right)^2, 
\end{align*}
where $\lambda \in\mathbb{R}^n$ is a vector of Lagrange multipliers and $\rho>0$ is a fixed parameter controlling the weight of the quadratic penalty. The solver alternates between the updates on model parameters and Lagrange multipliers:  $w^{new}\in\mathrm{argmax}_w \mathcal{L}_{\rho}(w^{old},\lambda^{old})$ and   $\lambda_i^{new}=\lambda_i^{old}+\rho\cdot \widehat{rgt}_i(w^{new}), $ $\forall i \in N.$

\section{Experimental Results}\label{sec:num_exp}

We start by showing the effectiveness of our architecture in symmetric and asymmetric auctions. We then highlight its sample-efficiency for training and its ability to extrapolate to other settings. More details about the setup and training can be found in Section~\ref{app:opt_train} and Section~\ref{app:setup}.

\paragraph{Evaluation.} In addition to the revenue of the learned auction on a test set, we also evaluate the corresponding empirical average regret over bidders $\widehat{rgt} = \frac{1}{n} \sum_{i=1}^n \widehat{rgt}_i $. We evaluate these terms by running gradient ascent on $v_i'$ with a step-size of $0.001$ for $\{300,500\}$ iterations (we test $\{100,300\}$ different random initial $v_i'$ and report the one achieves the largest regret).

\paragraph{Known optimal solution.} We first consider instances of single bidder multi-item auctions where the optimal mechanism is known to be symmetric. While independent private value auction as (I) fall in this category, the following item-asymmetric auction has surprisingly an optimal symmetric solution.
\begin{itemize}
    \item (IV) One bidder and two items where the item values are independently drawn according to the probability densities $f_1(x)=5/(1+x)^6$ and $f_2(y)=6/(1+y)^7.$ Optimal solution in \cite{daskalakis2017strong}. 
 \end{itemize}

\begin{figure}[ht]
\begin{minipage}{.45\textwidth}
{\hspace{-.3cm}\resizebox{1\columnwidth}{!}{
\begin{tabular}{ cccc } 
\toprule
 Dist. & $rev$ & $rgt$ & OPT  \\
\midrule
  (I) & 0.551 & 0.00013  &0.550      \\ 
 (IV) & 0.173 & 0.00003 & 0.1706    \\
 (V) &  0.873 & 0.001 & 0.860 \\ 
\bottomrule
\end{tabular}}}
\vspace*{.7cm}
\captionsetup{labelformat=empty}
\caption*{(a)}
\end{minipage} \hspace{3em}
\begin{minipage}{.45\textwidth}
{\hspace{-.3cm}\resizebox{1.05\columnwidth}{!}{
\begin{tabular}{ ccccc } 
\toprule
\multicolumn{1}{c}{ }
& \multicolumn{2}{c}{ EquivariantNet}& \multicolumn{2}{c}{ RegretNet} \\
\cmidrule(lr){2-3}
\cmidrule(lr){4-5}
$\lambda_2$ & $rev$ & $rgt$ & $rev_F$ & $rgt_F$ \\
\cmidrule(lr){1-1}
\cmidrule(lr){2-2}
\cmidrule(lr){3-3}
\cmidrule(lr){4-4}
\cmidrule(lr){5-5}
 0.01 &  0.37 & 0.0006 & 0.39 & 0.0003\\
 0.1 & 0.41 &0.0004 &0.41 &0.0007 \\
 1 & 0.86 &0.0005 &0.84  & 0.0012 \\
 10 & 3.98 &0.0081 & 3.96 &0.0056\\
\bottomrule
\end{tabular}}}
\captionsetup{labelformat=empty}
\caption*{(b)}
\end{minipage}%
\caption{(a): Test revenue and regret found by EquivariantNet for settings (I), (IV) and (V). For setting (V)  OPT is the optimal revenue from VVCA and AMA$_{\mathrm{bsym}}$ families of auctions \citep{sandholm2015automated}. For settings (I) and (IV), OPT is the theoretical optimal revenue. (b): Test revenue/regret for setting (VI) when varying $\lambda_2$ ($\lambda_1=1$). $rev_F$ and $rgt_F$ are computed with RegretNet.}\label{fig:tables}

\end{figure}

\begin{figure}[!ht]
\centering
\begin{minipage}{\textwidth}
\centering
  \includegraphics[width=0.60\linewidth]{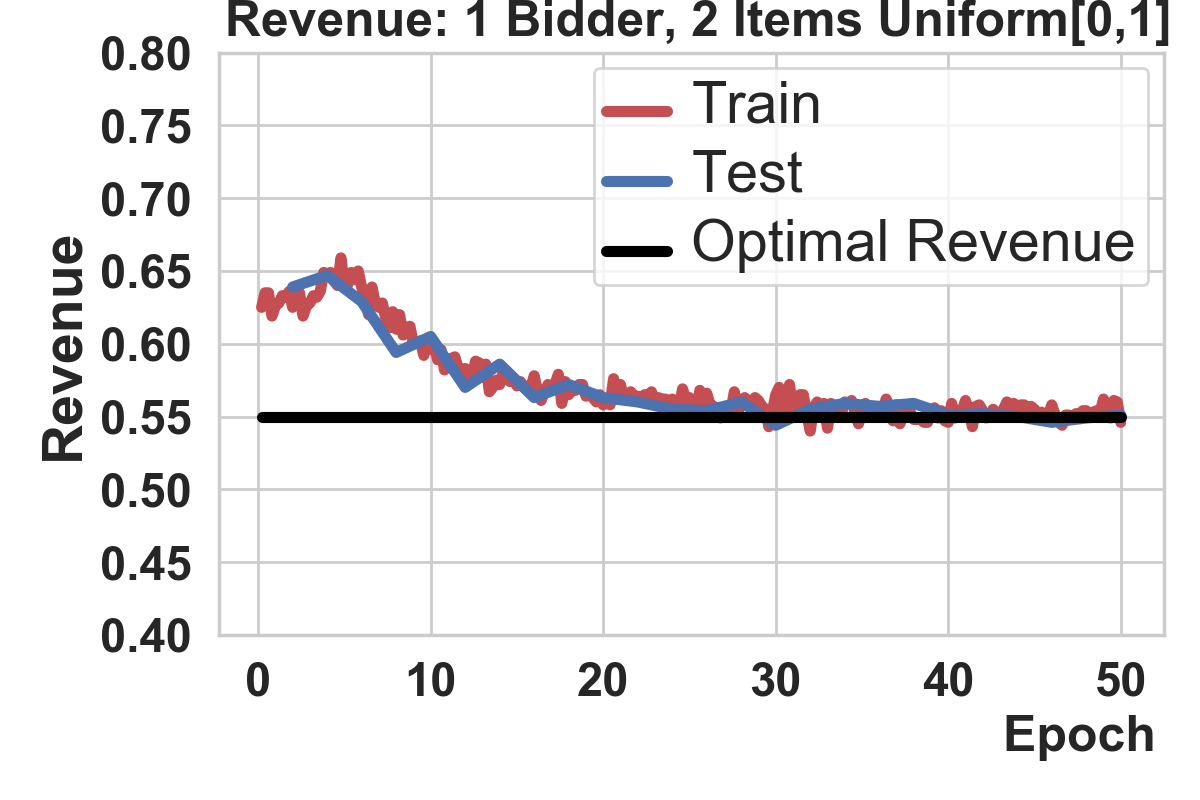}
  \captionsetup{labelformat=empty}
  \caption*{(a) Train/test revenue as a function of epochs for setting (I) for EquivariantNet. The revenue converges to the theoretical optimum (0.55).}
  
\bigskip

  \includegraphics[width=0.60\linewidth]{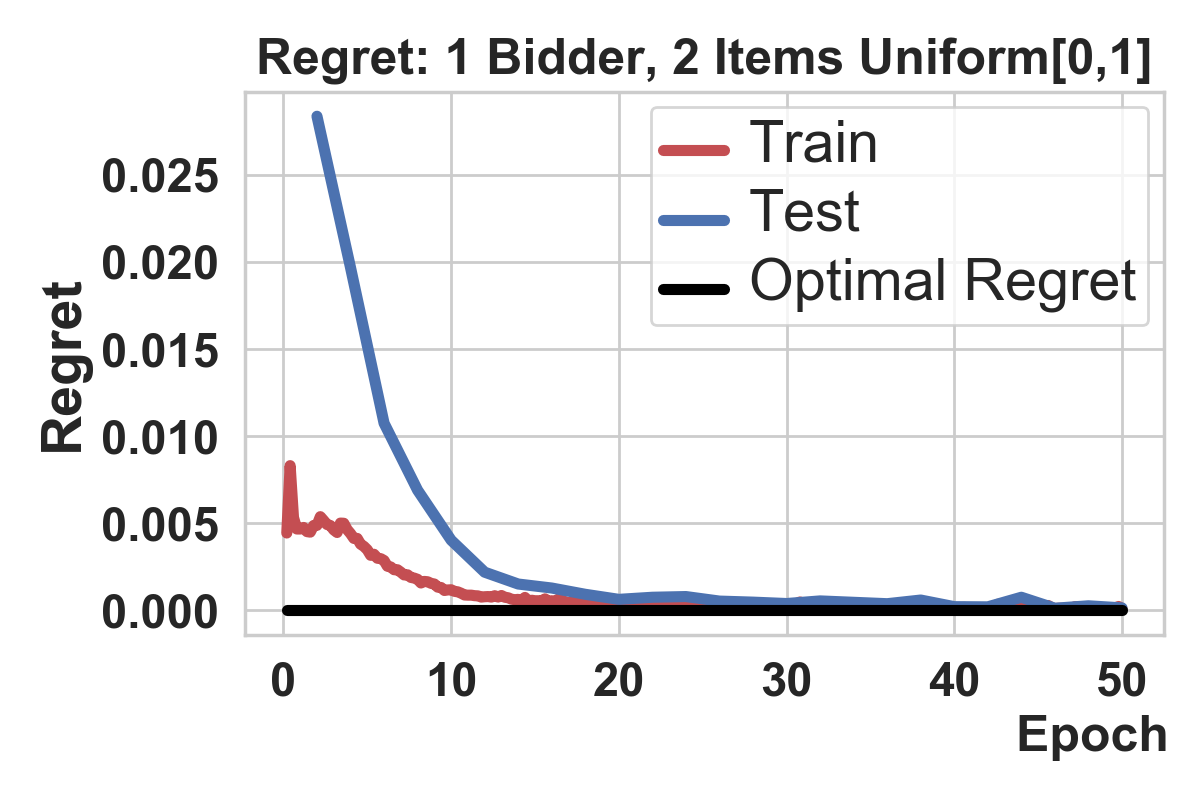}
  \captionsetup{labelformat=empty}
\caption*{(b) Train/test regret as a function of epochs for setting (I) for EquivariantNet. The regret converges to 0.}
\end{minipage}
\bigskip
\caption{ EquivariantNet learn the optimal auction for the setting (I). }
\label{fig:symm_auc}
\end{figure}

The two first lines in \autoref{fig:tables}(a) report the revenue and regret of the mechanism learned by our model. The revenue  is very close to the optimal one, and the regret is  negligible. Remark that the learned auction may achieve a revenue slightly above the optimal incentive compatible auction. This is possible because although small, the regret is non-zero. \autoref{fig:symm_auc}(a)-(b) presents a plot of revenue and regret as a function of training epochs for the setting (I).

\paragraph{Unknown optimal solution.}
Our architecture is also able to recover a permutation-equivariant solution in settings for which the optimum is not known analytically such as:

\begin{itemize}
    \item[--] (V) Two additive bidders and two items where bidders draw their value for each item from $\mathcal{U}[0,1].$  
\end{itemize}

 We compare our solution to the optimal auctions from the VVCA and AMA$_{\mathrm{bsym}}$ families of incentive compatible auctions from \citep{sandholm2015automated}. The last line of \autoref{fig:tables}(a) summarizes our results.
 
 \medskip

\paragraph{Non-symmetric optimal solution.} Our architecture returns satisfactory results in asymmetric auctions. (VI) is a setting where there may not be permutation-equivariant solutions. 

\begin{itemize}
    \item[--] (VI)  Two bidders and two items where the item values are independently drawn according to the probability densities $f_1(x)=\lambda_1^{-1}e^{-\lambda_1 x}$ and $f_2(y)=\lambda_2^{-1}e^{-\lambda_2 y},$ where $\lambda_1, \lambda_2>0.$ 
 \end{itemize}
 
\autoref{fig:tables}(b) shows the revenue and regret of the final auctions learned for setting~(VI). When $\lambda_1=\lambda_2,$ the auction is symmetric and so, the revenue of the learned auction is very close to the optimal revenue, with negligibly small regret. However, as we increase the gap between $\lambda_1$ and $\lambda_2,$ the asymmetry becomes dominant and the optimal auction does not satisfy permutation-equivariance. We remark that our architecture does output a solution with near-optimal revenue and small regret.

\paragraph{Sample-efficiency.} Our permutation-equivariant architecture exhibits solid generalization properties when compared to the feed-forward architecture RegretNet. When enough data is available at training, both architectures generalize well to unseen data and the gap between the training and test losses goes to zero. However, when fewer training samples are available, our equivariant architecture generalizes while RegretNet struggles to. This may be explained by the inductive bias in our architecture.\\

We demonstrate this for auction (V) with a training set of $20$ samples and  plot the training and test losses as a function of time (measures in epochs) for both architectures in \autoref{fig:general_exp}(a).


\begin{figure}[H]
\centering
\begin{minipage}{.7\textwidth}
  \includegraphics[width=\linewidth]{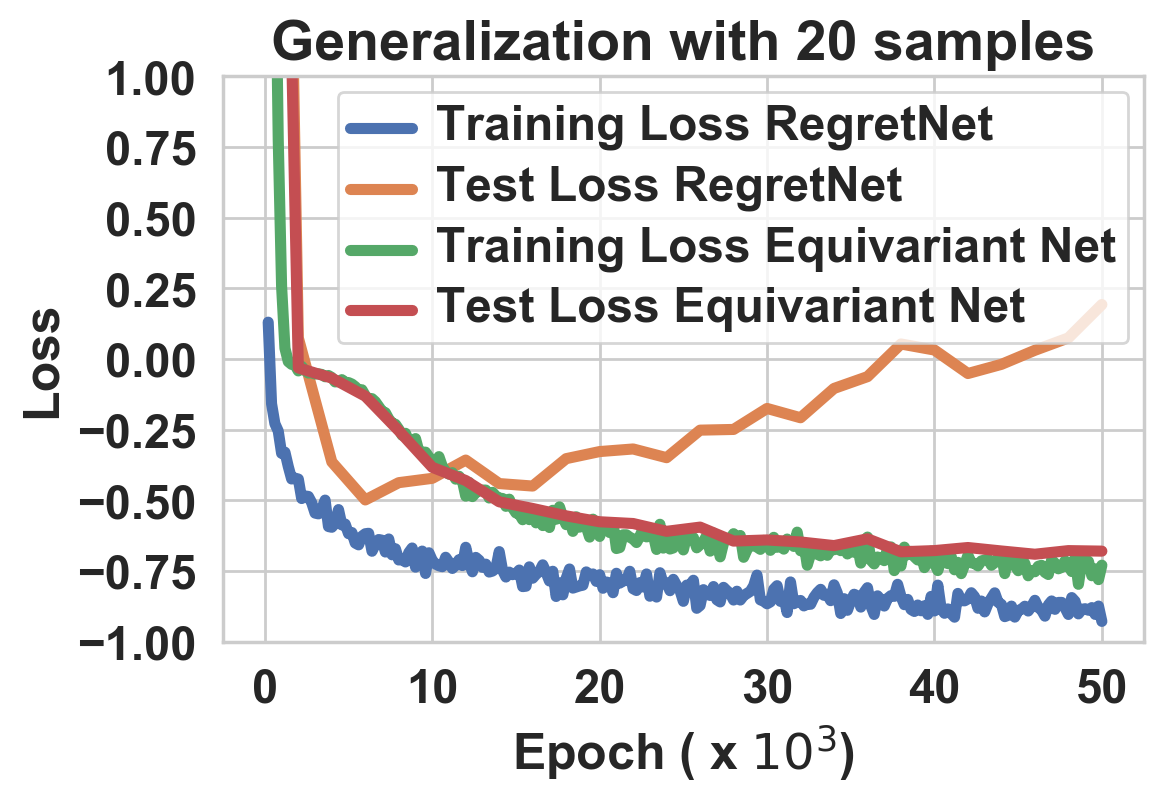}
\vspace*{-.4cm}
\captionsetup{labelformat=empty}
\end{minipage}
\caption{Train and test losses (the Lagrangian) for setting (V) with 20 training samples. RegretNet and EquivariantNet both achieve small losses on the training set, only EquivariantNet is generalizes to the testing set.   }\label{fig:general_exp}
\end{figure}

\paragraph{Out-of-setting generalization.} The number of parameters in our permutation equivariant architecture does not depend on the size of the input. Given an architecture that was trained on samples of size $(n,m)$, it is possible to evaluate it on samples of any size $(n',m')$ (More details in Section~\ref{app:permeq_net}). This  evaluation is not well defined for feed-forward architectures where the dimension of the weights depends on the input size. We use this advantage to check whether models trained in a fixed setting perform well in totally different ones. 
 
\begin{itemize}
\item[--]($\alpha$) Train an equivariant architecture on 1 bidder, 5 items and test it on 1 bidder, $n$ items for $n = 2 \cdots 10$. All the items values are sampled independently from $\mathcal{U}[0,1].$ 

\item[--]($\beta$) Train an equivariant architecture on 2 bidders, 3 objects and test it on 2 bidders, $n$ objects for $n= 2 \cdots 6$.  All the items values are sampled independently from $\mathcal{U}[0,1].$ 
\end{itemize}
\autoref{fig:general_exp_out}(a)-(b) reports the test revenue that we get for different values of $n$ in ($\alpha)$ and ($\beta$) and compares it to the empirical optimal revenue. Our baseline for that is RegretNet. Surprisingly, our model does generalize well. It is worth mentioning that knowing how to solve a larger problem such as $1 \times 5$ does not automatically result in a capacity to solve a smaller one such as $1\times 2$; the generalization does happen on both ends. Our approach looks promising regarding out of setting generalization. It generalizes well when the number of objects varies and the number of bidders remain constants. However, generalization to settings where the number of bidders varies is more difficult due to the complex interactions between bidders. We do not observe good generalization with our current method. 

\begin{figure}[h]
\begin{minipage}{.5\textwidth}
\centering
  \includegraphics[width=\linewidth]{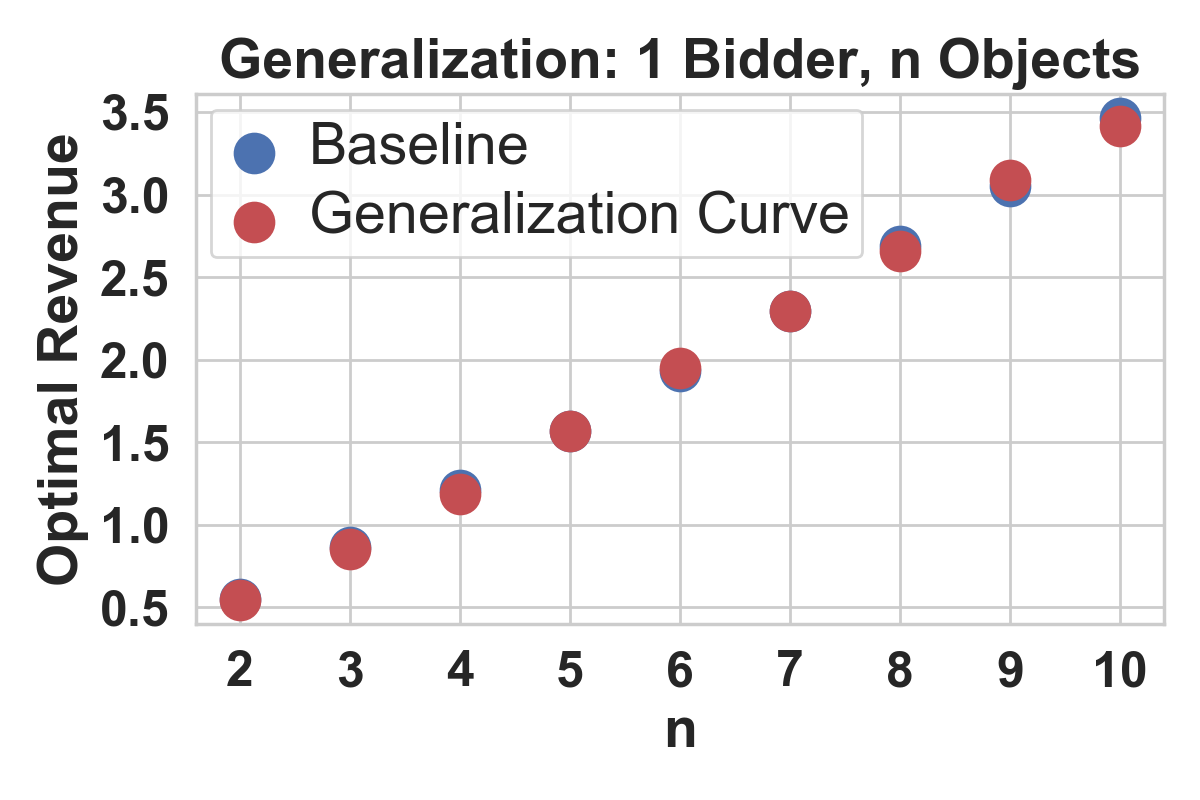}
\captionsetup{labelformat=empty}
\vspace*{-.4cm}
  \caption*{(a)}
\end{minipage}%
\begin{minipage}{.5\textwidth}
\centering
  \includegraphics[width=\linewidth]{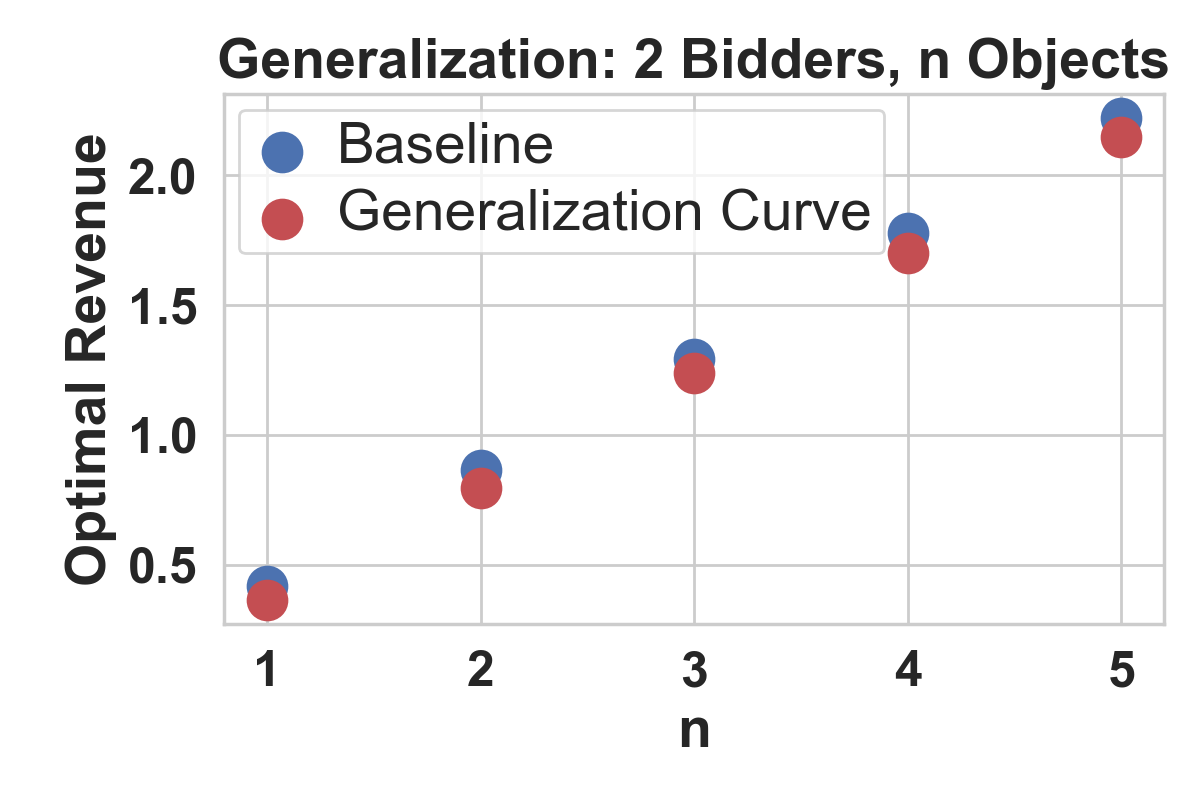}
  \vspace*{-.4cm}
  \captionsetup{labelformat=empty}
\caption*{(b)}
\end{minipage}
\caption{Generalization revenue of EquivariantNet in experiment $(\alpha)$ and $(\beta).$ Each baseline point is computed using a RegretNet architecture trained from scratch.  }\label{fig:general_exp_out}
\end{figure}

\section*{Conclusion}

We explored the effect of adding domain knowledge in neural network architectures for auction design. We built a permutation-equivariant architecture to design symmetric auctions and highlighted its multiple advantages. It recovers several known optimal results and provides competitive results in asymmetric auctions. Compared to fully connected architectures, it is more sample efficient and is able to generalize to settings it was not trained on. In a nutshell, this chapter insists on the importance of bringing domain-knowledge to the deep learning approaches for auctions.\\
 
Our architecture presents some limitations. It assumes that all the bidders and items are permutation-equivariant. However, in some real-world auctions, the item/bidder-symmetry only holds for a group of bidders/items. More advanced architectures such as Equivariant Graph Networks \citep{maron2018invariant} may solve this issue. Another limitation is that we only consider additive valuations. An interesting direction would be to extend our approach to other settings as unit-demand or combinatorial auctions.


\begin{subappendices}

\section{Permutation-equivariant network}\label{app:permeq_net}

In this section, we remind the  \textit{exchangeable matrix layers} introduced by \citet{hartford2018deep}. These layers are are a generalization of the deep sets architecture by  \citet{zaheer2017deep}. We briefly describe this architecture here and invite the reader to look at the original paper for details. \\

The architecture consists in several layers and each of them is constituted of multiple channels. Each layer is specified by the number of input channels $K$ channels and the number of outputs channels $O$. The input of such a layer is a tensor $B$ of size $(K,n,m)$ and the output is another tensor $Y$ of size  $(O,n,m)$.  The first element of these tensor is the channel number. In the following we will denote by $B^{(k)}_{i,j}$ the element of $B$ of index $(k,i,j)$ and similarly for  $Y^{(o)}_{i,j}$.\\

In addition to the $K$ and $O$, an exchangeable layer is defined by a set of five weights $w_1 , w_2, w_3, w_4 \in \mathbb{R}^{K \times O}$ and $w_5 \in \mathbb{R}$. Given these weights, the element $(i,j)$ of the $o$-th output channel $Y_{i,j}^{(o)}$ is given by:
\begin{equation}\label{eq:mult_channel}
\small
\begin{aligned}
\small
 Y_{i,j}^{(o)}=\sigma \Bigg ( &\sum_{k=1}^K w_1^{(k,o)}B_{i,j}^{(k)}+\frac{w_2^{(k,o)}}{n}\sum_{i'}B_{i',j}^{(k)} \\
 &+\frac{w_3^{(k,o)}}{m}\sum_{j'}B_{i,j'}^{(k)}       +\frac{w_4^{(k,o)}}{nm}\sum_{i',j'}B_{i',j'}^{(k)}+w_5^{(o)}
 \Bigg)
 \end{aligned}
\end{equation}
This layer preserved permutation-equivariance. This was first proven in \citet{hartford2018deep}.  Additionally, the number of parameters of each layer only depends on $K$ and $O$ is does not depend on the dimension of the input (i.e. $m$ and $n$). In particular, we can apply this exchangeable layer to any tensor of size $(K,n',m')$ for any value of $n'$ and $m'$ and the resulting output will be a tensor of size $(O,n',m')$.\\

We can compose these exchangeable layers as long as the number of channels of the output of one layer is equal to the number of input channels required by the following layer. By composing many such layer of this form we get a deep exchangeable neural network. This deep network preserved permutation-equivariance since this property is preserved by every layer. In addition, this network can be evaluated on an input of any dimension $n$ and $m$. We use this property of the network to test our mechanisms on settings with different number of bidders and objects. Without this property out of setting generalization not possible.
\clearpage
\section{Proof of \autoref{prop:equiv_sol}}\label{app:thm}
Notation: For a matrix $B \in \mathbb{R}^{nm}$ we will denote the $i$th line by $B_i \in \mathbb{R}^{m} $ or $[B]_i \in \mathbb{R}^{m}$.Let $D$ denote an equivariant distribution on $\mathbb{R}^{nm}$ .  Let $g\colon\mathbb{R}^{nm} \rightarrow \mathbb{R}^{nm}$ and $p\colon\mathbb{R}^{nm}\rightarrow \mathbb{R}^{n} $  be solutions to the following problem: 
$$p = \mbox{argmax}\,\, \mathbb{E}_{B \sim D} \left[ \sum_{i=1}^n p_i(B) \right]$$
subject to:
$$\langle\, [g(B)]_i \, , \,B_i \, \rangle \geq p_i(B),$$
and 
$$\langle  \,[g(B_i,B_{-i})]_i \,, \,B_i \,\rangle -  p_i(B_i,B_{-i}) \geq \langle \,[g(B'_i,B_{-i})]_i \,,\,B_i \,\rangle -  p_i(B'_i,B_{-i}), \,\, \,\, \forall B'_i \in \mathbb{R}^m .$$

Let $\Pi_n$ and $\Pi_m$ be two permutation matrices of sizes $n$ and $m$. In particular $\Pi_n$ and $\Pi_m$ are orthogonal matrices and in the following we use that $\Pi_n^{-1} =\Pi_n^{T} $ and $\Pi_m^{-1} =\Pi_m^{T} $. Let's define: 

\begin{equation*}
\begin{aligned}
g^{\Pi_{n},\Pi_{m}}(B) &= \Pi_{n}^{-1}\, g(\Pi_{n}\,B\, \Pi_{m}) \, \Pi_{m}^{-1} \\
p^{\Pi_{n},\Pi_{m}}(B) &= \Pi_{n}^{-1} \, p(\Pi_{n}\,B\, \Pi_{m}).
\end{aligned}
\end{equation*}

Let's prove that if $(g,p)$ is a solution to the problem then so is  $(g^{\Pi_{n},\Pi_{m}},p^{\Pi_{n},\Pi_{m}})$. First we show that $(g^{\Pi_{n},\Pi_{m}},p^{\Pi_{n},\Pi_{m}})$ still satisfy the previous constraints.
\begin{equation*}
\begin{aligned}
\langle [g^{\Pi_{n},\Pi_{m}}(B)]_i \, , \, B_i \, \rangle & = \langle [\Pi_{n}^{-1}g(\Pi_{n}\,B\,\Pi_{m})\Pi_{m}^{-1}]_i \, , \, B_i \, \rangle \\
& = \langle [\Pi_{n}^{-1}g(\Pi_{n}\,B\,\Pi_{m})]_i \Pi_{m}^{-1} \, , \, B_i \, \rangle \\
& = \langle [\Pi_{n}^{-1}g(\Pi_{n}\,B\,\Pi_{m})]_i \, , \, B_i \Pi_{m} \, \rangle \\
& = \langle [\Pi_{n}^{-1}g(\Pi_{n}\,B\,\Pi_{m})]_i \, , \, [B \Pi_{m}]_i \, \rangle \\
& = \langle [\Pi_{n}^{-1}g(\Pi_{n}\,B\,\Pi_{m})]_i \, , \, [B \Pi_{m}]_i \, \rangle.
\end{aligned}
\end{equation*}
Let's denote by $\phi$ the permutation on the indices corresponding to the $\Pi_{n}$ permutation. then we have: 

\begin{equation*}
\begin{aligned}
[\Pi_{n}^{-1}g(\Pi_{n}\,B\,\Pi_{m})]_i  &= [g(\Pi_{n}\,B\,\Pi_{m})]_{\phi^{-1}(i)}\\
[B \Pi_{m}]_i &= [\Pi_{n}B \Pi_{m}]_{\phi^{-1}(i)}.
\end{aligned}
\end{equation*}
This gives us that: 

\begin{equation*}
\begin{aligned}
\langle [g^{\Pi_{n},\Pi_{m}}(B)]_i \, , \, B_i \, \rangle & = \langle [\Pi_{n}^{-1}g(\Pi_{n}\,B\,\Pi_{m})]_i \, , \, [B \Pi_{m}]_i \, \rangle \\
& = \langle [g(\Pi_{n}\,B\,\Pi_{m})]_{\phi^{-1}(i)} \, , \, [\Pi_{n}B \Pi_{m}]_{\phi^{-1}(i)} \, \rangle \\ 
& \geq  [p(\Pi_{n}B \Pi_{m})]_{\phi^{-1}(i)}\\
& = [\Pi_{n}^{-1}p(\Pi_{n}B \Pi_{m})]_{i} \\
&= [p^{\Pi_{n},\Pi_{m}}(B)]_{i}.
\end{aligned}
\end{equation*}
This shows that $(g^{\Pi_{n},\Pi_{m}},p^{\Pi_{n},\Pi_{m}})$ satisfies the first constraint. We now move to the second constraint. \\

Let's write $\tilde{B} = (B'_{i},B_{-i})$. As a reminder, this is the matrix $B$ where the $i$th line has been replaced with $B'_i$. We need to show that: 

$$\langle  \,[g^{\Pi_{n},\Pi_{m}}(B)]_i \,, \,B_i \,\rangle -  p_i^{\Pi_{n},\Pi_{m}}(B) \geq \langle \,[g^{\Pi_{n},\Pi_{m}}(\tilde{B})]_i \,,\,B_i \,\rangle -  p^{\Pi_{n},\Pi_{m}}_i(\tilde{B}).$$

Using the previous computations we find that: 
$$\langle  \,[g^{\Pi_{n},\Pi_{m}}(B)]_i \,, \,B_i \,\rangle -  p_i^{\Pi_{n},\Pi_{m}}(B)  = \langle [g(\Pi_{n}\,B\,\Pi_{m})]_{\phi^{-1}(i)} \, , \, [\Pi_{n}B \Pi_{m}]_{\phi^{-1}(i)} \, \rangle - [p(\Pi_{n}B \Pi_{m})]_{\phi^{-1}(i)}, $$
where $\phi$ is the permutation associated with $\Pi_n$. Since $g$ and $p$ satisfy the second constraint we have:  
\begin{equation*}
\begin{aligned}    
\langle  \,[g^{\Pi_{n},\Pi_{m}}(B)]_i \,, \,B_i \,\rangle -  p_i^{\Pi_{n},\Pi_{m}}(B) & = \langle [g(\Pi_{n}\,B\,\Pi_{m})]_{\phi^{-1}(i)} \, , \, [\Pi_{n}B \Pi_{m}]_{\phi^{-1}(i)} \, \rangle - [p(\Pi_{n}B \Pi_{m})]_{\phi^{-1}(i)}\\
 &\geq  \langle [g(\Pi_{n}\,\tilde{B}\,\Pi_{m})]_{\phi^{-1}(i)} \, , \, [\Pi_{n}\tilde{B} \Pi_{m}]_{\phi^{-1}(i)} \, \rangle - [p(\Pi_{n}\tilde{B} \Pi_{m})]_{\phi^{-1}(i)}
 \\
 &= \langle  \,[g^{\Pi_{n},\Pi_{m}}(\tilde{B})]_i \,, \,B_i \,\rangle -  p_i^{\Pi_{n},\Pi_{m}}(\tilde{B}).
\end{aligned}  
\end{equation*}

This concludes the proof that $(g^{\Pi_{n},\Pi_{m}},p^{\Pi_{n},\Pi_{m}})$ satisfy the constraints. Now we have to show that $p^{\Pi_{n},\Pi_{m}}$ is optimal.

\begin{equation*}
\begin{aligned}
\mathbb{E}_{B \sim D} \left[ \sum_{i=1}^n p^{\Pi_{n},\Pi_{m}}(B) \right] &=  \mathbb{E}_{B \sim D} \left[ \langle \,  p^{\Pi_{n},\Pi_{m}}(B) \, , \bf{1} \, \rangle \right]\\
&=  \mathbb{E}_{B \sim D} \left[ \langle \,   \Pi_{n}^{-1} \, p(\Pi_{n}\,B\, \Pi_{m}) \, , \bf{1} \, \rangle \right] \\
&=  \mathbb{E}_{B \sim D} \left[ \langle \,   p(\Pi_{n}\,B\, \Pi_{m}) \, , \bf{1} \, \rangle \right]\\
&=  \mathbb{E}_{B \sim D} \left[ \langle \,   p(B) \, , \bf{1} \, \rangle \right]\\
&= \mathbb{E}_{B \sim D} \left[ \sum_{i=1}^n p_i(B) \right],
\end{aligned}
\end{equation*}
where we used that $\Pi_n^{-1} =\Pi_n^{T} $,  $ \Pi_n \bf{1} = \bf{1} $  and that $\Pi_{n}\,B\, \Pi_{m} \sim D $ since $D$ is an equivariant distribution. This shows that if $p$ is optimal then $p^{\Pi_{n},\Pi_{m}}$ is also optimal since they have the same expectation. We conclude that  $(g^{\Pi_{n},\Pi_{m}},p^{\Pi_{n},\Pi_{m}})$ is an optimal solution.  Let's define
\begin{equation*}
\begin{aligned}
\tilde{g}(B) &= \mathbb{E}_{\Pi_{n},\Pi_{m}} \left[ g^{\Pi_{n},\Pi_{m}}(B)\right] \\
\tilde{p}(B) &= \mathbb{E}_{\Pi_{n},\Pi_{m}} \left[ p^{\Pi_{n},\Pi_{m}}(B)\right].
\end{aligned}
\end{equation*}

Here, in the expectation, $\Pi_{n}$ and $\Pi_{m}$ are drawn uniformly at random. Since the problem and constraints are convex, $(\tilde{g},\tilde{p})$ is also an optimal solution to the problem as a convex combination of optimal solutions. Let's prove that $\tilde{g}$ and $\tilde{p}$ are equivariant functions.

\begin{equation*}
\begin{aligned}
\tilde{g}(\Pi_{n}\,B\,\Pi_{m}) &= \mathbb{E}_{\Pi'_{n},\Pi'_{m}} \left[ g^{\Pi'_{n},\Pi'_{m}}(\Pi_{n}\,B\,\Pi_{m})\right] \\ 
&= \mathbb{E}_{\Pi'_{n},\Pi'_{m}} \left[ {\Pi'_{n}}^{-1}g(\Pi'_{m}\,\Pi_{n}\,B\,\Pi_{m}\,\Pi'_{m}){\Pi'_{m}}^{-1}\right] \\ 
&= {\Pi_{n}}^{-1} \,\, \mathbb{E}_{\Pi'_{n},\Pi'_{m}} \left[ {(\Pi'_{n}\Pi_{n})}^{-1}g(\Pi'_{n}\,\Pi_{n}\,B\,\Pi_{m}\,\Pi'_{m}){(\Pi_{m}\Pi'_{m})}^{-1}\right]\,\, {\Pi_{m}}^{-1}. \\ 
\end{aligned}
\end{equation*}
If ${\Pi'_{n}}$ and ${\Pi'_{m}}$ are uniform among permutation then so is $\Pi'_{n}\Pi_{n}$ and $\Pi'_{m}\Pi_{m}$. So through a change of variable we find that: 

\begin{equation*}
\begin{aligned}
\tilde{g}(\Pi_{n}\,B\,\Pi_{m}) 
&= {\Pi_{n}}^{-1} \,\, \mathbb{E}_{\Pi'_{n},\Pi'_{m}} \left[ {\Pi'_{n}}^{-1}g(\Pi'_{n}\,B\,\Pi'_{m}){\Pi'_{m}}^{-1}\right]\,\, {\Pi_{m}}^{-1} \\ 
&= {\Pi_{n}}^{-1} \,\, \tilde{g}(B) \,\, {\Pi_{m}}^{-1}.
\end{aligned}
\end{equation*}
This shows that $\tilde{g}$ is equivariant. The proof that $\tilde{p}$ is equivariant is similar. 

\begin{equation*}
\begin{aligned}
\tilde{p}(\Pi_{n}\,B\,\Pi_{m}) &= \mathbb{E}_{\Pi'_{n},\Pi'_{m}} \left[ p^{\Pi'_{n},\Pi'_{m}}(\Pi_{n}\,B\,\Pi_{m})\right] \\ 
&= \mathbb{E}_{\Pi'_{n},\Pi'_{m}} \left[ {\Pi'_{n}}^{-1}p(\Pi'_{m}\,\Pi_{n}\,B\,\Pi_{m}\,\Pi'_{m})\right] \\ 
&= {\Pi_{n}}^{-1} \,\, \mathbb{E}_{\Pi'_{n},\Pi'_{m}} \left[ {(\Pi'_{n}\Pi_{n})}^{-1}p(\Pi'_{n}\,\Pi_{n}\,B\,\Pi_{m}\,\Pi'_{m})\right]. \\ 
\end{aligned}
\end{equation*}
By doing a change of variable as before we find: 
\begin{equation*}
\begin{aligned}
\tilde{p}(\Pi_{n}\,B\,\Pi_{m}) 
&= {\Pi_{n}}^{-1} \,\, \mathbb{E}_{\Pi'_{n},\Pi'_{m}} \left[ {\Pi'_{n}}^{-1}p(\Pi'_{n}\,B\,\Pi'_{m})\right] \\ 
&= {\Pi_{n}}^{-1} \,\, \tilde{p}(B),
\end{aligned}
\end{equation*}
$(\tilde{g},\tilde{p})$ is an equivariant optimal solution, this concludes the proof.

\clearpage
\section{Optimization and training procedures}\label{app:opt_train}

Our training algorithm is the same as the one found in \cite{dutting2017optimal}. We made that choice to better illustrate the intrinsic advantages of our permutation equivariant architecture.  We include implementation details here for completeness and additional details can be found in the original paper.\\

\begin{algorithm}[h]
\caption{Training Algorithm} 
\label{alg:langevin_dynamics}
\begin{algorithmic}[1]
\State \textbf{Input}: Minibatches $\mathcal{S}_1,\dots,\mathcal{S}_T$ of size $B$
\State \textbf{Parameters}: $\gamma >0, \, \eta >0, \, c>0, \, R\in \mathbb{N}, \,  T\in \mathbb{N}, \,T_{\rho}\in \mathbb{N}, \,T_{\lambda}\in \mathbb{N}.$
\State \textbf{Initialize Parameters}: $ \rho^0 \in \mathbb{R}, \, w^0\in \mathbb{R}^d, \, \lambda^0\in\mathbb{R}^n,  $
\State \textbf{Initialize Misreports:} ${v_i'}^{(\ell)}\in V_i, \,\, \forall \ell \in [B], \, i\in N.$ 
\For {$t=0,\dots,T$}
        \State Receive minibatch $\mathcal{S}_t = \{V^{(1)},\dots,V^{(B)}\}.$
        \For{$r=0,\dots,R$}
        \State
        \vspace{-1.1cm}
        
        \begin{align*}
            \hspace{.6cm}&\forall \ell \in [B], \, i\in n:\\
            &{v_i'}^{(\ell)} \leftarrow {v_i'}^{(\ell)}+\gamma \nabla_{v_i'}u_i^{w_t} ({v_i}^{(\ell)};({v_i'}^{(\ell)},V_{-i}^{(\ell)}))
        \end{align*}
        \EndFor
        
        \vspace{-.4cm}
        \State Get Lagrangian gradient using \eqref{eq:grad_Lag} and update $w^t$:\\
         \hspace{1cm} $w^{t+1}\leftarrow w^t-\eta \nabla_w \mathcal{L}_{\rho^t}(w^t). $

        \State Update $\rho$ once in $T_{\rho}$ iterations:
        \If{$t$ is a multiple of $T_{\rho}$}
        \State $\rho^{t+1} \leftarrow \rho^{t} + c $ 
        \Else
        \State $\rho^{t+1}\leftarrow \rho^t$
        \EndIf
        \State Update Lagrange multipliers once in $T_{\lambda}$ iterations:
        \If{$t$ is a multiple of $T_{\lambda}$}
        \State $\lambda_i^{t+1}\leftarrow \lambda_i^t + \rho^t \, \widehat{rgt}_i(w^{t}),\forall i\in N$
        \Else
        \State $\lambda^{t+1}\leftarrow \lambda^t$
        \EndIf
\EndFor
\end{algorithmic}
\end{algorithm}

We generate a training dataset of valuation profiles $\mathcal{S}$ that we then divide into mini-batches of size $B$. Typical sizes for $\mathcal{S}$ are $\{5000,50000,500000 \}$ and typical batch sizes are $\{50,500,50000\}$.  We train our networks over for several epochs (typically $\{50,80 \}$) and we apply a random shuffling of the training data for each new epoch. We denote the minibatch received at iteration $t$ by $\mathcal{S}_t=\{V^{(1)},\dots,V^{(B)}\}.$ The update on model parameters involves an unconstrained optimization of $\mathcal{L}_{\rho}$ over $w$ and is performed using a gradient-based optimizer. Let $\widehat{rgt}_i(w)$ be the empirical regret in \eqref{eq:emp_reg} computed on mini-batch $\mathcal{S}_t.$ The gradient of $\mathcal{L}_{\rho}$ with respect to $w$ is given by: 
\vspace*{-.27cm}
\begin{equation}\label{eq:grad_Lag}
\begin{split}
    \nabla_w \mathcal{L}_{\rho}(w)&=-\frac{1}{B}\sum_{\ell=1}^B\sum_{i \in N} \nabla_w p_i^w(V^{(\ell)})\\
    &\hspace{-.5cm}+\sum_{i \in N}\sum_{\ell=1}^B\lambda_i^t g_{\ell,i}+\rho_t \sum_{i \in N}\sum_{\ell=1}^B\widehat{rgt}_i(w)g_{\ell,i}, 
\end{split}
\end{equation}
where 
\begin{align*}
    g_{\ell,i}=\nabla_w\left[\max_{v_i'\in V_i} u_i^w(v_i^{(\ell)};(v_i',V_{-i}^{(\ell)}))-u_i^w(v_i^{(\ell)};(v_i^{(\ell)},V_{-i}^{(\ell)}))\right].
\end{align*}
The terms $\widehat{rgt}_i$ and $g_{\ell,i}$ requires us to compute the maximum over misreports for each bidder $i$ and valuation profile $\ell$. To compute this maximum we optimize the function $v_i' \to u_i^w(v_i^{(\ell)};(v_i',V_{-i}^{(\ell)}))$ using another gradient based optimizer. \\

For each $i$ and valuation profile $\ell$, we maintain a misreports valuation ${v_i'}^{(\ell)}$.  For every update on the model parameters $w^t,$  we perform $R$ gradient updates to compute the optimal misreports: ${v_i'}^{(\ell)}={v_i'}^{(\ell)}+\gamma \nabla_{{v_i'}^{(\ell)}}u_i^w(v_i^{(\ell)};({v_i'}^{(\ell)},V_{-i}^{(\ell)})),$ for some $\gamma >0.$  In our experiments, we use the Adam optimizer \citep{kingma2014adam} for updates on model $w$ and ${v_i'}^{(\ell)}. $ Typical values are $R = 25$ and $\gamma = 0.001$ for the training phase.  During testing, we use a larger number of step sizes $R_{test}$ to compute these optimal misreports and we try bigger number initialization, $N_{init}$, that are drawn from the same distribution of the valuations. Typical values are $R_{test} = \{200,300\}$ and $N_{init} = \{100,300\}$.
When the valuations are constrained to an interval (for instance $[0,1]$), this optimization inner loop becomes constrained and we make sure that the values we get for  $v_i'$ are realistic by projecting them to their domain after each gradient step. \\

The parameters $\lambda^t$ and $\rho_t$ in the Lagrangian are not constant but they are updated over time.  $\rho_t$ is initialized at a value $\rho_0$ is incremented every $T_{\rho}$ iterations, $\rho_{t+1} \leftarrow \rho_{t} + c $. Typical values are $\rho_{0} = \{0.25,1\}$,  $c = \{0.25,1, 5\}$ and $T_{\rho} = \{2, 5\}$ epochs. $\lambda_t$ is initialized at a value $\lambda_0$ is updates every $T_{\lambda}$ iterations according to $\lambda_i^{t+1} \leftarrow \lambda_i^t + \rho_t \, \widehat{rgt}_i(w^{t}),\forall i\in N$.  Typical values are $\lambda^{0}_i = \{0.25,1,5\}$ and $T_{\lambda} = \{2\}$ iterations.

\section{Setup} \label{app:setup}
 
We implemented our experiments using PyTorch. A typical deep exchangeable network consists of 3 hidden layers of 25 channels each. Depending on the experiment, we generated a dataset of $\{5000,50000,500000\}$ valuation profiles and chose mini batches of sizes $\{50,500,5000\}$ for training. 
The optimization of the augmented Lagrangian was typically run for $\{50, 80\}$ epochs. The value of $\rho$ in the augmented Lagrangian was set to $1.0$ and incremented every $2$ epochs. An update on $w^t$ was performed for every mini-batch using the Adam optimizer with a learning rate of $0.001$. For each update $w^t$, we ran $R=25$ misreport update steps with a learning rate of $0.001.$ An update on $\lambda^t$ was performed once every $100$ minibatches.

\end{subappendices}

\chapter{Auction Learning as a Two Player Game}
\label{chap:ALGnet}

Designing an incentive compatible auction that maximizes expected revenue is a central problem in Auction Design. While theoretical approaches to the problem have hit some limits, a recent research direction initiated by \cite{dutting2017optimal} consists in building neural network architectures to find optimal auctions. We propose two conceptual deviations from their approach which result in enhanced performance. First, we use recent results in theoretical auction design to introduce a \emph{time-independent} Lagrangian. This not only circumvents the need for an expensive hyper-parameter search (as in prior work), but also provides a single metric to compare the performance of two auctions (absent from prior work). Second, the optimization procedure in previous work uses an inner maximization loop to compute optimal misreports.  We amortize this process through the introduction of an additional neural network. We demonstrate the effectiveness of our approach by learning competitive or strictly improved auctions compared to prior work. Both results together further imply a novel formulation of Auction Design as a two-player game with stationary utility functions.

\section{Introduction}

Efficiently designing truthful auctions is a core problem in Mathematical Economics. Concrete examples include the sponsored search auctions conducted by companies as Google or auctions run on platforms as eBay. Following seminal work of Vickrey~\citep{vickrey1961counterspeculation} and Myerson~\citep{myerson1981optimal}, auctions are typically studied in the \emph{independent private valuations} model: each bidder has a valuation function over items, and their payoff depends only on the items they receive. Moreover, the auctioneer knows aggregate information about the population that each bidder comes from, modeled as a distribution over valuation functions, but does not know precisely each bidder's valuation (outside of any information in this Bayesian prior). A major difficulty in designing auctions is that valuations are private and bidders need to be incentivized to report their valuations truthfully.  The goal of the auctioneer is to design an incentive compatible auction which maximizes expected revenue.

Auction Design has existed as a rigorous mathematical field for several decades and yet, complete characterizations of the optimal auction only exist for a few settings. While Myerson's Nobel prize-winning work provides a clean characterization of the single-item optimum \citep{myerson1981optimal}, optimal \emph{multi-item} auctions provably suffer from numerous formal measures of intractability (including computational intractability, high description complexity, non-monotonicity, and others)~\citep{daskalakis2014complexity, ChenDPSY14, ChenDOPSY15, ChenMPY18, HartR15, Thanassoulis04}.

 An orthogonal line of work instead develops deep learning architectures to find the optimal auction.  \cite{dutting2017optimal} initiated this direction by proposing RegretNet, a feed-forward architecture.  They frame the auction design problem as a constrained learning problem and lift the constraints into the objective via the augmented Lagrangian method. Training RegretNet involves optimizing this Lagrangian-penalized objective, while simultaneously updating network parameters and the Lagrangian multipliers themselves. This architecture produces impressive results: recovering near-optimal auctions in several known multi-item settings, and discovering new mechanisms when a theoretical optimum is unknown. 

  Yet, this approach presents several limitations. On the conceptual front, our main insight is a connection to an exciting line of recent works~\citep{HartlineL10, HartlineKM11,BeiH11,daskalakis2012symmetries, rubinstein2018simple, DughmiHKN17, CaiOVZ19} 
 on $\varepsilon$-truthful-to-truthful reductions.\footnote{By $\epsilon$-truthful, we mean the expected total regret $R$ is bounded by  $\epsilon$. See \autoref{prop:main} for a definition of $R$. } On the technical front, we identify three areas for improvement. First, their architecture is difficult to train in practice as the objective is non-stationary. Specifically, the Lagrangian multipliers are time-dependent and they increase following a pre-defined schedule, which requires careful hyperparameter tuning (see~\autoref{subsec:dutt} for experiments illustrating this). Leveraging the aforementioned works in Auction Theory, we propose a \emph{stationary} Lagrangian objective. Second, all prior work inevitably finds auctions which are not \emph{precisely} incentive compatible, and does not provide a metric to compare, say, an auction with revenue $1.01$ which is $0.002$-truthful, or one with revenue $1$ which is $0.001$-truthful. We argue that our stationary Lagrangian objective serves as a good metric (and that the second auction of our short example is ``better'' for our metric). Finally, their training procedure requires an inner-loop optimization (essentially, this inner loop is the bidders trying to maximize utility in the current auction), which is itself computationally expensive. We use amortized optimization to make this process more efficient.

 \section*{Contributions}

This chapter leverages recent work in Auction Theory to formulate the learning of revenue-optimal auctions as a two-player game. We develop a new algorithm ALGnet (Auction Learning Game network) that produces competitive or better results compared to \cite{dutting2017optimal}'s RegretNet. In addition to the conceptual contributions, our approach yields the following improvements (as RegretNet is already learning near-optimal auctions, our improvement over RegretNet is not due to significantly higher optimal revenues).

\begin{itemize}
    \item[--] \textit{Easier hyper-parameter tuning}: By constructing a time-independent loss function, we circumvent the need to search for an adequate parameter scheduling.  Our formulation also involves less hyperparameters, which makes it more robust.

    \item[--] \textit{A metric to compare auctions}: We propose a metric to compare the quality of two auctions which are not incentive compatible.

    \item[--] \textit{More efficient training}: We replace the inner-loop optimization of prior work with a neural network, which makes training more efficient. 

    \item[--] \textit{Online auctions}: Since the learning formulation is time-invariant, ALGnet is able to quickly adapt in auctions where the bidders' valuation distributions varies over time. Such setting appears for instance in the online posted pricing problem studied in \citet{bubeck2017online}. 
\end{itemize}

 Furthermore, these technical contributions together now imply a novel formulation of auction learning as a two-player game (not zero-sum) between an auctioneer and a misreporter. The auctioneer is trying to design an incentive compatible auction that maximizes revenue while the misreporter is trying to identify breaches in the truthfulness of these auctions. 

The chapter decomposes as follows. Section~\ref{sec:setting} introduces the standard notions of auction design. Section~\ref{sec:dutt} presents our game formulation for auction learning. Section~\ref{sec:NN_arch} provides a description of ALGnet and its training procedure. Finally, Section~\ref{sec:exps} presents numerical evidence for the effectiveness of our approach.

\section*{Related work}

\paragraph{Auction design and machine learning.} Machine learning and computational learning theory have been used in several ways to design auctions from samples of bidder valuations. Machine learning has been used to analyze the sample complexity of designing optimal revenue-maximizing auctions. This includes the framework of single-parameter settings \citep{morgenstern2015pseudo,huang2018making, HartlineT19,RoughgardenS16, GonczarowskiN17, GuoHZ19},  multi-item auctions \citep{dughmi2014sampling, GonczarowskiW18}, combinatorial auctions \citep{balcan2016sample,morgenstern2016learning,syrgkanis2017sample} 
and allocation mechanisms \citep{narasimhan2016general}. Other works have leveraged machine learning to optimize different aspects of mechanisms \citep{lahaie2011kernel,dutting2015payment}. Our approach is different as we build a deep learning architecture for auction design. 

\paragraph{Auction design and deep learning.} While \cite{dutting2017optimal} is the first paper to design auctions through deep learning, several other paper followed-up this work. \cite{feng2018deep} extended it to budget constrained bidders, \cite{golowich2018deep} to the facility location problem. \cite{tacchetti2019neural} built architectures based on the Vickrey-
Clarke-Groves mechanism. \cite{rahme2020permeq} used permutation-equivariant networks to design symmetric auctions.  \cite{shen2019automated} and \cite{dutting2017optimal} proposed architectures that \textit{exactly} satisfy incentive compatibility but are specific to \textit{single-bidder} settings. While all the previously mentioned papers consider a non-stationary objective function, we formulate a time-invariant objective that is easier to train and that makes comparisons between mechanisms possible.

\section{Auction design as a time-varying learning problem}\label{sec:setting}

We first review the framework of auction design and the problem of finding truthful mechanisms. We then recall the learning problem proposed by \cite{dutting2017optimal} to find optimal auctions. 

\subsection{Auction design and linear program}

\paragraph{Auction design.} We consider an auction with $n$ bidders and $m$ items. We will denote by $N=\{1,\dots,n\}$ and $M=\{1,\dots,m\}$ the set of bidders and items. Each bidder $i$ values item $j$ at a valuation denoted $v_{ij}$. We will focus on \emph{additive} auctions. These are auctions where the value of a set $S$ of items is equal to the sum of the values of the elements in that set at $\sum_{j \in S} v_{ij}$. Additive auctions are perhaps the most well-studied setting in multi-item auction design~\citep{HartN12, LiY13,daskalakis2014complexity, CaiDW16,daskalakis2017strong}.

The auctioneer does not know the exact valuation profile $V = (v_{ij})_{i\in N, j\in M}$ of the bidders in advance but he does know the distribution from which they are drawn: the valuation vector of bidder $i$, $\vec{v}_i=(v_{i1}, \dots, v_{im})$ is drawn from a distribution $D_i$ over $\mathbb{R}^m$. We will further assume that all bidders are independent and that $D_1 = \cdots = D_n $. As a result V is drawn from $D:= \otimes_{i=1}^n D_i = D_1^{\otimes^n}$.

\begin{definition}
 An auction is defined by a randomized allocation rule $g=(g_1,\dots,g_n)$ and a payment rule $p=(p_1,\dots,p_n)$ where $g_i\colon \mathbb{R}^{n\times m}\rightarrow [0,1]^{m}$ and $p_i\colon \mathbb{R}^{n\times m}\rightarrow \mathbb{R}_{\geq 0}$. Additionally for all items $j$ and valuation profiles $V$, the $g_i$ must satisfy $\sum_i [g_i(V)]_j\leq 1$.
 \end{definition}
 
Given a bid matrix $B=(b_{ij})_{i\in N, j\in M}$, $[g_i(B)]_j$
 is the probability that bidder $i$ receives object $j$ and $p_i(B)$ is the price bidder $i$ has to pay to the auction. The condition $\sum_i [g_i(V)]_j\leq 1$ allows the possibility for an item to be not allocated. 
 
 \begin{definition}
 The utility of bidder $i$ is defined by $u_i(\vec{v}_i,B)= \sum_{j=1}^m [g_i(B)]_j v_{ij} -p_i(B).$ 
\end{definition}

Bidders seek to maximize their utility and may report bids that are different from their true valuations.
In the following, we will denote by $B_{-i}$ the $(n-1)\times m$ bid matrix without bidder $i$, and by $(\vec{b}_i', B_{-i})$ the $n\times m$ bid matrix that inserts $\vec{b}_i'$ into row $i$ of $B_{-i}$ (for example: $B:= (\vec{b}_i, B_{-i})$.
We aim at auctions that incentivize bidders to bid their true valuations.

\begin{definition}\label{def:DSIC}
An auction $(g,p)$ is \textit{dominant strategy incentive compatible} (DSIC) if each bidder's utility is maximized by reporting truthfully no matter what the other bidders report. For every bidder $i,$ valuation $\vec{v}_i \in D_i$, bid $\vec{b}_i\hspace{.02cm}'\in D_i$ and bids $B_{-i}\in D_{-i}$,  $u_i(\vec{v}_i,(\vec{v}_i,B_{-i}))\geq u_i(\vec{v}_i,(\vec{b}_i\hspace{.02cm}',B_{-i})). $ 
\end{definition}
 
\begin{definition}\label{def:IR}
An auction is \textit{individually rational} (IR) if for all $i\in N, \; \vec{v}_i\in D_i$ and $B_{-i}\in D_{-i},$
\begin{equation}\label{eq:IR_eq}\tag{IR}
   u_i(\vec{v}_i,(\vec{v}_i,B_{-i}))\geq 0.  
\end{equation}
\end{definition}

In a DSIC auction, the bidders have the incentive to truthfully report their valuations and therefore, the revenue on valuation profile $V$ is $ \sum_{i=1}^n p_i(V).$ Optimal auction design aims at finding a DSIC and IR auction that maximizes the expected revenue $rev:= \mathbb{E}_{V\sim D}[\sum_{i=1}^n p_i(V)]$.  Since there is no known characterization of DSIC mechanisms in the multi-item setting, we resort to the relaxed notion of \textit{ex-post regret}. It measures the extent to which an auction violates DSIC.

\begin{definition}
The ex-post regret for a bidder $i$ is the maximum increase in his utility when considering all his possible bids and fixing the bids of others. For a valuation profile $V$, it is given by $r_{i}(V)=\max_{\vec{b}_i\hspace{.02cm}'\in \mathbb{R}^m} u_i(\vec{v}_i,(\vec{b}_i\hspace{.02cm}',V_{-i})) - u_i(\vec{v}_i,(\vec{v}_i,V_{-i}))$.  In particular, DSIC is equivalent to 
\begin{equation}\label{eq:regretDSIC}\tag{IC} 
    r_{i}(V)=0, \;  \forall i \in N,  \forall \, V \in D.
\end{equation}

The bid $\vec{b}_i'$ that achieves $r_i(V)$ is called the optimal misreport of bidder $i$ for valuation profile $V$.
\end{definition}

Therefore, finding an optimal auction is equivalent to the following linear program:

\begin{equation*}\tag{LP}\label{eq:exact_prob}
\begin{aligned}
\hspace{-.51cm} \vspace{-1.81cm} \underset{(g,p)\in \mathcal{M} }{\text{min}}
  - \mathbb{E}_{V\sim D}\left[\sum_{i=1}^n p_i(V)\right] \quad \text{s.t.} \quad 
&   r_{i}(V)=0, \hspace{1.55cm} \forall ~i \in N,\;  \forall ~V \in D, \\[-3.5\jot]
& u_i(\vec{v}_i,(\vec{v}_i,B_{-i}))\geq 0, \,\, \forall i\in N,\; \vec{v}_i\in D_i,B_{-i}\in D_{-i}.
\end{aligned}
\end{equation*}

\subsection{Auction design as a learning problem}\label{par:auction}
As the space of auctions $\mathcal{M}$ may be large, we will set a parametric model. In what follows, we consider the class of  auctions $(g^w,p^w)$ encoded by a neural network of parameter $w\in \mathbb{R}^d$.  The corresponding utility and regret function will be denoted by $u_i^w$ and $r_i^w$.

Following \cite{dutting2017optimal}, the formulation  ~\eqref{eq:exact_prob} is relaxed: the IC constraint for all $V\in D$ is replaced by the expected constraint $\mathbb{E}_{V\sim D}[r_i^w(V)]=0$ for all $i\in N.$ The justification for this relaxation can be found in \cite{dutting2017optimal}. By replacing expectations with empirical averages, the learning problem becomes:

 \begin{equation}\label{eq:empirical_prob}\tag{$\widehat{\mathrm{LP}}$}
\begin{aligned}
\underset{w\in \mathbb{R}^d}{\text{min}}
-\frac{1}{L}\sum_{\ell=1}^L \sum_{i=1}^n p_i^w(V^{(\ell)}) \quad \text{s.t.}\quad\widehat{r}_i^w:=\frac{1}{L}\sum_{\ell=1}^L r_i^w(V^{(\ell)})=0,\; \forall i \in N. 
\end{aligned}
\end{equation}
 
The learning problem \eqref{eq:empirical_prob} does not ensure \eqref{eq:IR_eq}. However, this constraint is usually built into the parametrization (architecture) of the model: by design, the only auction mechanism considered satisfy \eqref{eq:IR_eq}.  

Implementation details can be found in  \cite{dutting2017optimal,rahme2020permeq} or in Sec~\ref{sec:NN_arch}.

\section{Auction learning as a two-player game}\label{sec:dutt}

We first present the optimization and the training procedures for \eqref{eq:empirical_prob} proposed by \cite{dutting2017optimal}. We then demonstrate with numerical evidence that this approach presents two limitations: hyperparameter sensitivity and lack of interpretability. Using the concept of $\varepsilon$-truthful to truthful reductions, we construct a new loss function that circumvents these two aspects. Lastly, we resort to amortized optimization and re-frame the auction learning problem as a two-player game.

 \subsection{The augmented Lagrangian method and its shortcomings}\label{subsec:dutt}

\paragraph{Optimization and training.} We briefly review the training procedure proposed by \cite{dutting2017optimal} to learn optimal auctions. The authors apply the augmented Lagrangian method to solve the constrained problem  \eqref{eq:empirical_prob} and consider the loss:
\begin{equation*}
\begin{aligned}
    \mathcal{L}(w;\lambda; \rho)=-\frac{1}{L}\sum_{\ell=1}^L \sum_{i\in N} p_i^w(V^{(\ell)})+\sum_{i\in N}\lambda_i r_i^w(V^{(\ell)})+\frac{\rho}{2}\sum_{i \in N}\left(r_i^w(V^{(\ell)})\right)^2, 
\end{aligned}
\end{equation*}
where $\lambda \in\mathbb{R}^n$ is a vector of Lagrange multipliers and $\rho>0$ is a parameter controlling the weight of the quadratic penalty. More details about the training procedure can be found in Appendix A.

\paragraph{Scheduling consistency problem.} The parameters $\lambda$ and $\rho$ are time-varying. Indeed, their value changes according to a pre-defined scheduling of the following form: 1) Initialize $\lambda$ and $\rho$ with respectively   $\lambda^0$ and $\rho^0$,
2)  Update $\rho$ every $T_{\rho}$ iterations : $\rho^{t+1} \leftarrow \rho^{t} + c ,$  where $c$ is a pre-defined constant,
3) Update $\lambda$ every $T_{\lambda}$ iterations according to $\lambda_i^t \leftarrow \lambda_i^t + \rho^t \, \widehat{r}_i^{w^{t}} $.

Therefore, this scheduling requires to set up five hyper parameters $(\lambda^0,\rho^0,c,T_{\lambda},T_{\rho})$. Some of the experiments found \cite{dutting2017optimal} were about learning an optimal mechanism for an $n $-bidder $m$-item auction ($n \times m$) where the valuations are iid $\mathcal{U}[0,1]$. Different scheduling parameters were used for different values of $n$ and $m$.  We report the values of the hyper parameters used for the $1 \times 2$, $3 \times 10$ and $5 \times 10$ settings in \autoref{tab:scheduling}(a). A natural question is whether the choice of parameters heavily affects the performance. We proceed to a numerical investigation of this questions by trying different schedulings (columns) for different settings (rows) and report our the results in \autoref{tab:scheduling}(b).

\begin{table}[H]
\caption{(a): Scheduling parameters values set in \cite{dutting2017optimal} to reach optimal auctions in $n\times m$ settings with $n$ bidders, $m$ objects and i.i.d. valuations sampled from $\mathcal{U}[0,1].$ (b): Revenue $\mathit{rev}:=\mathbb{E}_{V\sim D}[\sum_{i=1}^n p_i(V)]$ and average regret per bidder $\mathit{reg}:= \nicefrac{1}{n} \, \mathbb{E}_{V \in D}\left[\sum_{i=1}^n r_i(V)\right] $ for $n\times m$ settings when using the different parameters values set reported in (a). }\label{tab:scheduling}  
\begin{minipage}{0.3\textwidth}
{\hspace{-.0cm}\resizebox{0.95\columnwidth}{!}{ 
\begin{tabular}{lccc}
\toprule
\multirow{1}{*}{ } & 
\multicolumn{1}{c}{ $1 \times 2$} & \multicolumn{1}{c}{ $3 \times 10$} & \multicolumn{1}{c}{ $5 \times 10$}
\\
\midrule
$\lambda^0$ & 5 & 5 & 1 \\
$\rho^0 $ & 1 & 1 & 0.25 \\
c & 50 & 1 & 0.25 \\
$T_{\lambda}$ & $10^2$ & $10^2$ & $10^2$ \\
$T_{\rho}$ & $10^4$ & $10^4$ & $10^5$ \\
\bottomrule
\end{tabular}}}
            \caption*{(a) }
\end{minipage}
\begin{minipage}{0.7\textwidth}

{\resizebox{0.95\columnwidth}{!}{ \begin{tabular}{lccccccccc}
\toprule
\multicolumn{1}{c}{ }
& \multicolumn{6}{c}{ Scheduling} \\
\cmidrule(lr){2-7}
\multicolumn{1}{c}{}
& \multicolumn{2}{c}{ $1 \times 2$} & \multicolumn{2}{c}{ $3 \times 10$} & \multicolumn{2}{c}{ $5 \times 10$}
\\
 \cmidrule(r){2-3}
  \cmidrule(r){4-5}
  \cmidrule(r){6-7}
Setting & $\mathit{rev}$ & $\mathit{rgt}$ & $\mathit{rev}$ & $\mathit{rgt}$ & $\mathit{rev}$ & $\mathit{rgt}$
\\
\cmidrule(lr){1-1}
\cmidrule(lr){2-2}
\cmidrule(lr){3-3}
\cmidrule(lr){4-4}
\cmidrule(lr){5-5}
\cmidrule(lr){6-6}
\cmidrule(lr){7-7}

 $1 \times 2$
& 0.552 & 
 0.0001 & 0.573 & 0.0012 & 0.332 & 0.0179
\\

 $3 \times 10$  
& 4.825 & 0.0007 & 5.527 & 0.0017 & 5.880 & 0.0047 
\\

 $5 \times 10$ 
& 4.768 & 0.0006 & 5.424 & 0.0033 & 6.749 & 0.0047 
\\
\bottomrule

\end{tabular}}}

\caption*{(b)}
        \end{minipage}
\end{table}

The auction returned by the network dramatically varies with the choice of scheduling parameters. When applying the  parameters of $1\times 2$ to $5\times 10$, we obtain a revenue that is lower by 30\%! The performance of the learning algorithm strongly depends on the specific values of the hyperparameters. Finding an adequate scheduling requires an extensive and time consuming hyperparameter search.

\paragraph{Lack of interpretability.} 

How should one compare two mechanisms with different expected revenue and regret? Is a mechanism $M_1$ with revenue $P_1 = 1.01$ and an average total regret $R_1 = 0.02$ better than a mechanism $M_2$ with $P_2 = 1.0$ and $R_2 = 0.01$?  The approach in \cite{dutting2017optimal} cannot answer this question. To see that, notice that when $\lambda_1 = \cdots = \lambda_n =\lambda$  we can rewrite ${\mathcal{L}(w;\lambda; \rho) = - P + \lambda R + \frac{\rho}{2} R^2}$. Which mechanism is better depends on the values of $\lambda$ and $\rho$. For example if $\rho=1$ and $\lambda =0.1$ we find that $M_1$ is better, but if $\rho=1$ and $\lambda =10$ then $M_2$ is better. Since the values of $\lambda$ and $\rho$ change with time, the Lagrangian approach in \cite{dutting2017optimal} cannot provide metric to compare two mechanisms.

\subsection{A time-independent and interpretable loss function for auction learning}\label{sec:new_loss}

Our first contribution consists in introducing a new loss function for auction learning that addresses the two first limitations of \cite{dutting2017optimal} mentioned in Section~ \ref{subsec:dutt}. We first motivate this loss in the one bidder case and then extend it to auctions with many bidders.

\subsubsection{Mechanisms with one bidder}
\label{subsection:onebidder}

\begin{proposition}
\label{prop:main}[\cite{BalcanBHY05}, attributed to Nisan]
Let $\mathcal{M}$ be an additive auction with $1$~bidder and $m$~items. Let $P$ and $R$  denote the expected revenue and regret, $P  = \mathbb{E}_{V \in D}\left[ p(V) \right]$ and $R = \mathbb{E}_{V \in D}\left[ r(V)\right]$.
There exists a mechanism $\mathcal{M}^{*}$ with expected revenue $P^{*} = (\sqrt{P }-\sqrt{R })^2$ and zero regret $R^{*} = 0$.
\end{proposition}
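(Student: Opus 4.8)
The plan is to prove this via the classical single-agent ``almost-truthful to truthful'' reduction due to Nisan: from $\mathcal{M}$ I will build a one-parameter family of exactly-DSIC mechanisms by uniformly discounting all prices, lower-bound their revenue in terms of $P$, $R$ and the discount factor, and then optimize the discount. The first step is to pass to the menu representation. With a single bidder, $\mathcal{M}=(g,p)$ is equivalent to presenting the implicit menu $\mathcal{T}=\{(g(b),p(b)):b\in\mathbb{R}^m\}\subseteq[0,1]^m\times\mathbb{R}_{\ge 0}$, where a report $b$ selects the option $(g(b),p(b))$. For a true type $v$, reporting truthfully yields utility $u(v,v)=\langle g(v),v\rangle-p(v)$, whereas by the definition of ex-post regret the best option of $\mathcal{T}$ yields $u(v,v)+r(v)$; equivalently, $\langle x,v\rangle-q\le\langle g(v),v\rangle-p(v)+r(v)$ for every $(x,q)\in\mathcal{T}$. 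Individual rationality of $\mathcal{M}$ gives $\langle g(v),v\rangle-p(v)\ge 0$. These two facts are all I will use about $\mathcal{M}$.

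Next I introduce the discounted-menu family: for $\lambda\in(0,1]$, let $\mathcal{M}_\lambda$ return, on report $v$, the utility-maximizing option of $\mathcal{T}_\lambda:=\{(x,\lambda q):(x,q)\in\mathcal{T}\}$ (replacing $\mathcal{T}$ by its closure, or taking a near-maximizer, if the supremum is not attained). Being a fixed-menu mechanism, $\mathcal{M}_\lambda$ is exactly DSIC by the taxation principle, so $R^\star=0$; and it is IR because the option $(g(v),\lambda p(v))\in\mathcal{T}_\lambda$ already delivers utility $\langle g(v),v\rangle-\lambda p(v)\ge\langle g(v),v\rangle-p(v)\ge 0$ since $\lambda\le 1$, so the bidder's optimum over $\mathcal{T}_\lambda$ is nonnegative.

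The crux is the revenue bound for $\mathcal{M}_\lambda$. Fix $v$ and let $(x^\star,\lambda q^\star)$, with $(x^\star,q^\star)\in\mathcal{T}$, be the chosen option. Comparing it to $(g(v),\lambda p(v))$ gives $\langle x^\star,v\rangle-\lambda q^\star\ge\langle g(v),v\rangle-\lambda p(v)$, while the regret inequality above applied to $(x^\star,q^\star)$ gives $\langle x^\star,v\rangle-q^\star\le\langle g(v),v\rangle-p(v)+r(v)$. Subtracting the second inequality from the first cancels the $\langle x^\star,v\rangle$ and $\langle g(v),v\rangle$ terms and yields $(1-\lambda)q^\star\ge(1-\lambda)p(v)-r(v)$, i.e.\ $q^\star\ge p(v)-\tfrac{r(v)}{1-\lambda}$. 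Since the revenue of $\mathcal{M}_\lambda$ on $v$ is $\lambda q^\star\ge 0$, taking expectations over $v\sim D$ gives $\mathrm{rev}(\mathcal{M}_\lambda)\ge\lambda P-\tfrac{\lambda}{1-\lambda}R$.

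Finally I optimize the discount: maximizing $h(\lambda):=\lambda P-\tfrac{\lambda}{1-\lambda}R$ over $\lambda\in(0,1)$, the substitution $\mu=1-\lambda$ reduces this to a one-line calculus exercise with optimum at $\mu=\sqrt{R/P}$, giving $h=P-2\sqrt{PR}+R=(\sqrt{P}-\sqrt{R})^2$. Hence $\mathcal{M}^\star:=\mathcal{M}_{1-\sqrt{R/P}}$ is DSIC and IR with expected revenue at least $(\sqrt{P}-\sqrt{R})^2$; exact equality follows by discarding the surplus, e.g.\ by shrinking all menu prices by a further multiplicative factor selected via the intermediate value theorem. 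The boundary cases are immediate: $R=0$ takes $\lambda=1$ so $\mathcal{M}^\star=\mathcal{M}$, and $R\ge P$ makes the target $(\sqrt P-\sqrt R)^2$ nonpositive, where the no-sale mechanism suffices. I expect the only genuine technical nuisance to be making the maximizer in the definition of $\mathcal{M}_\lambda$ rigorous for non-compact menus; the conceptual content is entirely the observation that a uniform price discount trades the regret $R$ for a revenue loss of only $\tfrac{\lambda}{1-\lambda}R$, after which the tight constant $(\sqrt P-\sqrt R)^2$ drops out of the optimization.
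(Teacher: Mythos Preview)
Your proposal is correct and follows essentially the same argument as the paper: the paper's Lemma constructs, for each $\epsilon\in(0,1)$, a truthful mechanism by letting the bidder pick her favorite option from the $(1-\epsilon)$-discounted menu, derives the bound $P'\ge(1-\epsilon)P-\tfrac{1-\epsilon}{\epsilon}R$ via the same two-inequality subtraction you use (your $\lambda$ is their $1-\epsilon$), and then optimizes $\epsilon=\sqrt{R/P}$ to obtain $(\sqrt P-\sqrt R)^2$. Your menu/taxation-principle framing, explicit IR check, and boundary-case discussion are cosmetic additions but do not change the route.
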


A proof of this proposition can be found in Appendix C. Comparing two mechanisms is straightforward when both of them have zero-regret: the best one achieves the highest revenue. \autoref{prop:main} allows a natural and simple extension of this criteria for non zero-regret mechanism with one bidder: { we will say that $M_1$ is better than $M_2$ if and only if $M_1^{*}$ is better than $M_2^{*}$:}

\begin{equation}
    M_1 \geq M_2 \iff P^{*}(M_1) \geq P^{*}(M_2) \iff \sqrt{P_1}-\sqrt{R_1}  \geq \sqrt{P_2}-\sqrt{R_2}
\end{equation}
{ Using our metric, we find that a one bidder mechanism with revenue of $1.00$ and regret of $0.01$ is ''better'' than one with revenue $1.01$ and regret $0.02$.}

\subsubsection{Mechanisms with multiple bidders}

Let $M_1$ and $M_2$ be two mechanisms with $n$ bidders and $m$ objects. Let $P_i$ and $R_i$  denote their total expected revenue and regret, $P_i  = \mathbb{E}_{V \in D}\left[ \sum_{j=1}^n p_j(V) \right]$ and $R_i = \mathbb{E}_{V \in D}\left[\sum_{j=1}^n r_j(V)\right]$. {We can extend  our metric}  derived in Section~\ref{subsection:onebidder} to the multiple bidder by the following: 
\begin{equation}
  M_1 \text{ is ''better'' than } M_2 \iff   M_1 \geq M_2 \iff  \sqrt{P_1}-\sqrt{R_1}  \geq \sqrt{P_2}-\sqrt{R_2}
\end{equation}
When $n=1$ we recover the criteria from Section~\ref{subsection:onebidder} that is backed by \autoref{prop:main}. When $n > 1$, it is considered a major open problem whether the extension of \autoref{prop:main} still holds. Note that a multi-bidder variant of \autoref{prop:main} \emph{does} hold under a different solution concept termed ``Bayesian Incentive Compatible''~\citep{rubinstein2018simple,CaiOVZ19}, supporting the conjecture that \autoref{prop:main} indeed extends.\footnote{An auction is \emph{Bayesian Incentive Compatible} if every bidder maximizes their expected utility by truthful reporting \emph{in expectation over the other bidders' truthful bids}. Compare this to Dominant Strategy Incentive Compatible (our work), where every bidder maximizes their expected utility by truthful reporting \emph{for all realizations of the other bidders' bids.}} Independently of whether or not \autoref{prop:main} holds, this reasoning implies a candidate loss function for the multi-bidder setting which we can evaluate empirically.

This way of comparing mechanisms motivates the use of loss function: $\mathcal{L}(P,R) = - (\sqrt{P}-\sqrt{R})$ instead of the Lagrangian from Section~\ref{sec:dutt}, and indeed this loss function works well in practice. We empirically find the loss function $\mathcal{L}_{m}(P,R) = -(\sqrt{P}-\sqrt{R}) + R$ further accelerates training, as it further (slightly) biases towards mechanisms with low regret. Both of these loss function are time-independent and hyperparameter-free.

\subsection{Amortized misreport optimization}\label{sec:amort}

To compute the regret $r_i^w(V)$ one has to solve the optimization problem:  
$$\max_{\vec{v}_i\hspace{.02cm}'\in \mathbb{R}^m} u_i^w(\vec{v}_i,(\vec{v}_i\hspace{.02cm}',V_{-i})) - u_i^w(\vec{v}_i,(\vec{v}_i,V_{-i}))$$. 
In \cite{dutting2017optimal}, this optimization problem is solved with an inner optimization loop for each valuation profile. In other words, computing the regret of each valuation profile is solved separately and independently, from scratch. 

If two valuation profiles are very close to each other, one should expect that the resulting optimization problems to have close results. We leverage this to improve training efficiency.

We propose to amortize this inner loop optimization. Instead of solving all these optimization problems independently, we will instead learn one neural network $M^{\phi}$ that tries to predict the solution of all of them. $M^{\phi}$ takes as entry a valuation profile and maps it to the optimal misreport:
\begin{equation}
M^{\phi}:
\begin{cases}
\mathbb{R}^{n \times m} & \to \mathbb{R}^{n \times m}\\ 
V =\left[\vec{v_i}\right]_{i \in N} & \to \left[ \mbox{argmax}_{\vec{v}' \in D} u_i(\vec{v_i},(\vec{v}',V_{-i}))\right]_{i \in N}
\end{cases}
\end{equation}
The loss $\mathcal{L}_{r}$ that $M^{\phi}$ is trying to minimize follows naturally from that definition and is then given by: $
\mathcal{L}_{r}(\phi,w) = - \mathbb{E}_{V \in D} \left[\sum_{i=1}^n u^w_i(\vec{v_i},([M^{\phi}(V)]_i,V_{-i})) \right].$

\subsection{Auction learning as a two-player game}

  In this section, we combine the ideas from Sections~\ref{sec:new_loss} and \ref{sec:amort} to obtain a new formulation for the auction learning problem as a two-player game between an Auctioneer with parameter $w$ and a Misreporter with parameter $\varphi$. The optimal parameters for the auction learning problem $(w^{*},\varphi^{*})$ are a Nash Equilibrium for this game.
  
  The Auctioneer is trying to design a truthful \eqref{eq:regretDSIC} and rational \eqref{eq:IR_eq} auction that maximizes revenue. The Misreporter is trying to maximize the bidders' utility, for the current auction selected by Auctioneer, $w$. This is achieved by minimizing the loss function $\mathcal{L}_{r}(\phi,w)$ wrt to $\phi$ (as discussed in Sec~\ref{sec:amort}). The Auctioneer in turn maximizes expected revenue, for the current misreports as chosen by Misreporter. This is achieved by minimizing $\mathcal{L}_{m}(w,\varphi) = -(\sqrt{P^{w}}+\sqrt{R^{w,\varphi}}) + R^{w,\varphi}$  with respect to $w$ (as discussed in Sec~\ref{sec:new_loss}). Here, $R^{w,\varphi}$ is an estimate of the total regret that auctioneer computes for the current Misreporter $\varphi$, 
  $R^{w,\varphi} = \frac{1}{L}\sum_{\ell=1}^L \sum_{i\in N}\left(u_i^w(\vec{v}_i,([M^{\phi}(V)]_i ,V_{-i}))- u_i^w(\vec{v}_i,(\vec{v}_i,V_{-i}))\right).$ This game formulation can be summarized in Figure~\ref{fig:gamefigure}.

\begin{figure}

\begin{minipage}[t]{0.90\textwidth}

  \centering\raisebox{\dimexpr \topskip-\height}{%
  \includegraphics[width=\textwidth]{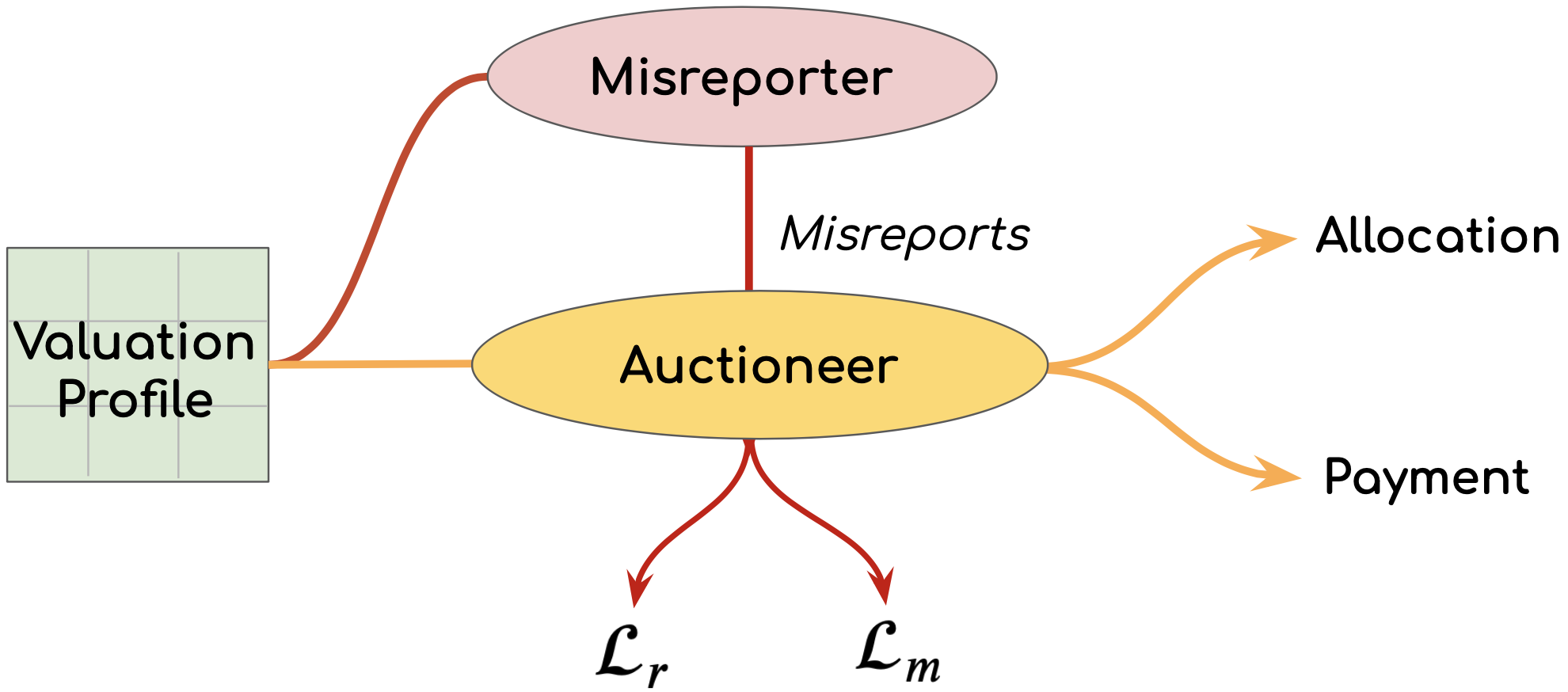}}
  \label{fig1}
\end{minipage}\hfill  \\

\begin{minipage}[t]{0.90\textwidth}
\begin{align}
\label{eq:game}\tag{G} 
\begin{aligned}
  \text{Misreporter: } \begin{cases}
    \textbf{loss: } \mathcal{L}_r(\varphi,w) \\
    \textbf{parameter: }\varphi
    \end{cases}  
    \quad \text{Auctioneer: } 
    \begin{cases}
    \textbf{loss: } \mathcal{L}_m(w,\varphi) \\
    \textbf{parameter: } w 
    \end{cases}
\end{aligned}
\end{align}
\end{minipage}
\caption{Diagrammatic representation of the two-play auction learning game.}\label{fig:gamefigure}

\end{figure}

\bigskip
\begin{remark}
 The game formulation \eqref{eq:game} reminds us of Generative Adversarial Networks \citep{goodfellow2014generative}. Contrary to GANs, it is not a zero-sum game. 
\end{remark}

\section{Architecture and training procedure}\label{sec:NN_arch}

We describe ALGnet, a feed-forward architecture solving for the game formulation \eqref{eq:game} and then provide a training procedure. ALGnet consists in two modules that are the auctioneer's module and the misreporter's module. These components take as input a bid matrix $B = (b_{i,j})\in \mathbb{R}^{n\times m}$ and are trained jointly. Their outputs are used to compute the regret and revenue of the auction. 

\paragraph{Notation.} We use  $ \text{MLP}(d_{\text{in}},n_l,h,d_{\text{out}})$ to refer to  a fully-connected neural network  with input dimension $d_{\text{in}}$, output dimension $d_{\text{out}}$ and $n_l$ hidden layers of width $h$ and $\mathrm{tanh}$ activation function. $\mathrm{sig}$ denotes the sigmoid activation function. Given a matrix $B=[\vec{b}_1,\dots,\vec{b}_n]^{\top}\in \mathbb{R}^{n\times m},$ we define for a fixed $i\in N$, the matrix $B_{(i)}:=[\vec{b}_i,\vec{b}_1,\dots,\vec{b}_{i-1},\vec{b}_{i+1},\dots,\vec{b}_n].$

\subsection{The Auctioneer's module}

 It is composed of an allocation network that encodes a  randomized  allocation $g^w\colon \mathbb{R}^{nm} \to [0,1]^{nm}$ and a  payment network that encodes a payment rule $p^w\colon \mathbb{R}^{nm} \to \mathbb{R}^{n}$.

\paragraph{Allocation network.} It computes the allocation probability of item $j$ to bidder~$i$, $[g^w(B)]_{ij}$, as $[g^w(B)]_{ij} = [f_1(B)]_j\cdot [f_2(B)]_{ij}$ where $f_1\colon \mathbb{R}^{n\times m} \to [0,1]^{m}$ and $f_2\colon\mathbb{R}^{n\times m} \to [0,1]^{m \times n }$ are functions 
computed by two feed-forward neural networks.\\
\hspace*{.2cm} -- $[f_1(B)]_j$ is the probability that object $j \in M$ is allocated and is given by ${[f_1(B)]_j = \mathrm{sig}\left(\text{MLP}(nm,n_{a},h_a,n)\right)}$. \\
\hspace*{.3cm} --\; $[f_2(B)]_{ij}$ is the probability that item $j\in M$ is allocated to bidder $i\in N$ conditioned on object $j$ being allocated. A first MLP computes  $l_j:=\text{MLP}(nm,n_{a},h_a,m)(B_{(j)})$ for all $j\in M$. The network then concatenates all these vectors $l_j$ into a matrix $L\in \mathbb{R}^{n\times m}$. A softmax activation function is finally applied to $L$ to ensure feasibility i.e. for all $j\in M, \sum_{i\in N}L_{ij} = 1.$

\paragraph{Payment network.}

It computes the payment $[p^w(B)]_i$ for bidder $i$ as $[p^w(B)]_i = \tilde{p}_i \sum_{j=1}^m B_{ij}[g^w(B)]_{ij},$ where $\tilde{p}\colon \mathbb{R}^{n\times m}\rightarrow [0,1]^n.$ $\tilde{p}_i$ is the fraction of bidder’s $i$ utility that she has to pay to the mechanism. We  compute $ \tilde{p}_i = \mathrm{sig}\left(\text{MLP}(nm,n_{p},h_p,1)\right)(B_{(i)}).$  Finally, 
notice that by construction $[p^w(B)]_i \leq \sum_{j=1}^m B_{ij}g^w(B)_{ij}$ which ensures that \eqref{eq:IR_eq} is respected.

\subsection{The Misreporter's module}
The module consists in an $\text{MLP}(nm,n_{M},h_M,m)$ followed by a projection layer $\mathrm{Proj}$ that ensure that the output of the network is in the domain $D$ of the valuation.  For example when the valuations are restricted to $[0,1]$, we can take $\mathrm{Proj=sig},$ if they are non negative number,we can take $\mathrm{Proj=SoftPlus}$. The optimal misreport for bidder $i$ is then given by $\text{Proj} \circ \text{MLP}(nm,n_{M},h_M,m)(B_{(i)}) \in \mathbb{R}^m$. Stacking these vectors gives us the misreport matrix $M^{\phi}(B)$.

\subsection{Training procedure and optimization}
We optimize the game \eqref{eq:game} over the space of  neural networks parameters $(w,\varphi).$ The algorithm is easy to implement (Algorithm~\autoref{alg:gameAlgorithm}).
 At each time $t$, we sample a batch of valuation profiles of size $B$.  The algorithm performs $\tau$ updates for the Misreporter's network (line 9) and one update on the Auctioneer's network (line 10). Moreover, we often reinitialize the Misreporter's network every $T_{init}$ steps in the early phases of the training ($t\leq T_{limit}$). This step is not necessary but we found empirically that it speeds up training. 

\begin{algorithm}[H]
\caption{ALGnet training} 
\label{alg:gameAlgorithm}
\begin{algorithmic}[1]
\State \textbf{Input}: number of agents, number of objects.
\State \textbf{Parameter}: $\gamma >0 ;\; B, T, T_{init},  T_{limit}, \tau \in \mathbb{N}.$
\State \textbf{Initialize} misreport's and auctioneer's nets.
\For {$t=1,\dots,T$}
    \If{$t \equiv 0\; \mathrm{mod}\;  T_{init}$ and $t < T_{Limit}$}:\\
      \hspace{1cm}  Reinitialize Misreport Network
        \EndIf
\State Sample valuation batch $S$ of size $B$.
\For{$s=1,\dots,\tau$}
    \State \hspace{-.2cm}\mbox{ $\phi^{s+1}\leftarrow \phi^{s}-\gamma \nabla_\phi \mathcal{L}_{r}(\phi^{s},w^t)(S). $}
    \EndFor
\State \mbox{\hspace{-.1cm} $w^{t+1}\leftarrow w^t-\gamma \nabla_w \mathcal{L}_{m}(w^t,\phi)(S). $}
\EndFor
\end{algorithmic}
\end{algorithm}

\section{Experimental Results}\label{sec:exps}

We show that ALGnet  can  recover  near-optimal  auctions for settings where  the  optimal  solution  is known and that it  can find new auctions for settings where analytical solution are not known. Since RegretNet is already capable of discovering near optimal auctions, one cannot expect ALGnet to achieve significantly higher optimal revenue than RegretNet. The results obtained are competitive or better than the ones obtained in \cite{dutting2017optimal} while requiring much less hyperparameters (Section~\ref{sec:dutt}). 

We also evaluate ALGnet in online auctions and compare it to RegretNet.

For each experiment, we compute the total revenue $\mathit{rev}:=\mathbb{E}_{V\sim D}[\sum_{i\in N} p_i^{w}(V)]$ and average regret $\mathit{rgt}:=\nicefrac{1}{n}\,\mathbb{E}_{V\sim D}[\sum_{i\in N} r_i^{w}(V)]$ on a test set of $10,000$ valuation profiles. We run each experiment 5 times with different random seeds and report the average and standard deviation of these runs. In our comparisons we make sure that ALGnet and RegretNet have similar sizes for  fairness (Appendix~D).

\subsection{Auctions with known and unknown optima}

\paragraph{Known settings. } We show that ALGnet is capable of recovering near optimal auction in different well-studied auctions that have an analytical solution. These are one bidder and two items auctions where the valuations of the two items $v_1$ and $v_2$ are independent. We consider the following settings:
\begin{itemize}
    \item (A): $v_1$ and $v_2$ are  i.i.d. from  $\mathcal{U}[0,1]$
    \item (B): $v_1 \sim \mathcal{U}[4,16]$ and $v_2\sim \mathcal{U}[4,7]$
    \item  (C): $v_1$ has density $f_1(x)=5/(1+x)^6$ and $v_2$ has density $f_2(y)=6/(1+y)^7.$ 
\end{itemize}

(A) is the celebrated Manelli-Vincent auction \citep{manelli2006bundling}; (B) is a non-i.i.d. auction and (C) is a non-i.i.d. heavy-tail auction and both of them are studied in \citet{daskalakis2017strong}. We compare our results to the theoretical optimal auction (\autoref{fig:table_small_auct}). (\citet{dutting2017optimal} does not evaluate RegretNet on settings (B) \& (C)).   During the training process, $\mathit{reg}$ decreases to 0 while  $\mathit{rev}$ and $P^{*}$ converge to the optimal revenue.  For (A), we also plot $\mathit{rev}$, $\mathit{rgt}$ and $P^{*}$ as function of the number of epochs and we compare it to RegretNet (\autoref{fig:comparaisonRegretNet}).

 Contrary to ALGnet, we observe that RegretNet overestimates the revenue in the early stages of training at the expense of a higher regret. As a consequence, ALGnet learns the optimal auction faster than RegretNet while being schedule-free and requiring less hyperparameters.

\begin{table}[H]
\centering
\caption{Revenue \& regret of ALGnet for settings (A)-(C). }
\label{fig:table_small_auct}
    \begin{tabular}{lccccccc}
     \toprule
    \multirow{2}{*}{} & 
    \multicolumn{2}{c}{Optimal} & \multicolumn{2}{c}{ALGnet (Ours)}
    \\
    \cmidrule(lr){2-3}
    \cmidrule(lr){4-5}
    & $\mathit{rev}$ & $\mathit{rgt}$ & $\mathit{rev}$ & $\mathit{rgt ~~(\times 10^{-3})}$
    \\
    \cmidrule(lr){2-2}
    \cmidrule(lr){3-3}
    \cmidrule(lr){4-4}
    \cmidrule(lr){5-5}
    (A)
    & $0.550$ & 
    $0$ & 0.555 $(\pm 0.0019)$ & $0.55~(\pm 0.14)$ 
    \\
    (B)
    & 9.781 & 0  & 9.737 $(\pm 0.0443)$ & $0.75~(\pm 0.17)$  
    \\
    
    (C)
    & 0.1706 & $ 0 $& 0.1712 $(\pm 0.0012)$ & $ 0.14~(\pm 0.07)$
    \\ 
    \hline
    \end{tabular}
\end{table}

\begin{figure}[h]

\begin{subfigure}{0.5\textwidth}
  \centering
   \includegraphics[width=1.05\linewidth]{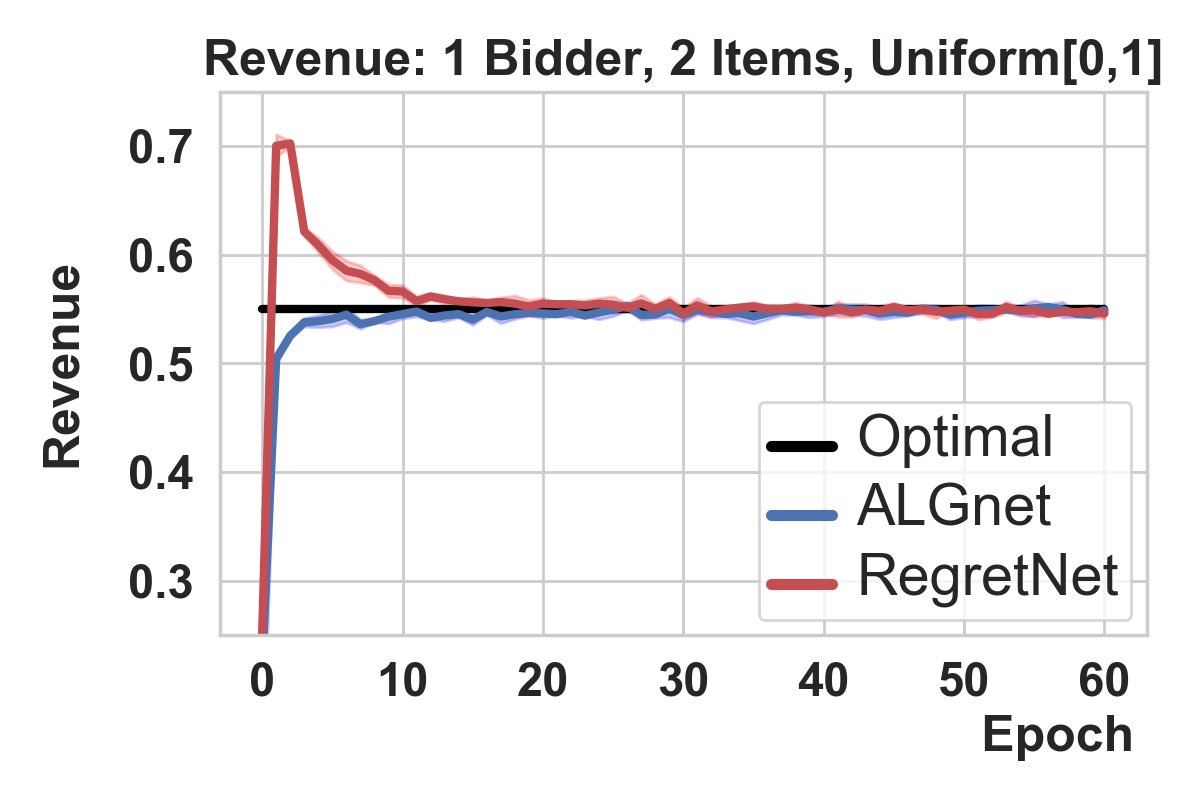}
  \caption{}
  \label{fig:1}
\end{subfigure}
\begin{subfigure}{0.5\textwidth}
  \centering
  \includegraphics[width=1.05\linewidth]{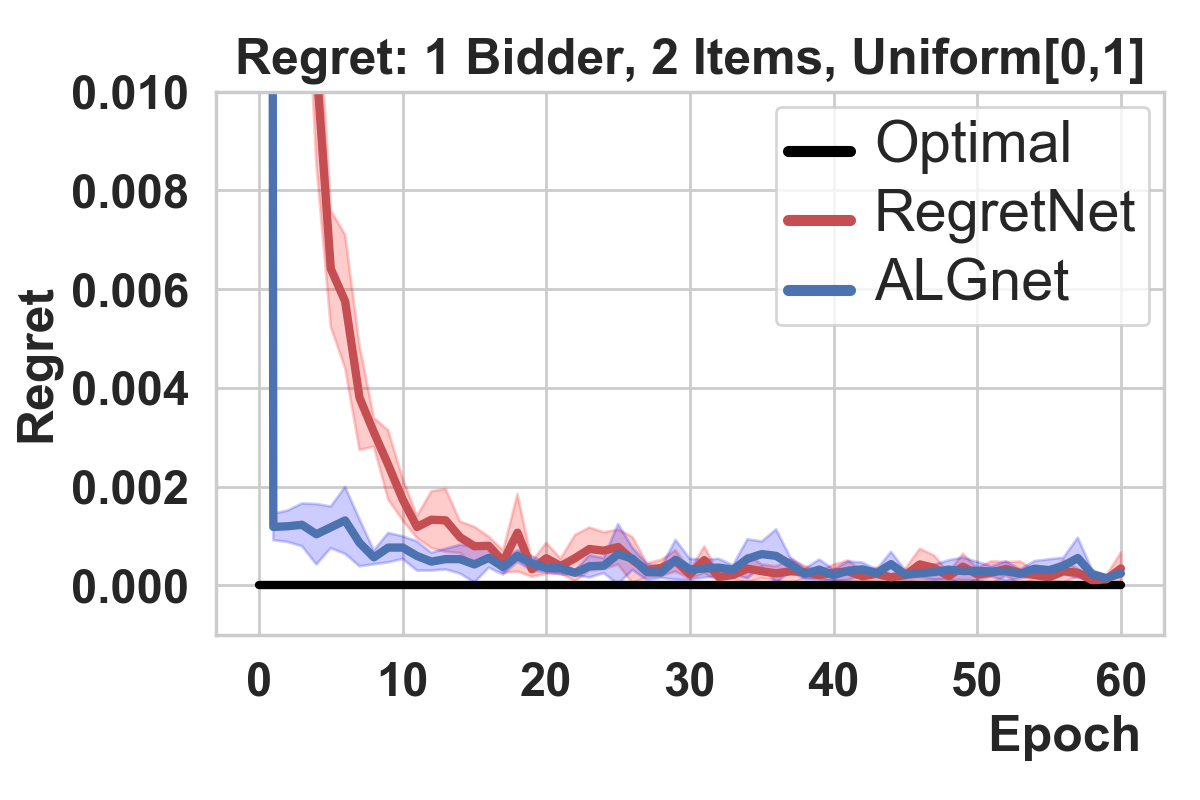}
  \caption{}
  \label{fig:2}
\end{subfigure}
\begin{subfigure}{0.5\textwidth}
  \centering
  \includegraphics[width=1.05\linewidth]{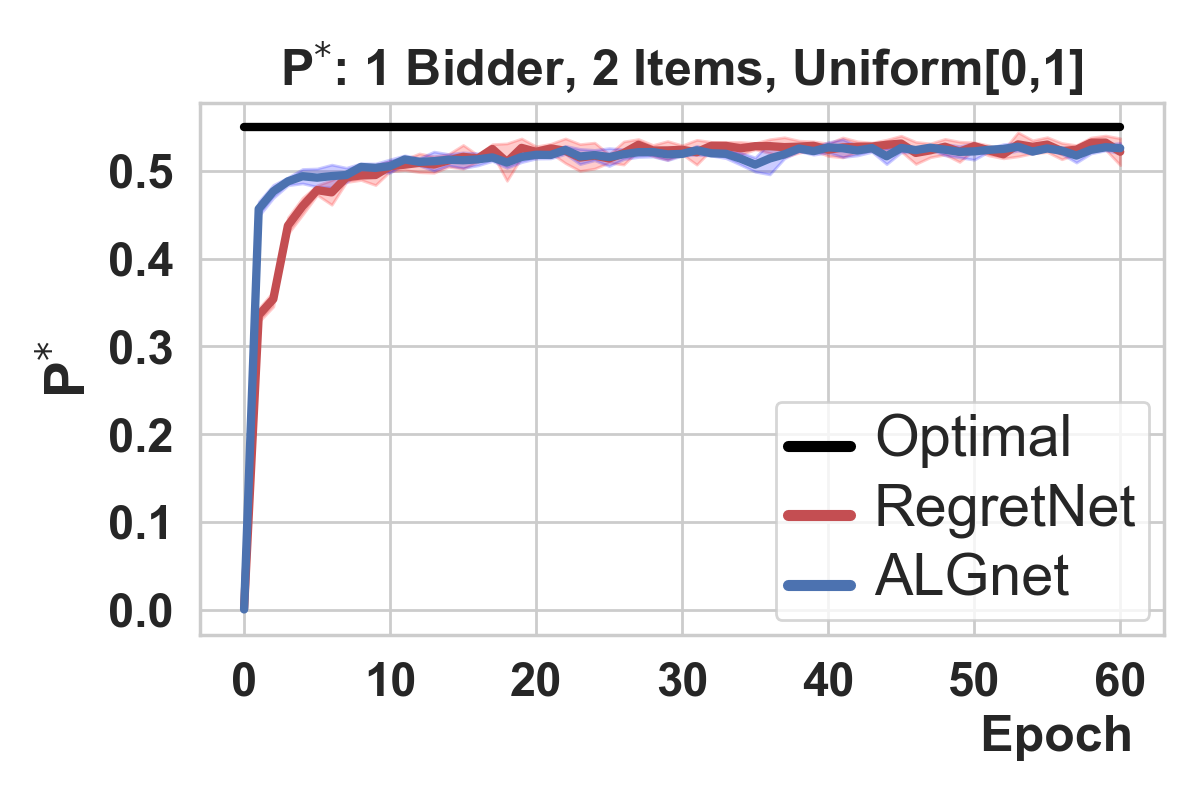} 
  \caption{}
  \label{fig:3}
\end{subfigure}
\begin{subfigure}{0.5\textwidth}
  \centering
  \includegraphics[width=1.05\linewidth]{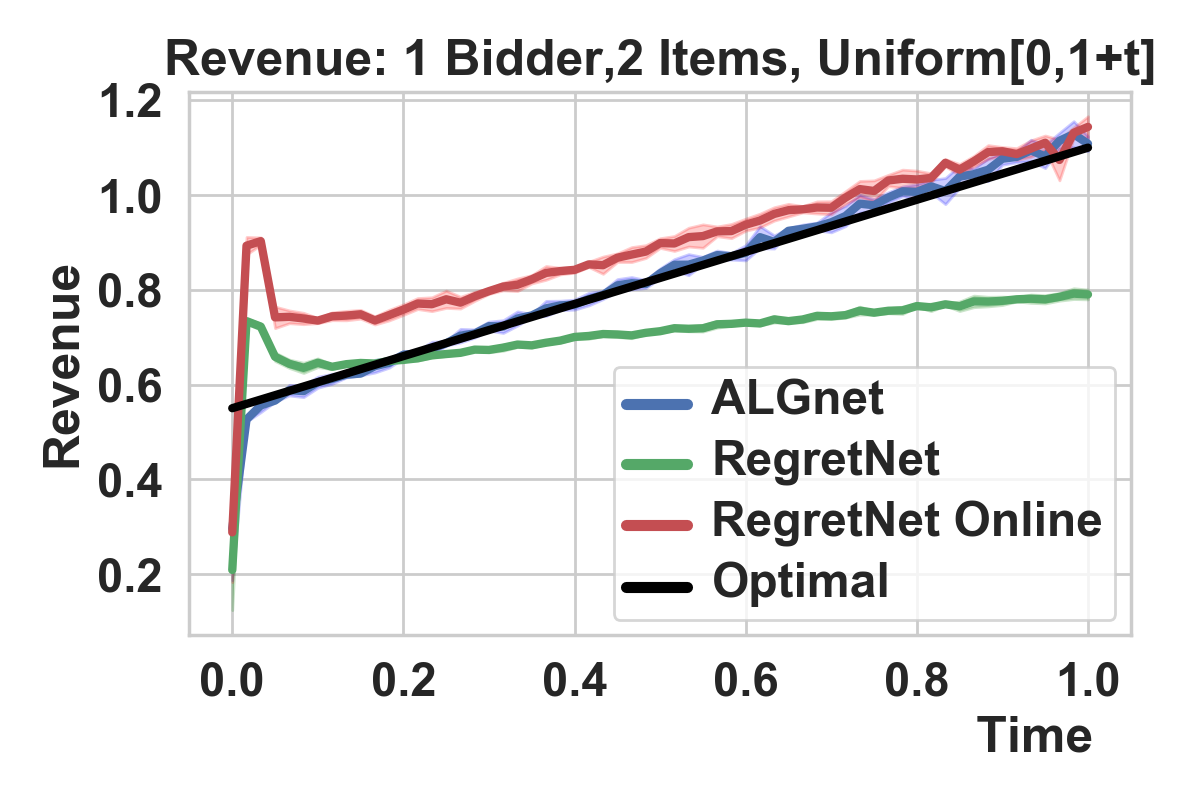}
  \caption{}
  \label{fig:4}
\end{subfigure}
\begin{subfigure}{0.5\textwidth}
  \centering
  \includegraphics[width=1.05\linewidth]{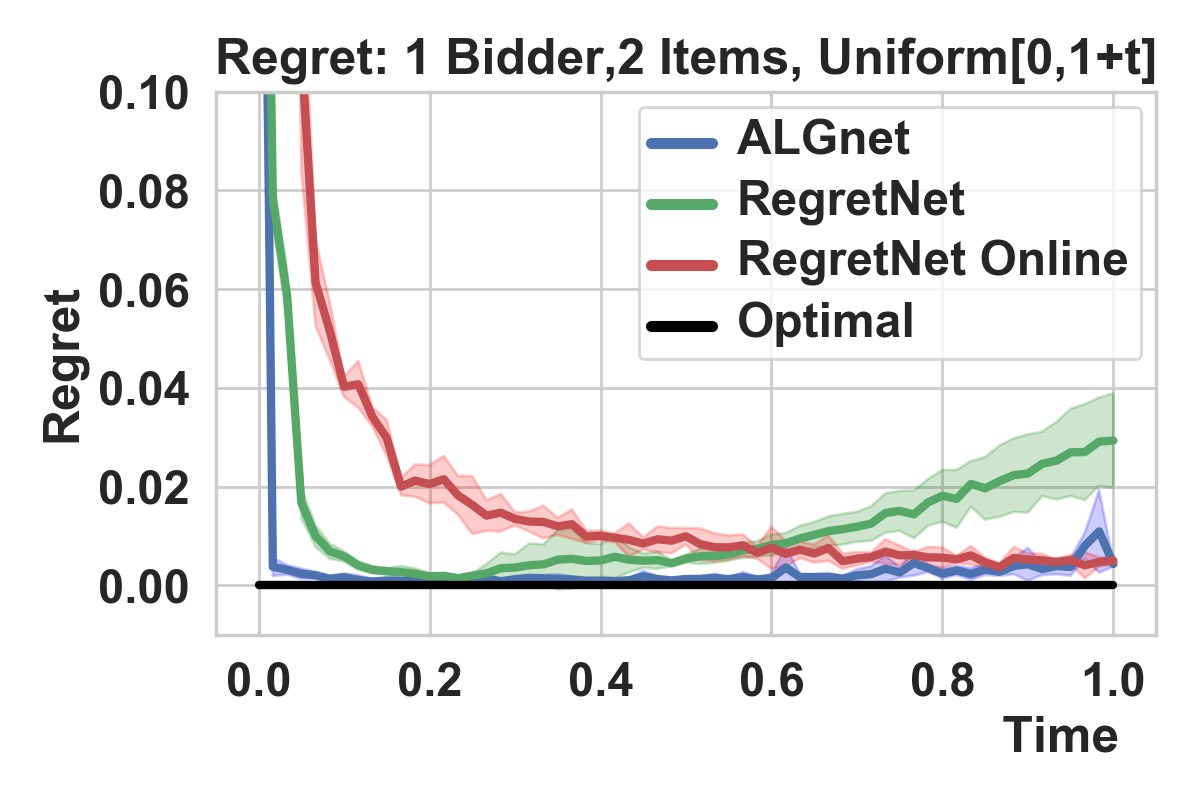}
  \caption{}
  \label{fig:5}
\end{subfigure}
\begin{subfigure}{0.5\textwidth}
  \centering
  \includegraphics[width=1.05\linewidth]{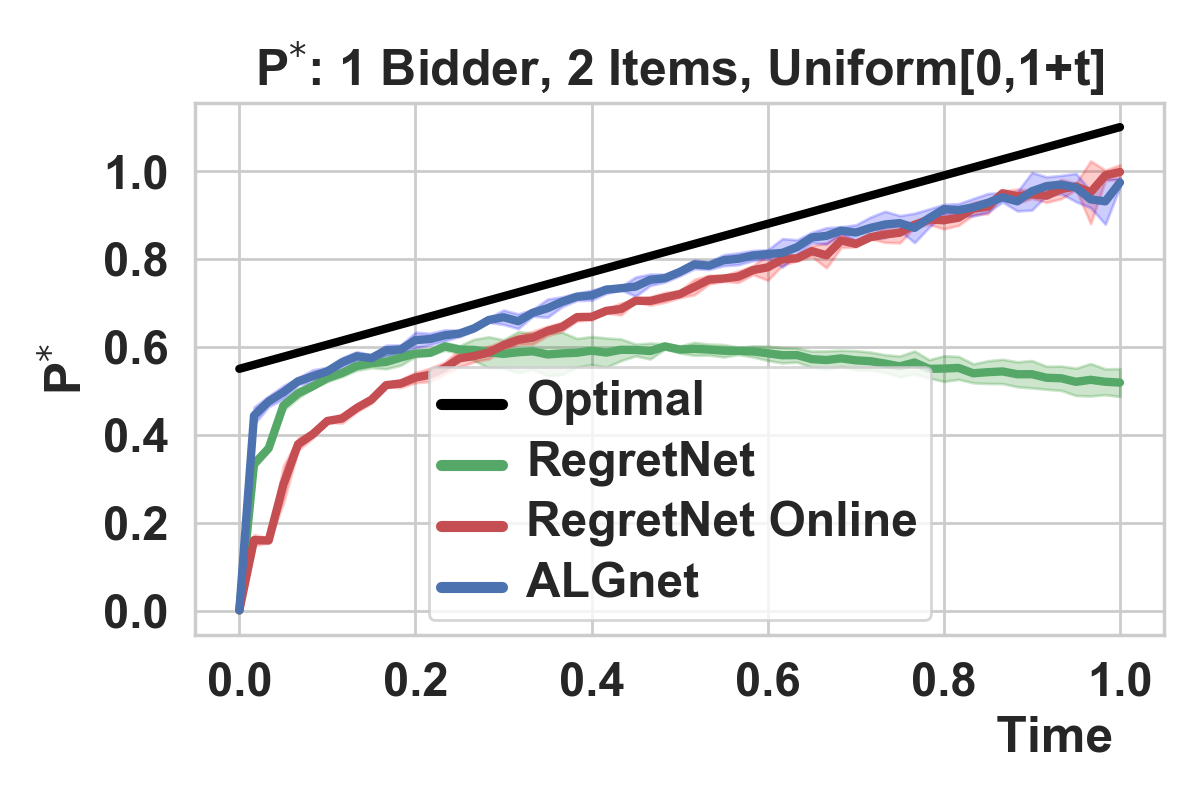}
  \caption{}
  \label{fig:6}
\end{subfigure}

\caption{(a-b-c) compares the evolution of the revenue, regret and $P^{*}$ as a function of the number of epoch for RegretNet and ALGnet for setting (A).  (d-e-f) plots the the revenue, regret and $P^{*}$ as a function of time for ALGnet and (offline \& online) RegretNet for an online auction (Section~\ref{sec:online}).  } 
\label{fig:comparaisonRegretNet}
\end{figure}

\paragraph{Unknown and large-scale auctions.} We now consider settings where the optimal auction is unknown. We look at $n$-bidder $m$-item additive settings where the valuations are sampled i.i.d from $\mathcal{U}[0,1]$ which we will denote  by $n \times m$. In addition to reasonable-scale auctions ($1\times 10$ and $2\times 2$), we investigate large-scale auctions ($3\times 10$ and $5\times 10$) that are much more complex. Only deep learning methods are able to solve them efficiently.  \autoref{fig:table_large_auct} shows that  ALGnet is able to discover auctions that yield comparable or better results than RegretNet.

\begin{table}[h]
\caption{Comparison of RegretNet and ALGnet. The values reported for RegretNet are found in \cite{dutting2017optimal}, the numerical values for $\mathit{rgt}$ and standard deviations are not available.
}\label{fig:table_large_auct}

\begin{subfigure}[b]{0.99\textwidth}
\begin{center}
\begin{tabular}{lccccccc}
\toprule
\multirow{2}{*}{Setting \quad \quad \quad \quad} & 
\multicolumn{2}{c}{RegretNet \quad \quad }& \multicolumn{2}{c}{ALGnet (Ours)}
\\
\cmidrule(lr){2-3}
\cmidrule(lr){4-5}
& $\mathit{rev}$ & $\mathit{rgt}$ & $\mathit{rev}$ & $\mathit{rgt}$
\\
\cmidrule(lr){1-1}
\cmidrule(lr){2-2}
\cmidrule(lr){3-3}
\cmidrule(lr){4-4}
\cmidrule(lr){5-5}
$1 \times 2 $  
& $0.554$ & 
$< 1.0 \cdot 10^{-3}$ & 0.555 $(\pm 0.0019)$ & $0.55 \cdot 10^{-3} (\pm 0.14 \cdot 10^{-3})$ 
\\
$1 \times 10 $
& 3.461 & $< 3.0 \cdot 10^{-3}$ & 3.487 $(\pm 0.0135)$ & $1.65 \cdot 10^{-3} (\pm 0.57 \cdot 10^{-3})$  
\\

$2 \times 2 $
& 0.878 & $< 1.0 \cdot 10^{-3}$ & 0.879 $(\pm 0.0024)$ & $0.58 \cdot 10^{-3} (\pm 0.23 \cdot 10^{-3})$
\\ 

$3 \times 10 $ 
& 5.541 & $<2.0 \cdot 10^{-3}$ & 5.562 $(\pm 0.0308)$ & $ 1.93 \cdot 10^{-3} (\pm 0.33 \cdot 10^{-3})$ 
\\

$5 \times 10 $
& 6.778 & $ < 5.0 \cdot 10^{-3}$ & 6.781 $(\pm 0.0504)$ & $3.85 \cdot 10^{-3} (\pm 0.43 \cdot 10^{-3})$\\
\bottomrule
\end{tabular}
\end{center}
\end{subfigure}
\end{table}

\subsection{Online auctions}
\label{sec:online}

 ALGnet is an online algorithm with a time-independent loss function. We would expect it to perform well in settings where the underlying distribution of the valuations  changes over time. We consider a one bidder and  two items additive auction with valuations $v_1$ and $v_2$ sampled i.i.d from $\mathcal{U}[0,1+t]$ where $t$ in increased from $0$ to $1$ at a steady rate. The optimal auction at time $t$ has revenue $0.55\times(1+t)$.
 
 We use ALGnet and two versions of RegretNet, the original offline version (Appendix A) and our own online version (Appendix B) and plot $\mathit{rev}(t)$, $\mathit{rgt}(t)$ and $P^{*}(t)$ (\autoref{fig:comparaisonRegretNet}). The offline version learns from a fixed dataset of valuations sampled  at $t=0$ (i.e. with $V \sim \mathcal{U}[0,1]^{nm}$) while the online versions (as ALGnet) learns from a stream of data at each time $t$.
 
 Overall, ALGnet performs better than the other methods. It learns an optimal auction faster at the initial (especially compared to RegretNet Online) and keep adapting to the distributional shift (contrary to vanilla RegretNet).

\section{Conclusion}

We identified two inefficiencies in previous approaches to deep auction design and propose solutions, building upon recent trends and results from machine learning (amortization) and theoretical auction design (stationary Lagrangian). This resulted in a novel formulation of auction learning as a two-player game between an Auctioneer and a Misreporter and a new architecture ALGnet. ALGnet requires significantly fewer hyperparameters than previous Lagrangian approaches. We demonstrated the effectiveness of ALGnet on a variety of examples by comparing it to the theoretical optimal auction when it is known, and to RegretNet when the optimal solution is not known.

\begin{subappendices}
\section{Proof of \autoref{prop:main}}\label{app:rubinst_wein}
\begin{lemma}
\label{lemma:1bidder}
Let $M$ be a one bidder $m$ item mechanism with expected revenue $P$ and expected regret $R$, then $\forall \epsilon > 0$, there exists a mechanism $M'$ with expected revenue $P' = (1-\epsilon)P - \frac{1-\epsilon}{\epsilon} R$ and zero expected regret, $R' = 0$. 
\end{lemma}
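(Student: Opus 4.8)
The plan is to construct $M'$ from $M$ by a simple randomized modification: with probability $1-\epsilon$ we run $M$ but charge a slightly discounted price, and with probability $\epsilon$ we run a trivial mechanism that allocates nothing and charges nothing. The intuition is that a single-bidder mechanism is essentially a menu of (lottery, price) pairs, and the bidder's regret comes from the gap between the utility of their favorite menu entry and the utility of the entry intended for their true type. Scaling down all payments by a factor $(1-\epsilon)$ increases every utility, and crucially increases the utility of the ``intended'' entry relative to any misreport option in a controlled way, because the payment term is exactly what differs.

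First I would set up the menu/taxation-principle description of $M$: for a truthful single-bidder additive mechanism (or even an arbitrary one), define for each reported type $\vec{b}$ the outcome $(\vec{x}(\vec{b}), t(\vec{b})) = (g(\vec{b}), p(\vec{b}))$, so the bidder with true value $\vec{v}$ reporting $\vec{b}$ gets utility $\langle \vec{v}, \vec{x}(\vec{b})\rangle - t(\vec{b})$. The regret at $\vec{v}$ is $r(\vec{v}) = \sup_{\vec{b}}\big(\langle \vec{v}, \vec{x}(\vec{b})\rangle - t(\vec{b})\big) - \big(\langle \vec{v}, \vec{x}(\vec{v})\rangle - t(\vec{v})\big)$, and $R = \E_{\vec{v}}[r(\vec{v})]$, $P = \E_{\vec{v}}[t(\vec{v})]$. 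Next I would define $M'$: offer the same allocation rule $\vec{x}$, but with payments $t'(\vec{b}) = (1-\epsilon)\, t(\vec{b})$, and additionally (this is the key step) \emph{reassign} each type $\vec{v}$ to report the entry it would most prefer under the scaled menu — i.e., $M'$ on input $\vec{v}$ outputs $(\vec{x}(\vec{b}^*(\vec{v})), t'(\vec{b}^*(\vec{v})))$ where $\vec{b}^*(\vec{v}) \in \argmax_{\vec{b}}\langle \vec{v},\vec{x}(\vec{b})\rangle - (1-\epsilon)t(\vec{b})$. By the taxation principle this $M'$ is exactly truthful (every type is already getting its favorite menu entry), so $R' = 0$, and IR is preserved since scaling payments down and including the null option keeps utilities nonnegative.

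The quantitative heart is the revenue bound. I would argue pointwise: for each $\vec{v}$, the utility $M'$ gives is $u'(\vec{v}) = \max_{\vec{b}}\langle\vec{v},\vec{x}(\vec{b})\rangle - (1-\epsilon)t(\vec{b}) \ge \langle\vec{v},\vec{x}(\vec{v})\rangle - (1-\epsilon)t(\vec{v}) = u(\vec{v}) + \epsilon\, t(\vec{v})$, where $u(\vec{v})$ is the truthful utility in $M$. Also $u'(\vec{v})$, being the value of the best entry at scaled prices, satisfies $u'(\vec{v}) \le \max_{\vec{b}}\langle\vec{v},\vec{x}(\vec{b})\rangle - t(\vec{b}) + \epsilon\max_{\vec{b}} t(\vec{b})$; more usefully, writing $u^{\max}(\vec{v}) = \max_{\vec{b}}\langle\vec{v},\vec{x}(\vec{b})\rangle - t(\vec{b}) = u(\vec{v}) + r(\vec{v})$, one gets $u'(\vec{v}) \le u^{\max}(\vec{v}) + \epsilon \cdot t(\vec{b}^*(\vec{v}))$ where $t(\vec{b}^*(\vec{v}))$ is the (pre-scaling) price of the chosen entry. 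The revenue of $M'$ at $\vec{v}$ is $(1-\epsilon)t(\vec{b}^*(\vec{v}))$; combining the two inequalities to eliminate $t(\vec{b}^*(\vec{v}))$ and $u'(\vec{v})$, then taking expectations over $\vec{v}$ and using $\E[u^{\max}] = \E[u] + R = (P - \E[t]) $... — at this point I would carefully track the algebra to land on $P' \ge (1-\epsilon)P - \frac{1-\epsilon}{\epsilon}R$.

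The main obstacle I anticipate is the bookkeeping in this last step: bounding the ``lost revenue'' (the difference between what $M$ collected and what $M'$ collects from a type that switched to a cheaper entry) by the regret. The clean way is the standard trick: if type $\vec{v}$ switches from entry $\vec{v}$ to a cheaper entry $\vec{b}^*$ under scaled prices, then the original regret inequality $\langle\vec{v},\vec{x}(\vec{b}^*)\rangle - t(\vec{b}^*) \le u(\vec{v}) + r(\vec{v})$ combined with the fact that $\vec{b}^*$ beats $\vec{v}$ at scaled prices, i.e. $\langle\vec{v},\vec{x}(\vec{b}^*)\rangle - (1-\epsilon)t(\vec{b}^*) \ge \langle\vec{v},\vec{x}(\vec{v})\rangle - (1-\epsilon)t(\vec{v})$, forces $\epsilon(t(\vec{v}) - t(\vec{b}^*)) \le r(\vec{v})$, hence $t(\vec{b}^*) \ge t(\vec{v}) - r(\vec{v})/\epsilon$. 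Taking expectations: $\E[t(\vec{b}^*)] \ge P - R/\epsilon$, and multiplying by $(1-\epsilon)$ gives exactly $P' \ge (1-\epsilon)P - \frac{1-\epsilon}{\epsilon}R$. I would present the argument in this order, with the reassignment-via-taxation-principle as the conceptual core and the $\epsilon(t(\vec{v})-t(\vec{b}^*))\le r(\vec{v})$ inequality as the one computation worth writing out in full.
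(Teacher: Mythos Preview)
Your proposal is correct and follows essentially the same route as the paper: define $M'$ by offering the menu of $M$ with all payments scaled by $(1-\epsilon)$ and reassigning each type to its favorite entry in the scaled menu (so $R'=0$ by construction), then combine the regret inequality $\langle v,x(b^*)\rangle - t(b^*) \le u(v)+r(v)$ with the scaled-optimality inequality $\langle v,x(b^*)\rangle - (1-\epsilon)t(b^*) \ge \langle v,x(v)\rangle - (1-\epsilon)t(v)$ to obtain $t(b^*)\ge t(v)-r(v)/\epsilon$, and take expectations. One minor remark: your opening paragraph about a ``randomized modification'' (run $M$ with probability $1-\epsilon$, else do nothing) is not the construction you actually use and would not by itself yield zero regret---drop that framing and lead directly with the scaled-menu reassignment, which is exactly what the paper does.
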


\begin{proof}
For every valuation vector $v \in D$, let $g(v)$ and $p(v)$ denote the allocation vector and price that $M$ assigns to $v$.

We now consider the mechanism $M'$ that does the following:

\begin{itemize} 
    \item $g'(v) = g(v')$ 
    \item $p'(v) = (1-\epsilon) \, p(v') $
\end{itemize}
Where $v'$ is given by : $v' = \mbox{argmax}_{ \tilde{v}\in D} \,\, \langle v \,,\, g(\tilde{v}) \rangle - (1-\epsilon) \, p(\tilde{v}). $
By construction, the mechanism $M'$ has zero regret, all we have to do now is bound its revenue. If we denote by $R(v)$ the regret of the profile $v$ in the mechanism $M$, $R(v) = \mbox{max}_{ \tilde{v}\in D} \,\, \langle v \,,\, g(\tilde{v})-g(v) \rangle -  (p(\tilde{v})-p(v))$ we have.
\begin{align}
\langle v \,,\, g(v') \rangle -  p(v') &=  \langle v \,,\, g(v) \rangle -  p(v) + \langle v \,,\, g(v') -g(v) \rangle - (p(v')-p(v))  \\
& \leq  \langle v \, ,\, g(v) \rangle -  p(v) + R(v) 
\end{align}
Which we will write as: 
\begin{equation}
\langle v \, ,\, g(v) \rangle -  p(v)   \geq  \langle v \,,\, g(v') \rangle -  p(v') - R(v) 
\end{equation}
Second, we have by construction: 
\begin{align}
\langle v \,,\, g(v') \rangle - (1-\epsilon) p(v')  \geq \langle v \,,\, g(v) \rangle - (1-\epsilon) p(v)
\end{align}
By summing these two relations we find :
\begin{equation}
 p(v') \geq p(v) -\frac{R(v)}{\epsilon} 
\end{equation}
Finally we get that: 
\begin{equation}
 p'(v) \geq (1-\epsilon)\,p(v) - \frac{1-\epsilon}{\epsilon} \, R(v)
 \end{equation}
Taking the expectation we get: 
\begin{equation}
 P' \geq  (1-\epsilon) \, P - \frac{1-\epsilon}{\epsilon} \,R 
\end{equation}
\end{proof}

\begin{manualtheorem}{1} 
Let $\mathcal{M}$ be an additive auction with $1$ bidders and $m$ items. Let $P$ and $R$  denote the total expected revenue and regret, $P  = \mathbb{E}_{V \in D}\left[ p(V) \right]$ and $R = \mathbb{E}_{V \in D}\left[ r(V)\right]$.
There exists a mechanism $\mathcal{M}^{*}$ with expected revenue $P^{*} = \left(\sqrt{P }-\sqrt{R }\right)^2$ and zero regret $R^{*} = 0$. 

\end{manualtheorem}

\begin{proof}
From \autoref{lemma:1bidder} we know that $\forall \epsilon > 0 $, we can find a zero regret mechanism with revenue $P' = (1-\epsilon) \, P - \frac{1-\epsilon}{\epsilon} \,R$. 
By optimizing over $\epsilon$ we find that the best mechanism is the one correspond to $\epsilon = \sqrt{\frac{R}{P}}$. The resulting optimal revenue is given by: 
\begin{equation}
    P^{*} = (1-\sqrt{\frac{R}{P}})P - \frac{\sqrt{\frac{R}{P}}}{\sqrt{\frac{R}{P}}} R = P - 2\sqrt{PR} + R = \left(\sqrt{P}-\sqrt{R}\right)^2
\end{equation}
\end{proof}

\section{Training Algorithm for Regret Net}\label{app:dutt_train_alg}

We present the training algorithm for RegretNet, more details can be found in  \citet{dutting2017optimal}.
\begin{algorithm}[h]
\caption{Training Algorithm.} 
\label{alg:duttingAlgorithm}
\begin{algorithmic}[1]
\State \textbf{Input}: Minibatches $\mathcal{S}_1,\dots,\mathcal{S}_T$ of size $B$
\State \textbf{Parameters}: $\gamma >0, \, \eta >0, \, c>0, \, R\in \mathbb{N}, \,  T\in \mathbb{N}, \,T_{\rho}\in \mathbb{N}, \,T_{\lambda}\in \mathbb{N}.$
\State \textbf{Initialize Parameters}: $ \rho^0 \in \mathbb{R}, \, w^0\in \mathbb{R}^d, \, \lambda^0\in\mathbb{R}^n,  $

\State \textbf{Initialize Misreports:} ${v_i'}^{(\ell)}\in \mathcal{D}_i, \,\, \forall \ell \in [B], \, i\in N.$ \\
\For {$t=0,\dots,T$}
        \State Receive minibatch $\mathcal{S}_t = \{V^{(1)},\dots,V^{(B)}\}.$
        \For{$r=0,\dots,R$}
        \State
        \vspace{-1.1cm}
        
        \begin{align*}
            \hspace{.6cm}&\forall \ell \in [B], \, i\in n:\\
            &{v_i'}^{(\ell)} \leftarrow {v_i'}^{(\ell)}+\gamma \nabla_{v_i'}u_i^{w^t} ({v_i}^{(\ell)};({v_i'}^{(\ell)},V_{-i}^{(\ell)}))
        \end{align*}
        \EndFor
        
        \vspace{-.4cm}
        \\
        \State Get Lagrangian gradient 
        and update $w^t$:\\
         \hspace{1cm} $w^{t+1}\leftarrow w^t-\eta \nabla_w \mathcal{L}(w^t;\lambda^t;\rho^t). $
         \\
        \State Update $\rho$ once in $T_{\rho}$ iterations:
        \If{$t$ is a multiple of $T_{\rho}$}
        \State $\rho^{t+1} \leftarrow \rho^{t} + c $ 
        \Else
        \State $\rho^{t+1}\leftarrow \rho^t$
        \EndIf
         \\
        \State Update Lagrange multipliers once in $T_{\lambda}$ iterations:
        \If{$t$ is a multiple of $T_{\lambda}$}
        \State $\lambda_i^{t+1}\leftarrow \lambda_i^t + \rho^t \, \widehat{r}_i(w^{t}),\forall i\in N$
        \Else
        \State $\lambda^{t+1}\leftarrow \lambda^t$
        \EndIf
\EndFor
\end{algorithmic}
\end{algorithm}

\clearpage
\section{Training algorithm for Online Regret Net}\label{app:dutt_train_alg_online}

We present an online version of the training algorithm for RegretNet, more details can be found in \citet{dutting2017optimal}. This version in mentioned in the original paper but the algorithm is not explicitly written there. The following code is our own adaptation of the original RegretNet algorithm for online settings.

\begin{algorithm}[h]
\caption{Training Algorithm.} 
\label{alg:duttingAlgorithm2}
\begin{algorithmic}[1]
\State \textbf{Input}: Valuation's Distribution $\mathcal{D}$
\State \textbf{Parameters}: $\gamma >0, \, \eta >0, \, c>0, \, R\in \mathbb{N}, \,  T\in \mathbb{N}, \,T_{\rho}\in \mathbb{N}, \,T_{\lambda}\in \mathbb{N}, B \in \mathbb{N}$
\State \textbf{Initialize Parameters}: $ \rho^0 \in \mathbb{R}, \, w^0\in \mathbb{R}^d, \, \lambda^0\in\mathbb{R}^n,  $

\For {$t=0,\dots,T$}
        \State Sample minibatch $\mathcal{S}_t = \{V^{(1)},\dots,V^{(B)}\}$ from distribution $\mathcal{D}$.
        \State {Initialize Misreports:} ${v_i'}^{(\ell)}\in \mathcal{D}_i, \,\, \forall \ell \in [B], \, i\in N.$ \\
        \For{$r=0,\dots,R$}
        \State
        
        \begin{align*}
            \hspace{.6cm}&\forall \ell \in [B], \, i\in n:\\
            &{v_i'}^{(\ell)} \leftarrow {v_i'}^{(\ell)}+\gamma \nabla_{v_i'}u_i^{w^t} ({v_i}^{(\ell)};({v_i'}^{(\ell)},V_{-i}^{(\ell)}))
        \end{align*}
        \EndFor
        
        \\
        \State Get Lagrangian gradient 
        and update $w^t$:\\
         \hspace{1cm} $w^{t+1}\leftarrow w^t-\eta \nabla_w \mathcal{L}(w^t;\lambda^t;\rho^t). $
         \\
        \State Update $\rho$ once in $T_{\rho}$ iterations:
        \If{$t$ is a multiple of $T_{\rho}$}
        \State $\rho^{t+1} \leftarrow \rho^{t} + c $ 
        \Else
        \State $\rho^{t+1}\leftarrow \rho^t$
        \EndIf
         \\
        \State Update Lagrange multipliers once in $T_{\lambda}$ iterations:
        \If{$t$ is a multiple of $T_{\lambda}$}
        \State $\lambda_i^{t+1}\leftarrow \lambda_i^t + \rho^t \, \widehat{r}_i(w^{t}),\forall i\in N$
        \Else
        \State $\lambda^{t+1}\leftarrow \lambda^t$
        \EndIf
\EndFor

\end{algorithmic}
\end{algorithm}

\clearpage

\section{Implementation and Setup}

We implemented ALGnet in PyTorch and all our experiments can be run on Google's Colab platform (with GPU). 
In \autoref{alg:gameAlgorithm}, we used batches of valuation profiles of size $B\in \{500\}$ and set $T \in \{160000, 240000 \}$, $T_{limit} \in \{40000, 60000 \}$, $T_{init} \in \{800, 1600\}$ and $\tau \in \{100\}.$

We used the AdamW optimizer \citep{loshchilov2017decoupled} to train the Auctioneer's and the Misreporter's networks with learning rate $\gamma \in \{0.0005,0.001\}.$ Typical values for the architecture's parameters are $n_a=n_p=n_m \in [3,7]$ and $h_p=h_n=h_m \in \{50,100,200\}$. These networks are similar in size to the ones used for RegretNet in \citet{dutting2017optimal}.

For each experiment, we compute the total revenue $\mathit{rev}:=\mathbb{E}_{V\sim D}[\sum_{i\in N} p_i^{w}(V)]$ and average regret $\mathit{rgt}:=\nicefrac{1}{n}\,\mathbb{E}_{V\sim D}[\sum_{i\in N} r_i^{w}(V)]$ using a test set of $10,000$ valuation profiles. We run each experiment 5 times with different random seeds and report the average and standard deviation of these runs.

\end{subappendices}

\singlespacing
\bibliographystyle{plainnat}

\cleardoublepage
\ifdefined\phantomsection
  \phantomsection  
\else
\fi
\addcontentsline{toc}{chapter}{Bibliography}

\bibliography{thesis}

\end{document}